\DeclareMathOperator*{\argmin}{arg\,min}
\DeclareMathOperator*{\argmax}{arg\,max}
\newcommand{\bs}[1]{\bm{#1}}
\newcommand{\bx}{\bm{x}}
\newcommand{\by}{\bm{y}}
\newcommand{\mr}[1]{\mathrm{#1}}
\newcommand{\mcal}[1]{\mathcal{#1}}
\newcommand{\N}{ \mathcal{N}}
\newcommand{\R}{\mathbb{R}}
\newcommand{\D}{\mathcal{D}}
\newcommand{\dX}{\mathcal{X}}
\newcommand{\dY}{\mathcal{Y}}
\newcommand{\GP}{\mathcal{GP}}
\newcommand{\K}{\bm{K}}
\newcommand{\I}{\bm{I}}
\newcommand{\E}{\mathbb{E}}
\newcommand{\veps}{\varepsilon}
\newcommand{\stg}[1]{^{(#1)}}
\newtheorem{theorem}{Theorem}[section]
\crefname{theorem}{Theorem}{Theorems}
\crefname{definition}{Definition}{Definitions}
\newtheorem{lemma}[theorem]{Lemma}
\crefname{lemma}{Lemma}{Lemmas}
\newtheorem{corollary}[theorem]{Corollary}
\crefname{corollary}{Corollary}{Corollaries}
\crefname{proposition}{Proposition}{Propositions}
\newtheorem{assumption}[theorem]{Assumption}
\crefname{assumption}{Assumption}{Assumptions}
\crefname{equation}{}{}
\Crefname{equation}{Equation}{Equations}
\crefname{figure}{Figure}{Figures}
\renewcommand{\cite}{\citep}
\title{Bayesian Optimization for Cascade-type Multistage Processes}
\author[1]{Shunya Kusakawa}
\author[1]{Shion Takeno}
\author[1]{Yu Inatsu}
\author[2, 3]{Kentaro Kutsukake}
\author[1]{Shogo Iwazaki}
\author[3]{Takashi Nakano}
\author[3]{Toru Ujihara}
\author[1]{Masayuki Karasuyama}
\author[2, 3]{Ichiro Takeuchi}
\affil[1]{Nagoya Institute of Technology}
\affil[2]{RIKEN Center for Advanced Intelligent Project}
\affil[3]{Nagoya University}
\affil[ ]{\texttt{ \{kusakawa.s, takeno.s, iwazaki.s\}.mllab.nit@gmail.com, \{inatsu.yu, karasuyama\}@nitech.ac.jp, kentaro.kutsukake@riken.jp,}}
\affil[ ]{\texttt{nakano.t@unno.material.nagoya-u.ac.jp, ujihara@nagoya-u.jp, }}
\affil[ ]{\texttt{ichiro.takeuchi@mae.nagoya-u.ac.jp}}
\date{}
\begin{document}
\maketitle

\begin{abstract}
  Complex processes in science and engineering are often formulated as multistage decision-making problems.
  In this paper, we consider a type of multistage decision-making process called a cascade process.
  A cascade process is a multistage process in which the output of one stage is used as an input for the subsequent stage. When the cost of each stage is expensive, it is difficult to search for the optimal controllable parameters for each stage exhaustively.
  To address this problem, we formulate the optimization of the cascade process as an extension of the Bayesian optimization framework and propose two types of acquisition functions based on credible intervals and expected improvement.
  We investigate the theoretical properties of the proposed acquisition functions and demonstrate their effectiveness through numerical experiments.
  In addition, we consider an extension called suspension setting in which we are allowed to suspend the cascade process at the middle of the multistage decision-making process that often arises in practical problems.
  We apply the proposed method in a test problem involving a solar cell simulator, which was the motivation for this study.
\end{abstract}


\section{Introduction}
\label{sec:introduction}

\begin{figure}[t!]
    \begin{subfigure}{\linewidth}
        \centering
        \includegraphics[width=0.6\linewidth]{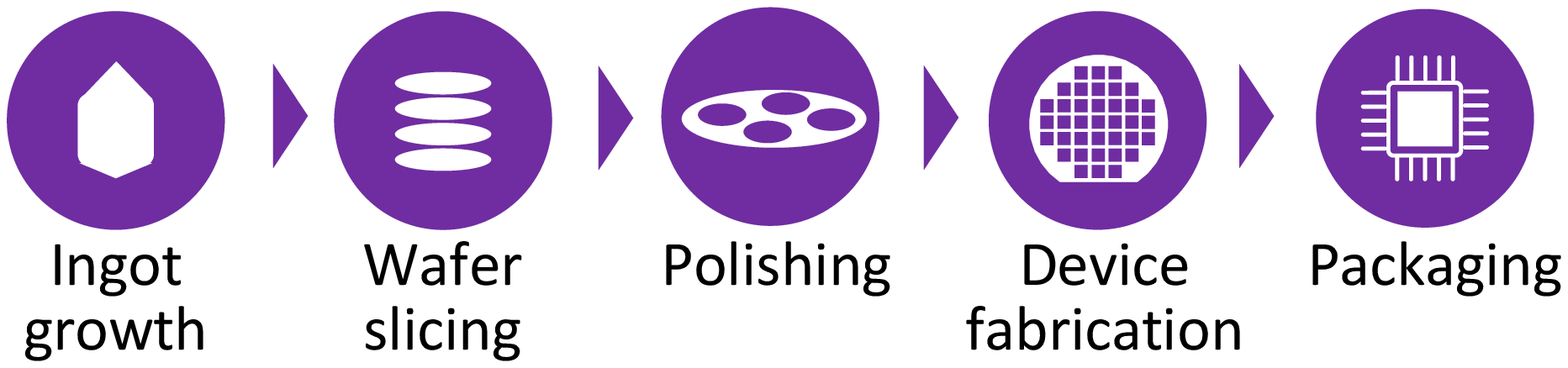}
        \subcaption{Production process of semiconductor chips.}
    \end{subfigure}
    \begin{subfigure}{\linewidth}
        \centering
        \includegraphics[width=\linewidth]{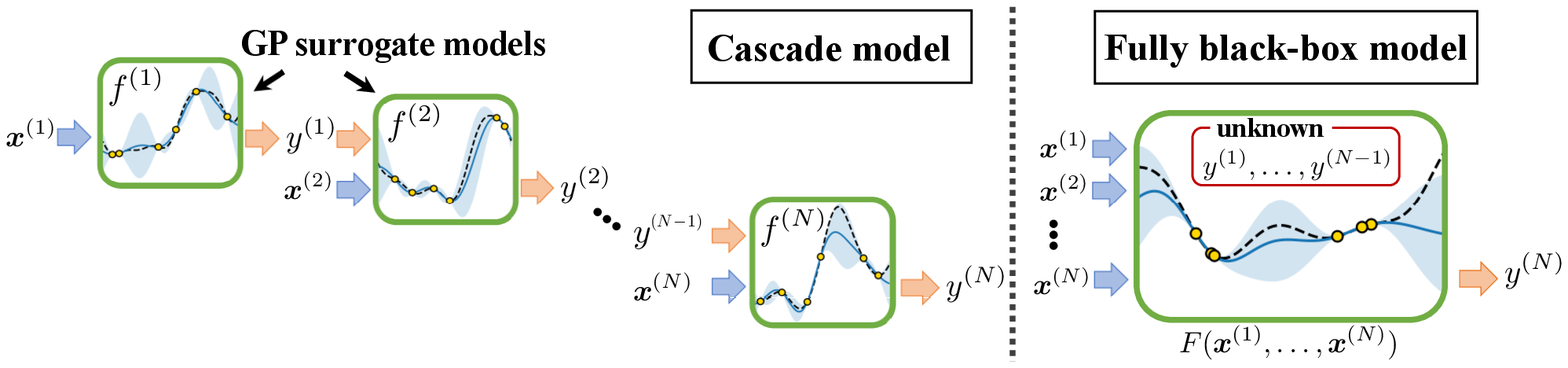}
        \subcaption{Schematic illustration of a cascade process.}
    \end{subfigure}
    \caption{
    (a) Example of the cascade manufacturing process for semiconductor chips.
    (b)
    The left part shows a cascade process with $N$ stages, where the function ${f}^{(n)}$ is the black-box function representing the $n^{\mr th}$ stage for $n \in [N]$.
    The function ${f}^{(n)}$ considers two types of inputs: the controllable parameters of that stage $\bm x^{(n)}$ and the output of the previous stage.
    The goal of the cascade process optimization is to identify the controllable parameters of all the stages $\{\bm x^{(n)}\}_{n \in [N]}$ that optimize the output of the final stage.
    The right part shows the fully black-box model view of the problem, where the function $F$ collectively considers all the controllable parameters $\{\bm x^{(n)}\}_{n \in [N]}$ as the inputs.
    By properly modeling each stage and incorporating the observable outputs in the middle of the cascade process $ y^{(1)}, \ldots, y^{(N-1)}$, more efficient optimization than that of the fully black-box model $F$ is possible.
    }
    \label{fig:cascade_and_target}
\end{figure}

A complex process in science and engineering problems is often formulated as a multistage \emph{cascade} process.
%
%
For example, the production process of semiconductor chips consists of hundreds of process steps such as ingot growth, wafer slicing, and polishing, device fabrication, and packaging as shown in Figure~\ref{fig:cascade_and_target}~(a).
Similarly, most manufacturing processes, including garment manufacturing, automobile manufacturing, and building construction are multi-stage processes.
These multistage processes are often formulated as a cascade process in which the output of one stage is used as a part of the input for the subsequent stage.

\Cref{fig:cascade_and_target}~(b) shows a schematic illustration of a cascade process.
Each stage of a cascade process is formulated as a function with two types of inputs: the controllable parameters of that stage and the output of the previous stage. The former is controllable, whereas the latter is uncontrollable because of the uncertainty in the previous stage.
The optimization of the entire cascade process can be formulated as a joint optimization problem by collectively considering the controllable parameters of all the stages as the inputs.
Nevertheless, more efficient optimization is possible by properly modeling each stage and incorporating the observable outputs in the middle of the cascade process.

In this study, we consider the problem of optimizing a cascade process composed of black-box functions with expensive evaluation costs within the framework of Gaussian process-based (GP-based) Bayesian optimization (BO).
Each stage is modeled as a GP, whose inputs consist of controllable parameters and the outputs from the previous stage.
To optimize the output of the final stage, we consider the identification of the controllable parameters for each stage by considering the uncertainties of the GP models.
The difficulty with this problem is that when setting the controllable parameters for each stage, decisions are made by considering the influence of the output of that stage on the subsequent stages.

Considering the main contribution of this study, we propose a method that deals with the intractable predictive distribution and develop two acquisition functions (AFs) based on the expected improvement (EI) and credible interval (CI).
The proposed AFs can quantify the uncertainties of the subsequent stages in the cascade process using techniques developed in a multistep look-ahead strategy~\cite{ginsbourger2010towards,lam2016bayesian}.
The validity of the AFs was clarified through theoretical analysis, and their effectiveness was demonstrated using numerical experiments.
Furthermore, as generalizations, we consider extensions of a cascade process optimization problem, such as the case where suspensions and resumes are possible in the middle of the cascade process and where the cost of each stage is different.
Finally, we apply the proposed method to a test problem involving a solar cell simulator, which is the motivation for this study.

\paragraph{Related Studies}
%
GP-based BO has been intensively studied as an efficient way to optimize black-box functions with high evaluation costs~\cite{shahriari2015taking,frazier2018tutorial}.
Various types of AFs were proposed for BO, such as Gaussian process upper confidence bound (GP-UCB)~\cite{srinivas2010gaussian} and expected improvement~\cite{movckus1975bayesian,jones1998efficient}.
The GP-based BO framework was extended to various problem settings, such as constrained optimization~\cite{gardner2014bayesian,takeno2022sequential}, multiobjective optimization~\cite{couckuyt2014fast, suzuki2020multi}, and multifidelity optimization~\cite{swersky2013multi, takeno2020multi,takeno2022generalized}.

However, the only existing studies on cascade process optimization using a GP-based BO framework can be found in \cite{dai2016cascade} and \cite{astudillo2021bayesian}.
In CBO~\cite{dai2016cascade}, the controllable parameters for each stage are determined in a reverse order (i.e., starting from the controllable parameters for the last stage, the second last stage, etc).
That is, CBO selects the controllable parameters that are likely to produce the desired output, which is defined through the inverse function of the predictive mean function of the GP model in the subsequent stage.
Importantly, since this desired output does not depend on the outputs from the previous stages, incorporating the observed outputs of the previous stages is difficult in CBO.
Furthermore, if the earlier stages cannot achieve the desired output (which typically occurs when the range of each stage is unknown), the algorithm can become stuck.
In addition, the exploration-exploitation trade-off cannot be considered in their method because the uncertainty of each stage is ignored when the desired output is predetermined by the predictive mean functions.
Recently, a modified version of CBO was proposed in material science~\cite{nakanounpublished}.
However, their approach is to address the practical application issues of CBO with some heuristics and does not fundamentally solve the drawbacks of CBO.
EI-FN~\cite{astudillo2021bayesian} focuses on the optimization of a function network represented as a directed acyclic graph (DAG).
Whereas their problem settings include the cascade structure as one of the DAGs, decision-making at each middle stage is not incorporated.
In addition, noisy observations and suspension settings are not considered in their study.
Furthermore, their approach is based on EI with full sampling (even the final stage), whereas our EI-based approach uses partial sampling, and we also provide a CI-based AF.
Thus, our proposed method is clearly different from EI-FN.

One important related study is the study on multistep forward time-series prediction based on GP~\cite{quinonero2002prediction}.
In their study, the output of the GP at a time point becomes the input of the GP at the subsequent time point.
This can be interpreted as a cascade process without controllable parameters.
They introduced an iterative Gaussian approximation method to approximate the predictive distribution for the multistep forward time points.
However, their method cannot be directly extended to cases with controllable parameters at each stage.
Cascade process optimization is partially related to BO under input uncertainty because the output of the previous stage with uncertainty becomes the input of the subsequent stage.
Recently, BO under input uncertainty was intensively studied~\cite{beland2017bayesian,oliveira2019bayesian,iwazaki2021mean,inatsu2021active,inatsu2022bayesian}.
However, these existing methods cannot be easily extended to our problem because the uncertainties in multiple stages are accumulated in a complicated manner in a cascade process.
For example, an approach using the Bayesian quadrature framework \citep{o1991bayes,beland2017bayesian} cannot model the same cascade process correctly (see Appendix~\ref{app:BQ} for details).
In our proposed method, the expected improvement in the cascade process is computed based on a multistage look-ahead strategy.
Therefore, the BO methods for multistep look-ahead~\cite{ginsbourger2010towards,lam2016bayesian} are closely related to our method.
In general, the exact evaluation of a look-ahead AF is difficult owing to its computational complexity.
Our proposed method is based on several computational analyses developed in look-ahead type AFs, especially batch-type approximations~\cite{jiang2020binoculars}.
%
%


Reinforcement learning (RL) \citep{sutton2018reinforcement,bertsekas2019reinforcement} is also formulated as a multi-stage decision-making problem, which often involves several uncertainties similar to the output of each stage in the cascade process.
Thus, RL can be casted into the optimization of the cascade process by setting the state and action as the output from the previous stage and the input of the current stage, respectively.
On the other hand, it is difficult to directly apply the RL algorithm to our cascade optimization problem because the problem setup differs in many aspects.
For example, while the goal of RL is to maximize cumulative rewards, the goal of cascade process optimization is to find optimal input conditions for multiple stages.
Furthermore, cascade process optimization has the limitation that function evaluation is costly and cannot be performed many times, making it difficult to apply the RL algorithm under such a limitation.

\section{Preliminaries}
\label{sec:preliminaries}

\subsection{Cascade Process Optimization}
We consider a cascade process with $N$ stages.
Let $\bm x\stg{n} \in \dX\stg{n}\subset \R^{D^{(n)}}$ be a $D^{(n)}$-dimensional controllable input and $y\stg{n} \in \dY\stg{n} \subset \R$ be a scalar output of the stage $n \in [N] := \{1, \ldots, N\}$.
%
Each stage is formulated as a function $f\stg{n}: \dY\stg{n-1} \times \dX\stg{n} \to \dY\stg{n}$ and is written as
\begin{equation}
  \label{eq:stage_func}
  y\stg{n} = f\stg{n}(y\stg{n-1}, \bm x\stg{n}), n \in [N],
\end{equation}
where we define $y\stg{0}=0$ and $\dY\stg{0}=\{0\}$ for notational simplicity.

Combining all the inputs $\bm x\stg{1}, \ldots, \bm x\stg{N}$, the entire cascade process can be represented as $y\stg{N} = F(\bm x\stg{1}, \ldots, \bm x\stg{N})$,
where $F: \dX\stg{1}\times \cdots \times \dX\stg{N} \to \dY\stg{N}$ is recursively defined using~\cref{eq:stage_func}.
The goal of a cascade process optimization is to solve the following optimization problem:
\begin{equation}
  \label{eq:opt-problem}
  \bx\stg{1}_*,\dots,\bx\stg{N}_* =  \argmax_{ ( \bx\stg{1},\dots,\bx\stg{N} ) \in \dX} F(\bx\stg{1},\dots,\bx\stg{N})
\end{equation}
with a number of function evaluations as small as possible, where
$\dX := \dX\stg{1}\times \cdots \times \dX\stg{N}$.

For simplicity, we consider the case in which the output of each stage is scalar.
Furthermore, we assume that the output $y\stg{n}$ is observed without noise.
Extensions to the case of multidimensional output and noisy observation settings are described in the Appendix.

\subsection{GP Models}
\label{sec:modeling}
In this study, we employed GP models as surrogate models for black-box functions.
One simple way to model the cascade process is the \textit{fully black-box model} view, where we regard $F$ as a single black-box function that outputs $y\stg{N}$ for a collected input $(\bm x\stg{1}, \ldots, \bm x\stg{N})$.
However, regarding the fully black-box model view, the outputs observed in the intermediate stages of the cascade process cannot be effectively used.
Therefore, we employ a \textit{cascade model}
, in which all stages are modeled by independent GP surrogate models.
%
We assume that the prior distribution for $f\stg{n}$ is $\GP(0,k\stg{n})$, where $\GP(\mu,k)$ denotes a GP with mean and kernel functions $\mu$ and $k$, respectively.
From the properties of a GP, given the observed data, the posterior distribution of $f\stg{n}, n \in [N]$ is also represented as a GP, and its mean and variance functions can be obtained in a closed form~\cite{rasmussen2005gaussian}.

\section{Proposed Method}
\label{sec:method}
In this section, we consider the sequential observations of a cascade process from stage $1$ to $N$.
For each iteration $t \in \{0,N,2N,\ldots \}$, users determine $\bx\stg{1}_{t+1} $, a controllable parameter of stage $1$,  and observe an output $y_{t+1}\stg{1}=f\stg{1} (0, \bx\stg{1}_{t+1} )$.
Subsequently, users choose $\bx\stg{2}_{t+2} $, a controllable parameter of stage $2$,  and observe $y_{t+2}\stg{2}=f\stg{2}( y\stg{1}_{t+1}  , \bx\stg{2}_{t+2} )$.
By repeating this operation, users obtain $y\stg{N}_{t+N}= f\stg{N} (y\stg{N-1}_{t+N-1}, \bx\stg{N}_{t+N} )$.

Regarding the cascade process optimization problem in~\cref{eq:opt-problem}, the following two points should be considered:
First, because the optimization target is the output of the final stage, a multistep look-ahead is indispensable when a decision is made in the earlier stages.
Second, the input at each stage can be determined after observing the output of the previous stage.
Therefore, when designing the AF for stage $n$, we need to consider $F(\bm x^{(n:N)} \mid y\stg{n-1})$, where the output of the final stage is represented as a function of the remaining controllable parameters $\bm x^{(n:N)} := (\bm x\stg{n}, \ldots, \bm x\stg{N})$ given the output of the previous stage $y\stg{n-1}$.
If the predictive distribution of $F(\bm x^{(n:N)} \mid y\stg{n-1})$ is available, appropriate AFs can be easily derived for stage $n$.
However, in the cascade model, the predictive distributions of $F(\bx\stg{n:N}| y\stg{n-1})$ cannot be explicitly written because of the nested structure of the cascade process.
To address this problem, we consider two approaches.
First, by utilizing the property that is easy to sample from nested predictive distributions, we propose an EI-based AF in~\cref{sec:cascadeEI}.
Second, by constructing the credible interval of $F(\bx\stg{n:N}| y\stg{n-1})$, we propose a CI-based AF in~\cref{sec:ci-based-af}.

\subsection{EI-based Acquisition Function}
\label{sec:cascadeEI}
In this subsection, we assume that the true black-box function $f^{(n)}$ is sampled from the GP prior $\mathcal{G} \mathcal{P} (0,k^{(n)} )$ for each $n \in [N]$.
Let $F_{\text{best}}=\max_{1 \le t^\prime \le t} y^{(N)}_{t^\prime}$ be the maximum value of the objective function $F$ observed up to iteration $t$.
Thereafter, we define the improvement $U_n({\bm x}^{(n)}|{ y}^{(n-1)})$ for the observation of stage $n$ with input $({ y}^{(n-1)},{\bm x}^{(n)} )$ as the expected improvement of $F_{\text{best}}$.
First, in the case of $n=N$, $F_{\text{best}}$ is improved when $f^{(N)}({y}\stg{N-1},\bx\stg{N})>F_{\text{best}}$.
Therefore, the expected improvement of $F_{\text{best}}$, $U_N(\bx\stg{N}|y\stg{N-1})$, is given by:
\begin{equation}
    U_N(\bx\stg{N}|y\stg{N-1})  = \E_{f\stg{N}}   \left[\left( F(\bx\stg{N}|y\stg{N-1}) - F_{\text{best}} \right)^+\right],
    \label{eq:EI_final_def}
\end{equation}
where $(\cdot )^+ \coloneqq \max (0,\cdot)$.
\Cref{eq:EI_final_def} is the same formulation as in the ordinary EI, and its expectation can be calculated analytically.

With regard to the case of $n\neq N$, we define $U_n(\bx\stg{n}|y\stg{n-1})$ as the maximum expected improvement of $F(\bx\stg{n:N}|y\stg{n-1})$:
\begin{equation}
        U_n(\bx\stg{n}|y\stg{n-1})  =  \E_{f\stg{n}} \left[ \max_{ \bx\stg{n+1}} U_{n+1}(\bx\stg{n+1}|y\stg{n}) \right].
    \label{eq:EI_ideal}
\end{equation}
\Cref{eq:EI_ideal} is a recursive expression that contains the max operator and expectation.
Thus, it is difficult to calculate it analytically.
In the context of multistep look-ahead approaches, methods to avoid this problem through approximation and sampling have been investigated~\cite{lam2016bayesian,gonzalez2016glasses,wu2019practical,jiang2020binoculars}.
We use the similar approach as in~\cite{jiang2020binoculars} to approximate the lower bound of~\cref{eq:EI_ideal}.
Using the Monte Carlo integration with $S$ samples and the exchange of expectation and max operators (note that~\cref{eq:EI_ideal_expand} contains the nested max operators and expectation), \cref{eq:EI_ideal} can be approximated as follows:
\begin{subequations}
    \begin{align}
        {U}_n(\bx\stg{n}|y\stg{n-1}) & = \E_{f\stg{n}}  \left[ \max_{ \bx\stg{n+1}}\cdots \E_{f\stg{N-1}} \left[ \max_{ \bx\stg{N}}U_N(\bx\stg{N}|y\stg{N-1}) \right] \right] \label{eq:EI_ideal_expand} \\
                                     & \ge \max_{ \bx\stg{n+1},\dots,\bx\stg{N} }\E_{f\stg{n},\ldots,f\stg{N-1}}\left[   U_N(\bx\stg{N}|y\stg{N-1})\right]  \label{eq:EI_lower}                          \\
                                     & \approx \max_{ \bx\stg{n+1},\dots,\bx\stg{N} } \frac{1}{S}  \sum_{s=1}^{S}    U_N(\bx\stg{N}|y_s\stg{N-1}),  \label{eq:EI_lower_mc}
    \end{align}
\end{subequations}
where the inequality~\cref{eq:EI_lower} can be derived by~\cite{jiang2020binoculars}, and the sampling of $y\stg{N-1}_s$ is based on the GP model.
First, we generate each $y\stg{n}_s$ from the predicted distribution of ${f}\stg{n}(y\stg{n-1},\bx\stg{n})$ independently.
Then, we calculate the predicted  distribution of ${f}\stg{n+1}(y\stg{n}_s,\bx\stg{n+1})$ using the generated $y\stg{n}_s$, and we generate $y\stg{n+1}_s$ based on that.
By repeating this process, $y\stg{N-1}_s$ can be generated.
We propose the approximated utility function $\widetilde{U}_{n}(\bx\stg{n}|y\stg{n-1})$, defined as~\cref{eq:EI_lower_mc}, as the EI-based AF.
Therefore, given the observation $y_{t+n-1}\stg{n-1} $ of the previous stage, the observation point of the subsequent stage $n$ is given by:
\begin{equation}
        {\bm x}^{(n)}_{t+n}  = \argmax_{ {\bm x}^{(n) } \in \mathcal{X}\stg{n} }  \widetilde{U}_{n}(\bx\stg{n}|y_{t+n-1}\stg{n-1}).
  \label{eq:EI_seq_rule}
\end{equation}
Although~\cref{eq:EI_lower_mc} is the optimization problem for a stochastically determined function, deterministic gradient-based methods
can be applied by applying the reparameterization trick~\cite{kingma2014auto}.
Compared to EI-FN, which approximates all expectations by the Monte Carlo estimation, we analytically calculate the expectation with respect to $f\stg{N}$ in \cref{eq:EI_lower_mc}.
Furthermore, we select the controllable input in each stage depending on the output from the previous stage by using \cref{eq:EI_seq_rule} in contrast to EI-FN which does not incorporate intermediate observations.

\subsection{CI-based Acquisition Function}
\label{sec:ci-based-af}
Thus far, we assume that each $f\stg{n}$ is sampled from the GP prior.
Hereafter, we assume that each $f\stg{n}$ is an element of a reproducing kernel Hilbert space (RKHS).
Under this RKHS setting, we propose a CI-based AF that can be interpreted as an optimistic improvement.
First, we provide a credible interval of $F(\bx\stg{n:N}|y\stg{n-1})$ and then, design the AF.
To construct a valid CI, we assume the following regularity assumptions.
\paragraph{Regularity Assumptions}
We assume that $\mathcal{Y}^{(n-1)} \times \mathcal{X}^{(n)}$ is a compact set, and
let $k^{(n)}$ be a positive definite kernel with $k^{(n)} (  ({w},{\bm x}),   ({w},{\bm x}) ) \leq 1$ for any
$n \in [N]$ and $ ({w },{\bm x}) \in \mathcal{Y}^{(n-1)} \times \mathcal{X}^{(n)}$.
Furthermore, let $\mathcal{H}_{ k^{(n)} } $ be an RKHS corresponding to the kernel $k^{(n)} $.
Additionally, for each $n \in [N]$, we assume that $f^{(n)} \in \mathcal{H}_{ k^{(n)} } $ and $\|f\stg{n}\|_{{k\stg{n}}}\le B$,
where $B>0$ is a constant, and $\|\cdot \|_{{k\stg{n}}}$ denotes the RKHS norm on $\mcal{H}_{k\stg{n}}$.
There are several studies on BO using a GP model for the black-box function assumed as an element of an RKHS~\cite{srinivas2010gaussian,oliveira2019bayesian,iwazaki2021mean}.

To construct the credible interval of $F(\bx\stg{n:N}|y\stg{n-1})$,  we first formally define a posterior mean and variance of a GP with independent Gaussian noise $\mathcal{N} (0, \sigma^2)$.
Note that what we have just introduced is the noise model $\mathcal{N} (0, \sigma^2)$ of a GP, and the actual observations are still noiseless.
For each $n \in [N]$, $(w,{\bm x} ) \in \mathcal{Y}^{(n-1)} \times \mathcal{X}^{(n)}$ and $t \geq 1$, let
$\mu ^{(n)}_t (w,{\bm x} ) $ and $\sigma^{(n)2}_t (w,{\bm x})$ be the posterior mean and variance of $f^{(n)} (w,{\bm x} )$, respectively.
The interval $[\mu ^{(n)}_t (w,{\bm x} ) \pm \beta^{1/2} \sigma^{(n)}_t (w,{\bm x}) ]$ with an appropriate trade-off parameter $\beta $ is the credible interval for $f^{(n)} (w,{\bm x} ) $~\cite{srinivas2010gaussian}.
We apply this interval to construct a valid credible interval for $F(\bx\stg{n:N}|y\stg{n-1})$.
However, it cannot be used directly because it has an uncontrollable variable $w$.
%
%
To avoid this issue, we additionally consider the following assumptions.
\paragraph{Lipschitz Continuity Assumptions}
We assume that $f^{(n)}$ and $\sigma^{(n)}_{t} $ satisfy the following assumptions:
\begin{description}
    \item[(L1)] Assume that $f\stg{n}$ is $L_f$-Lipschitz continuous with respect to $L_1$-distance for any $n \in \{2,\ldots, N\}$, where  $L_f>0$ is a Lipschitz constant.
    \item[(L2)] Assume that $\sigma^{(n)}_{t} $ is $L_\sigma$-Lipschitz continuous with respect to $L_1$-distance for any $n \in \{2,\ldots, N\}$ and $t \geq 1$, where  $L_\sigma>0$ is a Lipschitz constant.
\end{description}
This assumption enables us to give the CI of the output using the CI of the input.
Since the output becomes the input of the next stage in the cascade process, CIs of the subsequent stages can be constructed in a chain reaction.

Under these assumptions, we introduce a credible interval of $F(\bx\stg{n:N}|y\stg{n-1})$ using a cascade model.
\begin{theorem}\label{theorem:CI}
    Let
    \begin{equation}
        \begin{aligned}
              \tilde{{\mu}}\stg{m}_t(\bx\stg{n:m}|y\stg{n-1}) & ={\mu}\stg{m}_{t} \left(\tilde{{\mu}}\stg{m-1}_t(\bx\stg{n:m-1} |y\stg{n-1}),\; \bx\stg{m}\right),   \\
              \tilde{\sigma}\stg{m}_{t}(\bx\stg{n:m}| y\stg{n-1})  &   ={\sigma}\stg{m}_{t} \left(\tilde{{\mu}}\stg{m-1}_t(\bx\stg{n:m-1} |y\stg{n-1}),\; \bx\stg{m}\right)  +L_f \tilde{\sigma}^{(m-1)}_{t} (\bx\stg{n:m-1}|y\stg{n-1} )  ,
        \end{aligned}\label{eq:tilde_pred}
    \end{equation}
    where $\tilde{\mu}^{(n)}_t  ({\bm x}^{(n)}|y\stg{n-1} ) =  {\mu}^{(n)}_t  (y\stg{n-1},{\bm x}^{(n)} )$ and $\tilde{\sigma}^{(n)}_{t}  ({\bm x} ^{(n)} |y\stg{n-1})=  {\sigma}^{(n)}_{t}  (y\stg{n-1},{\bm x} ^{(n)} )$.
    Assume that regularity assumptions and the Lipschitz continuity assumption (L1) hold.
    Also assume that $\tilde{\mu}^{(m)}_t ( {\bm x}^{(n:m)} |y\stg{n-1}) \in \mathcal{Y}^{(n)} $ for all $m \in [N]$, $t \geq 1$ and $ {\bm x}^{(n:m)}$.
    Define $\beta = B^2$.
    Then, the following holds:
    \begin{equation}
          |F(\bx\stg{n:N}|y\stg{n-1} ) - \tilde{\mu}^{(N)}_{t} (\bx\stg{n:N}|y\stg{n-1})|  \leq   \beta^{1/2}  \tilde{\sigma}^{(N)}_{t} (\bx\stg{n:N} |y\stg{n-1} )  .
    \end{equation}
\end{theorem}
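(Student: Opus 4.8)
The plan is to prove the bound by induction on the stage index $m$ running from $n$ up to $N$, establishing the stronger per-stage claim that the true intermediate output $F(\bx\stg{n:m}|y\stg{n-1})$ satisfies $|F(\bx\stg{n:m}|y\stg{n-1}) - \tilde\mu\stg{m}_t(\bx\stg{n:m}|y\stg{n-1})| \le \beta^{1/2}\tilde\sigma\stg{m}_t(\bx\stg{n:m}|y\stg{n-1})$; the theorem is then the special case $m=N$. The single external ingredient I would invoke is the standard RKHS credible-interval guarantee noted just before the theorem (with $\beta = B^2$ for noiseless observations): under the regularity assumptions, for every stage $m$ and every admissible $(w,\bx\stg{m}) \in \mathcal{Y}\stg{m-1}\times\mathcal{X}\stg{m}$ one has $|f\stg{m}(w,\bx\stg{m}) - \mu\stg{m}_t(w,\bx\stg{m})| \le \beta^{1/2}\sigma\stg{m}_t(w,\bx\stg{m})$. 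Crucially this guarantee is uniform over the compact domain, so I may apply it at any point, including a perturbed input.

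For the base case $m=n$, the definitions give $\tilde\mu\stg{n}_t(\bx\stg{n}|y\stg{n-1}) = \mu\stg{n}_t(y\stg{n-1},\bx\stg{n})$ and $\tilde\sigma\stg{n}_t(\bx\stg{n}|y\stg{n-1}) = \sigma\stg{n}_t(y\stg{n-1},\bx\stg{n})$, and since $F(\bx\stg{n}|y\stg{n-1}) = f\stg{n}(y\stg{n-1},\bx\stg{n})$ the claim is exactly the credible-interval guarantee evaluated at the true input $(y\stg{n-1},\bx\stg{n})$, so nothing further is needed.

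For the inductive step, write $w := F(\bx\stg{n:m-1}|y\stg{n-1})$ for the true input to stage $m$ and $\tilde w := \tilde\mu\stg{m-1}_t(\bx\stg{n:m-1}|y\stg{n-1})$ for its surrogate, so that $F(\bx\stg{n:m}|y\stg{n-1}) = f\stg{m}(w,\bx\stg{m})$ and $\tilde\mu\stg{m}_t(\bx\stg{n:m}|y\stg{n-1}) = \mu\stg{m}_t(\tilde w,\bx\stg{m})$. I would split the error by the triangle inequality into $|f\stg{m}(w,\bx\stg{m}) - f\stg{m}(\tilde w,\bx\stg{m})| + |f\stg{m}(\tilde w,\bx\stg{m}) - \mu\stg{m}_t(\tilde w,\bx\stg{m})|$. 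The first term is controlled by (L1): since only the output coordinate differs, $L_f$-Lipschitz continuity gives $L_f|w-\tilde w|$, and the inductive hypothesis bounds $|w-\tilde w| \le \beta^{1/2}\tilde\sigma\stg{m-1}_t(\bx\stg{n:m-1}|y\stg{n-1})$. The second term is the credible-interval guarantee evaluated at the surrogate point $(\tilde w,\bx\stg{m})$, giving $\beta^{1/2}\sigma\stg{m}_t(\tilde w,\bx\stg{m})$. Summing and factoring out $\beta^{1/2}$ reproduces exactly $\beta^{1/2}\bigl(\sigma\stg{m}_t(\tilde w,\bx\stg{m}) + L_f\tilde\sigma\stg{m-1}_t(\bx\stg{n:m-1}|y\stg{n-1})\bigr) = \beta^{1/2}\tilde\sigma\stg{m}_t(\bx\stg{n:m}|y\stg{n-1})$ by the recursive definition of $\tilde\sigma\stg{m}_t$, which closes the induction.

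The one point that needs care — and the reason for the domain hypothesis on the surrogate means $\tilde\mu\stg{m}_t$ — is that the credible-interval guarantee in the second term is applied at the surrogate input $\tilde w$ rather than the true $w$; this is legitimate only because $\tilde w = \tilde\mu\stg{m-1}_t(\bx\stg{n:m-1}|y\stg{n-1})$ lies in the admissible domain $\mathcal{Y}\stg{m-1}$ on which the posterior quantities and the RKHS bound are defined, which is precisely what that assumption secures. I would also remark that assumption (L2) is not used in this argument (it governs the later regret analysis); the present bound relies only on (L1), the uniform noiseless credible interval, and the recursion defining $\tilde\sigma\stg{m}_t$. The induction itself is routine, so the only genuine subtlety is this careful tracking of where the accumulated input error $L_f\tilde\sigma\stg{m-1}_t$ enters and confirming it matches the additive Lipschitz term in the definition of $\tilde\sigma\stg{m}_t$.
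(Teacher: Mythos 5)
Your proposal is correct and follows essentially the same route as the paper: the paper proves the general (vector-output) version in Theorem~\ref{theorem:Nstage} by exactly this add-and-subtract of $f^{(m)}$ evaluated at the surrogate mean, bounding the first piece via (L1) plus the recursively accumulated bound and the second piece via the RKHS credible interval (Lemma~\ref{lem:ci}) applied at the surrogate input, then unrolling the recursion to stage $N$. Your remarks that (L2) is not needed here and that the domain hypothesis on $\tilde{\mu}^{(m)}_t$ is what licenses applying the credible interval at the perturbed input both match the paper's treatment.
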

From~\cref{theorem:CI}, a lower confidence bound $\mr{LCB}^{(F)}_t (\bx\stg{n:N}|y\stg{n-1}  )$ and an upper confidence bound $\mr{UCB}^{(F)}_t (\bx\stg{n:N} |y\stg{n-1} )$ of $F(\bx\stg{n:N}|y\stg{n-1})$ are given by:
\begin{equation}
    \begin{aligned}
          \mr{LCB}^{(F)}_t  (\bx\stg{n:N} |y\stg{n-1} ) &  =\tilde{\mu}^{(N)}_{t-1} (\bx\stg{n:N}|y\stg{n-1} )   - \beta^{1/2}  \tilde{\sigma}^{(N)}_{t-1} (\bx\stg{n:N}|y\stg{n-1}), \\
          \mr{UCB}^{(F)}_t  (\bx\stg{n:N} |y\stg{n-1} ) &  =\tilde{\mu}^{(N)}_{t-1} (\bx\stg{n:N} |y\stg{n-1} )  + \beta^{1/2}  \tilde{\sigma}^{(N)}_{t-1} (\bx\stg{n:N} |y\stg{n-1}).
    \end{aligned} \label{eq:cadcade_ci}
\end{equation}

Based on the above credible intervals, we define the pessimistic maximum estimator of $F(\bx\stg{1:n})$ as $ Q_{t} \coloneqq \max_{\bx\stg{1:N} } \mr{LCB}\stg{F}_{t} (\bx\stg{1:N}) $.
In addition, given the observation $y_{t+n-1}\stg{n-1}$ in stage $n-1$, we define the pessimistic maximum estimator of $F(\bx\stg{n:N}|y\stg{n-1})$ as follows:
\begin{equation}
    \mr{ LCB}^{(F)}_{t+n} ( y_{t+n-1}\stg{n-1} ) = \max_{  \bx\stg{n:N} } \mr{ LCB}^{(F)}_{t+n} (\bx\stg{n:N}| y_{t+n-1}\stg{n-1} ), \label{eq:lower_y}
\end{equation}
where the $\max$ operator is not necessary when $n=N$.
Similarly, the optimistic maximum estimator of $F(\bx\stg{n:N}|y_{t+n-1}\stg{n-1})$  is defined as follows:
\begin{align}
    \mr{UCB}^{(F)}_{t+n} ({\bm x}^{(n) } |  y_{t+n-1}\stg{n-1} ) \coloneqq  \max_{  \bx\stg{n+1:N} }
    \mr{UCB}^{(F)}_{t+n} (\bx\stg{n:N}|y_{t+n-1}\stg{n-1} ).
    \label{eq:upper_y}
\end{align}
Then, we define the optimistic improvement with respect to $(y\stg{n-1}, {\bm x}^{(n)})$ as follows:
\begin{equation}
    a^{(n)}_{t+n} ({\bm x}^{(n)} | y_{t+n-1}\stg{n-1} )  =  \mr{ UCB}^{(F)}_{t+n} ({\bm x}^{(n) } | y_{t+n-1}\stg{n-1} )  - \max \left\{ \mr{ LCB}^{(F)}_{t+n} (  y_{t+n-1}\stg{n-1} ) ,\: Q_{t+n} \right\}. \label{eq:optimistic_improve}
\end{equation}
Furthermore, we define the maximum uncertainty
\begin{equation}
    b^{(n)}_{t+n} ({\bm x}^{(n)} | { y}_{t+n-1}^{(n-1) } )=  \max_{  { {\bm x}^{(n+1:N)}  } }    \tilde{\sigma}^{(N)}_{t+n-1} (\bx\stg{n:N}   |{ y}_{t+n-1}^{(n-1)} ). \label{eq:US_F}
\end{equation}

Using~\cref{eq:optimistic_improve,eq:US_F},
we propose a CI-based AF $c^{(n)}_{t+n} ({\bm x}^{(n)} | { y}_{t+n-1}^{(n-1) } )$:
\begin{equation}
    c^{(n)}_{t+n} ({\bm x}^{(n)} | {y}_{t+n-1}^{(n-1) } )  =  \max  \left\{ a^{(n)}_{t+n} ({\bm x}^{(n)} | {y}_{t+n-1}^{(n-1) } ),   \eta_t b^{(n)}_{t+n} ({\bm x}^{(n)} | { y}_{t+n-1}^{(n-1) } ) \right\},  \label{eq:af_CI}
\end{equation}
where $\eta_t $ is some learning rate and tends to zero.
Therefore, the subsequent observation point is given by $ \bx\stg{n}_{t+n} \coloneqq \argmax_{\bx\stg{n} \in \dX\stg{n}}  c\stg{n}_{t+n} (\bx\stg{n} | y_{t+n-1}\stg{n-1}  )$.

\Cref{eq:lower_y} denotes the pessimistic maximum when we observe in the subsequent stages with the previous output,
and $Q_t$ represents the pessimistic maximum when the observation is performed from the first stage.
Thus, the second term of~\cref{eq:optimistic_improve} indicates a pessimistic maximum estimator in the current iteration,
and $a\stg{n}_t$  optimistically evaluates how much the observed value exceeds the pessimistically estimated maximum value.
Intuitively, CI-based AF selects the point that has high optimistic improvement,
and if no optimistic improvement is expected (i.e., $a\stg{n}_t$  is small), it selects the point with the highest uncertainty of the cascade process.

The Lipschitz constant $L_{f}$ is a new parameter derived from our proposed method.
Since each $f\stg{n}$ is a black-box function, it is difficult to obtain the exact value of $L_{f}$.
In practice, we have to estimate $L_f$, and one simple way is to determine it from prior knowledge.
Another way is to estimate it from a GP surrogate model.
For any Lipschitz continuous function $f$ on a compact set $\dX \subset \R^d$, $\bar{L}=\max_{\bx \in \dX} \|\nabla f(\bx)\|_1$ satisfies the Lipschitz condition~\cite{gonzalez2016batch}.
Additionally, it is known that if a GP is differentiable, its derivative is also a GP.
Based on these facts, we can estimate $L_f$ by constructing a GP surrogate model of $\nabla f$ and using its sample paths and predictive mean~\cite{sui2015safe,gonzalez2016batch}.
On the other hand, for the Lipschitz continuity assumption (L2), it depends on how the kernel function is chosen.
If we use a kernel that does not consider any similarity between different points, i.e., a pathological kernel such as $k({\bm x},{\bm x}^\prime)=1$ if ${\bm x}={\bm x}^\prime$, and otherwise zero, the posterior standard deviation is discontinuous at the observed points, and (L2) does not hold.
On the other hand, (L2) is shown to hold for commonly used kernels such as linear kernels, Gaussian kernels, and Mat\'{e}rn kernels with more than one degree of freedom (see Appendix \ref{app:modify} for details).

We discuss the multidimensional output setting and the noisy observation setting in the Appendix.
Particularly in noisy situations, two different target functions can be considered.
One is to maximize $F$ through noisy observations, and the other is to maximize the expected final output with respect to the noise at each stage.
We also propose the modified version of CI-based AFs for both target functions and show the theoretical analyses of them in Appendix~\ref{app:CI-based-AF-noisy}.
\section{Theoretical Results}
\label{sec:guarantees}
In this section, we provide the theoretical guarantee for the CI-based AF.
First, we define the estimated solution $\hat{\bx}_t\stg{1},\dots,\hat{\bx}_t\stg{N}$ and regret $r_t$ at iteration $t$ as follows:
\begin{align}
     & \hat{\bm x}^{(1)}_t, \ldots, \hat{\bm x}^{(N)}_t  = \argmax _{ \bx\stg{1:N} \in \mathcal{X}              , 1\leq \tilde{t} \leq t  } \mr{ LCB} ^{(F)}_{\tilde{t}}  (\bx\stg{1:N}), \label{eq:est_solution} \\
     & r_t= F ({\bm x}^{(1)}_\ast , \ldots , {\bm x}^{(N)}_\ast ) - F( \hat{\bm x}^{(1)}_t, \ldots, \hat{\bm x}^{(N)}_t ).
\end{align}
Then, the following theorem holds.
\begin{theorem}
    \label{theorem:error_t}
    Under the same assumptions as in~\cref{theorem:CI}, define the estimated solution $( \hat{\bm x}^{(1)}_t, \ldots, \hat{\bm x}^{(N)}_t )$ by~\cref{eq:est_solution}.
    Then, for any positive number $\xi $, the following holds:
    \begin{align}
             & \max_{\bx\stg{1:N}} \mr{ UCB} ^{(F)}_t  (\bx\stg{1:N} )   -  \max_{\bx\stg{1:N}}  \mr{ LCB} ^{(F)}_t  (\bx\stg{1:N}) < \xi    \label{eq:stopping_condition}    \\
             & \Rightarrow  F ({\bm x}^{(1)}_\ast , \ldots , {\bm x}^{(N)}_\ast ) - F( \hat{\bm x}^{(1)}_t, \ldots, \hat{\bm x}^{(N)}_t )  <\xi. \nonumber
    \end{align}
\end{theorem}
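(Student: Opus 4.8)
The plan is to derive the result as a direct consequence of the credible-interval guarantee in \cref{theorem:CI}, applied to the full cascade by setting $n=1$ and $y\stg{0}=0$. Writing $\bx\stg{1:N}_\ast$ for the optimizer $(\bm x\stg{1}_\ast,\dots,\bm x\stg{N}_\ast)$ and $\hat{\bx}\stg{1:N}_t$ for the estimated tuple $(\hat{\bm x}\stg{1}_t,\dots,\hat{\bm x}\stg{N}_t)$, \cref{theorem:CI} certifies that for every iteration $\tilde t$ and every input, $\mr{LCB}^{(F)}_{\tilde t}(\bx\stg{1:N}) \le F(\bx\stg{1:N}) \le \mr{UCB}^{(F)}_{\tilde t}(\bx\stg{1:N})$. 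The strategy is to sandwich the regret $r_t = F(\bx\stg{1:N}_\ast) - F(\hat{\bx}\stg{1:N}_t)$ between a UCB-based upper bound on the optimal value and an LCB-based lower bound on the value at the estimated solution, and then invoke the stopping condition \cref{eq:stopping_condition}.

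First I would bound the optimum from above: since $\bx\stg{1:N}_\ast$ is feasible, \cref{theorem:CI} gives $F(\bx\stg{1:N}_\ast) \le \mr{UCB}^{(F)}_t(\bx\stg{1:N}_\ast) \le \max_{\bx\stg{1:N}} \mr{UCB}^{(F)}_t(\bx\stg{1:N})$. Next I would bound the estimated value from below. By the defining \cref{eq:est_solution}, the estimator maximizes $\mr{LCB}^{(F)}_{\tilde t}$ jointly over the input and over iterations $\tilde t \le t$; letting $\tilde t^\star$ denote the attaining iteration and applying \cref{theorem:CI} at $\tilde t^\star$ gives
\[
    F(\hat{\bx}\stg{1:N}_t) \ge \mr{LCB}^{(F)}_{\tilde t^\star}(\hat{\bx}\stg{1:N}_t) = \max_{\bx\stg{1:N},\, \tilde t \le t} \mr{LCB}^{(F)}_{\tilde t}(\bx\stg{1:N}) \ge \max_{\bx\stg{1:N}} \mr{LCB}^{(F)}_t(\bx\stg{1:N}),
\]
where the final inequality holds because $\tilde t = t$ lies in the maximization range. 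Subtracting the two bounds and applying \cref{eq:stopping_condition} then yields $r_t \le \max_{\bx\stg{1:N}} \mr{UCB}^{(F)}_t(\bx\stg{1:N}) - \max_{\bx\stg{1:N}} \mr{LCB}^{(F)}_t(\bx\stg{1:N}) < \xi$, which is exactly the claimed implication.

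Since the heavy lifting---showing that the nested LCB/UCB genuinely bracket $F$ despite the uncertainty accumulated across stages---is already carried out in \cref{theorem:CI}, I do not expect a substantial obstacle here. The only step that requires care is the \emph{best-so-far over iterations} structure of the estimator: one must apply the credible interval at the specific iteration $\tilde t^\star$ that realizes the joint maximum rather than at the current iteration $t$, and then exploit monotonicity of the maximum over the enlarged index set to compare against the iteration-$t$ LCB maximum. Matching these iteration indices correctly, together with checking that \cref{theorem:CI} indeed applies at each iteration whose bounds appear, is the sole subtlety.
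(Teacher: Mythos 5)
Your proposal is correct and follows essentially the same route as the paper's own proof (Theorem~\ref{eq:est_opt_bound} in the appendix): upper-bound $F(\bx\stg{1:N}_\ast)$ by $\max \mr{UCB}^{(F)}_t$, lower-bound $F(\hat{\bx}\stg{1:N}_t)$ via the credible interval evaluated at the maximizing iteration $\hat t$ and the inclusion of $\tilde t = t$ in the maximization range, then subtract and invoke the stopping condition. The subtlety you flag about applying the bound at the attaining iteration rather than at $t$ is exactly the step the paper handles with its $\hat t$.
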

\Cref{theorem:error_t} states that if the credible interval width for $F$ is small, then regret $r_t$ is also small.
On the contrary, it does not guarantee whether the credible interval width becomes small or not.
\Cref{theorem:convergence_rt} shows that the interval width can be made arbitrarily small when~\cref{eq:af_CI} is used as the AF.
Let $\gamma\stg{n}_t$ be a \textit{maximum information gain} for $f\stg{n}$ at iteration $t$, and let $\gamma_t=\max_{n\in [N]} \gamma_t\stg{n}$.
Here, the maximum information gain is a commonly used sample complexity measure in the context of the GP-based BO~\cite{srinivas2010gaussian}.
The exact formulation is provided in Appendix~\ref{app:preliminaries}.
The following theorem also holds.
\begin{theorem}
    \label{theorem:convergence_rt}
    Assume that the same conditions as in \cref{theorem:CI} hold.
    Also assume that the Lipschitz continuity assumption (L2) holds.
    Let $\xi$ be a positive number, and let $\eta_t = (1+\log t )^{-1} $.
    Then, the following inequality holds after at most $T$ iterations:
    \begin{equation}
        F({\bm x}^{(1)}_\ast, \ldots ,{\bm x}^{(N)}_\ast ) - F(  \hat{\bm x}^{(1)}_T, \ldots , \hat{\bm x}^{(N)}_T )  < \xi,
    \end{equation}
    where $T $ is the smallest positive integer satisfying $T \in  N \mathbb{Z}_{\geq 0} =\{0,N,2N,\ldots \}$ and
    \begin{equation}
        \frac{8 \beta C^2_4   N^3 }{\log(1+\sigma^{-2})}        \gamma_T \eta^{-2N-2}_T T^{-1}  < \xi^2. \label{eq:xi_inequality}
    \end{equation}
    Here, each constant is given by $C_0= L_\sigma \beta^{1/2} + L_f +1 ,  C_1= \max \{ 1, L_f, L^{-1}_f \}, C_2= 4N^2   C^{2N-3} _0 C^N_1,  C_3= N C^N _2,  C_4  = (2 \beta^{1/2} +2 )^N C^N _3 $.
\end{theorem}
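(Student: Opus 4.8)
The plan is to reduce the regret bound to a statement about the width of the credible interval for $F$, and then show that the CI-based acquisition rule drives this width below $\xi$ within the stated number of iterations. By \Cref{theorem:error_t} it suffices to produce an iteration $\tilde{t}\le T$ at which the global CI gap $\max_{\bx\stg{1:N}} \mr{UCB}^{(F)}_{\tilde{t}}(\bx\stg{1:N}) - \max_{\bx\stg{1:N}} \mr{LCB}^{(F)}_{\tilde{t}}(\bx\stg{1:N})$ falls below $\xi$; since the estimated solution in \cref{eq:est_solution} is a best-so-far maximizer of the LCB (so $\max_{\bx,\tilde{t}\le t}\mr{LCB}^{(F)}_{\tilde{t}}$ is nondecreasing in $t$), the guarantee at that $\tilde{t}$ transfers to $\hat{\bx}_T$. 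Letting $\bx^\dagger$ denote the UCB-maximizer and using $\max_{\bx}\mr{LCB}^{(F)}_t \ge \mr{LCB}^{(F)}_t(\bx^\dagger)$, this gap is at most $2\beta^{1/2}\tilde{\sigma}^{(N)}_{t}(\bx^\dagger)$, hence at most $2\beta^{1/2}\max_{\bx\stg{1:N}}\tilde{\sigma}^{(N)}_{t}(\bx\stg{1:N})$. The whole problem thus becomes: bound the largest propagated posterior standard deviation over iterations.

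First I would unroll the recursion in \cref{eq:tilde_pred} to write $\tilde{\sigma}^{(N)}_t$ as a weighted sum $\sum_{m=1}^{N} L_f^{N-m}\sigma^{(m)}_t(\tilde{\mu}^{(m-1)}_t,\bx\stg{m})$ of the per-stage posterior standard deviations, evaluated at the propagated means. The crucial difficulty is that these are evaluated at the propagated inputs $\tilde{\mu}^{(m-1)}_t$ rather than at the inputs the algorithm actually observed (whose first coordinate is the realized output $y\stg{m-1}$). Here I would invoke (L2): the discrepancy between $\sigma^{(m)}_t$ at the propagated mean and at the realized input is at most $L_\sigma$ times the $L_1$-distance between those inputs, and that distance is itself controlled by the credible-interval half-width $\beta^{1/2}\tilde{\sigma}^{(m-1)}$ of the previous stage (by \Cref{theorem:CI}). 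Chaining this through all $N$ stages is what produces the compounded constants $C_0,\dots,C_4$ and the powers of $\beta^{1/2}$.

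Next I would set up a two-part accounting over iterations, matching the two branches of the acquisition function in \cref{eq:af_CI}. The term $b^{(n)}$ is exactly the maximal propagated uncertainty, so whenever the selection is uncertainty-driven (the $\eta_t b^{(n)}$ branch wins) the algorithm samples near the most uncertain point and the standard information-gain machinery bounds the accumulated squared standard deviations by $O(\gamma_T/\log(1+\sigma^{-2}))$. When instead the improvement branch $a^{(n)}$ wins, I would use that the pessimistic maximum $Q_t=\max_{\bx\stg{1:N}}\mr{LCB}^{(F)}_t$ is nondecreasing and bounded above by $F(\bx\stg{1}_\ast,\dots,\bx\stg{N}_\ast)$, so large optimistic improvements cannot recur indefinitely; the learning rate $\eta_t=(1+\log t)^{-1}\to 0$ is what forces the two branches into balance and lets the uncertainty term dominate asymptotically. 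Summing the relevant squared quantity over iterations gives a cumulative bound of order $\gamma_T\,\eta_T^{-2N-2}$, and passing to the minimum over the $T$ iterations (bounded by the average) produces the $T^{-1}$ factor, matching \cref{eq:xi_inequality} after taking square roots.

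The main obstacle is the error propagation through the cascade: carefully controlling, via (L1)--(L2) and \Cref{theorem:CI}, the gap between the posterior standard deviations at the propagated means and at the genuinely observed inputs, and tracking how $L_f$, $L_\sigma$, the factor $\beta^{1/2}$, and the number of stages $N$ multiply together across all stages once the bound is squared for the information-gain argument. This is the step that fixes the precise constants $C_0=L_\sigma\beta^{1/2}+L_f+1$, $C_1,\dots,C_4$ and the exponent $-2N-2$ on $\eta_T$; everything else is a fairly standard reduction plus the information-gain bound.
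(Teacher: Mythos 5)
Your high-level architecture is right --- reduce the regret to the credible-interval width for $F$ via \cref{theorem:error_t}, decompose $\tilde{\sigma}^{(N)}_t$ into per-stage posterior standard deviations, control the propagated-mean-versus-realized-input discrepancy with (L2), and finish with Cauchy--Schwarz, the information-gain bound, and a min-over-iterations argument. You also correctly identify the central technical difficulty. However, the middle of your argument has a genuine gap. First, the two-branch accounting over which term of $c^{(n)}_t = \max\{a^{(n)}_t,\eta_t b^{(n)}_t\}$ is active is not needed and, as sketched, does not close: your treatment of the improvement branch (``large optimistic improvements cannot recur indefinitely'' because $Q_t$ is nondecreasing) is a potential-function heuristic that never connects back to the information-gain machinery, so it cannot produce the explicit finite-$T$ bound in \cref{eq:xi_inequality}. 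The paper avoids any case split by proving the unconditional sandwich $\eta_t b^{(n)}_t \le c^{(n)}_t \le (2\beta^{1/2}+\eta_t)\,b^{(n)}_t$ (the upper bound holds because $a^{(n)}_t \le \mr{UCB}^{(F)}_t - \mr{LCB}^{(F)}_t \le 2\beta^{1/2}\tilde{\sigma}^{(N)}_t \le 2\beta^{1/2} b^{(n)}_t$), and works with that sandwich on every iteration.

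Second, and more importantly, you do not explain how the quantity you need to sum --- the propagated uncertainty $b^{(n)}_t$, which involves a maximum over \emph{hypothetical} future inputs $\bx^{(n+1:N)}$ evaluated at propagated means --- is converted into posterior standard deviations at the inputs the algorithm \emph{actually} evaluates, which is the only thing the $\sum_t \sigma_t^2 \lesssim \gamma_T$ bound controls. The paper's mechanism is a stage-by-stage chain: bound $c^{(n)}_t$ by $(2\beta^{1/2}+\eta_t)b^{(n)}_t$; apply a Lipschitz ``peeling'' lemma (the analogue of your (L2) step, but applied to swap the conditioning variable from the propagated mean to the realized output $y^{(n)}$, at the cost of one $\sigma^{(n)}_t$ term at the observed input and the constant $C_3$); then convert the remaining tail back up via $b^{(n+1)}_t \le \eta_t^{-1}c^{(n+1)}_t$ so that the maximization property of the actually selected $\bx^{(n+1)}_{t+n+1}$ can be invoked. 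Iterating this over all $N$ stages is precisely what yields $c^{(1)}_t(\bx^{(1)}_{t+1}) \le C_4\eta_t^{-N}\sum_{n}\sigma^{(n)}_t(y^{(n-1)},\bx^{(n)}_{t+n})$, and together with the final conversion of the regret to $2\beta^{1/2}\eta_T^{-1}c^{(1)}_{T^*}$ it is the source of the exponent $-2N-2$ on $\eta_T$ after squaring. Your proposal attributes that exponent to an unspecified summation and to the branches ``coming into balance,'' which is not the actual mechanism; without the per-stage $b\le\eta_t^{-1}c$ conversion and the use of the sequential argmax rule at every intermediate stage, the argument does not go through.
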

%
The inequality~\cref{eq:xi_inequality} still has the variable $\gamma_T$.
Nevertheless, the order of $\gamma_T$ for commonly used kernels such as the linear and Gaussian kernels is sub-linear under mild conditions~\cite{srinivas2010gaussian}.
Hence, the integer $T$ satisfying~\cref{eq:xi_inequality} exists in these cases.
This indicates that a solution $ \hat{\bm x}^{(1)}_T, \ldots , \hat{\bm x}^{(N)}_T$ that achieves an arbitrary accuracy $\xi$ can be obtained in a finite number of observations.

In terms of the stopping criterion, if the accuracy parameter $\xi$ is provided, we can use the condition \cref{eq:stopping_condition} as the stopping criterion for EI- and CI-based AFs.
Although EI-based AF is not necessarily terminated by this stopping criterion, Theorem~\ref{theorem:convergence_rt} shows that CI-based AF terminates after at most $T$ iteration that satisfies \cref{eq:xi_inequality} when all assumptions hold.

%

\section{Extensions}
\label{sec:extension}
In this section, we consider an extension called \emph{suspension} setting in which we are allowed to suspend the cascade process in the middle of the multistage decision-making process.
Suspension is beneficial, especially when the output of a middle stage is significantly different from the prediction, and the output is not expected to be beneficial for the subsequent stages.
For example, if a suspension occurs at stage $n$, the output $y^{(n-1)}$ of the previous stage remains unused, and this can be stored as a \emph{stock}.
If a stored stock turns out to be useful later, we can reuse the stock and resume the cascade process from the middle stage.

\paragraph{Formulation}
Let $\mcal{S}_t\stg{n}$ be the set of stocks at stage $n \in \{0, ..., N-1\}$ in iteration $t$~\footnote{We set $\mcal{S}_t\stg{0}=\{{0}\}$ for all $t$.}.
Because the process can be resumed from the middle stage in the suspension setting, the user's task in each iteration $t$ is to select the best pair $(y\stg{n-1}, \bm x\stg{n})$ from the set of candidates $\{\mcal{S}_t\stg{n} \times \dX\stg{n}\}_{n = 0}^{N-1}$.
Because of a user's choice, the used stock $y\stg{n-1}$ is removed from the set of stocks, and the newly obtained output $y\stg{n}$ is added to the set of stocks.
The difference in the cost of each stage is important in the suspension setting because, for example, if the costs of the later stages are greater than those of former stages, then the suspension strategy can be more beneficial.
Therefore, we introduce the cost of each stage $\lambda\stg{n}>0$ for $n \in [N]$.
\Cref{fig:suspension_resumption} shows a conceptual diagram of the suspension setting.

\begin{figure}[t]
    \centering
    \begin{subfigure}[b]{.5\linewidth}
        \centering
        \includegraphics[width=0.8\linewidth]{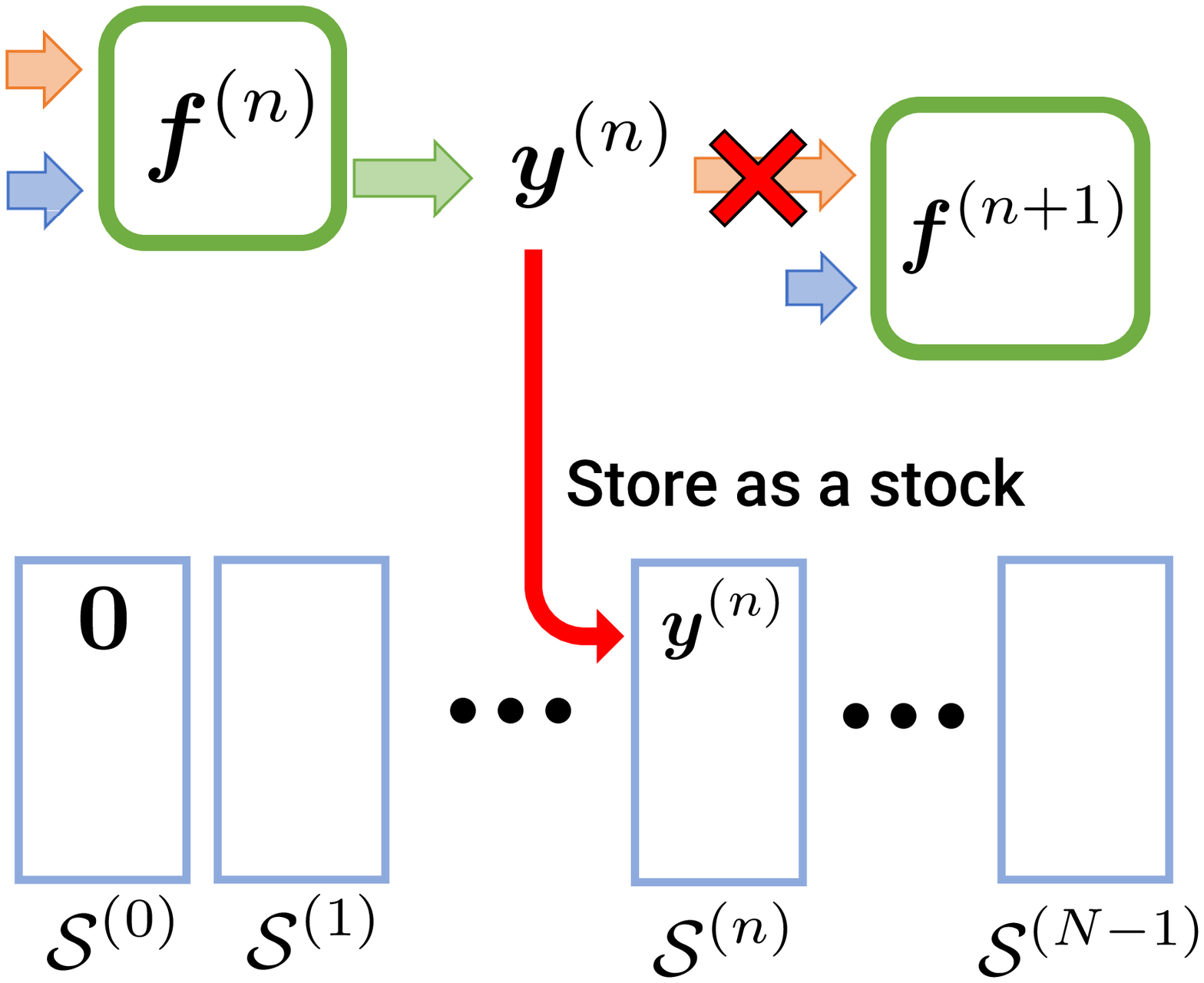}
        \subcaption{Suspension}
        \label{fig:suspension}
    \end{subfigure}%
    \centering
    \begin{subfigure}[b]{.5\linewidth}
        \centering
        \includegraphics[width=0.8\linewidth]{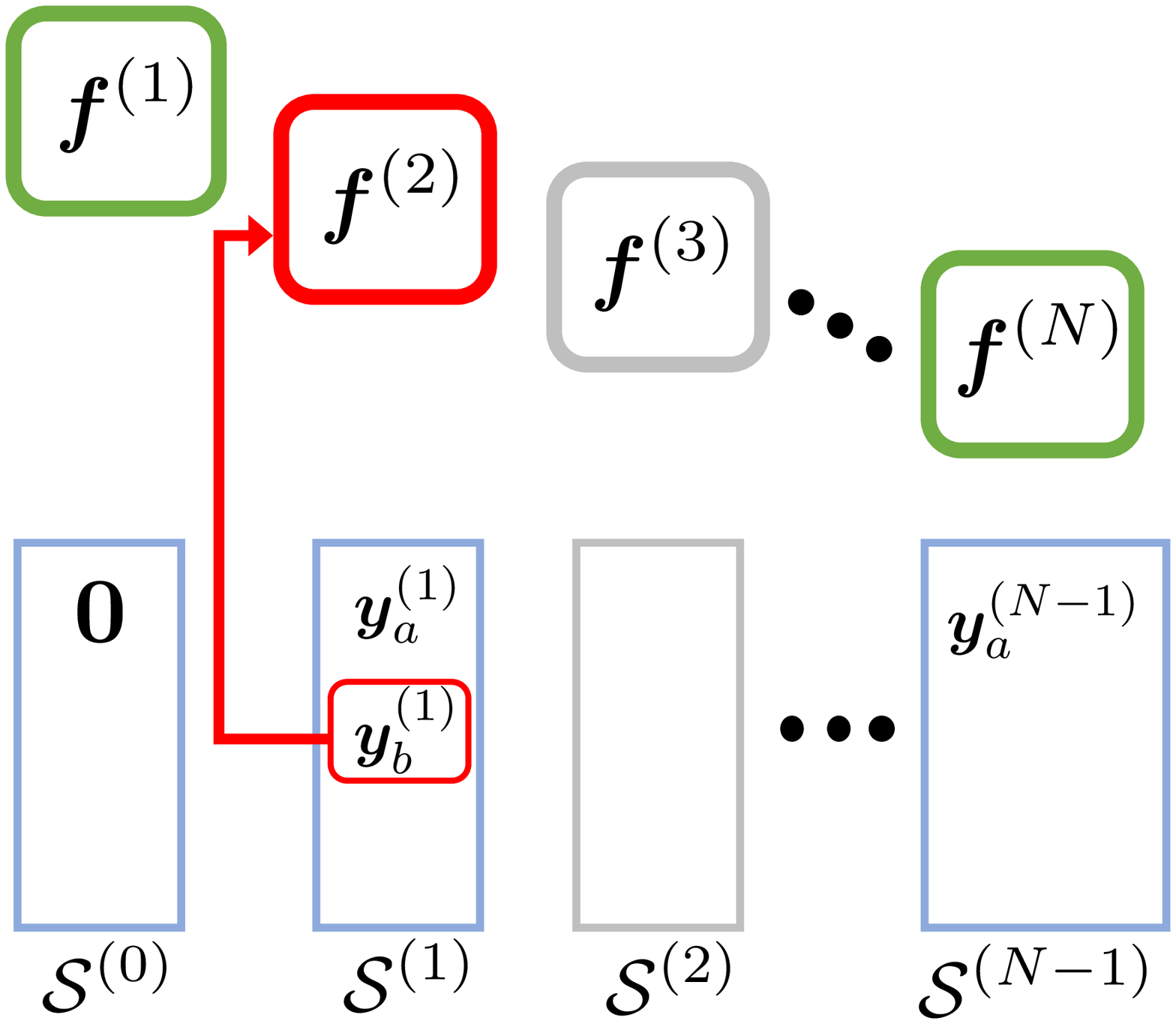}
        \subcaption{Resumption}
        \label{fig:resumption}
    \end{subfigure}
    \caption{
        Conceptual diagram of the suspension setting.
 \subref{fig:suspension} shows the case where the output $y\stg{n}$ is stored as a stock in stage $n$, and the observation from the subsequent stage is suspended.
 \subref{fig:resumption} shows the case where the observation is reused from stage two using the stock $y\stg{1}_b$.
    }
    \label{fig:suspension_resumption}
\end{figure}

\paragraph{Acquisition function for suspension setting}
We propose the following AF for the suspension setting:
\begin{equation}
    n_t, \by_{t}\stg{n-1},\bx_t\stg{n} = \argmax_{\mathclap{\substack{i\in [N],\\ y\stg{i-1}\in\mcal{S}_t\stg{i-1},\\ \bx\stg{i}\in \dX\stg{i}}}} \widetilde{U}_{i} (y\stg{i-1},\bx\stg{i}) / \sum_{j=i}^{N}\lambda\stg{j}.
    \label{eq:next_point}
\end{equation}
There are two differences between the AF in~\cref{eq:next_point} and the EI-based AF in~\cref{eq:EI_lower_mc}.
First, in~\cref{eq:next_point}, based on the set of stocks $\{\mcal{S}_t\stg{i-1}\}_{i \in [N]}$, we determine which stage to resume from, which stock to use, and what input to use.
Thus, \cref{eq:next_point} implicitly determines whether the sequential evaluation in cascade is suspended or not.
Second, the utility is divided by the total cost from stages $n$ to $N$, which suggests that a cost-effective choice is performed.
Resuming from a later stage has advantages (considering cost) because the goal is to optimize the output of the final stage.
The AF in~\cref{eq:next_point} can be interpreted as an extension of the EI-based AF in~\cref{eq:EI_lower_mc} because it handles the two cases of starting from the first stage and resuming from the middle stage using a stock.
It is necessary to compute the utility function for many candidates when solving the optimization problem in~\cref{eq:next_point}.
Nonetheless, this can be done efficiently by exploiting the fact that the evaluation of $\widetilde{U}_n$ in stage $n$ does not depend on the observations in the earlier stages.

\paragraph{Stock Reduction}
In the suspension setting, having a larger number of stocks provides us a wider choice.
However, practically, it can be costly to store several stocks.
In such a situation, it is necessary to be able to decide which stocks to retain and which ones to discard.
A reasonable way is to discard the stocks that are not expected to contribute to the optimal solution.
We implement this based on the credible interval.

For any stock $y\stg{n}\in \mathcal{S}\stg{n}$ in stage $0\le n \le N-1 $, let
\begin{equation}
    F(y\stg{n})=\max_{\bx\stg{n+1:N}}F(\bx\stg{n+1:N}|y\stg{n})
\end{equation}
be the maximum function value when the observation is performed until the final stage using $y\stg{n}$.
Therefore, the LCB and UCB of $F(y\stg{n})$ are given as
\begin{align}
 \mr{ LCB}^{(F)}_t ({ y}\stg{n} ) & =  \max_{ \bx\stg{n+1:N}   } \mr{LCB}\stg{F}_t(\bx\stg{n+1:N}| y\stg{n}), \\
 \mr{ UCB}^{(F)}_t ({ y}\stg{n} ) & =  \max_{ \bx\stg{n+1:N}  } \mr{UCB}\stg{F}_t(\bx\stg{n+1:N}| y\stg{n}).
\end{align}
Then, the following theorem holds.
\begin{theorem}\label{theorem:reduction}
 For any $n \in [N-1]$ and ${ y}\stg{n} \in \mathcal{S}^{(n)} $, under the same assumptions as in~\cref{theorem:CI}, assume that the following holds:
 \begin{equation}
  \mr{ UCB}^{(F)}_t ({ y}\stg{n} )  <  \max_{  \tilde{  y} \in \bigcup _{s=0}^{N-1}  \mathcal{S}_t^{(s)}   }   \mr{ LCB}^{(F)}_t (\tilde{ y}\stg{s} ) .
  \label{eq:delete_cond}
 \end{equation}
 Then, $F({ y}\stg{n} )  < F({\bm x}^{(1)}_\ast ,\ldots, {\bm x}^{(N)}_\ast ) $ holds.
\end{theorem}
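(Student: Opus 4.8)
The plan is to sandwich the quantity $F(y\stg{n})$ strictly below the global optimum $F(\bx\stg{1}_\ast, \ldots, \bx\stg{N}_\ast)$ by inserting between them two confidence bounds: the upper confidence bound of the candidate stock $y\stg{n}$ and the lower confidence bound of the dominating stock singled out on the right-hand side of \cref{eq:delete_cond}. These two bounds are related to each other by the hypothesis \cref{eq:delete_cond}, and each is tied to the corresponding true value through the credible interval guaranteed by \cref{theorem:CI}.

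Concretely, I would proceed in the following order. First, apply \cref{theorem:CI} to obtain, for every continuation $\bx\stg{n+1:N}$, the pointwise containment $F(\bx\stg{n+1:N}|y\stg{n}) \le \mr{UCB}\stg{F}_t(\bx\stg{n+1:N}|y\stg{n})$; taking the maximum of both sides over $\bx\stg{n+1:N}$ and invoking the definitions of $F(y\stg{n})$ and $\mr{UCB}\stg{F}_t(y\stg{n})$ yields $F(y\stg{n}) \le \mr{UCB}\stg{F}_t(y\stg{n})$. Second, let $\tilde y\stg{s^\ast}$ denote the stock attaining the maximum on the right-hand side of \cref{eq:delete_cond}, so that the hypothesis reads $\mr{UCB}\stg{F}_t(y\stg{n}) < \mr{LCB}\stg{F}_t(\tilde y\stg{s^\ast})$. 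Third, apply \cref{theorem:CI} in the lower direction, $\mr{LCB}\stg{F}_t(\bx\stg{s^\ast+1:N}|\tilde y\stg{s^\ast}) \le F(\bx\stg{s^\ast+1:N}|\tilde y\stg{s^\ast})$, and maximize over $\bx\stg{s^\ast+1:N}$ to get $\mr{LCB}\stg{F}_t(\tilde y\stg{s^\ast}) \le F(\tilde y\stg{s^\ast})$.

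The final and conceptually most important step is to argue $F(\tilde y\stg{s^\ast}) \le F(\bx\stg{1}_\ast,\ldots,\bx\stg{N}_\ast)$. Here I would use that the observations are noiseless: any stock $\tilde y\stg{s^\ast} \in \mcal{S}_t\stg{s^\ast}$ is the genuine output of the first $s^\ast$ true stage functions for some controllable inputs $\bx\stg{1:s^\ast}$ that were actually used in an earlier iteration (for $s^\ast=0$ this is the convention $\tilde y\stg{0}=0$). Consequently $F(\tilde y\stg{s^\ast}) = \max_{\bx\stg{s^\ast+1:N}} F(\bx\stg{1:s^\ast}, \bx\stg{s^\ast+1:N})$ is the objective value of a \emph{feasible} full input sequence, hence is at most the global maximum $F(\bx\stg{1}_\ast,\ldots,\bx\stg{N}_\ast)$. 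Chaining the four relations as $F(y\stg{n}) \le \mr{UCB}\stg{F}_t(y\stg{n}) < \mr{LCB}\stg{F}_t(\tilde y\stg{s^\ast}) \le F(\tilde y\stg{s^\ast}) \le F(\bx\stg{1}_\ast,\ldots,\bx\stg{N}_\ast)$ then yields the claim. I expect the main obstacle to be exactly this feasibility identification: one must be careful that a stock is not an arbitrary real number but a realizable intermediate output, so that continuing the cascade optimally from it cannot exceed the true global optimum; everything else reduces to the monotonicity of the $\max$ operator together with the credible-interval containment supplied by \cref{theorem:CI}.
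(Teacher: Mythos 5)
Your proposal is correct and follows essentially the same route as the paper's proof (Corollary~\ref{cor:delete} in the appendix): chain $F(y\stg{n}) \le \mr{UCB}^{(F)}_t(y\stg{n}) < \max_{\tilde y} \mr{LCB}^{(F)}_t(\tilde y) \le F(\bx\stg{1}_\ast,\ldots,\bx\stg{N}_\ast)$, using the credible intervals from \cref{theorem:CI} on both ends and the hypothesis in the middle. The feasibility point you flag as the main obstacle is exactly the step the paper handles with the remark that each stock is ``the observed value corresponding to some input,'' so your more explicit decomposition of the last inequality into $\mr{LCB}^{(F)}_t(\tilde y\stg{s^\ast}) \le F(\tilde y\stg{s^\ast}) \le F(\bx\stg{1}_\ast,\ldots,\bx\stg{N}_\ast)$ is just a finer-grained presentation of the same argument.
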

The proof of the theorem is presented in Appendix~\ref{app:CI-based-AF}.
From~\cref{theorem:reduction}, the condition~\cref{eq:delete_cond} is used to decide which stocks to discard.
\Cref{theorem:reduction} only guarantees that the stock will not become the optimal value.
Suboptimal stocks may also be effectively used in the optimization process.

\section{Experiments}
\label{sec:experiments}


\begin{figure}[t]
    \centering
    \includegraphics[width=0.8\linewidth]{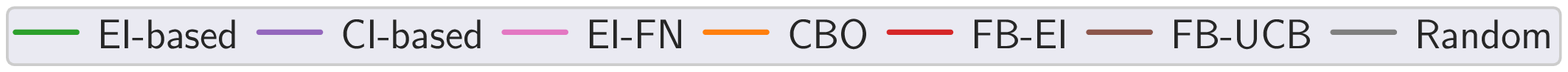}
    \par \medskip
    \includegraphics[width=0.45\linewidth]{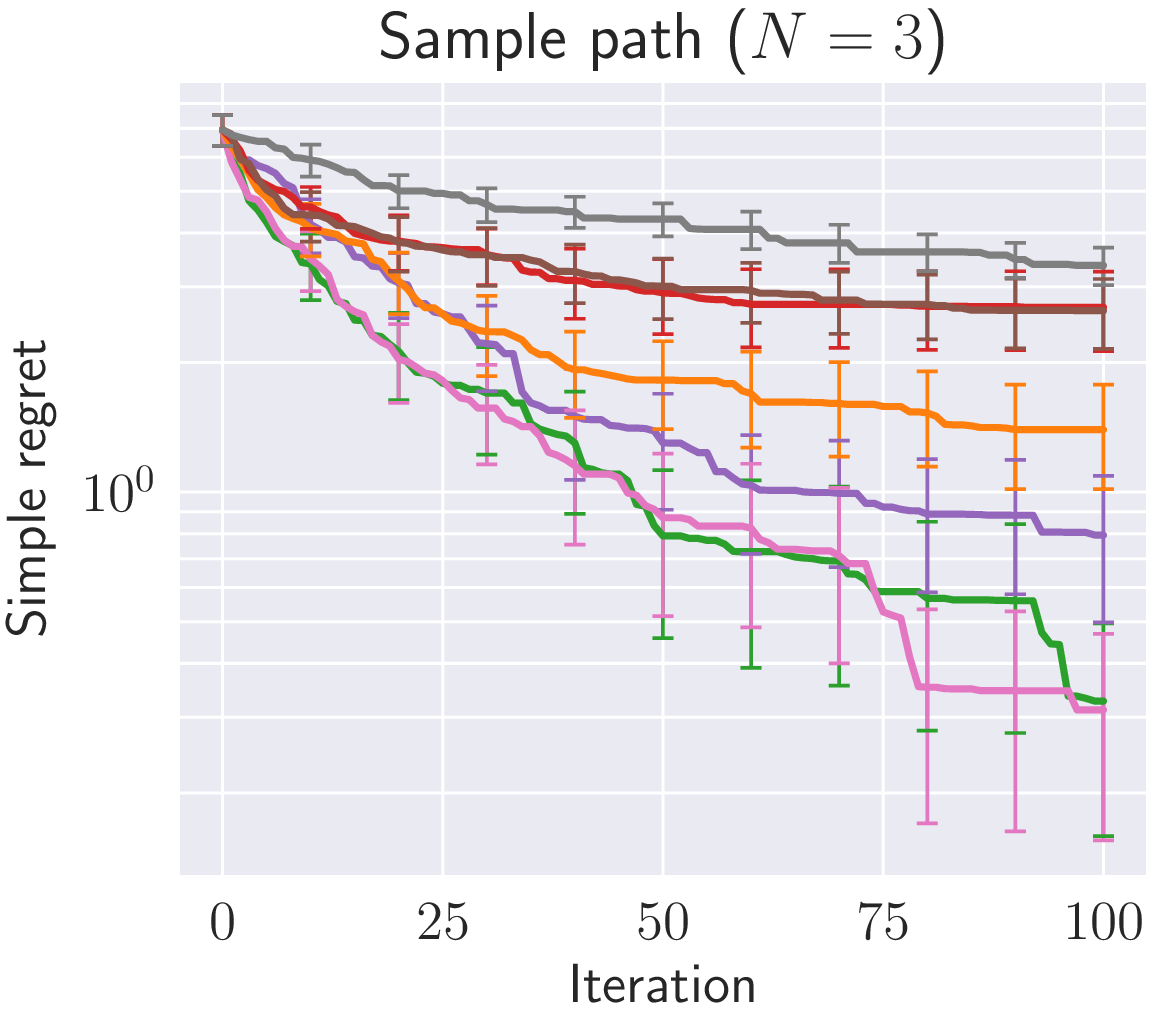}
    \includegraphics[width=0.45\linewidth]{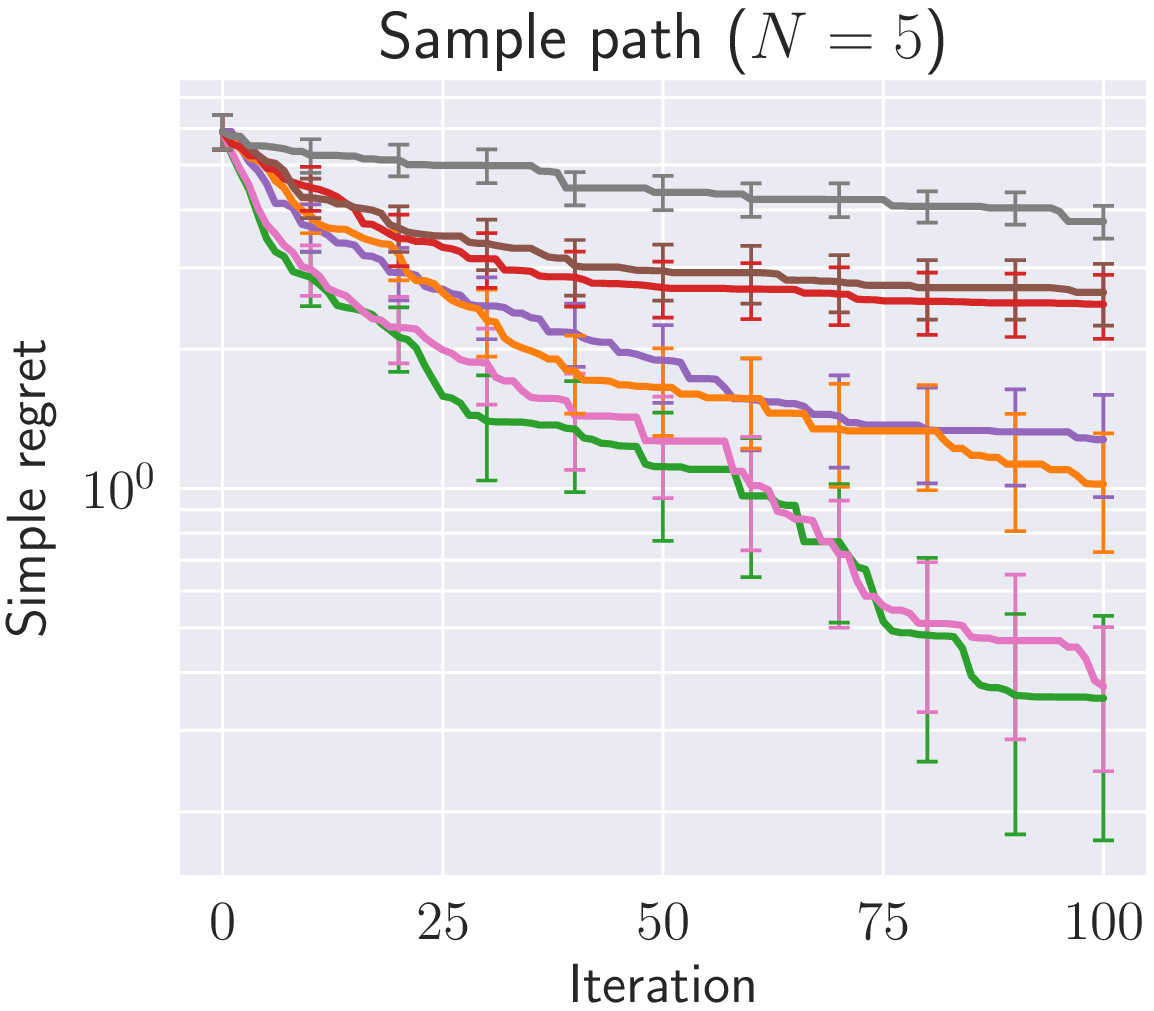}
    \par \medskip
    \includegraphics[width=0.45\linewidth]{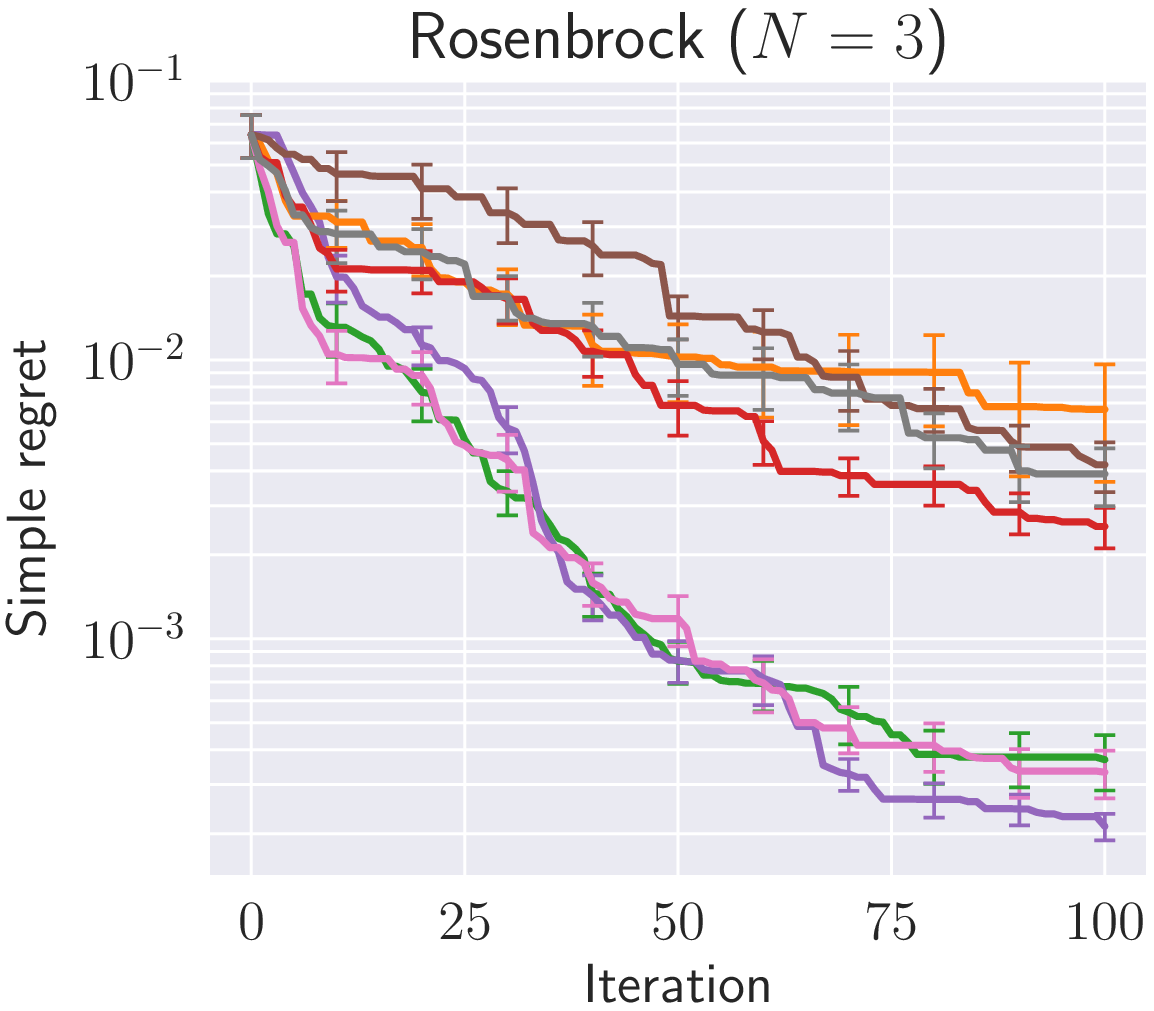}
    \includegraphics[width=0.45\linewidth]{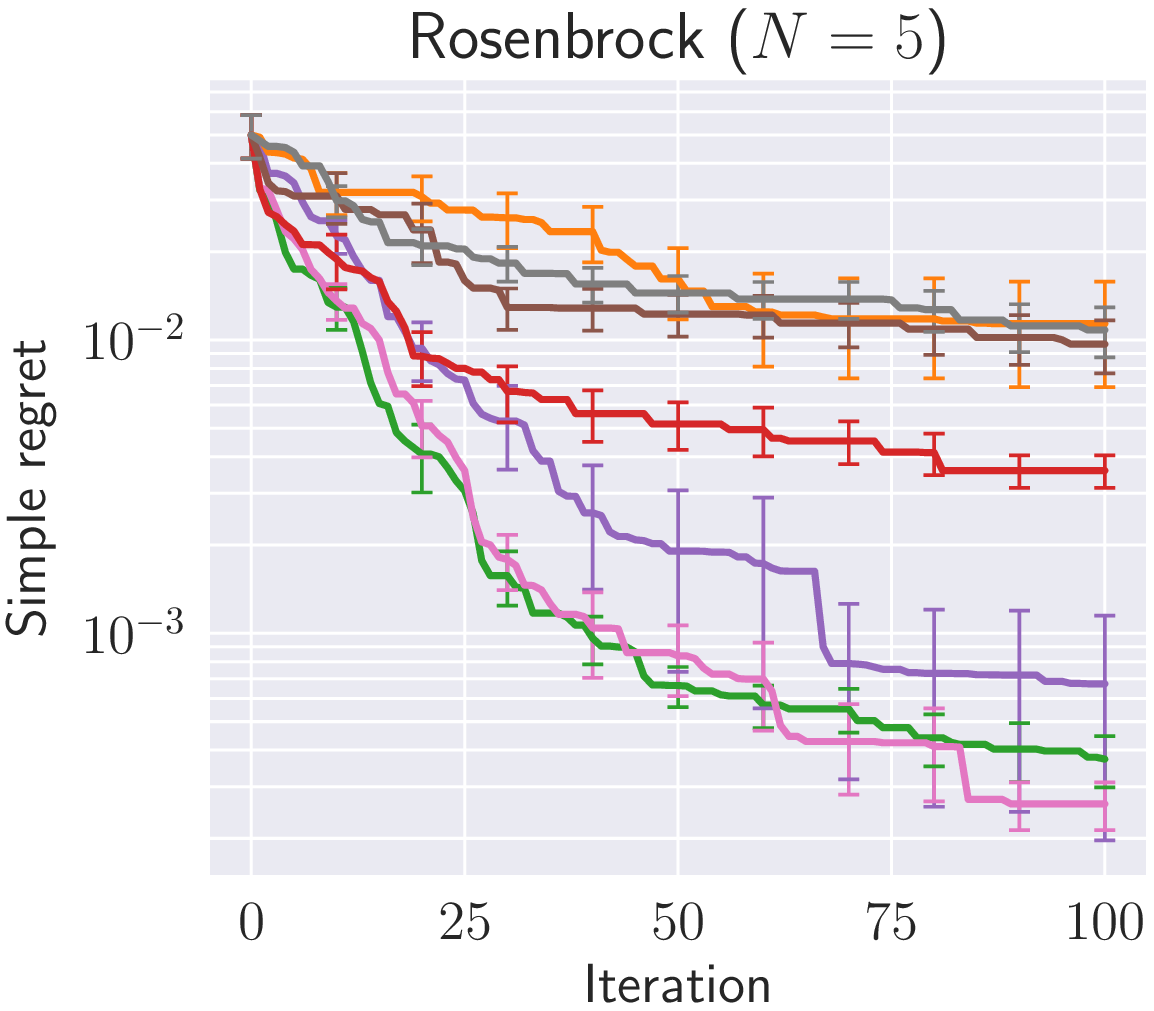}
    \par \medskip
    \includegraphics[width=0.45\linewidth]{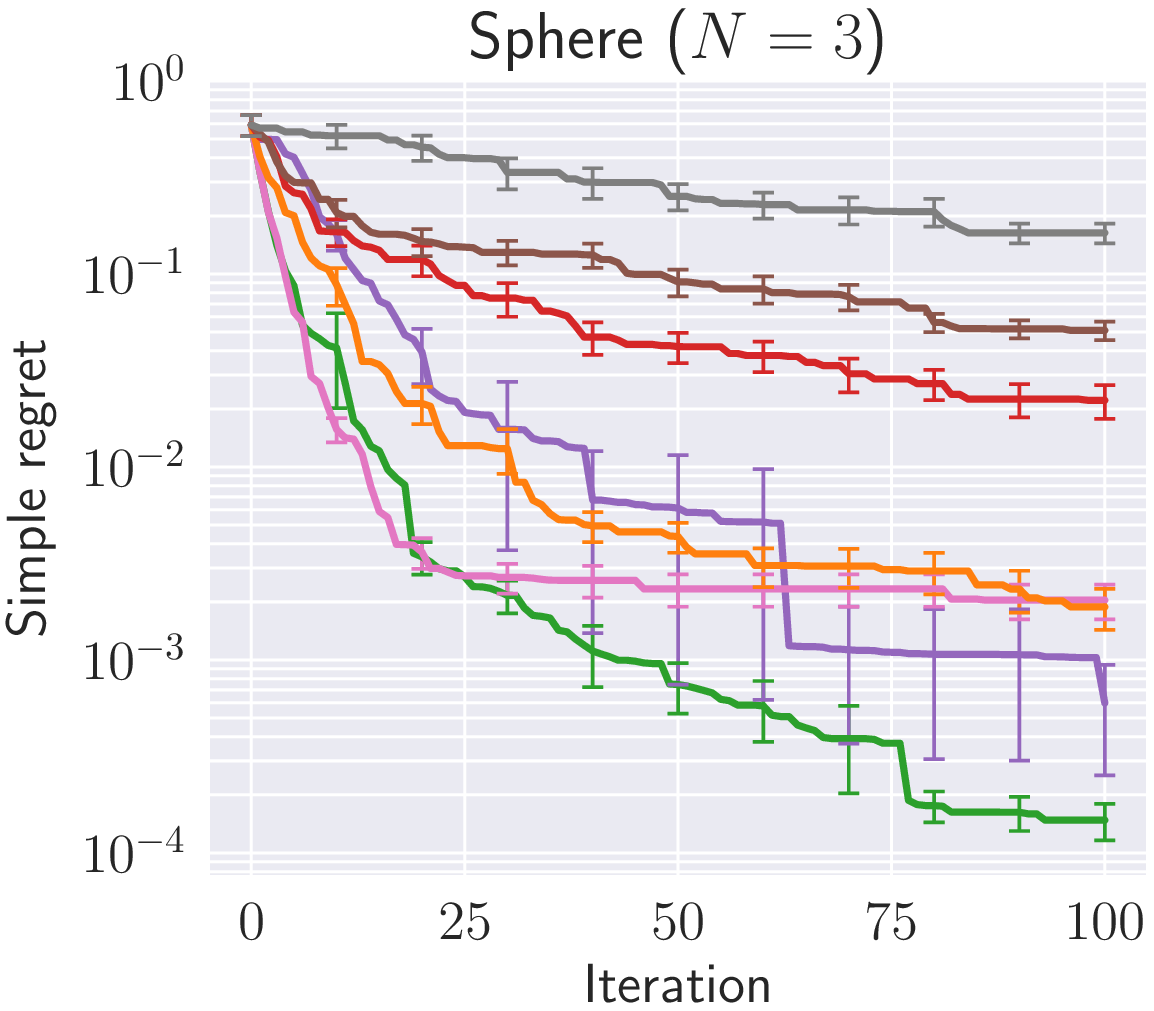}
    \includegraphics[width=0.45\linewidth]{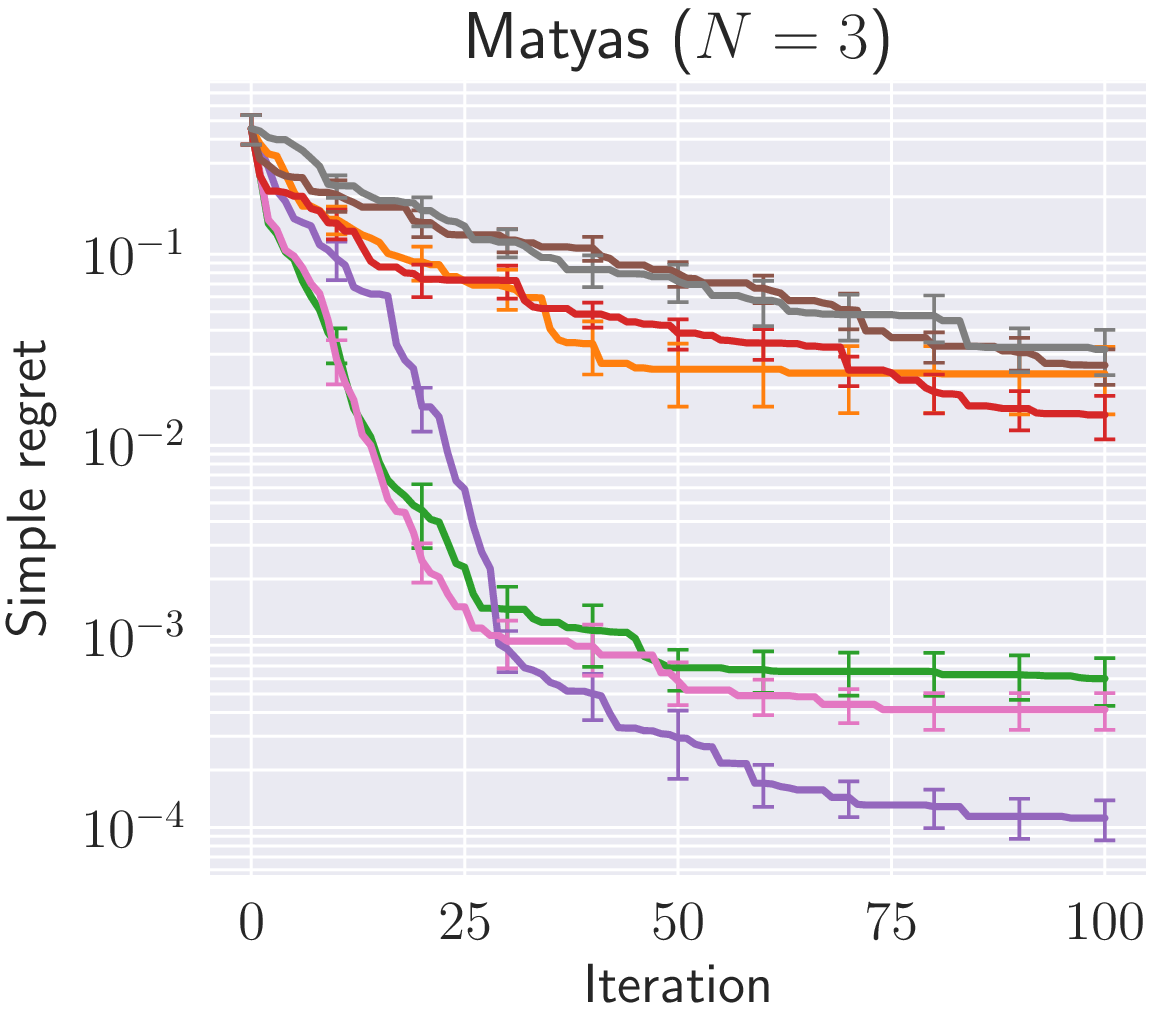}
    \caption{
        Experimental results of the synthetic functions.
        The solid line represents the average performance, and the error bar represents the standard error.
    }
    \label{fig:synth_result}
\end{figure}


\begin{figure}[t]
    \centering
    \includegraphics[width=0.8\linewidth]{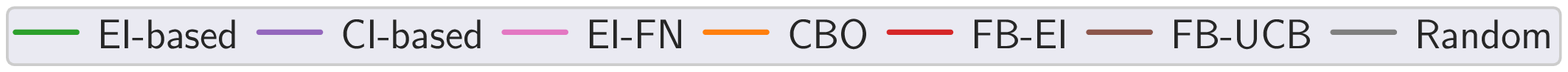}
    \par \medskip
    \includegraphics[width=0.45\linewidth]{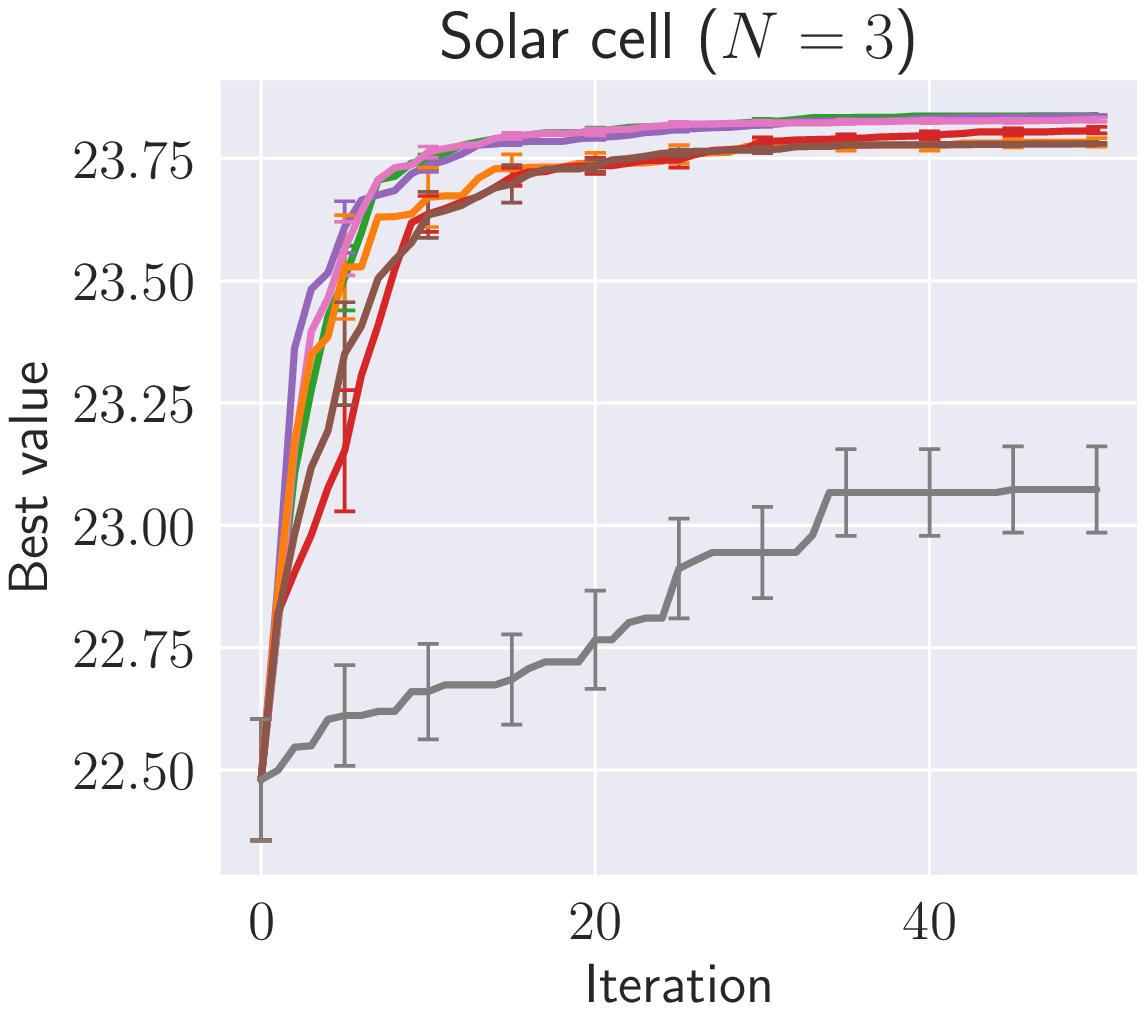}
    \includegraphics[width=0.45\linewidth]{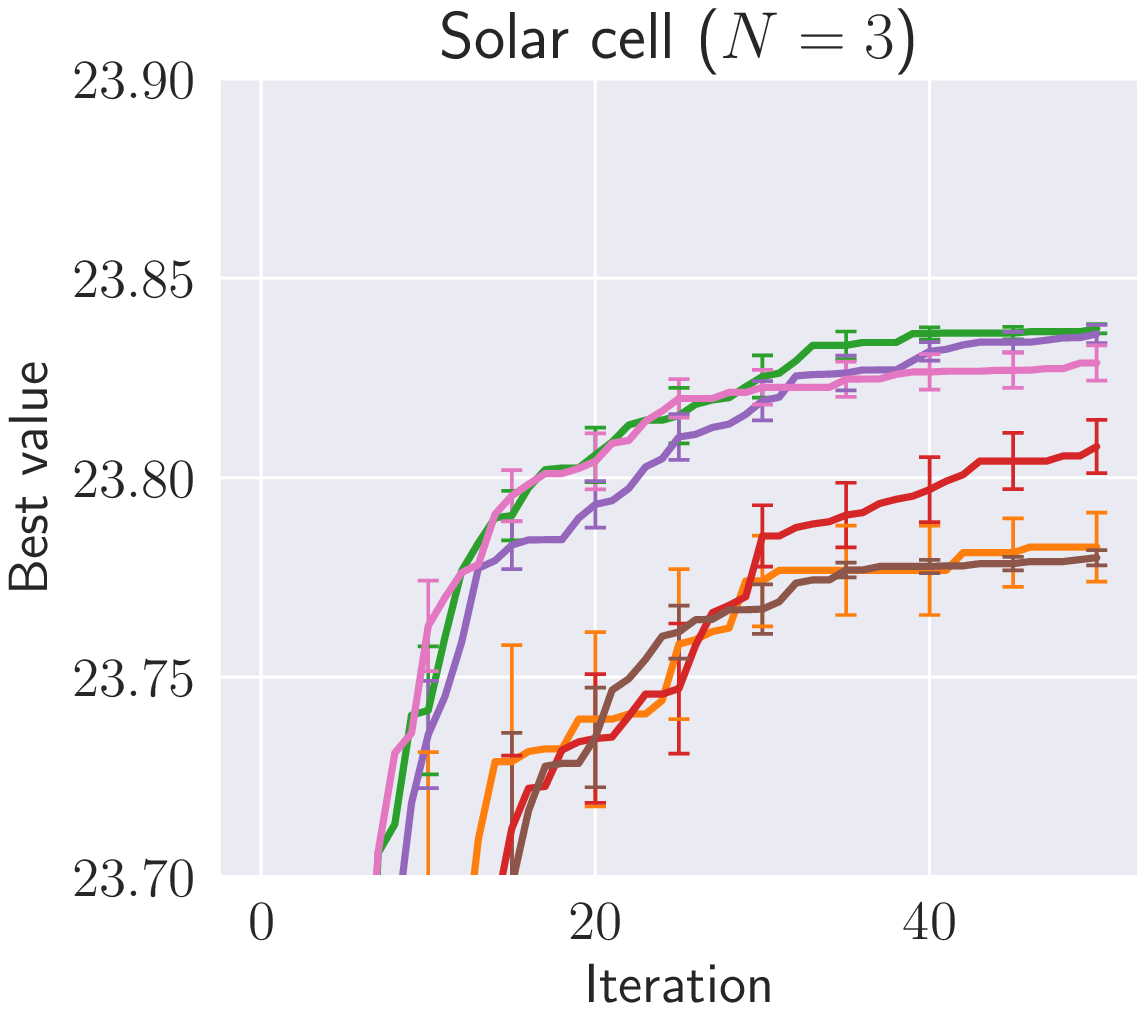}
    \caption{
        Results of the solar cell simulator.
        The right plot is an enlarged version of the left plot.
    }
    \label{fig:solarcell_result}
\end{figure}

\begin{figure}[t]
    \centering
    \includegraphics[width=0.8\linewidth]{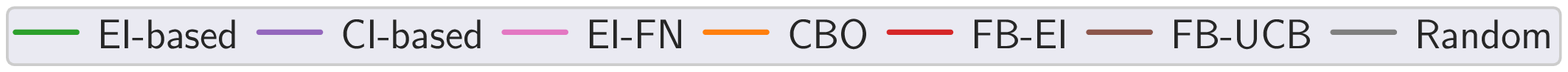}
    \par \medskip
    \includegraphics[width=0.45\linewidth]{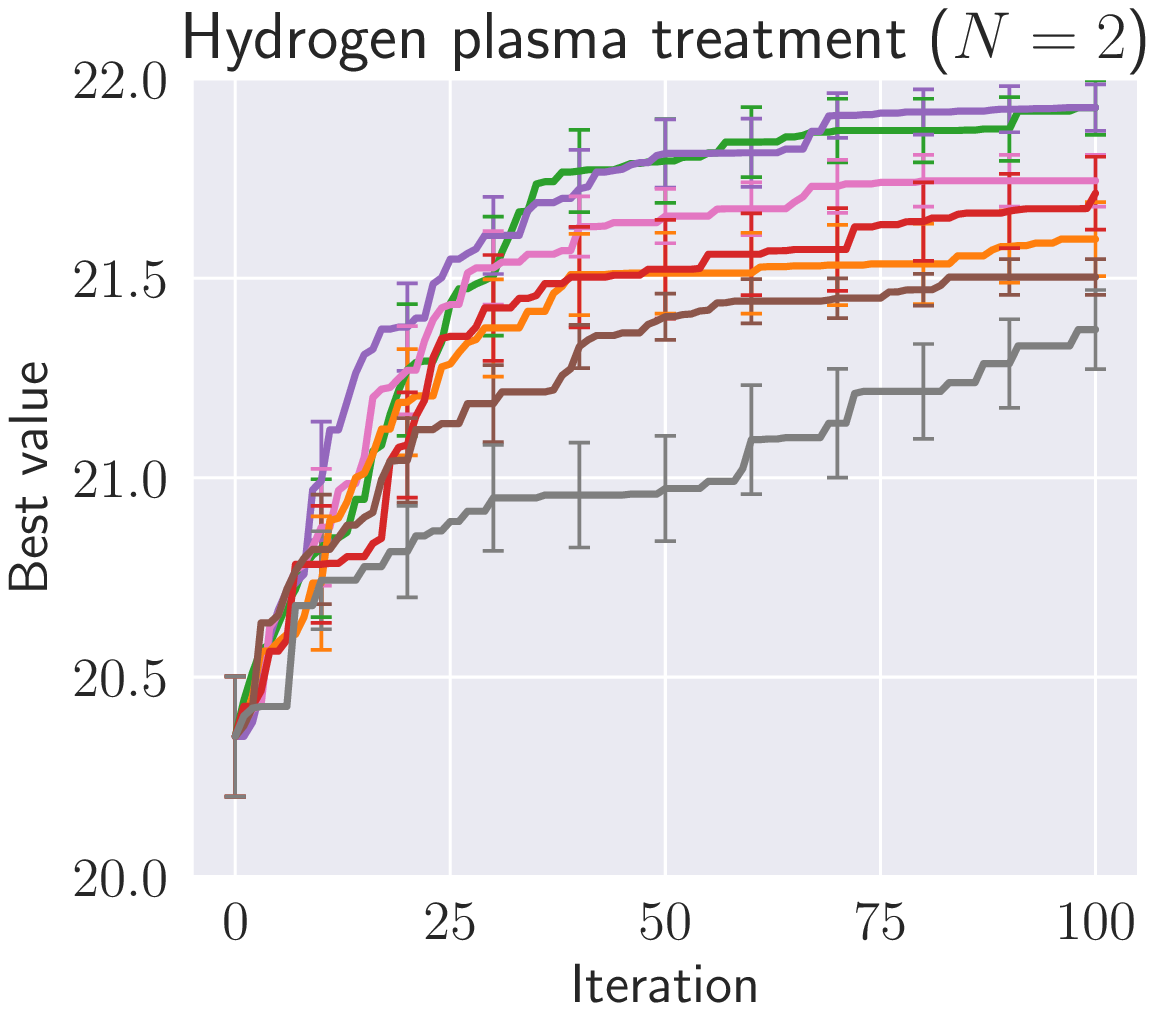}
    \caption{
        Results of the hydrogen plasma treatment.
    }
    \label{fig:hpt_result}
\end{figure}


\begin{figure}[t]
    \centering
    \includegraphics[width=1.0\linewidth]{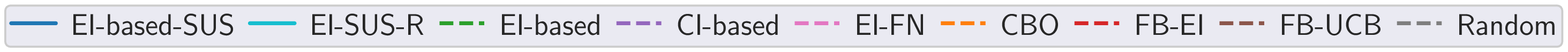}
    \par \medskip
    \begin{subfigure}{.5\linewidth}
        \centering
        \includegraphics[width=0.85\linewidth]{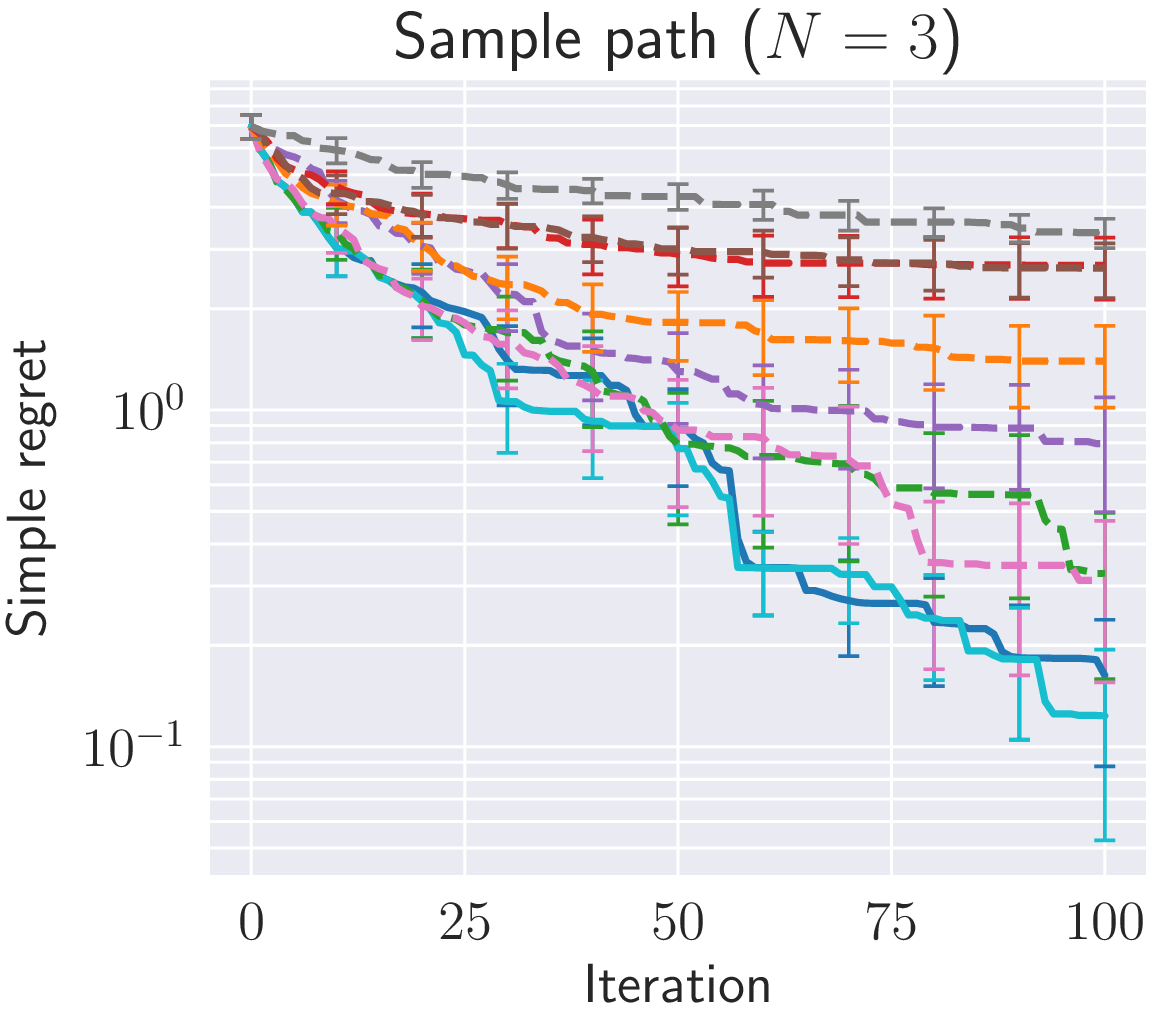}
        \includegraphics[width=0.85\linewidth]{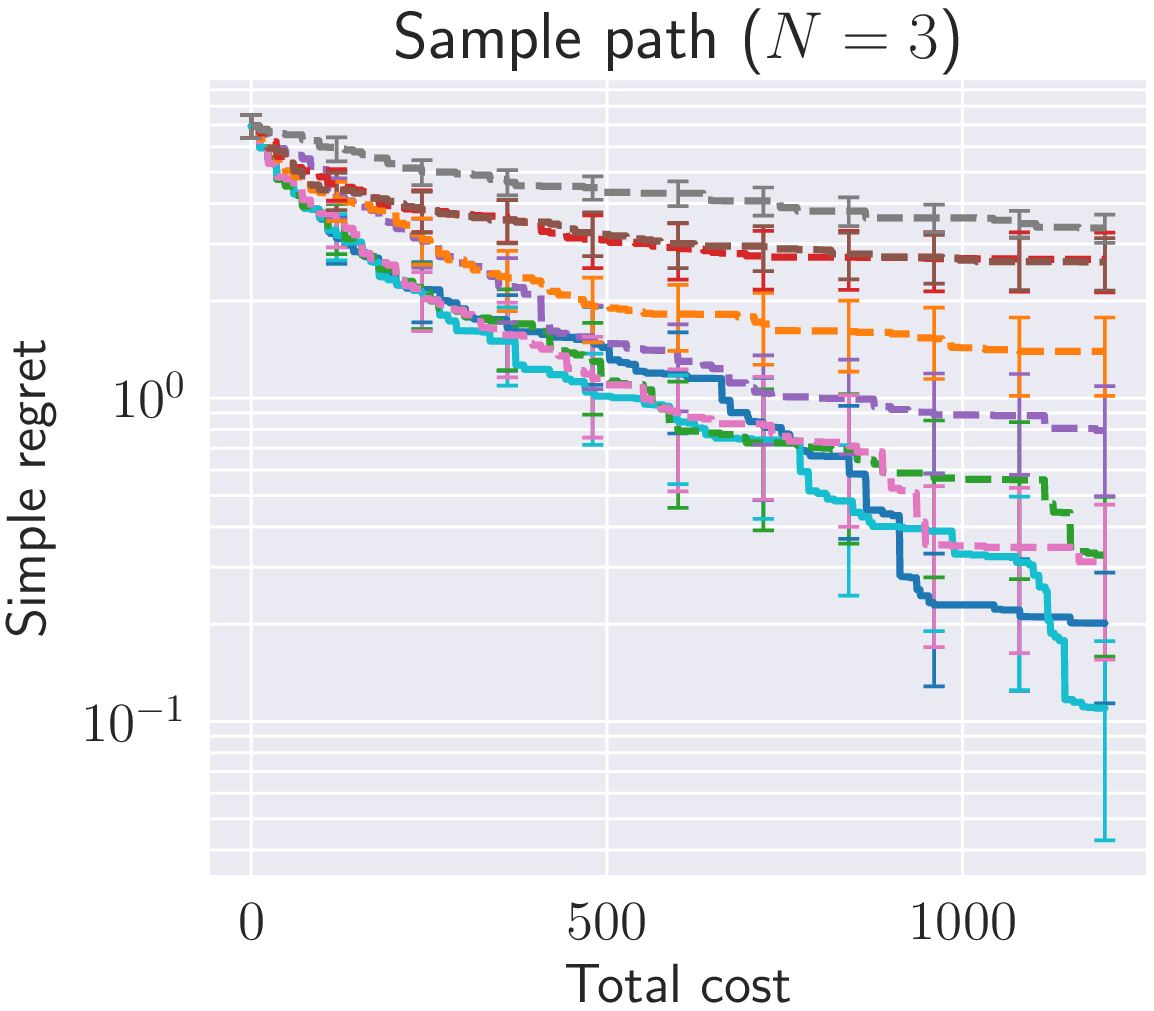}
        \subcaption{$\bs{\lambda}=(1,1,1)$}
        \label{fig:ext_same}
    \end{subfigure}%
    \begin{subfigure}{.5\linewidth}
        \centering
        \includegraphics[width=0.85\linewidth]{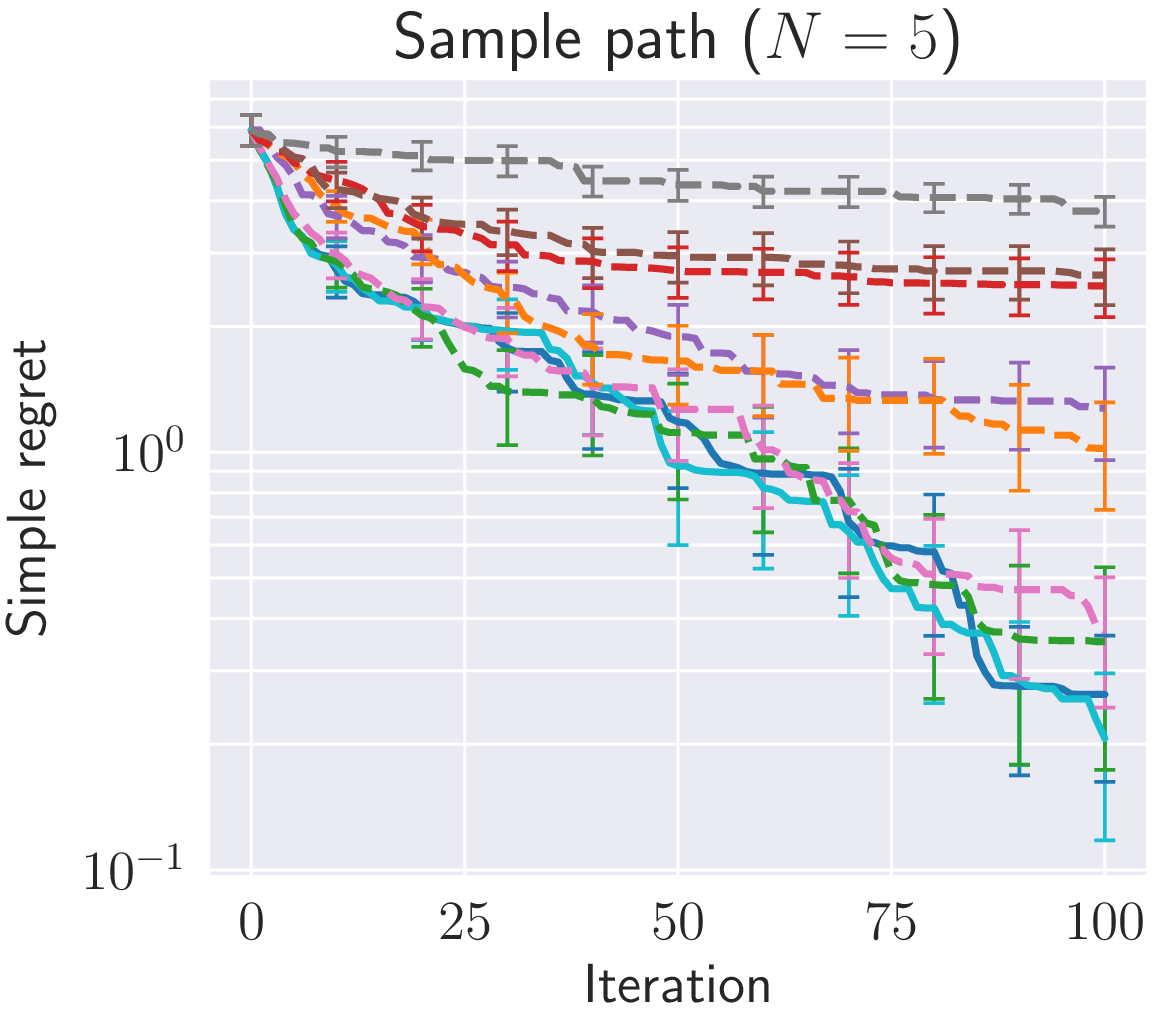}
        \includegraphics[width=0.85\linewidth]{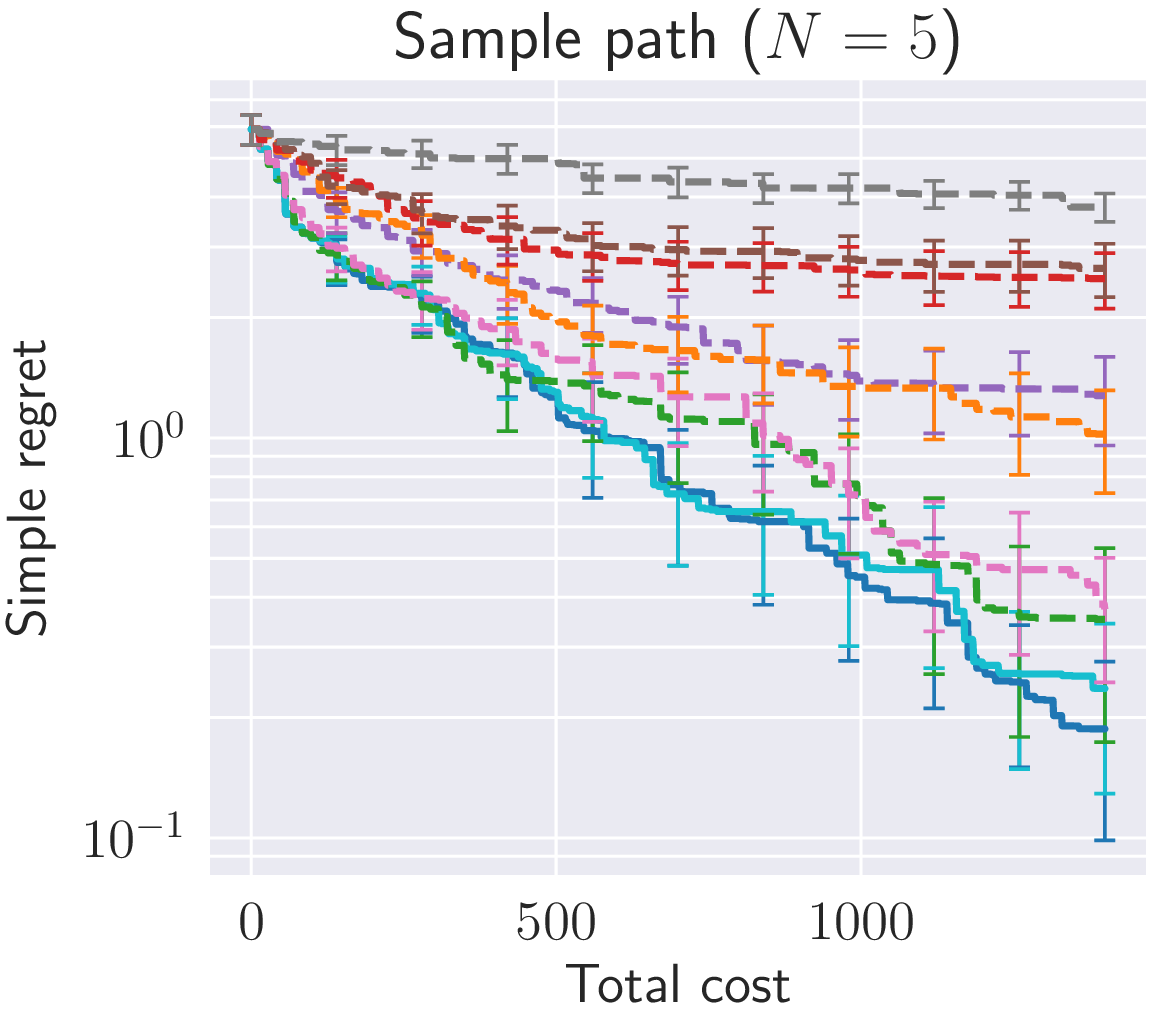}
        \subcaption{$\bs{\lambda}=(1,1,10)$}
        \label{fig:ext_tail}
    \end{subfigure}
    \begin{subfigure}{\linewidth}
        \centering
        \includegraphics[width=0.45\linewidth]{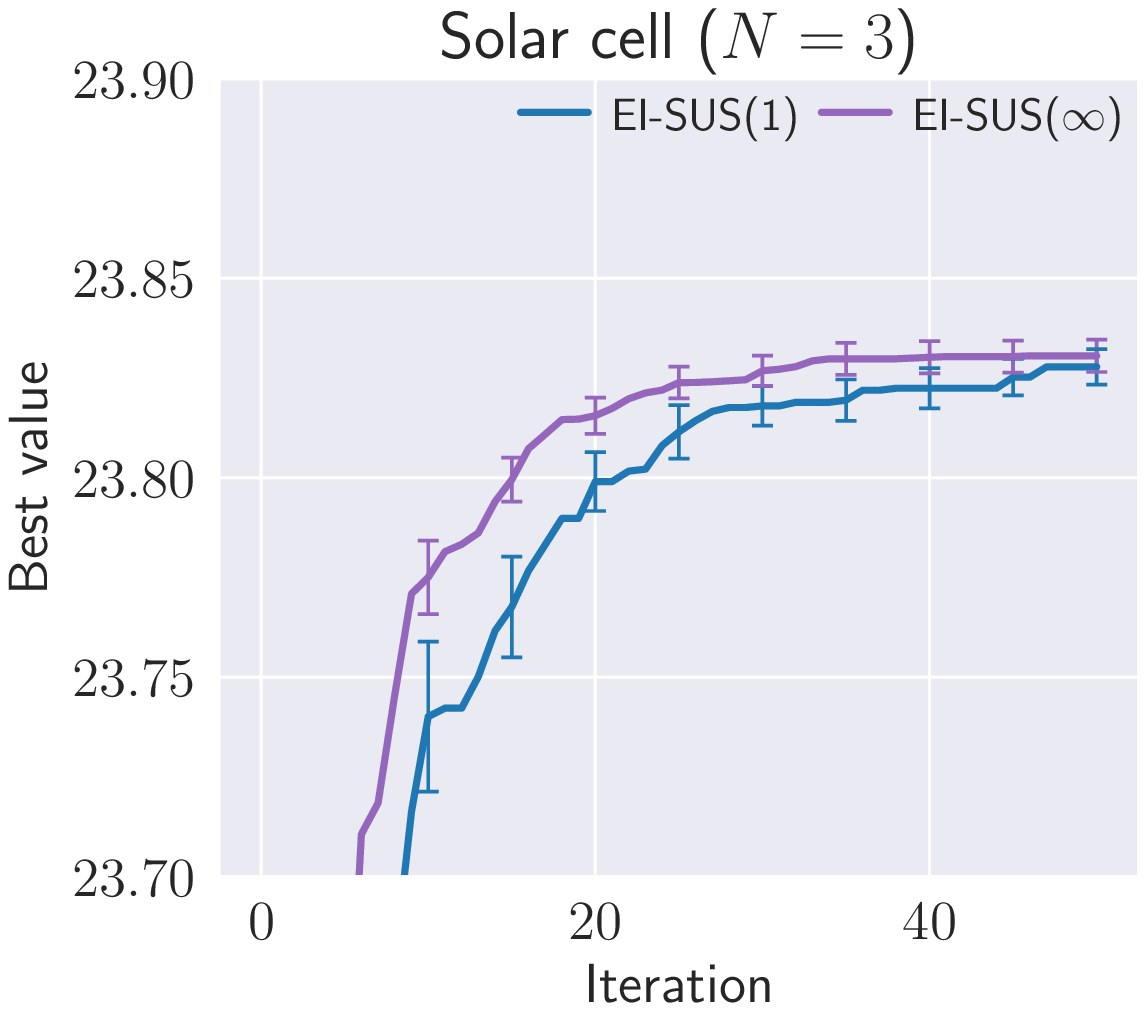}
        \subcaption{Stock reuse}
        \label{fig:ext_reuse}
    \end{subfigure}
    \caption{
        Results in extension setting.
        In the above experiments for sample paths, a solid line implies the methods with suspension and a dashed line represents a sequential method.
    }
    \label{fig:extension_result}
\end{figure}

We demonstrated the optimization performance of the proposed methods in both synthetic functions and a solar cell simulator.
Details of the experimental settings are provided in Appendix~\ref{app:exp_setting}.
First, we compared the methods in the sequential setting.
We used CBO, EI-FN and random sampling (Random) as the comparison methods.
In Random, each $\bx\stg{n}\in\dX\stg{n}$ is randomly and uniformly selected.
Regarding CBO, its AF is optimized by considering the output of the previous stage as the controllable variable.
Because the range of the previous output is unknown, we used a widely estimated range that was twice the actual range.
Additionally, we set its hyperparameters $\kappa_1,\kappa_2$ to one.
We also compared the proposed methods to a fully black-box BO that used EI and GP-UCB under a fully black-box model (FB-EI, FB-UCB).
The proposed methods with EI- and CI-based AFs are labeled as \textsc{EI-based} and \textsc{CI-based}, respectively.
We set the number of Monte Carlo sampling to $S=1000$, and we used $\eta_t=10^{-4}(1+\log t)^{-1}$ to calculate \textsc{CI-based}.
In all the experiments, we employed a Gaussian kernel
$k\stg{n}\left( (w,\bx), (w',\bx')\right)=\sigma\stg{n}_{f}\exp\left(- \frac{(w-w')^2}{2\ell_w^{2\: (n)}}-\sum_{d=1}^{D^{(n)}} \frac{(\bx_d-\bx'_d)^2}{2\ell_d^{2\: (n)}}  \right)$
and we set the noise variance of the GP model as $\sigma^2=10^{-4}$.
The performance was evaluated by the simple regret $F(\bx\stg{1}_*,\dots,\bx\stg{N}_*)-F(\bx\stg{1}_{\bar{t}},\dots,\bx\stg{N}_{\bar{t}})$,
where $\bar{t}= \argmax_{1 \le t' \le t} y\stg{N}_{t'}$.
Additional results comparing \textsc{EI-based} and EI-FN are shown in Appendix~\ref{app:additional_exp}.
%
%

\subsection{Synthetic Functions}
We used sample paths from the GP priors, Rosenbrock function, Sphere function, and Matyas function as the synthetic functions.
Regarding both functions, we constructed three- and five-stage cascade processes, and set $D^{(n)}=2$ for all $n$.
We used $L_{f}=1, \beta^{1/2}=2$ for the calculation~\cref{eq:cadcade_ci}.
In addition, 10 and 20 points for $N=3$ and $N=5$ were randomly selected and provided as the initial data.

\paragraph{Sample Paths from GP Priors:}
We employed the random Fourier feature~\cite{rahimi2007random} to sample ${f}\stg{n}$ from the GP prior and constructed $F$ using them.
Each ${f}\stg{n}$ was sampled ten times, and the experiments were conducted with two different random seeds for each.
The hyperparameters were set to $\sigma\stg{n}_f=15.02, \ell_d^{(n)}=3, \ell_w^{(n)}=3$.
We also set the domain of the control parameter to $\dX\stg{n}=[-10,10]^{D^{(n)}}$.

For the following synthetic functions, we ran experiments with 20 different random seeds.
Furthermore, we scaled ${f}\stg{n}$ such that the range of the function value is equal to the input domain for numerical stability.
The GP hyperparameters were selected by maximizing the marginal likelihood at every iteration.

\paragraph{Rosenbrock Function:}
Each ${f}\stg{n}$ is Rosenbrock function, whose domain of the control parameters were set to $\dX\stg{n}=[-2,2]^{D^{(n)}}$.
We perform the experiments with the number of stages $N = 3$ and $5$.
We set $\bx\stg{1} \in \R^3$ and $\bx\stg{n} \in \R^2$ for each $ n = 2, \dots, N$, and output $y\stg{n} \in \R$ for $n \in [N]$.


\paragraph{Sphere function:}
Each ${f}\stg{n}$ is Sphere function, whose domain of the control parameters were set to $\dX\stg{n}=[-5.12, 5.12]^{D^{(n)}}$.
Each output $y\stg{n} \in \R$ for $n \in [N]$ and the number of stages is $N=3$.
We set $\bx\stg{1} \in \R^3$ and $\bx\stg{2}, \bx\stg{3} \in \R^2$.

\paragraph{Matyas function:}
Each ${f}\stg{n}$ is Matyas function, whose domain of the control parameters were set to $\dX\stg{n}=[-10, 10]^{D^{(n)}}$.
Each output $y\stg{n} \in \R$ for $n \in [N]$ and the number of stages is $N=3$.
We set $\bx\stg{1} \in \R^2$ and $x\stg{2}, x\stg{3} \in \R^1$.

\Cref{fig:synth_result} shows the average value of the simple regret.
%
We see that our proposed methods and EI-FN clearly outperform other baselines including CBO.
Although \textsc{EI-based}, which can be roughly seen as the adaptive version of EI-FN, is comparable to EI-FN in most experiments, \textsc{EI-based} shows better performance than EI-FN in the Sphere function.
This can be seen as a benefit of adaptive decision-making.
Although \textsc{CI-based} has superior theoretical properties, \textsc{CI-based} is inferior to \textsc{EI-based} except for Rosenbrock ($N=3$) and Matyas functions.
One of the reasons for these results is the setting of the hyperparameters, such as $\beta$ and $L_f$.

\subsection{Solar Cell Simulator}
We applied the proposed methods to the solar cell simulator.
This simulator consists of three-stage processes.
Stages one and two are two-step annealing processes to diffuse phosphorus into the silicon substrate from the surface, forming a p-n junction near the surface.
The controllable parameters of stage one are the phosphorus concentration at the surface, temperature, and time of the first-step annealing.
In addition, the controllable parameters of stage two are the temperature and time of the second-step annealing.
The outputs of stages one and two are the four parameters that indicate the distribution of phosphorus concentration in the depth direction.
In stage three, the solar cell is constructed using controllable parameters composed of wafer thickness and boron concentration of the substrate, and the performance is evaluated under standard measurement conditions.
The final output is the power generation efficiency of the solar cell, and our goal is to maximize this output.
Regarding the real-world simulators, the simulators of stages one and two are based on the physical model~\cite{bentzen2006phosphorus}.
Moreover, the simulator of stage three was constructed using the data collected from PC1Dmod6.2~\cite{haug2016pc1dmod}.
In stages one and two, the simulators produce vector outputs.
However, CBO does not support vector outputs, so we calculated its AF by replacing the predictive mean and variance with the mean vector and covariance matrix, respectively.
The domain of the controllable parameters are $\dX\stg{1}=[700,1050]\times[100,5000]\times[19,21.18]$,
$\dX\stg{2}=[700,1050]\times[100,5000]$, and $\dX\stg{3}=[50,250]\times[14,17]$.
We randomly chose 20 points as the initial data.
In addition, we set $L_f=0.1, \beta^{1/2}=2$ in this setting.
Furthermore, we tuned the hyperparameters by maximizing the marginal likelihood and ran the experiment for 50 iterations using 20 different random seeds.

\Cref{fig:solarcell_result} shows the average of the best observed value $\max_{1\le t' \le t} y\stg{N}_{t'}$.
This result shows that the proposed method outperforms the existing methods in the simulator experiments.
It is also confirmed that the best value found in 50 iterations in the existing methods is achieved in less than half of the iterations in the proposed method.
In a comparison between \textsc{EI-based} and EI-FN, the error bars are not overlapped after the $40$ iteration.
Thus, \textsc{EI-based} shows a slightly small but substantial improvement by adaptive decision-making.

\subsection{Hydrogen Plasma Treatment Process}

We applied the proposed method to the hydrogen plasma treatment (HPT) process, which is a part of the production process of solar cells.
In the previous practical study, one of the authors (KK) optimized one-stage HPT process parameters through real experiments using simple BO \citep{miyagawa2021application-b,miyagawa2021application-a}.
In this study, we extended this HPT process to the virtual two-stage cascade process.
The first stage is the HPT process with 7 inputs, temperature, pressure, flow rate, process time, electrode distance, radio frequency power, and cycle time, and 2 outputs, saturation current density, and contact resistance.
The second process is the solar cell production process in which surface electrode width is the controllable parameter.
The final output is the power generation efficiency of the solar cell, and our goal is to maximize this output as in the case of the solar cell simulation.
%
%
The domain of the controllable parameters are $\dX\stg{1} = [50, 300] \times [0.25, 4] \times [100, 1000] \times [10, 100] \times [270, 420] \times [10, 40] \times [15, 60]$ and $\dX\stg{2} = [0.01, 0.1]$.
Since the real dataset is small with respect to the input domain, we used surrogate objectives, which are sample paths of GPs fitting to the real dataset for each stage.
The details of these sample paths are shown in Appendix~\ref{app:exp_setting}.
Other experimental settings are set as with the solar cell simulator experiment.

\Cref{fig:hpt_result} shows the average of the best observed value $\max_{1\le t' \le t} y\stg{N}_{t'}$.
Our proposed methods \textsc{EI-based} and \textsc{CI-based} are superior to other baselines including EI-FN and CBO.
In particular, the difference between \textsc{EI-based} and EI-FN implies the improvement by adaptive decision-making.

\subsection{Suspension Setting}
We also conducted experiments in a suspension setting using the proposed method~\cref{eq:next_point} (\textsc{EI-based-SUS}).
In this setting, we used the sample path function with $N=3$ and $5$.
For the cost of each stage $\bs{\lambda}\coloneqq (\lambda\stg{1},\lambda\stg{2},\lambda\stg{3})$, we consider two settings:
$ \bs{\lambda}=(1,1,1),\,  \bs{\lambda}=(1,1,10)$.
Furthermore, we apply the stock reduction rule~\cref{eq:delete_cond} to \textsc{EI-based-SUS} and executed it in both settings.
We refer to this as \textsc{EI-SUS-R}.
The results are shown in~\cref{fig:extension_result}\subref{fig:ext_same} and~\ref{fig:extension_result}\subref{fig:ext_tail}.
Comparing \textsc{EI-based} and \textsc{EI-based-SUS}, we can observe that the performance is improved by incorporating the suspension.
Moreover, the performance did not deteriorate even when the stock reduction rule was applied.
In addition, the stocks are not consumed in the simulator, and once a stock is acquired, it can be used a number of times.
In this case, we can reduce the number of observations in the earlier stages by reusing the stock.
We compared the situation in which stocks are available only once (\textsc{EI-SUS (1)}) and the situation in which stocks can be used a number of times (\textsc{EI-SUS ($\infty$)}).
From~\cref{fig:extension_result}\subref{fig:ext_reuse}, we confirm that \textsc{EI-SUS ($\infty$)} performs a more efficient optimization.

\afterpage{\clearpage}

\section*{Conclusion}
We proposed a new BO framework for cascade-type multistage processes that often appear in science and engineering.
Moreover, we have designed two AFs based on CIs and EI by handling intractable predictive distributions using different approaches.
From both the theoretical analysis and numerical experiments, it is confirmed that the proposed methods have a superior performance.

\subsection*{Acknowledgments}

This study was partially supported by MEXT/JSPS KAKENHI (16H06538, 17H04694, 20H00601, 21H03498, 22H00300, JP21J14673), JST CREST (JPMJCR21D3),
JST Moonshot R\&D (JPMJMS2033-05), NEDO (JPNP18002, JPNP20006), and RIKEN Center for Advanced Intelligence Project.
The authors acknowledge Takuto Kojima and Kazuhiro Gotoh of Nagoya University for their support of the solar cell simulation, and Shinsuke Miyagawa, Kazuhiro Gotoh, Yasuyoshi Kurokawa, and Noritaka Usami of Nagoya University for providing HPT data.


\allowdisplaybreaks[1]

\section*{Appendix}

\setcounter{section}{0}
\renewcommand{\thesection}{\Alph{section}}
\numberwithin{equation}{section}

\section{Generalization of Problem Setting}
\label{app:preliminaries}
Hereafter, we consider the generalized settings, including vector output and noisy observations.
First, we generalize the problem setting in this section.
In Appendix~\ref{app:CI-based-AF}, we consider the noiseless setting.
We also consider the noisy observation setting in Appendix~\ref{app:CI-based-AF-noisy} and provide the optimization algorithm.
Furthermore, we discuss the conditions of our theorems in Appendix~\ref{app:modify}.
Details of our experiments and additional experiments are described in Appendix~\ref{app:exp_setting} and~\ref{app:additional_exp}, respectively.

Let $\dY\stg{n} \subset \R^{M\stg{n}}$ be the $M\stg{n}$-dimensional output space\footnote{
    Since we focus on single-objective optimization, the output of the final stage is assumed to be scalar (i.e., $M\stg{N} = 1$).}, and
vector-output black-box function of stage $n$ is denoted by $\bs{f}\stg{n}$, and $f\stg{n}_m$ denotes the $m$-th function of $\bs{f}\stg{n}$.
Output $\by\stg{n}$ corresponding to an input $(\by\stg{n-1},\bx\stg{n})$ is observed with noise $\bs{\epsilon}\stg{n}$:
$\by\stg{n}=\bs{f}\stg{n}(\by\stg{n-1},\bx\stg{n})+\bs{\epsilon}\stg{n}$.
The noiseless settings are the case of $\bs{\epsilon}\stg{n}=\bs{0}$.
Furthermore, we consider that $\bs{\epsilon}\stg{n}$ is uniformly bounded and zero mean noise in Appendix~\ref{app:CI-based-AF-noisy}.

In order to construct a surrogate model of $\bs{f}\stg{n}$, we set $\GP(0,k\stg{n})$ to the prior for each $f\stg{n}_m$,
where $\GP(\mu,k\stg{n})$ represents the GP with mean function $\mu$ and kernel function $k\stg{n}$.
Additionally, we assume that $k\stg{n}$ is a positive-definite kernel and $\forall (\by\stg{n-1}, \bx\stg{n})\in \dY\stg{n-1}\times \dX\stg{n},\: k\stg{n}\left((\by\stg{n-1}, \bx\stg{n}),\, (\by\stg{n-1}, \bx\stg{n})\right)\le 1$.
Let $\D\stg{n}_t=\left\{\left( (\by_i\stg{n-1} ,\bx_i\stg{n}), \by_i\stg{n}  \right)\right\}_{i=1}^{L_t\stg{n}}$ be observed data of stage $n$ at iteration $t$.
As the noise model of GP, we use $\bs{\epsilon}\stg{n} \sim \N(\bs{0}, \sigma^{2} \bs{I}_{M\stg{n}})$, where $\bs{I}_{M\stg{n}}$ denotes $M\stg{n}\times M\stg{n}$ identity matrix.
Note that this noise model is different from the actual noise assumption.
Given the observation $\D\stg{n}_t$, the posterior of $f\stg{n}_m$ is also GP, and the predictive distribution of $f\stg{n}_m(\by\stg{n-1},\bx\stg{n})$ is given by:
\begin{align}
    & f\stg{n}_m(\by\stg{n-1},\bx\stg{n}) \sim \N\bigl(\mu\stg{n}_{m,t}(\by\stg{n-1},\bx\stg{n}),\:  \sigma^{(n)\, 2}_{m,t}(\by\stg{n-1},\bx\stg{n}) \bigr),                                \\
    & \mu\stg{n}_{m,t}(\by\stg{n-1},\bx\stg{n})   =\bs{k}\left(\by\stg{n-1},\bx\stg{n}\right)^\top (\K\stg{n}_t + \sigma^2\I_{L_t\stg{n}})^{-1}\by\stg{n}_m, \\
     &\begin{aligned}
        {\sigma^{ (n)\, 2}_{m,t}}(\by\stg{n-1},\bx\stg{n}) =&k\stg{n}\left((\by\stg{n-1},\bx\stg{n}),(\by\stg{n-1},\bx\stg{n})\right)                     \\
        & \quad -      \bs{k}(\by\stg{n-1},\bx\stg{n})^\top(\K\stg{n}_t + \sigma^2\I_{L_t\stg{n}})^{-1}\bs{k}(\by\stg{n-1},\bx\stg{n}).
     \end{aligned} \label{eq:predict}
\end{align}
Here, \sloppy$\bs{k}(\by\stg{n-1},\bx\stg{n})= \left[k\stg{n}\left((\by\stg{n-1},\bx\stg{n}),\, (\by\stg{n-1}_i,\bx\stg{n}_i)\right)\right]_{i=1}^{L_t\stg{n}}$,
\sloppy$\by\stg{n}_m= \left[y\stg{n}_{1m},\dots,y\stg{n}_{Lm}\right]^\top$,
and $\K\stg{n}_t$ is a kernel matrix which has \sloppy$k\stg{n}\bigl((\by\stg{n-1}_i,\bx\stg{n}_i),\,  (\by\stg{n-1}_j,\bx\stg{n}_j)\bigr)$ in $(i,j)$-th element.
In addition, we define $\bs{\mu}_t\stg{n}(\by\stg{n-1},\bx\stg{n})= \left[\, \mu\stg{n}_{m,t}(\by\stg{n-1},\bx\stg{n})\, \right]_{m=1}^{M\stg{n}}$.

For a GP model of $f\stg{n}_m$, we give a definition of the maximum information gain.
Let $A\stg{n}=\{ \bs{a}\stg{n}_1,\dots,\bs{a}\stg{n}_T\} \subset \dY\stg{n-1}\times \dX\stg{n} $ be a finite set of sampling points.
We define $\by\stg{n}_{A,m} \in \R^{T}$ as observation vector w.r.t. $A\stg{n}$, whose $i$-th element is given by $y\stg{n}_{\bs{a}_i,m}=f\stg{n}_m(\bs{a}_i)+{\veps}_{\bs{a}_i,m}\stg{n}$.
Then, the maximum information gain $\gamma\stg{n}_{m,T}$ is defined as:
\begin{equation}
    \gamma\stg{n}_{m,T}=\max_{\substack{A\stg{n}\subset  \dY\stg{n-1}\times \dX\stg{n} ,\:  |A\stg{n}|=T }} \mathrm{I} (\by\stg{n}_{A,m}; f\stg{n}_m),
\end{equation}
where $\mathrm{I} (\by\stg{n}_{A,m}; f\stg{n}_m)$ is the mutual information between $\by\stg{n}_{A,m}$ and $f\stg{n}_m$.
Furthermore, it is known that this mutual information can be written in closed form as follows~\cite{srinivas2010gaussian}:
\begin{equation}
    \mathrm{I} (\by\stg{n}_{A,m}; f\stg{n}_m) = \frac{1}{2} \log \det \left( \bs{I}_{|A\stg{n}|} +  \sigma^{-2}\bs{K}\stg{n}_{A\stg{n}} \right),
\end{equation}
where $\bs{K}\stg{n}_{A\stg{n}} = \left[ k\stg{n}(\bs{a}\stg{n}_i,\bs{a}\stg{n}_j)\right]_{\bs{a}\stg{n}_i\in A\stg{n},\: \bs{a}\stg{n}_j \in A\stg{n}}$.

Additionally, we define $\mcal{S}_t\stg{n}$ as the set of stocks in stage $n$ at iteration $t$.

\subsection{Proofs of Theorems}
\Cref{theorem:CI} is a special case of~\cref{theorem:Nstage_given_obs} with $M\stg{n}=1$ for all $n$.
Likewise,~\cref{theorem:error_t,theorem:convergence_rt} are corresponding to~\cref{eq:est_opt_bound,theorem:i_regret} with $M\stg{n}=1$, respectively.
The proofs of these theorems are given in the generalized problem setting.
Moreover, we also provide the proof of~\cref{theorem:reduction} in~\cref{cor:delete}.

\section{Prediction of Cascade Processes using Bayesian Quadrature}
\label{app:BQ}
In this section, we consider the cascade process as a Bayesian quadrature \cite{o1991bayes} framework and introduce one of its problems.
For black-box functions at each stage of the cascade process, we consider a predictive model using GP.
The problem is that it is difficult to predict each stage from the first stage because each stage contains controllable variables and outputs from the previous stage that are not controllable.
Nevertheless, the output from the previous stage can be predicted using the posterior distribution.
Therefore, integrating the black-box function of each stage with respect to this posterior distribution, i.e., taking the expectation, allows prediction of each stage with respect to the average case of uncontrollable inputs.
This approach is known as Bayesian quadrature, and furthermore, since each stage follows a GP, it is known that the integration of the black-box function is again a GP (see, e.g., \cite{papoulis2002probability}).
Therefore, the advantage of this approach is that it is easy to construct credible intervals based on the properties of GP.
However, this modeling has the problem that it cannot always correctly predict the target it originally wants to predict.
\begin{lemma}\label{prop:bq}
Suppose that $f_1:\mathbb{R} \to \mathbb{R}$ follows $\mathcal{G} \mathcal{P} (0, k_1(x,x^\prime))$.
Also suppose that
 $f_2 :\mathbb{R}^2 \to \mathbb{R}$ follows
$\mathcal{G} \mathcal{P} (0, k_2((x_1,x_2),(x^\prime_1    ,x^\prime_2))$.
Assume that the first variable of $f_2$ is the output of $f_1$.
Then, the stochastic process $f_2 (f_1 (x_1),x_2)$ is not necessarily the same as
\begin{align}
\mathbb{E}_{f_1 (x_1) \sim  \mathcal{G} \mathcal{P} (0, k_1(x,x^\prime)) }  [ f_2 (f_1 (x_1),x_2)   ] . \label{eq:GP_integral}
\end{align}.
\end{lemma}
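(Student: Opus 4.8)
The plan is to distinguish the two processes through their second-order structure: I will compute the covariance function of each and exhibit a kernel for which they disagree, which immediately rules out equality as stochastic processes. First I would pin down where the randomness lives in each object. In the composition $g(x_1,x_2) := f_2(f_1(x_1),x_2)$, both $f_1$ and $f_2$ are random. In the Bayesian-quadrature object of \cref{eq:GP_integral}, the inner expectation integrates out the \emph{marginal} law $\mathcal{N}(0,k_1(x_1,x_1))$ of $f_1(x_1)$ separately at each query point, so the result $h(x_1,x_2)$ is a linear functional of $f_2$ alone. Using that $f_2$ is centered, both processes have identically zero mean, so any discrepancy must surface at the level of covariances.

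Next I would compute both covariance functions. For the composition, conditioning on $f_1$ and using $\mathbb{E}_{f_2}[f_2(w,x_2)f_2(w',x_2')] = k_2((w,x_2),(w',x_2'))$ gives
\[
  \mathrm{Cov}[g(x_1,x_2),\,g(x_1',x_2')] = \mathbb{E}_{(w,w')}\bigl[k_2((w,x_2),(w',x_2'))\bigr],
\]
where $(w,w') = (f_1(x_1),f_1(x_1'))$ is \emph{jointly} Gaussian with cross-covariance $k_1(x_1,x_1')$. For the quadrature object, because the inner integrations at the two query points run against the respective one-dimensional marginals, Fubini yields the identical formula but with $w$ and $w'$ taken \emph{independent}. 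Hence the two covariances agree only to the extent that replacing a genuinely correlated Gaussian pair $(w,w')$ by an independent pair leaves $\mathbb{E}[k_2]$ unchanged.

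To finish, I would exhibit an explicit counterexample. Take the linear kernel in the first coordinate, $k_2((a_1,a_2),(b_1,b_2)) = a_1 b_1$; a centered GP draw is then $f_2(w,x_2) = wZ$ with $Z \sim \mathcal{N}(0,1)$. The composition becomes $g(x_1,x_2) = f_1(x_1)Z$, a nondegenerate process with covariance $k_1(x_1,x_1')$, whereas the quadrature object is $h(x_1,x_2) = \mathbb{E}_w[wZ] = 0$ identically. These processes are plainly different whenever $k_1 \not\equiv 0$. The main obstacle is conceptual rather than computational: one must argue cleanly that the quadrature integration only ever uses the marginal of $f_1$ at a single input, so it treats $f_1(x_1)$ and $f_1(x_1')$ as independent and thereby discards the correlation between outputs of $f_1$ at distinct inputs that the true cascade retains. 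Making that independence-versus-correlation distinction precise is what drives the entire argument.
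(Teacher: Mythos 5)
Your proof is correct, but it takes a genuinely different route from the paper's. The paper also picks a linear kernel for $k_2$ (the full $\bm{y}^\top\bm{y}'$) together with a Gaussian $k_1$, but it distinguishes the two objects through their \emph{fourth} moments: the quadrature object is a linear functional of $f_2$ and hence a GP, so it is normal at the point $x_1=x_2=0$, whereas the composition there is distributed as $\sqrt{\chi^2_1}\,N(0,1)$, whose fourth moment is $9$ rather than $3$; so the composition is not even Gaussian. You instead work at the level of \emph{second} moments: with the rank-one kernel $k_2((a_1,a_2),(b_1,b_2))=a_1b_1$ the quadrature object collapses to the zero process (since $\mathbb{E}[w]=0$), while the composition $f_1(x_1)Z$ has covariance $k_1(x_1,x_1')\not\equiv 0$. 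Both arguments are valid counterexamples, and your general covariance computation (correlated pair $(w,w')$ for the cascade versus an independent pair for the quadrature) is a clean way to articulate exactly what information the Bayesian-quadrature model discards. What the paper's version buys is the stronger structural conclusion that the true cascade is \emph{not a GP at all}, which is the point the surrounding discussion leans on; what your version buys is elementarity (no $\chi^2$ moment calculation) and a more dramatic failure mode (the quadrature object is degenerate while the cascade is not). One small caveat: your remark that ``any discrepancy must surface at the level of covariances'' is not literally true --- two zero-mean processes can share all second moments and still differ, which is precisely the phenomenon the paper's fourth-moment argument exploits --- but this does not affect your counterexample, where the covariances already disagree.
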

\begin{proof}
Let $k_1 (x,x^\prime ) = \exp (-(x-x^\prime)^2) $ and  $k_2 ({\bm y},{\bm y}^\prime ) = {\bm y}^\top {\bm y}^\prime$.
Since the expectation of  GP with respect to inputs is again a GP,
\eqref{eq:GP_integral} follows GP.
Therefore, the probability distribution given by \eqref{eq:GP_integral} at point $x_1=x_2=0$ follows some normal distribution.
On the other hand, since $f_1 (x_1) \sim  \mathcal{G} \mathcal{P} (0, k_1(x,x^\prime))$, from the definition of $k_1 (x,x^\prime)$ we have
$f_1 (0) \sim N(0,1)$.
Similarly, we get $f_2 (f_1 (0),0) \sim N (0,f^2_1 (0) ) \stackrel{\mathrm{d}}{=}	 N (0, \chi^2_1 ) \stackrel{\mathrm{d}}{=}	\sqrt{\chi^2_1 }  N (0, 1 )$,
where  $\chi^2_1$ is the chi-squared distribution with one degree of freedom.
The mean and variance of  $\sqrt{\chi^2_1 }  N (0, 1 )$ are zero and one, respectively.
Furthermore, the fourth moment of  $\sqrt{\chi^2_1 }  N (0, 1 )$ is given by
$$
\mathbb{E} \left [ \left ( \sqrt{\chi^2_1 }  N (0, 1 ) \right )^4   \right ] = \mathbb{E}  [ (\chi^2_1 )^2 ]  \mathbb{E} [N (0, 1 ) )^4] =(\mathbb{V} [\chi^2_1] + \mathbb{E}[\chi^2_1]^2)   3 = 9.
$$
Hence,
 $\sqrt{\chi^2_1 }  N (0, 1 )$ does not follow a normal distribution because the fourth moment of the normal distribution with mean zero and variance one, i.e., the standard normal distribution, is three.
Thus, the stochastic process $f_2 (f_1 (x_1),x_2)$ is not the same as \eqref{eq:GP_integral}.
\end{proof}
Although it is possible to construct a GP prediction model as an integral of GP, the final stage does not necessarily follow GP.
Hence, it is not always easy to judge whether the composition of the credible interval or the design of AF based on the constructed GP prediction model is appropriate or not.
Therefore, modeling the final stage of the cascade process based on the integration of GP is not the most natural approach.

\section{Cascade Process Optimization Using CI-based AFs under Noiseless Setting}
\label{app:CI-based-AF}
In this section, we consider CI-based cascade process optimization methods without observation noise.
\subsection{Credible Interval}
We construct a valid CI for the objective function $F ({\bm x}^{(1)},\ldots, {\bm x}^{(N)})$.
First, we assume the following regularity assumption which is commonly assumed in many BO studies.
\begin{assumption}[Regularity assumption under noiseless setting]\label{assumption:regu1}
  For each $n \in [N]$,
  let $\mathcal{Y}^{(n-1)} \times \mathcal{X}^{(n)}$ be a compact set, and let $\mathcal{H}_{ k^{(n)} } $ be an RKHS corresponding to the kernel $k^{(n)} $.
  In addition, for each $n \in [N]$ and $m \in [M^{(n)}]$, assume that $f^{(n)} _m \in \mathcal{H}_{ k^{(n)} } $
  with $\|f\stg{n} _m  \|_{{k\stg{n}}}\le B$, where $B>0$ is some constant, and $\|\cdot \|_{{k\stg{n}}}$ denotes the RKHS norm on $\mcal{H}_{k\stg{n}}$.
  Furthermore, assume that the observation noise $\epsilon^{(n)}_m $ is zero.
\end{assumption}
Under this assumption, it is known that the following lemma holds.
\begin{lemma}[{\citealt[Theorem~3.11]{szepesvari2012online}}]
  Assume that~\cref{assumption:regu1} holds.
  Define $\beta=B^2$.
  Then, for any $n \in [N]$ and $m \in [M^{(n)}]$, the following inequality holds:
  \begin{equation}
    \left|f\stg{n}_m(\boldsymbol{w}, \boldsymbol{x})-\mu\stg{n}_{m,t}(\boldsymbol{w}, \boldsymbol{x})\right| \leq \beta^{1/2} \sigma\stg{n}_{m,t}(\boldsymbol{w}, \boldsymbol{x}),\:  \forall \boldsymbol{w} \in \mathcal{Y}^{(n-1)},\; \forall \boldsymbol{x} \in \mathcal{X}\stg{n},\; \forall t \geq 1.
  \end{equation}
  \label{lem:ci}
\end{lemma}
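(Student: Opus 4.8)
The statement is the standard deterministic confidence bound for kernel ridge regression specialized to noiseless observations, so the plan is to give a self-contained feature-space argument; the cited self-normalized bound of \citep{szepesvari2012online} reduces to exactly this once the noise proxy vanishes. Fix $n \in [N]$ and $m \in [M\stg{n}]$ and abbreviate $f = f\stg{n}_m$, $\mu_t = \mu\stg{n}_{m,t}$, $\sigma_t = \sigma\stg{n}_{m,t}$, writing $z = (\boldsymbol{w}, \boldsymbol{x})$ for a point of $\mathcal{Y}\stg{n-1} \times \mathcal{X}\stg{n}$. Let $\phi(z) = k\stg{n}(z, \cdot) \in \mathcal{H}_{k\stg{n}}$ be the canonical feature map, so that $f(z) = \langle f, \phi(z)\rangle$ by the reproducing property. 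The key structural fact I would exploit is that, because the true observations are noiseless, the regression targets satisfy $y_i = f(z_i)$ exactly, even though the posterior is formed with the artificial noise variance $\sigma^2$.

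Next I would pass to operator notation. Let $z_1, \ldots, z_t$ be the inputs observed so far and define the sampling operator $\Phi : \mathcal{H}_{k\stg{n}} \to \R^t$ by $(\Phi g)_i = \langle g, \phi(z_i)\rangle$, with adjoint $\Phi^* a = \sum_i a_i \phi(z_i)$; then $\Phi\Phi^* = \K\stg{n}_t$ is the kernel matrix and $\Phi^*\Phi$ is a finite-rank self-adjoint operator on $\mathcal{H}_{k\stg{n}}$. The posterior mean coincides with the kernel-ridge solution $\mu_t = (\Phi^*\Phi + \sigma^2 I)^{-1}\Phi^* y$, and substituting the noiseless identity $y = \Phi f$ gives $\mu_t = (\Phi^*\Phi + \sigma^2 I)^{-1}\Phi^*\Phi f$. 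Subtracting from $f$ and simplifying yields $f - \mu_t = \sigma^2 (\Phi^*\Phi + \sigma^2 I)^{-1} f$, hence by the reproducing property $f(z) - \mu_t(z) = \sigma^2 \langle f, (\Phi^*\Phi + \sigma^2 I)^{-1}\phi(z)\rangle$.

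I would then tie the residual to $\sigma_t$. Using the push-through identity $\Phi^*(\Phi\Phi^* + \sigma^2 I)^{-1}\Phi = I - \sigma^2(\Phi^*\Phi + \sigma^2 I)^{-1}$ in the definition $\sigma_t^2(z) = \langle\phi(z),\phi(z)\rangle - (\Phi\phi(z))^\top(\K\stg{n}_t + \sigma^2 I)^{-1}(\Phi\phi(z))$ gives the clean identity $\sigma_t^2(z) = \sigma^2\langle\phi(z), (\Phi^*\Phi + \sigma^2 I)^{-1}\phi(z)\rangle$. Cauchy--Schwarz on the expression for the residual gives $|f(z) - \mu_t(z)| \le \|f\|_{k\stg{n}} \cdot \sigma^2\|(\Phi^*\Phi + \sigma^2 I)^{-1}\phi(z)\|$, and the operator inequality $(\Phi^*\Phi + \sigma^2 I)^{-2} \preceq \sigma^{-2}(\Phi^*\Phi + \sigma^2 I)^{-1}$ bounds the second factor by $\sigma_t(z)$. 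Combining, $|f(z) - \mu_t(z)| \le \|f\|_{k\stg{n}}\,\sigma_t(z) \le B\,\sigma_t(z) = \beta^{1/2}\sigma_t(z)$, which is the claim.

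The main obstacle is bookkeeping in the (possibly infinite-dimensional) RKHS: justifying the sampling operator and its adjoint, the push-through identity, and the operator-monotonicity step rigorously, rather than any deep inequality. It is worth emphasising why the constant is exactly $\beta^{1/2} = B$ with no information-gain term: in the general noisy regime the cited self-normalized bound contributes an additive term of order $R\sqrt{\gamma_t + \log(1/\delta)}$ from martingale concentration, but here the noise proxy $R = 0$ collapses that term and turns the statement into the deterministic inequality above. Since the lemma is invoked separately for each coordinate $f\stg{n}_m$, no additional argument across $n$ or $m$ is required.
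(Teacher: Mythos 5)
Your proof is correct, but it takes a different route from the paper: the paper gives no proof at all for this lemma and simply invokes Theorem~3.11 of \citep{szepesvari2012online}, which is the self-normalized confidence bound for RKHS regression under sub-Gaussian noise; under \cref{assumption:regu1} the noise is identically zero, so the sub-Gaussian proxy vanishes and the radius collapses to $B\,\sigma\stg{n}_{m,t}$, exactly as you observe at the end of your argument. What you supply instead is the direct, deterministic derivation of that degenerate case: writing the posterior mean as the kernel-ridge solution, using noiselessness to get $f-\mu_t=\sigma^2(\Phi^*\Phi+\sigma^2 I)^{-1}f$, identifying $\sigma_t^2(z)=\sigma^2\langle\phi(z),(\Phi^*\Phi+\sigma^2 I)^{-1}\phi(z)\rangle$ via the push-through identity, and closing with Cauchy--Schwarz together with $(\Phi^*\Phi+\sigma^2 I)^{-2}\preceq\sigma^{-2}(\Phi^*\Phi+\sigma^2 I)^{-1}$. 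I checked each of these steps and they are sound (the last operator inequality follows by conjugating $(\Phi^*\Phi+\sigma^2 I)^{-1}\preceq\sigma^{-2}I$ with $(\Phi^*\Phi+\sigma^2 I)^{-1/2}$), and the chain $|f(z)-\mu_t(z)|\le\|f\|_{k\stg{n}}\sigma_t(z)\le B\sigma_t(z)=\beta^{1/2}\sigma_t(z)$ gives precisely the claimed inequality, uniformly in $z$ and $t$ since no probabilistic event is involved. The trade-off is the expected one: the citation route is shorter and inherits the noisy-case generalization for free (which the paper in fact needs later, in \cref{lem:ci_2}), whereas your argument is self-contained, avoids any martingale machinery, and makes transparent why the constant is exactly $B$ with no $\log\det$ or information-gain contribution. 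The only small caveat is that in infinite-dimensional $\mathcal{H}_{k\stg{n}}$ the sampling operator, its adjoint, and the push-through identity deserve a sentence of justification ($\Phi$ has finite rank, so all inverses involved are well defined), which you flag yourself as bookkeeping rather than a gap.
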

Based on~\cref{lem:ci}, we construct the valid CI.
However, we cannot use~\cref{lem:ci} to construct CIs directly because the input ${\bm w} \in \mathcal{Y}^{(n-1)}$ is the output of the previous stage.
In order to avoid this issue, we introduce additional assumptions for Lipschitz continuity.
\begin{assumption}[Lipschitz continuity for $f^{(n)}_m$]\label{assumption:L1}
  Assume that $f\stg{n}_m$ is $L_f$-Lipschitz continuous with respect to $L_1$-distance for any $n \in \{2,\ldots, N\}$ and $m \in [M^{(n)}]$,  where  $L_f>0$ is a Lipschitz constant.
\end{assumption}
\begin{assumption}[Lipschitz continuity for $\sigma^{(n)}_m$]\label{assumption:L2}
  Assume that $\sigma^{(n)}_{m,t} $ is $L_\sigma$-Lipschitz continuous with respect to $L_1$-distance for any $n \in \{2,\ldots, N\}$, $m \in [M^{(n)}]$ and $t \geq 1$, where  $L_\sigma>0$ is a Lipschitz constant.
\end{assumption}
Then, the following theorem gives CIs for the $N$-stage cascade process.
\begin{theorem}[CIs for $N$-stage cascade process]\label{theorem:Nstage}
  Assume that~\cref{assumption:regu1,assumption:L1} hold.
  Define $\beta = B^2$ and
  \begin{align}
    {\bm z}^{(n)} ({\bm x}^{(1)},\ldots,{\bm x}^{(n)} )                & = \begin{cases}
                                                                             {\bm f}^{(1)} ({\bm 0},{\bm x}^{(1)} )                              & (n=1)  ,       \\
                                                                             {\bm f}^{(n)} ({\bm z}^{(n-1)} ({\bm x}^{(1:n-1)} ),{\bm x}^{(n)} ) & (2\le n\le N),
                                                                           \end{cases}                \\
    \tilde{\bm \mu}^{(n)} _t ({\bm x}^{(1)},\ldots,{\bm x}^{(n)} )     & =\begin{cases}
                                                                            {\bm \mu}^{(1)}_t ({\bm 0},{\bm x}^{(1)} )                                       & (n=1) ,        \\
                                                                            {\bm \mu}_t^{(n)} (\tilde{\bm \mu}^{(n-1)} ({\bm x}^{(1:n-1)} ) ,{\bm x}^{(n)} ) & (2\le n\le N),
                                                                          \end{cases}    \\
    \tilde{\sigma }^{(n)}_{m,t}  ({\bm x}^{(1)},\ldots,{\bm x}^{(n)} ) & = \begin{dcases}
                                                                             \sigma^{(1)}_{m,t}  ({\bm 0},{\bm x}^{(1)})                                                & (n=1), \\
                                                                             \begin{aligned}
         & \sigma^{(n)}_{m,t}  (   \tilde{\bm\mu}^{(n-1)}_t ({\bm x}^{(1:n-1)}),{\bm x}^{(n)} ) \\
         & \quad +L_f \sum_{s=1}^{M^{(n-1)}} \tilde{\sigma}^{(n-1)}_{s,t}   ({\bm x}^{(1:n-1)})
      \end{aligned} & (2\le n\le N).
                                                                           \end{dcases}
  \end{align}
  Moreover, assume that $\tilde{\bm \mu}^{(n)} _t ({\bm x}^{(1)},\ldots,{\bm x}^{(n)} ) \in \mathcal{Y}^{(n)}$
  for any $n \in [N]$,  $t \geq 1$ and $({\bm x}^{(1) } , \ldots, {\bm x}^{(n) }  ) \in \mathcal{X}^{(1)} \times \cdots \times \mathcal{X}^{(n)}$.
  Then,
  it follows that
  \begin{equation}
    |{ z}_m^{(n)} ({\bm x}^{(1)},\ldots,{\bm x}^{(n)} )  - \tilde{\mu}^{(n)}_{m,t} ({\bm x}^{(1) } , \ldots, {\bm x}^{(n) }  )|  \leq \beta^{1/2}  \tilde{\sigma}^{(n)}_{m,t} ({\bm x}^{(1)}, \ldots, {\bm x}^{(n)} )  ,
  \end{equation}
  where $m \in [M^{(n)}]$, and $z^{(n)}_m (\cdot)$ and  $\tilde{\mu}^{(n)}_{m,t} (\cdot)$  are the $m$-th element of
  ${\bm z}^{(n)}_m (\cdot)$ and
  $\tilde{\bm\mu}^{(n)}_{m,t} (\cdot )$, respectively.
  In particular, when $n=N$, it follows that
  \begin{equation}
    |F({\bm x}^{(1)},\ldots,{\bm x}^{(N)} ) - \tilde{\mu}^{(N)}_{1,t} ({\bm x}^{(1) } , \ldots, {\bm x}^{(N) }  )|  \leq \quad \beta^{1/2}  \tilde{\sigma}^{(N)}_{1,t} ({\bm x}^{(1)}, \ldots, {\bm x}^{(N)} )  .
  \end{equation}
\end{theorem}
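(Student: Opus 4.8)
The plan is to establish the bound by induction on the stage index $n$, combining the single-stage credible interval of \cref{lem:ci} with the $L_f$-Lipschitz continuity of each $f\stg{n}_m$ (\cref{assumption:L1}) to propagate the estimation error forward through the cascade. For the base case $n=1$ the three quantities $z\stg{1}_m$, $\tilde{\mu}\stg{1}_{m,t}$ and $\tilde{\sigma}\stg{1}_{m,t}$ reduce by definition to $f\stg{1}_m(\bm{0},\bx\stg{1})$, $\mu\stg{1}_{m,t}(\bm{0},\bx\stg{1})$ and $\sigma\stg{1}_{m,t}(\bm{0},\bx\stg{1})$, so the claim is exactly \cref{lem:ci} evaluated at the input $(\bm{0},\bx\stg{1})$.

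For the inductive step, suppose the bound holds at stage $n-1$ for every coordinate $s \in [M^{(n-1)}]$. Write $\bm{w} = \bm{z}\stg{n-1}(\bx\stg{1:n-1})$ for the true previous-stage output and $\tilde{\bm{w}} = \tilde{\bm{\mu}}\stg{n-1}_t(\bx\stg{1:n-1})$ for its chained predictive surrogate. I would split the target error with the triangle inequality by inserting $f\stg{n}_m(\tilde{\bm{w}},\bx\stg{n})$:
\begin{equation*}
|z\stg{n}_m - \tilde{\mu}\stg{n}_{m,t}| \le |f\stg{n}_m(\bm{w},\bx\stg{n}) - f\stg{n}_m(\tilde{\bm{w}},\bx\stg{n})| + |f\stg{n}_m(\tilde{\bm{w}},\bx\stg{n}) - \mu\stg{n}_{m,t}(\tilde{\bm{w}},\bx\stg{n})|.
\end{equation*}
The second term is at most $\beta^{1/2}\sigma\stg{n}_{m,t}(\tilde{\bm{w}},\bx\stg{n})$ by \cref{lem:ci}, while the first is at most $L_f\|\bm{w}-\tilde{\bm{w}}\|_1 = L_f\sum_{s=1}^{M^{(n-1)}} |z\stg{n-1}_s - \tilde{\mu}\stg{n-1}_{s,t}|$ by Lipschitz continuity (the shared input $\bx\stg{n}$ cancels in the $L_1$ distance); applying the inductive hypothesis coordinatewise bounds this by $\beta^{1/2} L_f \sum_{s=1}^{M^{(n-1)}} \tilde{\sigma}\stg{n-1}_{s,t}$. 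Summing the two contributions reproduces precisely the recursive definition of $\beta^{1/2}\tilde{\sigma}\stg{n}_{m,t}$, which closes the induction; the case $n=N$ with $M^{(N)}=1$ then yields the displayed bound for $F = z\stg{N}_1$.

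The calculation itself is routine, and the one point that requires care --- and the place where the extra hypothesis of the theorem is consumed --- is the appeal to \cref{lem:ci} at the surrogate input $\tilde{\bm{w}}$: the lemma only certifies the interval for admissible inputs $\bm{w}\in\mathcal{Y}^{(n-1)}$, so the assumption $\tilde{\bm{\mu}}\stg{n-1}_t(\bx\stg{1:n-1})\in\mathcal{Y}^{(n-1)}$ is exactly what permits evaluating the credible interval at the \emph{predicted} previous output rather than at the true but inaccessible $\bm{w}$. It is worth flagging that only \cref{assumption:L1} (Lipschitzness of $f$) enters here; the Lipschitz assumption on $\sigma$ (\cref{assumption:L2}) is not needed for the validity of the interval and is instead reserved for the later convergence argument.
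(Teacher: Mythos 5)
Your proposal is correct and follows essentially the same route as the paper's proof: the paper inserts $f^{(n)}_i(\tilde{\bm\mu}^{(n-1)}_t,{\bm x}^{(n)})$, applies the triangle inequality, bounds one term by the Lipschitz assumption (L1) and the other by \cref{lem:ci} at the surrogate input, and then "repeats the process up to $n$," which is exactly your induction written out for $n=2,3$. Your remarks that the hypothesis $\tilde{\bm\mu}^{(n)}_t \in \mathcal{Y}^{(n)}$ is consumed precisely when invoking \cref{lem:ci} at the predicted previous output, and that (L2) is not needed here, are both accurate.
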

\begin{proof}
  Fix
  ${\bm x}^{(1)},\ldots, {\bm x}^{(n)}$, $t \geq 1$ and $m \in [M^{(n)}]$.
  For simplicity, hereafter, we sometimes omit  the notation $({\bm x}^{(1)},\ldots,{\bm x}^{(n)} )$ such as
  $z^{(n)}_m$ and
  $\tilde{\mu}^{(n)}_{m,t}$.
  Then, for $i \in [M^{(2)}]$, it follows that
  \begin{align}
    |  z^{(2)}_i -\tilde{\mu}^{(2)}_{i,t} | & =  | z^{(2)}_i - f^{(2)}_i (\tilde{\bm \mu}^{(1)}_t,{\bm x}^{(2)} ) +  f^{(2)}_i (\tilde{\bm \mu}_t^{(1)},{\bm x}^{(2)} )   -\tilde{\mu}^{(2)}_{i,t} |                                                                                                    \\
                                            & \leq  | z^{(2)}_i - f^{(2)}_i (\tilde{\bm \mu}_t^{(1)},{\bm x}^{(2)} ) |+|  f^{(2)}_i (\tilde{\bm \mu}_t^{(1)},{\bm x}^{(2)} )   -\tilde{\mu}^{(2)}_{i,t}     |                                                                                           \\
                                            & =  | z^{(2)}_i ({\bm z}^{(1)},{\bm x}^{(2)})- f^{(2)}_i (\tilde{\bm \mu}_t^{(1)},{\bm x}^{(2)} ) |   + |  f^{(2)}_i (\tilde{\bm \mu}_t^{(1)},{\bm x}^{(2)} )    -                   {\mu}^{(2)}_{i,t} (\tilde{\bm\mu}^{(1)}_t ,{\bm x}^{(2)})           | \\
                                            & \leq L_f \| {\bm z}^{(1)} -\tilde{\bm \mu}^{(1)}_t \| _1 + \beta^{1/2} \sigma ^{(2)}_{i,t} (\tilde{\bm \mu}_t^{(1)},{\bm x}^{(2)} )                                                                                                                       \\
                                            & = \beta^{1/2} \sigma ^{(2)}_{i,t} (\tilde{\bm \mu}_t^{(1)},{\bm x}^{(2)} )+ L_f \sum_{m=1}^{M^{(1)}} | z^{(1)}_m -\tilde{\mu}^{(1)}_{m,t}  |                                                                                                              \\
                                            & \leq \beta^{1/2} \sigma ^{(2)}_{i,t} (\tilde{\bm \mu}_t^{(1)},{\bm x}^{(2)} )+ L_f \sum_{m=1}^{M^{(1)}} \beta^{1/2} \sigma^{(1)}_{m,t} ({\bm 0},{\bm x}^{(1)} )                                                                                           \\
                                            & = \beta^{1/2} \tilde{\sigma}^{(2)} _{i,t} ({\bm x}^{(1)},{\bm x}^{(2) } ). \label{eq:2bound}
  \end{align}
  Similarly,  $z^{(3)}_j $ and $\tilde{\mu}^{(3)}_{j,t}$ satisfy that
  \begin{align}
    |  z^{(3)}_j -\tilde{\mu}^{(3)}_{j,t} | & =  | z^{(3)}_j - f^{(3)}_j (\tilde{\bm \mu}_t^{(2)},{\bm x}^{(3)} ) +  f^{(3)}_j (\tilde{\bm \mu}_t^{(2)},{\bm x}^{(3)} )   -\tilde{\mu}^{(3)}_{j,t} |            \\
                                            & \leq  | z^{(3)}_j - f^{(3)}_j (\tilde{\bm \mu}_t^{(2)},{\bm x}^{(3)} ) |+|  f^{(3)}_j (\tilde{\bm \mu}_t^{(2)},{\bm x}^{(3)} )   -\tilde{\mu}^{(3)}_{j,t} |       \\
                                            & \leq L_f \| {\bm z}^{(2)} -\tilde{\bm \mu}_t^{(2)} \| _1 + \beta^{1/2} \sigma ^{(3)}_{j,t} (\tilde{\bm \mu}_t^{(2)},{\bm x}^{(3)} )                               \\
                                            & = \beta^{1/2} \sigma ^{(3)}_{j,t} (\tilde{\bm \mu}_t^{(2)},{\bm x}^{(3)} )+ L_f \sum_{i=1}^{M^{(2)}} | z^{(2)}_i -\tilde{\mu}^{(2)}_{i,t}  | . \label{eq:3bound1}
  \end{align}
  Hence, by substituting~\cref{eq:2bound} into~\cref{eq:3bound1}, we get
  \begin{equation}
    \begin{aligned}
      |  z^{(3)}_j -\tilde{\mu}^{(3)}_{j,t} | & \leq \beta ^{1/2} \left ( \sigma ^{(3)}_{j,t} (\tilde{\bm \mu}_t^{(2)},{\bm x}^{(3)} )+ L_f \sum_{u=1}^{M^{(2)}} \tilde{\sigma}^{(2)}_{u,t}  ({\bm x}^{(1)} ,{\bm x}^{(2)} ) \right ) \\
                                              & = \beta ^{1/2} \tilde {\sigma}^{(3)}_{j,t} ({\bm x}^{(1)},{\bm x}^{(2)},{\bm x} ^{(3) } ).
    \end{aligned}
  \end{equation}
  By repeating this process up to $n$, we have~\cref{theorem:Nstage}.
\end{proof}
From~\cref{theorem:Nstage}, we can construct the valid CI $Q^{(F)}_t ({\bm x}^{(1)},\ldots, {\bm x}^{(N)} ) $ of
$F({\bm x}^{(1)},\ldots, {\bm x}^{(N)} ) $ as follows:
\begin{align}
  Q^{(F)}_t  (  {\bm x}^{(1)},\ldots, {\bm x}^{(N)} ) & =          [ \tilde{\mu}^{(N)}_{1,t} ({\bm x}^{(1) } , \ldots, {\bm x}^{(N) }  ) \pm \beta^{1/2} \tilde{\sigma}^{(N)}_{1,t} ({\bm x}^{(1)}, \ldots, {\bm x}^{(N)} )]                     \\
                                                      & =          [  \mr{ LCB}^{(F)}_t ({\bm x}^{(1) } , \ldots, {\bm x}^{(N) }  ),   \mr{ UCB}^{(F)}_t ({\bm x}^{(1) } , \ldots, {\bm x}^{(N) }  )                ] .   \label{eq:proposed_ci}
\end{align}

Next, we consider the property of estimated solutions based on the proposed CI~\cref{eq:proposed_ci}.
For any $t \geq 1$, we define the estimated solution $( \hat{\bm x}^{(1)}_t, \ldots, \hat{\bm x}^{(N)}_t )$ as
\begin{equation}
  (  \hat{\bm x}^{(1)}_t, \ldots, \hat{\bm x}^{(N)}_t )   = \argmax _{  ({\bm x}^{(1)},\ldots,{\bm x}^{(N)} ) \in \mathcal{X}       ,1\leq \tilde{t} \leq t  } \mr{ LCB} ^{(F)}_{\tilde{t}}  ({\bm x}^{(1)},\ldots,{\bm x}^{(N)} ).
  \label{eq:estimated_solution}
\end{equation}
Then, the following theorem holds.
\begin{theorem}\label{eq:est_opt_bound}
  Let $( \hat{\bm x}^{(1)}_t, \ldots, \hat{\bm x}^{(N)}_t )$ be the estimated solution given by~\cref{eq:estimated_solution}.
  Assume that the same assumption as in~\cref{theorem:Nstage} holds.
  Then, for any $t \geq 1$ and $\xi >0$, it follows that
  \begin{align}
     & \max _{  {\bm x}^{(1:N)} \in \mathcal{X}   } \mr{ UCB} ^{(F)}_t  ({\bm x}^{(1:N)} )   -  \max _{  {\bm x}^{(1:N)} \in \mathcal{X}   } \mr{ LCB} ^{(F)}_t  ({\bm x}^{(1:N)} ) < \xi \\
     & \Rightarrow F ({\bm x}^{(1)}_\ast , \ldots , {\bm x}^{(N)}_\ast ) - F( \hat{\bm x}^{(1)}_t, \ldots, \hat{\bm x}^{(N)}_t )  <\xi.
  \end{align}
\end{theorem}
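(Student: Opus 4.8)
The plan is to prove the implication directly, by sandwiching the gap $F(\bx\stg{1}_\ast,\ldots,\bx\stg{N}_\ast) - F(\hat{\bx}\stg{1}_t,\ldots,\hat{\bx}\stg{N}_t)$ between the two maxima appearing in the hypothesis, using only the validity of the credible interval from \cref{theorem:Nstage}. The two facts I would rely on are: (i) for every $\tilde t \ge 1$ and every $\bx\stg{1:N} \in \mathcal{X}$, the inequality $\mr{LCB}^{(F)}_{\tilde t}(\bx\stg{1:N}) \le F(\bx\stg{1:N}) \le \mr{UCB}^{(F)}_{\tilde t}(\bx\stg{1:N})$ holds, which is exactly the $n = N$ conclusion of \cref{theorem:Nstage} rewritten through \cref{eq:proposed_ci}; and (ii) the definition \cref{eq:estimated_solution} of the estimated solution as an $\argmax$ of the LCB taken jointly over the input and the iteration index $\tilde t \le t$.

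First I would bound the optimal value from above. Evaluating the right-hand credible-interval inequality at the fixed optimizer $(\bx\stg{1}_\ast,\ldots,\bx\stg{N}_\ast)$ and iteration $t$ gives $F(\bx\stg{1}_\ast,\ldots,\bx\stg{N}_\ast) \le \mr{UCB}^{(F)}_t(\bx\stg{1}_\ast,\ldots,\bx\stg{N}_\ast) \le \max_{\bx\stg{1:N}} \mr{UCB}^{(F)}_t(\bx\stg{1:N})$, the last step being a trivial weakening to the maximum over $\mathcal{X}$.

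Next I would bound $F(\hat{\bx}\stg{1}_t,\ldots,\hat{\bx}\stg{N}_t)$ from below. Let $\hat t \le t$ be the iteration realizing the $\argmax$ in \cref{eq:estimated_solution}. Applying the left-hand credible-interval inequality at iteration $\hat t$ (legitimate because \cref{theorem:Nstage} holds for all $\tilde t \ge 1$) yields $F(\hat{\bx}\stg{1}_t,\ldots,\hat{\bx}\stg{N}_t) \ge \mr{LCB}^{(F)}_{\hat t}(\hat{\bx}\stg{1}_t,\ldots,\hat{\bx}\stg{N}_t)$, and by the $\argmax$ property this last quantity equals $\max_{1 \le \tilde t \le t} \max_{\bx\stg{1:N}} \mr{LCB}^{(F)}_{\tilde t}(\bx\stg{1:N})$, which is at least $\max_{\bx\stg{1:N}} \mr{LCB}^{(F)}_t(\bx\stg{1:N})$ since $\tilde t = t$ is an admissible index. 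Subtracting the two chains of inequalities gives $F(\bx\stg{1}_\ast,\ldots,\bx\stg{N}_\ast) - F(\hat{\bx}\stg{1}_t,\ldots,\hat{\bx}\stg{N}_t) \le \max_{\bx\stg{1:N}} \mr{UCB}^{(F)}_t(\bx\stg{1:N}) - \max_{\bx\stg{1:N}} \mr{LCB}^{(F)}_t(\bx\stg{1:N})$, and the hypothesis that this difference is below $\xi$ closes the argument.

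I do not anticipate a real obstacle, as this is the standard ``stopping-condition implies regret bound'' sandwich for LCB/UCB estimators. The only point requiring care is the joint $\argmax$ over iterations $\tilde t \le t$ in \cref{eq:estimated_solution}: the credible interval must be invoked at the selected iteration $\hat t$ rather than at $t$, after which monotonicity of the running maximum of the LCB relates the selected value back to the time-$t$ maximum appearing in the hypothesis. Everything else reduces to a direct application of \cref{theorem:Nstage}.
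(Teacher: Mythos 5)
Your proposal is correct and follows essentially the same route as the paper's own proof: both bound $F(\bx^{(1)}_\ast,\ldots,\bx^{(N)}_\ast)$ above by $\max \mr{UCB}^{(F)}_t$ via the credible interval at the optimizer, and bound $F(\hat{\bm x}^{(1)}_t,\ldots,\hat{\bm x}^{(N)}_t)$ below by $\max \mr{LCB}^{(F)}_t$ via the credible interval invoked at the selected iteration $\hat t$ together with the running-maximum property of the $\argmax$ in the definition of the estimated solution. No gaps.
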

\begin{proof}
  From the definition of CIs, using~\cref{theorem:Nstage} we have
  \begin{align}
    F ({\bm x}^{(1)}_\ast , \ldots , {\bm x}^{(N)}_\ast )                           & \leq \mr{ UCB} ^{(F)}_t  ({\bm x}^{(1)}_\ast , \ldots , {\bm x}^{(N)}_\ast ), \\
    \mr{ LCB} ^{(F)}_{\hat{t}} ( \hat{\bm x}^{(1)}_t, \ldots, \hat{\bm x}^{(N)}_t ) & \leq F ( \hat{\bm x}^{(1)}_t, \ldots, \hat{\bm x}^{(N)}_t ),
  \end{align}
  where  \sloppy$\hat{t}= \argmax _{  ({\bm x}^{(1)},\ldots,{\bm x}^{(N)} ) \in \mathcal{X}       ,1\leq \tilde{t} \leq t  } \mr{ LCB} ^{(F)}_{\tilde{t}}  ({\bm x}^{(1:N)} ).
  $
  Similarly, from the definition of $ ( \hat{\bm x}^{(1)}_t, \ldots, \hat{\bm x}^{(N)}_t )$,
  noting that
  \begin{equation}
    \max _{  {\bm x}^{(1:N)}\in \mathcal{X}   } \mr{ LCB} ^{(F)}_t  ({\bm x}^{(1:N)} ) \leq
    \mr{ LCB} ^{(F)}_{\hat{t}} ( \hat{\bm x}^{(1:N)}_t, \ldots, \hat{\bm x}^{(N)}_t ) ,
  \end{equation}
  we get
  \begin{alignat}{2}
    F ({\bm x}^{(1)}_\ast , \ldots , {\bm x}^{(N)}_\ast )                                 & \leq \mr{ UCB} ^{(F)}_t  ({\bm x}^{(1)}_\ast , \ldots , {\bm x}^{(N)}_\ast )         &  & \leq \max  _{  {\bm x}^{(1:N)}\in \mathcal{X}   }  \mr{ UCB} ^{(F)}_t  ({\bm x}^{(1:N)}) , \\
    \max  _{  {\bm x}^{(1:N)}  \in \mathcal{X}   }  \mr{ LCB} ^{(F)}_t  ({\bm x}^{(1:N)}) & \leq \mr{ LCB} ^{(F)}_{\hat{t}} ( \hat{\bm x}^{(1)}_t, \ldots, \hat{\bm x}^{(N)}_t ) &  & \leq F ( \hat{\bm x}^{(1)}_t, \ldots, \hat{\bm x}^{(N)}_t ).
  \end{alignat}
  This implies that
  \begin{align}
     & F ({\bm x}^{(1)}_\ast , \ldots , {\bm x}^{(N)}_\ast )  - F ( \hat{\bm x}^{(1)}_t, \ldots, \hat{\bm x}^{(N)}_t )                                                                         \\
     & \leq    \max _{  {\bm x}^{(1:N)} \in \mathcal{X}   }   \mr{ UCB} ^{(F)}_t  ({\bm x}^{(1:N)} ) -  \max  _{  {\bm x}^{(1:N)} \in \mathcal{X}   }  \mr{ LCB} ^{(F)}_t  ({\bm x}^{(1:N)} ).
    \label{eq:UCB_LCB_diff}
  \end{align}
  Therefore, by combining~\cref{eq:UCB_LCB_diff} and
  \begin{equation}
    \max _{  {\bm x}^{(1:N)} \in \mathcal{X}   } \mr{ UCB} ^{(F)}_t  ({\bm x}^{(1:N)} ) -  \max _{  ({\bm x}^{(1:N)} ) \in \mathcal{X}   } \mr{ LCB} ^{(F)}_t  ({\bm x}^{(1:N)} ) < \xi ,
  \end{equation}
  we get~\cref{eq:est_opt_bound}.
\end{proof}

Finally, we consider the construction of CIs when the observations up to the $s$-th stage are given.
Let $s$ be an integer with $0 \leq s \leq N-1$, and let ${\bm y}$ be an element of $\mathcal{Y}^{(s)}$.
Then, for each $n \in \{ s+1,\ldots, N \}$, $m \in [M^{(n)}]$, $t \geq 1$ and ${\bm x}^{(s+1)},\ldots,{\bm x}^{(n)}$,
we define ${\bm z}^{(n)} ({\bm x}^{(s+1)},\ldots,{\bm x}^{(n)} | {\bm y})$, $\tilde{\bm \mu}^{(n)} _t ({\bm x}^{(s+1)},\ldots,{\bm x}^{(n)}|{\bm y} )$ and
$\tilde{\sigma }^{(n)}_{m,t}  ({\bm x}^{(s+1)},\ldots,{\bm x}^{(n)} |{\bm y})$ as
\begin{align}
  {\bm z}^{(n)} ({\bm x}^{(s+1)},\ldots,{\bm x}^{(n)} |{\bm y})                & = \begin{cases}
                                                                                     {\bm f}^{(s+1)} ({\bm y},{\bm x}^{(s+1)} )                                    & (n=s+1),    \\
                                                                                     {\bm f}^{(n)} ({\bm z}^{(n-1)} ({\bm x}^{(s+1:n-1)} |{\bm y}),{\bm x}^{(n)} ) & (n\ge s+2),
                                                                                   \end{cases}             \\
  \tilde{\bm \mu}^{(n)} _t ({\bm x}^{(s+1)},\ldots,{\bm x}^{(n)} |{\bm y})     & =\begin{cases}
                                                                                    {\bm \mu}^{(s+1)}_t ({\bm y},{\bm x}^{(s+1)} )                                             & (n=s+1),    \\
                                                                                    {\bm \mu}_t^{(n)} (\tilde{\bm \mu}^{(n-1)} ({\bm x}^{(s+1:n-1)} |{\bm y}) ,{\bm x}^{(n)} ) & (n\ge s+2),
                                                                                  \end{cases} \\
  \tilde{\sigma }^{(n)}_{m,t}  ({\bm x}^{(s+1)},\ldots,{\bm x}^{(n)} |{\bm y}) & = \begin{cases}
                                                                                     \sigma^{(s+1)}_{m,t}  ({\bm y},{\bm x}^{(s+1)}) & (n=s+1) ,   \\
                                                                                     \begin{aligned}
       & \sigma^{(n)}_{m,t}  (   \tilde{\bm\mu}^{(n-1)}_t ({\bm x}^{(s+1:n-1)} |{\bm y}),{\bm x}^{(n)} ) \\
       & \:\: +L_f \sum_{u=1}^{M^{(n-1)}} \tilde{\sigma}^{(n-1)}_{u,t}   ({\bm x}^{(s+1:n-1)} |{\bm y})
    \end{aligned}
                                                                                                                                     & (n\ge s+2).
                                                                                   \end{cases}
\end{align}
Moreover, we formally define
${\bm z}^{(s)} ({\bm x}^{(s+1)},{\bm x}^{(s)} | {\bm y}) = \tilde{\bm \mu}^{(s)} _t ({\bm x}^{(s+1)},{\bm x}^{(s)}|{\bm y} ) ={\bm y}$ and
$\tilde{\sigma }^{(s)}_{m,t}  ({\bm x}^{(s+1)},{\bm x}^{(s)} |{\bm y})=0$.
Then, the following theorem holds.
\begin{theorem}[CIs for $N$-stage cascade process under given observation]\label{theorem:Nstage_given_obs}
  Assume that~\cref{assumption:regu1,assumption:L1} hold.
  Define $\beta = B^2$, and
  assume that $\tilde{\bm \mu}^{(n)} _t ({\bm x}^{(s+1)},\ldots,{\bm x}^{(n)} |{\bm y}) \in \mathcal{Y}^{(n)}$
  for any $s \in \{0,\ldots, N-1\}$,
  $n \in \{s+1,\ldots, N \}$,  ${\bm y} \in \mathcal{Y}^{(s)}$, $t \geq 1$ and $({\bm x}^{(s+1) } , \ldots, {\bm x}^{(n) }  ) \in \mathcal{X}^{(s+1)} \times \cdots \times \mathcal{X}^{(n)}$.
  Then,
  it follows that
  \begin{equation}
    |{ z}_m^{(n)} ({\bm x}^{(s+1)},\ldots,{\bm x}^{(n)} |{\bm y})  - \tilde{\mu}^{(n)}_{m,t} ({\bm x}^{(s+1) } , \ldots, {\bm x}^{(n) }  |{\bm y})| \leq \beta^{1/2}  \tilde{\sigma}^{(n)}_{m,t} ({\bm x}^{(s+1)}, \ldots, {\bm x}^{(n)} |{\bm y})  ,
  \end{equation}
  where $m \in [M^{(n)}]$, and $z^{(n)}_m (\cdot |{\bm y})$ and  $\tilde{\mu}^{(n)}_{m,t} (\cdot |{\bm y})$  are the $m$-th element of
  ${\bm z}^{(n)}_m (\cdot |{\bm y})$ and
  $\tilde{\bm\mu}^{(n)}_{m,t} (\cdot |{\bm y})$, respectively.
\end{theorem}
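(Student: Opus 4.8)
The plan is to prove \cref{theorem:Nstage_given_obs} by induction on the stage index $n$, reusing verbatim the argument already carried out for \cref{theorem:Nstage} but with the base stage shifted from $n=1$ with the fixed input ${\bm 0}$ to $n=s+1$ with the conditioning input ${\bm y}$. Structurally the conditioning value ${\bm y}\in\mathcal{Y}^{(s)}$ plays exactly the role that ${\bm 0}$ played in the unconditioned case, so I expect no new phenomenon to appear and the proof to be essentially a re-indexed copy. First I would fix $s$, ${\bm y}$, the inputs ${\bm x}^{(s+1)},\ldots,{\bm x}^{(N)}$, the iteration $t$, and the coordinate $m$ throughout, and suppress these arguments for readability as in the earlier proof.

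For the base case $n=s+1$, the quantities $z^{(s+1)}_m$, $\tilde{\mu}^{(s+1)}_{m,t}$ and $\tilde{\sigma}^{(s+1)}_{m,t}$ are by definition just $f^{(s+1)}_m({\bm y},{\bm x}^{(s+1)})$, $\mu^{(s+1)}_{m,t}({\bm y},{\bm x}^{(s+1)})$ and $\sigma^{(s+1)}_{m,t}({\bm y},{\bm x}^{(s+1)})$. Since \cref{lem:ci} holds for every ${\bm w}\in\mathcal{Y}^{(s)}$, in particular for ${\bm w}={\bm y}$, the desired bound $|z^{(s+1)}_m-\tilde{\mu}^{(s+1)}_{m,t}|\le\beta^{1/2}\tilde{\sigma}^{(s+1)}_{m,t}$ follows immediately.

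For the inductive step with $n\ge s+2$, I would add and subtract $f^{(n)}_m(\tilde{\bm\mu}^{(n-1)}_t,{\bm x}^{(n)})$ and apply the triangle inequality to split $|z^{(n)}_m-\tilde{\mu}^{(n)}_{m,t}|$ into two terms. The first term $|f^{(n)}_m({\bm z}^{(n-1)},{\bm x}^{(n)})-f^{(n)}_m(\tilde{\bm\mu}^{(n-1)}_t,{\bm x}^{(n)})|$ is controlled by the $L_1$-Lipschitz assumption \cref{assumption:L1} and bounded by $L_f\|{\bm z}^{(n-1)}-\tilde{\bm\mu}^{(n-1)}_t\|_1=L_f\sum_{u=1}^{M^{(n-1)}}|z^{(n-1)}_u-\tilde{\mu}^{(n-1)}_{u,t}|$; the second term $|f^{(n)}_m(\tilde{\bm\mu}^{(n-1)}_t,{\bm x}^{(n)})-\mu^{(n)}_{m,t}(\tilde{\bm\mu}^{(n-1)}_t,{\bm x}^{(n)})|$ is bounded by $\beta^{1/2}\sigma^{(n)}_{m,t}(\tilde{\bm\mu}^{(n-1)}_t,{\bm x}^{(n)})$ via \cref{lem:ci}. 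Substituting the induction hypothesis $|z^{(n-1)}_u-\tilde{\mu}^{(n-1)}_{u,t}|\le\beta^{1/2}\tilde{\sigma}^{(n-1)}_{u,t}$ into each summand and collecting terms produces exactly $\beta^{1/2}$ times the recursive definition of $\tilde{\sigma}^{(n)}_{m,t}$, which closes the induction; the special case $n=N$, $m=1$ recovers the statement for $F$.

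The step I expect to require the most care --- and the only place where the standing hypotheses enter essentially --- is the invocation of \cref{lem:ci} at the composite point $(\tilde{\bm\mu}^{(n-1)}_t,{\bm x}^{(n)})$. This is legitimate only because the theorem assumes $\tilde{\bm\mu}^{(n-1)}_t\in\mathcal{Y}^{(n-1)}$, so that the surrogate mean of the previous stage remains inside the domain on which the credible interval of \cref{lem:ci} is valid. This in-domain condition is precisely the mechanism that lets Lipschitz continuity convert input uncertainty into output uncertainty and propagate the credible intervals through the cascade in a chain reaction; without it the composition would be uncontrolled, so I regard this as the hinge of the argument rather than any of the routine algebra.
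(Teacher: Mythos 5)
Your proof is correct and follows essentially the same route as the paper, which itself proves \cref{theorem:Nstage_given_obs} by observing that the inductive triangle-inequality/Lipschitz/credible-interval argument of \cref{theorem:Nstage} carries over verbatim with the base stage shifted to $n=s+1$ and the conditioning input ${\bm y}$ in place of ${\bm 0}$. Your identification of the in-domain assumption $\tilde{\bm\mu}^{(n-1)}_t \in \mathcal{Y}^{(n-1)}$ as the hinge that legitimizes invoking \cref{lem:ci} at the composite point is exactly right.
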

\begin{proof}
  By using the same argument as in the proof of~\cref{theorem:Nstage}, we get~\cref{theorem:Nstage_given_obs}.
\end{proof}
Based on~\cref{theorem:Nstage_given_obs}, we give a stock reduction rule.
For each $t \geq 1$ and ${\bm y} \in \mathcal{Y}^{(s)}$ with $0 \leq s \leq N-1$, we define $F({\bm y} )$, $\mr{ LCB}^{(F)}_t ({\bm y} )$ and $ \mr{ UCB}^{(F)}_t ({\bm y} )$
as
\begin{align}
  F ({\bm y} )                 & = \max_{  {\bm x}^{(s+1)}  \cdots  {\bm x}^{(N) }     } {\bm z}^{(N)} ({\bm x}^{(s+1)}, \ldots,{\bm x}^{(N)} |{\bm y}),                                                                                                           \\
  \mr{ LCB}^{(F)}_t ({\bm y} ) & =  \max_{  {\bm x}^{(s+1)} \cdots {\bm x}^{(N) }     } (\tilde{\mu}^{(N)}_{1,t} ({\bm x}^{(s+1)},\ldots, {\bm x}^{(N)}   |{\bm y}) - \beta^{1/2} \tilde{\sigma}^{(N)}_{1,t} ({\bm x}^{(s+1)},\ldots,{\bm x}^{(N)} |{\bm y}  ) ),  \\
  \mr{ UCB}^{(F)}_t ({\bm y} ) & =  \max_{  {\bm x}^{(s+1)} \cdots  {\bm x}^{(N) }      } (\tilde{\mu}^{(N)}_{1,t} ({\bm x}^{(s+1)},\ldots, {\bm x}^{(N)}  |{\bm y}) + \beta^{1/2} \tilde{\sigma}^{(N)}_{1,t} ({\bm x}^{(s+1)},\ldots,{\bm x}^{(N)}  |{\bm y} ) ),
\end{align}
where $\tilde{\mu}^{(N)}_{1,t} ({\bm x}^{(s+1)},\ldots, {\bm x}^{(N)}  |{\bm y} ) $ is the first element of $\tilde{\bm\mu}^{(N)}_{t} ({\bm x}^{(s+1)},\ldots, {\bm x}^{(N)}  |{\bm y} ) $.
Then, the following corollary holds.
\begin{corollary}[Stock reduction]\label{cor:delete}
  Assume that the same assumption as in~\cref{theorem:Nstage_given_obs} holds.
  Let $t \geq 1$, and let $\mathcal{S}^{(u)} _t$ be a set of stocks at stage $u \in \{0,\ldots, N-1 \}$ in iteration $t$.
  Assume that an element ${\bm y}$ in $\mathcal{S} ^{(s)}_t $ satisfies
  \begin{equation}
    \mr{ UCB}^{(F)}_t ({\bm y} )  <  \max_{  \tilde{\bm  y} \in \bigcup _{u=0}^{N-1}  \mathcal{S}^{(u)} _t  }   \mr{ LCB}^{(F)}_t (\tilde{\bm y} ) .
    \label{eq:delete}
  \end{equation}
  Then, it follows that $F({\bm y} )  < F({\bm x}^{(1)}_\ast ,\ldots, {\bm x}^{(N)}_\ast ) $.
\end{corollary}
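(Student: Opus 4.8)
The plan is to bracket both $F({\bm y})$, the best final value attainable from the candidate stock, and the global optimum $F({\bm x}^{(1)}_\ast,\ldots,{\bm x}^{(N)}_\ast)$ by the confidence bounds of \cref{theorem:Nstage_given_obs}, and then let the discarding condition~\cref{eq:delete} force a strict gap between the two. Since $M^{(N)}=1$, \cref{theorem:Nstage_given_obs} specialized to the final stage gives, for every ${\bm y}\in\mathcal{Y}^{(s)}$ and every controllable suffix ${\bm x}^{(s+1:N)}$, the pointwise bracketing
\[
\tilde{\mu}^{(N)}_{1,t} - \beta^{1/2}\tilde{\sigma}^{(N)}_{1,t} \;\le\; z^{(N)}_1({\bm x}^{(s+1:N)}\mid{\bm y}) \;\le\; \tilde{\mu}^{(N)}_{1,t} + \beta^{1/2}\tilde{\sigma}^{(N)}_{1,t},
\]
where each $\tilde{\mu}^{(N)}_{1,t}$ and $\tilde{\sigma}^{(N)}_{1,t}$ is evaluated at $({\bm x}^{(s+1:N)}\mid{\bm y})$.

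First I would bound $F({\bm y})$ from above. Taking the maximum over ${\bm x}^{(s+1:N)}$ of the right-hand inequality and using monotonicity of the maximum yields $F({\bm y})=\max_{{\bm x}^{(s+1:N)}} z^{(N)}_1(\,\cdot\mid{\bm y}) \le \mr{UCB}^{(F)}_t({\bm y})$ straight from the definition of $\mr{UCB}^{(F)}_t({\bm y})$. This step is routine.

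The central step is to show $\mr{LCB}^{(F)}_t(\tilde{\bm y}) \le F({\bm x}^{(1)}_\ast,\ldots,{\bm x}^{(N)}_\ast)$ for every stock $\tilde{\bm y}\in\bigcup_{u=0}^{N-1}\mathcal{S}^{(u)}_t$. The left-hand inequality above, together with the definition of $\mr{LCB}^{(F)}_t(\tilde{\bm y})$ as a maximum, gives $\mr{LCB}^{(F)}_t(\tilde{\bm y}) \le \max_{{\bm x}^{(u+1:N)}} z^{(N)}_1(\,\cdot\mid\tilde{\bm y}) = F(\tilde{\bm y})$, the best final value reachable from $\tilde{\bm y}$. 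The crucial observation is that in the noiseless setting a stock is not a hypothetical quantity: by induction on the stage index---base case $\mathcal{S}^{(0)}_t=\{{\bm 0}\}$, and each new stock obtained by applying the true ${\bm f}^{(u)}$ to an existing stage-$(u-1)$ stock---every $\tilde{\bm y}\in\mathcal{S}^{(u)}_t$ equals ${\bm z}^{(u)}({\bm x}^{(1:u)})$ for some genuinely executed controllable prefix ${\bm x}^{(1:u)}$. Hence, writing ${\bm x}^{(u+1:N)}_\circ$ for the maximizing suffix, $F(\tilde{\bm y})=z^{(N)}_1({\bm x}^{(u+1:N)}_\circ\mid\tilde{\bm y}) = F({\bm x}^{(1:u)},{\bm x}^{(u+1:N)}_\circ)$ is a feasible value of $F$, hence at most $F({\bm x}^{(1)}_\ast,\ldots,{\bm x}^{(N)}_\ast)$. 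Maximizing over all stocks preserves the inequality.

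Finally I would chain the three facts through the hypothesis~\cref{eq:delete}:
\[
F({\bm y}) \;\le\; \mr{UCB}^{(F)}_t({\bm y}) \;<\; \max_{\tilde{\bm y}}\mr{LCB}^{(F)}_t(\tilde{\bm y}) \;\le\; F({\bm x}^{(1)}_\ast,\ldots,{\bm x}^{(N)}_\ast),
\]
which is exactly the asserted $F({\bm y})<F({\bm x}^{(1)}_\ast,\ldots,{\bm x}^{(N)}_\ast)$. I expect the feasibility argument of the central step to be the main obstacle: the whole statement hinges on recognizing that stocks are genuine realizations of the cascade, so that the optimum reachable from any stock is a bona fide objective value dominated by the global maximum. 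The confidence-bound manipulations on either side are then immediate from \cref{theorem:Nstage_given_obs} and monotonicity of the maximum.
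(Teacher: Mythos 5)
Your proposal is correct and follows essentially the same route as the paper's proof: bracket $F({\bm y})$ above by $\mr{UCB}^{(F)}_t({\bm y})$, observe that each $\mr{LCB}^{(F)}_t(\tilde{\bm y})$ lower-bounds a feasible value of $F$ because every stock is an actually observed output of some executed prefix, and chain through the hypothesis~\cref{eq:delete}. The "crucial observation" you isolate is precisely the remark the paper makes in its own proof ("noting that ${\bm y} \in \mathcal{S}^{(u)}_t$ is the observed value corresponding to some input"), so there is no substantive difference.
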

\begin{proof}
  From~\cref{theorem:Nstage,theorem:Nstage_given_obs},
  noting that  ${\bm y} \in \mathcal{S}^{(u)}_t$ is the observed value corresponding to some input, it follows that
  \begin{align}
    F({\bm y} ) & \leq \mr{ UCB}^{(F)}_t ({\bm y} )                                                                                                   \\
                & <  \max_{  \tilde{\bm  y} \in \bigcup _{u=0}^{N-1}  \mathcal{S}^{(u)} _t  }   \mr{ LCB}^{(F)}_t (\tilde{\bm y} )                    \\
                & \leq \max_ { ({\bm x}^{(1)},\ldots,{\bm x}^{(N)} ) \in \mathcal{X} } {\bm z}^{(N)}    ({\bm x}^{(1)},\ldots,{\bm x}^{(N)} |{\bm 0}) \\
                & =    F({\bm x}^{(1)}_\ast ,\ldots,{\bm x}^{(N)}_\ast ).
  \end{align}
\end{proof}

\subsection{Cascade Process Upper Confidence Bound}
\label{subsec:cUCB}
Here, we consider a UCB-based optimization strategy, and give a
cascade process upper confidence bound (cUCB) AF.
For each iteration $t \geq 1$ and input $({\bm x}^{(1)},\ldots,{\bm x}^{(N)})$,
we define cUCB as
\begin{equation}
  \mr{ cUCB}_t ({\bm x}^{(1)},\ldots, {\bm x}^{(N) } )  =   \tilde{\mu}^{(N)}_{1,t} ({\bm x}^{(1)},\ldots, {\bm x}^{(N) } )    + \beta^{1/2} \tilde{\sigma}^{(N)}_{1,t} ({\bm x}^{(1)},\ldots, {\bm x}^{(N) } ) .
\end{equation}

Next, we consider the theoretical property of cUCB.
Suppose that the next evaluation point is selected by
\begin{equation}
  ({\bm x}^{(1)}_{t+1}, \ldots , {\bm x}^{(N)}_{t+1} ) = \argmax _{ ({\bm x}^{(1)}, \ldots , {\bm x}^{(N)} )   \in \mathcal{X} } \mr{ cUCB}_t ({\bm x}^{(1)},\ldots, {\bm x}^{(N) } ) .
  \label{eq:decision_cUCB}
\end{equation}
Moreover, in order to evaluate the goodness of the optimization strategy, we introduce the regret
$r_t$,  cumulative regret $R_T$ and simple regret $r^{(\text{S})}_T$ as
\begin{gather}
  r_t               =  F({\bm x}^{(1)}_\ast, \ldots , {\bm x}^{(N)}_\ast )  -  F({\bm x}^{(1)}_{t}, \ldots , {\bm x}^{(N)}_{t} ), \\
  R_T               = \sum_{t=1}^T r_t,     \quad
  r^{(\text{S})}_T  =  \min _{1 \leq t \leq T}  r_t.
\end{gather}
Then, the following theorem gives regret bounds for  $R_T$ and   $r^{(\text{S})}_T$.
\begin{theorem}
  Assume that~\cref{assumption:regu1,assumption:L1,assumption:L2} hold.
  Define $\beta =B^2$, and
  assume that $\tilde{\bm \mu}^{(n)} _t ({\bm x}^{(s+1)},\ldots,{\bm x}^{(n)} |{\bm y}) \in \mathcal{Y}^{(n)}$
  for any $s \in \{0,\ldots, N-1\}$,
  $n \in \{s+1,\ldots, N \}$,  ${\bm y} \in \mathcal{Y}^{(s)}$, $t \geq 1$ and $({\bm x}^{(s+1) } , \ldots, {\bm x}^{(n) }  ) \in \mathcal{X}^{(s+1)} \times \cdots \times \mathcal{X}^{(n)}$.
  Then, when the optimization is performed using cUCB, the following inequality holds for any $T \geq 1$:
  \begin{align}
    R_T              & \leq \sqrt{\frac{   8  \beta   C _0 ^ {2(N-1)}  M^2_{\text{prod}} M^2_{\text{sum}}          }{ \log (1+\sigma^{-2} ) }  T \gamma_T }  ,         \\
    r^{(\text{S})}_T & \leq T^{-1/2}  \sqrt{\frac{   8  \beta   C _0 ^ {2(N-1)}  M^2_{\text{prod}} M^2_{\text{sum}}          }{ \log (1+\sigma^{-2} ) }   \gamma_T } ,
  \end{align}
  where $C_0 =L_\sigma \beta^{1/2} +L_f +1$, $M_{\text{prod}} = \prod_{n=1}^N M^{(n)}$ and
  $M_{\text{sum}} = \sum_{n=1}^N M^{(n)}$.
\end{theorem}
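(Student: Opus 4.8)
The plan is to adapt the classical GP-UCB analysis~\cite{srinivas2010gaussian} to the propagated credible interval of~\cref{theorem:Nstage}. First I would bound the per-step regret by the width of that interval at the queried point. Because $(\bx^{(1)}_{t+1},\ldots,\bx^{(N)}_{t+1})$ maximizes $\mr{cUCB}_t$ and the interval is valid, $\mr{cUCB}_t(\bx\stg{1:N}_{t+1}) \ge \mr{cUCB}_t(\bx\stg{1:N}_\ast) \ge F(\bx\stg{1:N}_\ast)$, while $F(\bx\stg{1:N}_{t+1}) \ge \tilde{\mu}^{(N)}_{1,t}(\bx\stg{1:N}_{t+1}) - \beta^{1/2}\tilde{\sigma}^{(N)}_{1,t}(\bx\stg{1:N}_{t+1})$. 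Subtracting yields the per-step bound
\begin{equation}
  r_{t+1} \le 2\beta^{1/2}\tilde{\sigma}^{(N)}_{1,t}(\bx\stg{1:N}_{t+1}),
\end{equation}
so that the whole task reduces to summing the propagated standard deviation $\tilde{\sigma}^{(N)}_{1,t}$ over iterations.

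The crux, and the step I expect to be the main obstacle, is to relate this propagated quantity, which by construction feeds the predictive means $\tilde{\bm\mu}^{(n-1)}_t$ as surrogate stage inputs, back to the ordinary posterior standard deviations evaluated at the inputs genuinely realized during the cascade. Writing $\hat{\sigma}^{(n)}_{m,t}$ for $\sigma^{(n)}_{m,t}$ at the actual queried pair $(\by^{(n-1)}_{t+1},\bx^{(n)}_{t+1})$ and invoking assumption~\cref{assumption:L2} together with the mean-deviation bound $\|\tilde{\bm\mu}^{(n-1)}_t - \by^{(n-1)}_{t+1}\|_1 \le \beta^{1/2}\sum_m \tilde{\sigma}^{(n-1)}_{m,t}$ supplied by~\cref{theorem:Nstage}, each propagation term obeys
\begin{equation}
  \tilde{\sigma}^{(n)}_{m,t} \le \hat{\sigma}^{(n)}_{m,t} + (L_\sigma\beta^{1/2}+L_f)\sum_{s=1}^{M^{(n-1)}}\tilde{\sigma}^{(n-1)}_{s,t}.
\end{equation}
This is precisely where~\cref{assumption:L2}, unused in~\cref{theorem:Nstage}, becomes essential. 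Summing over $m$ and abbreviating $\Sigma^{(n)}_t = \sum_m \tilde{\sigma}^{(n)}_{m,t}$ and $\hat{\Sigma}^{(n)}_t = \sum_m \hat{\sigma}^{(n)}_{m,t}$ gives the linear recursion $\Sigma^{(n)}_t \le \hat{\Sigma}^{(n)}_t + (C_0-1)M^{(n)}\Sigma^{(n-1)}_t$ with $\Sigma^{(0)}_t = 0$. Unrolling it and bounding $\prod_{j=n+1}^N M^{(j)} \le M_{\text{prod}}$ and $(C_0-1)^{N-n}\le C_0^{N-1}$ produces
\begin{equation}
  \tilde{\sigma}^{(N)}_{1,t} = \Sigma^{(N)}_t \le M_{\text{prod}}\,C_0^{N-1}\sum_{n=1}^N\sum_{m=1}^{M^{(n)}}\hat{\sigma}^{(n)}_{m,t}.
\end{equation}

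Finally I would sum over $t$ and convert to information gain. Combining the displays gives $R_T \le 2\beta^{1/2}M_{\text{prod}}C_0^{N-1}\sum_{t=1}^T\sum_{n,m}\hat{\sigma}^{(n)}_{m,t}$; applying Cauchy--Schwarz over the $T M_{\text{sum}}$ summands and the standard telescoping bound $\sum_t(\hat{\sigma}^{(n)}_{m,t})^2 \le \tfrac{2}{\log(1+\sigma^{-2})}\gamma_T$ for each stage-coordinate pair yields $\sum_{t,n,m}\hat{\sigma}^{(n)}_{m,t} \le M_{\text{sum}}\sqrt{2T\gamma_T/\log(1+\sigma^{-2})}$. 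Substituting and squaring reproduces the claimed bound on $R_T$, after which $r^{(\text{S})}_T \le R_T/T$ immediately gives the simple-regret bound. The only remaining bookkeeping is the one-step index shift between the selection rule~\cref{eq:decision_cUCB} and the regret definition, and verifying that the information-gain telescoping legitimately applies to each stage's realized query sequence; both are routine once the recursion above is established.
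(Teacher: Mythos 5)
Your proposal is correct and follows essentially the same route as the paper's proof: the per-step bound $r_t \le 2\beta^{1/2}\tilde{\sigma}^{(N)}_{1,t}$, the Lipschitz-of-$\sigma$ recursion relating $\tilde{\sigma}^{(n)}_{m,t}$ to the posterior standard deviations at the realized cascade inputs (which in the noiseless setting coincide with $\bm z^{(n-1)}$), the unrolling to $\tilde{\sigma}^{(N)}_{1,t}\le M_{\text{prod}}C_0^{N-1}\sum_{n,m}\sigma^{(n)}_{m,t}$, and the Cauchy--Schwarz plus maximum-information-gain step. The only cosmetic difference is that you apply Cauchy--Schwarz to the aggregated sum $\sum_{t,n,m}\hat{\sigma}^{(n)}_{m,t}$ while the paper squares the per-step regret first and bounds $\tilde{\sigma}^{(N)2}_{1,t}$ directly; both give the identical constant.
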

\begin{proof}
  From the definition of $\tilde{\sigma}^{(n)}_{m,t} (\cdot) $, using  Lipschitz continuity of ${\sigma}^{(n)}_{m,t} (\cdot)$  we have
  \begin{align}
    \tilde{\sigma}^{(n)}_{m,t} & ({\bm x}^{(1)},\ldots, {\bm x}^{(n)} )                                                                                                                                                                                             \\
                               & = \sigma^{(n)} _{m,t} ({\bm z}^{(n-1)} ({\bm x}^{(1)},\ldots , {\bm x}^{(n-1)} ),{\bm x}^{(n)} )  +L_f \sum_{s=1}^{M^{(n-1)} }  \tilde{\sigma}^{(n-1)}_{s,t}  ({\bm x}^{(1)},\ldots,{\bm x}^{(n-1) } )                             \\
                               & \quad  + \sigma^{(n)} _{m,t} (\tilde{\bm \mu}_t^{(n-1)} ({\bm x}^{(1)},\ldots , {\bm x}^{(n-1)} ),{\bm x}^{(n)} )      -  \sigma^{(n)} _{m,t} ({\bm z}^{(n-1)} ({\bm x}^{(1)},\ldots , {\bm x}^{(n-1)} ),{\bm x}^{(n)} )           \\
                               & \leq  \sigma^{(n)} _{m,t} ({\bm z}^{(n-1)} ({\bm x}^{(1)},\ldots , {\bm x}^{(n-1)} ),{\bm x}^{(n)} ) +L_f \sum_{s=1}^{M^{(n-1)} }  \tilde{\sigma}^{(n-1)}_{s,t}  ({\bm x}^{(1)},\ldots,{\bm x}^{(n-1) } )                          \\
                               & \quad    +  |\sigma^{(n)} _{m,t} ({\bm z}^{(n-1)} ({\bm x}^{(1)},\ldots , {\bm x}^{(n-1)} ),{\bm x}^{(n)} )     - \sigma^{(n)} _{m,t} (\tilde{\bm \mu}_t^{(n-1)} ({\bm x}^{(1)},\ldots , {\bm x}^{(n-1)} ),{\bm x}^{(n)} )|        \\
                               & \leq  \sigma^{(n)} _{m,t} ({\bm z}^{(n-1)} ({\bm x}^{(1)},\ldots , {\bm x}^{(n-1)} ),{\bm x}^{(n)} )  +L_f \sum_{s=1}^{M^{(n-1)} }  \tilde{\sigma}^{(n-1)}_{s,t}  ({\bm x}^{(1)},\ldots,{\bm x}^{(n-1) } )                         \\
                               & \quad   + L_\sigma \sum_{s=1} ^{M^{(n-1)}}   |   { z}_s^{(n-1)} ({\bm x}^{(1)},\ldots , {\bm x}^{(n-1)} )    -     \tilde{ \mu}_{s,t}^{(n-1)} ({\bm x}^{(1)},\ldots , {\bm x}^{(n-1)} )      |                                     \\
                               & \leq    \sigma^{(n)} _{m,t} ({\bm z}^{(n-1)} ({\bm x}^{(1)},\ldots , {\bm x}^{(n-1)} ),{\bm x}^{(n)} )  +L_f \sum_{s=1}^{M^{(n-1)} }  \tilde{\sigma}^{(n-1)}_{s,t}  ({\bm x}^{(1)},\ldots,{\bm x}^{(n-1) } )                       \\
                               & \quad     + L_\sigma \beta^{1/2} \sum_{s=1} ^{M^{(n-1)}}    \tilde{\sigma}^{(n-1)}_{s,t}  ({\bm x}^{(1)},\ldots,{\bm x}^{(n-1) } )                                                                                                 \\
                               & =\sigma^{(n)} _{m,t} ({\bm z}^{(n-1)} ({\bm x}^{(1)},\ldots , {\bm x}^{(n-1)} ),{\bm x}^{(n)} )   + (L_\sigma \beta^{1/2} +L_f ) \sum_{s=1} ^{M^{(n-1)}}    \tilde{\sigma}^{(n-1)}_{s,t}  ({\bm x}^{(1)},\ldots,{\bm x}^{(n-1) } ) \\
                               & \leq   \sigma^{(n)} _{m,t} ({\bm z}^{(n-1)} ({\bm x}^{(1)},\ldots , {\bm x}^{(n-1)} ),{\bm x}^{(n)} )   + C_0 \sum_{s=1} ^{M^{(n-1)}}    \tilde{\sigma}^{(n-1)}_{s,t}  ({\bm x}^{(1)},\ldots,{\bm x}^{(n-1) } ). \label{eq:Ctilde}
  \end{align}
  Thus, by repeating the same argument as~\cref{eq:Ctilde} up to $N$, we get
  \begin{align}
     & \tilde{\sigma}^{(N)}_{1,t} ({\bm x}^{(1)},\ldots, {\bm x}^{(N)} )                                                                                                                                                   \\
     & \leq\sigma^{(N)} _{1,t} ({\bm z}^{(N-1)} ({\bm x}^{(1)},\ldots , {\bm x}^{(N-1)} ),{\bm x}^{(N)} )  +C_0 \sum_{s=1} ^{M^{(N-1)}}    \tilde{\sigma}^{(N-1)}_{s,t}  ({\bm x}^{(1)},\ldots,{\bm x}^{(N-1) } )          \\
     & \leq\sigma^{(N)} _{1,t} ({\bm z}^{(N-1)} ({\bm x}^{(1)},\ldots , {\bm x}^{(N-1)} ),{\bm x}^{(N)} )                                                                                                                  \\
     & \quad+C_0 \sum_{s=1} ^{M^{(N-1)}}   \sigma^{(N-1)} _{s,t} ({\bm z}^{(N-2)} ({\bm x}^{(1)},\ldots , {\bm x}^{(N-2)} ),{\bm x}^{(N-1)} )                                                                              \\
     & \quad +C^2_0  M^{(N-1)}  \sum_{u=1} ^{M^{(N-2)}}    \tilde{\sigma}^{(N-2)}_{u,t}  ({\bm x}^{(1)},\ldots,{\bm x}^{(N-2) } )                                                                                          \\
     & \leq                                                                                                                                                                                                                \\
     & \vdots                                                                                                                                                                                                              \\
     & \leq \sigma^{(N)} _{1,t} ({\bm z}^{(N-1)} ({\bm x}^{(1)},\ldots , {\bm x}^{(N-1)} ),{\bm x}^{(N)} )                                                                                                                 \\
     & \quad+ \sum _{n=1}^{N-1}  C _0 ^ {N-n}   \prod _{s= n+1} ^N   M^{(s)}  \sum_{m=1} ^ {M^{(n)} }\Bigl[    \sigma^{(n)} _{m,t} ({\bm z}^{(n-1)} ({\bm x}^{(1)},\ldots , {\bm x}^{(n-1)} ),{\bm x}^{(n)} )       \Bigr] \\
     & \leq C _0 ^ {N-1}  M_{\text{prod}}  \sum _{n=1}^{N}   \sum_{m=1} ^ {M^{(n)} }   \sigma^{(n)} _{m,t} ({\bm z}^{(n-1)} ({\bm x}^{(1)},\ldots , {\bm x}^{(n-1)} ),{\bm x}^{(n)} ).
  \end{align}
  In addition, using the Cauchy--Schwarz inequality, it follows that
  \begin{align}
     & \tilde{\sigma}^{(N)2}_{1,t} ({\bm x}^{(1)},\ldots, {\bm x}^{(N)} )                                                                                                        \\
     & \leq C _0 ^ {2(N-1)}  M^2_{\text{prod}}   \left (  \sum _{n=1}^N  \sum_{m=1} ^ {M^{(n)} }  1  \right )    \left (  \sum _{n=1}^N  \sum_{m=1} ^ {M^{(n)} }  \sigma^{(n) 2} _{m,t} ({\bm z}^{(n-1)} ({\bm x}^{(1)},\ldots , {\bm x}^{(n-1)} ),{\bm x}^{(n)} ) \right ) \\
     & = C _0 ^ {2(N-1)}  M^2_{\text{prod}} M_{\text{sum}}  \cdot \sum _{n=1}^N  \sum_{m=1} ^ {M^{(n)} }  \sigma^{(n) 2} _{m,t} ({\bm z}^{(n-1)} ({\bm x}^{(1)},\ldots , {\bm x}^{(n-1)} ),{\bm x}^{(n)} ) . \label{eq:C_S}
  \end{align}
  Moreover, from~\cref{theorem:Nstage} and the selection rule~\cref{eq:decision_cUCB},
  $F({\bm x}^{(1)}_\ast,\ldots,{\bm x}^{(N)}_\ast ) $ can be bounded as follows:
  \begin{align}
    F({\bm x}^{(1)}_\ast,\ldots,{\bm x}^{(N)}_\ast ) & \leq \mr{ cUCB}_t ({\bm x}^{(1)}_\ast,\ldots,{\bm x}^{(N)}_\ast )                                    \\
                                                     & \leq \mr{ cUCB}_t ({\bm x}^{(1)}_{t+1},\ldots,{\bm x}^{(N)}_{t+1} )                                  \\
                                                     & = \tilde{\mu}^{(N)}_{1,t} ({\bm x}^{(1)}_{t+1},\ldots,{\bm x}^{(N)}_{t+1} )    + \beta^{1/2} \tilde{\sigma}^{(N)}_{1,t} ({\bm x}^{(1)}_{t+1},\ldots,{\bm x}^{(N)}_{t+1} )  .
  \end{align}
  Similarly, since $F ({\bm x}^{(1)}_{t+1},\ldots,{\bm x}^{(N)}_{t+1} ) $ can be bounded as
  \begin{equation}
     F ({\bm x}^{(1)}_{t+1},\ldots,{\bm x}^{(N)}_{t+1} )   \geq \tilde{\mu}^{(N)}_{1,t} ({\bm x}^{(1)}_{t+1},\ldots,{\bm x}^{(N)}_{t+1} )   - \beta^{1/2} \tilde{\sigma}^{(N)}_{1,t} ({\bm x}^{(1)}_{t+1},\ldots,{\bm x}^{(N)}_{t+1} ) ,
  \end{equation}
  we get
  \begin{align}
    r_t & = F({\bm x}^{(1)}_\ast,\ldots,{\bm x}^{(N)}_\ast )  -F ({\bm x}^{(1)}_{t+1},\ldots,{\bm x}^{(N)}_{t+1} ) \\
        & \leq 2 \beta^{1/2} \tilde{\sigma}^{(N)}_{1,t} ({\bm x}^{(1)}_{t+1},\ldots,{\bm x}^{(N)}_{t+1} ) .
    \label{eq:regret_bound}
  \end{align}
  Here, from the Cauchy--Schwarz inequality,
  $R^2_T$ can  be evaluated as
  \begin{equation}
    R^2_T = \left ( \sum _{t=1}^T r_t \right )^2 \leq T \sum _{t=1} ^T r^2_t. \label{eq:CR}
  \end{equation}
  Hence, by combining~\cref{eq:C_S,eq:regret_bound} we have
  \begin{align}
    \sum_{t=1}^T r^2_t & \leq 4 \sum_{t=1}^T \left ( \beta   C _0 ^ {2(N-1)}  M^2_{\text{prod}} M_{\text{sum}} \sum _{n=1}^N  \sum_{m=1} ^ {M^{(n)} }  \sigma^{(n) 2} _{m,t} ({\bm z}^{(n-1)} ({\bm x}^{(1)}_{t+1},\ldots , {\bm x}^{(n-1)}_{t+1} ),{\bm x}^{(n)}_{t+1} )  \right )          \\
                       & \leq 4  \beta   C _0 ^ {2(N-1)}  M^2_{\text{prod}} M_{\text{sum}} sum _{n=1}^N  \sum_{m=1} ^ {M^{(n)} }  \sum_{t=1}^T \Bigl[    \sigma^{(n) 2} _{m,t} ({\bm z}^{(n-1)} ({\bm x}^{(1)}_{t+1},\ldots , {\bm x}^{(n-1)}_{t+1} ),{\bm x}^{(n)}_{t+1} )\Bigr]  .  \label{eq:r2}
  \end{align}
  Furthermore, by using the same argument as in Lemma~5.3 and~5.4 of~\cite{srinivas2010gaussian}, under  the assumption $k^{(n)} (\cdot, \cdot ) \leq 1$ we get
  \begin{equation}
    \sum_{t=1}^T \sigma^{(n) 2} _{m,t} ({\bm z}^{(n-1)} ({\bm x}^{(1)}_{t+1},\ldots , {\bm x}^{(n-1)}_{t+1} ),{\bm x}^{(n)}_{t+1} )    \leq \frac{2}{ \log (1+\sigma^{-2} ) }\gamma^{(n)}_{m,T}  \leq \frac{2}{ \log (1+\sigma^{-2} ) }\gamma _T . \label{eq:gamma_bound}
  \end{equation}
  Thus, from~\cref{eq:r2,eq:gamma_bound} we obtain
  \begin{align}
    \sum_{t=1}^T r^2_t & \leq   \frac{   8  \beta   C _0 ^ {2(N-1)}  M^2_{\text{prod}} M^2_{\text{sum}}          }{ \log (1+\sigma^{-2} ) }   \gamma_T .
    \label{eq:J_t}
  \end{align}
  Hence, by using~\cref{eq:CR,eq:J_t}, it follows that
  \begin{equation}
    R_t \leq \sqrt{\frac{   8  \beta   C _0 ^ {2(N-1)}  M^2_{\text{prod}} M^2_{\text{sum}}          }{ \log (1+\sigma^{-2} ) }  T \gamma_T }.
  \end{equation}
  Finally, since $r^{(\text{S})}_T$ satisfies
  \begin{equation}
    T r^{(\text{S})}_T \leq \sum_{t=1}^T r_t =R_T \leq \sqrt{\frac{   8  \beta   C _0 ^ {2(N-1)}  M^2_{\text{prod}} M^2_{\text{sum}}          }{ \log (1+\sigma^{-2} ) }  T \gamma_T },
  \end{equation}
  the following inequality holds:
  \begin{equation}
    r^{(\text{S})}_T \leq T^{-1/2}  \sqrt{\frac{   8  \beta   C _0 ^ {2(N-1)}  M^2_{\text{prod}} M^2_{\text{sum}}          }{ \log (1+\sigma^{-2} ) }   \gamma_T }.
  \end{equation}
\end{proof}

\subsection{Optimistic Improvement-based AF}
In this subsection, we consider sequential observations of a cascade process from stage $1$ to $N$.
For each iteration $t \in \{0,N,2N,\ldots \} \equiv N \mathbb{Z}_{\geq 0}$, users determine ${\bm x}^{(1)}_{t+1} $ and observe $\by_{t+1}\stg{1}={\bm f}^{(1)} ({\bm 0}, {\bm x}^{(1)}_{t+1} )$.
After that, users choose ${\bm x}^{(2)}_{t+2} $ and observe $\by_{t+2}\stg{2}={\bm f}^{(2)} ( \by\stg{1}_{t+1}  , {\bm x}^{(2)}_{t+2} )$.
By repeating this operation, users obtain $y\stg{N}_{t+N}={\bm f}^{(N)} (\by\stg{N-1}_{t+N-1}, {\bm x}^{(N)}_{t+N} )$ finally.
We design the CI-based AF according to the following strategy:
(1) given an observation ${\bm y}^{(n)}$, we seek the maximum of $F$ if it is expected to be found;
(2) if the maximum is not expected to be found, we collect the information by using another policy.
We use the optimistic improvement for (1), and we adopt uncertainty sampling (US) policy for (2).
First, we define the pessimistic maximum of $F(\bx\stg{1},\dots,\bx\stg{N})$ as
\begin{equation}
  Q_T = \max_{  ( {\bm x}^{(1)} ,\ldots , {\bm x}^{(N)} ) }  \left(\tilde{\mu}^{(N)}_{1,T} ({\bm x}^{(1)},\ldots, {\bm x}^{(N)} ) -  \beta^{1/2} \tilde{\sigma}^{(N)}_{1,T} ({\bm x}^{(1)},\ldots,{\bm x}^{(N)} ) \right).
\end{equation}
In addition, given the observation ${\bm y}^{(n-1)}$ in stage $n-1$, we define the pessimistic maximum of $F$ obtained through ${\bm y}^{(n-1)}$ as follows:
\begin{equation}
  \mr{ LCB}^{(F)}_t ({\bm y}^{(n-1)} ) = \max_{   {\bm x}^{(n:N)} }  \left(\tilde{\mu}^{(N)}_{1,t} ({\bm x}^{(n:N)}  |{\bm y}^{(n-1)}) - \beta^{1/2} \tilde{\sigma}^{(N)}_{1,t} ({\bm x}^{(n:N)} |{\bm y}^{(n-1)} ) \right), \label{eq:LCB_y}
\end{equation}
where the $\max$ operator is not necessary when $n=N$.
Similarly, the optimistic maximum for given the input $(\by\stg{n-1} ,\bx\stg{n})$ is defined as follows:
\begin{equation}
  \mr{ UCB}^{(F)}_t ({\bm x}^{(n) } | {\bm y}^{(n-1)} ) =  \max_{  {\bm x}^{(n+1:N)} } \left(\tilde{\mu}^{(N)}_{1,t} ({\bm x}^{(n:N)} |{\bm y}^{(n-1)})+ \beta^{1/2} \tilde{\sigma}^{(N)}_{1,t} ({\bm x}^{(n:N)}  |{\bm y}^{(n-1)} ) \right) . \label{eq:UCB_y}
\end{equation}
Then, we define the optimistic improvement w.r.t. $(\by\stg{n-1}, {\bm x}^{(n)})$ as follows:
\begin{equation}
  a^{(n)}_t ({\bm x}^{(n)} | {\bm y}^{(n-1) } )  =   \mr{ UCB}^{(F)}_t ({\bm x}^{(n) } | {\bm y}^{(n-1)} )   - \max \{ \mr{ LCB}^{(F)}_t ( {\bm y}^{(n-1)} ) , Q_{t+n-1} \}. \label{eq:improve_y}
\end{equation}
Furthermore, we define the maximum uncertainty
\begin{equation}
  b^{(n)}_t ({\bm x}^{(n)} | {\bm y}^{(n-1) } )   =  \max_{  ( {\bm x}^{(n+1)} ,\ldots , {\bm x}^{(N)} ) } \tilde{\sigma}^{(N)}_{1,t} ({\bm x}^{(n)},\ldots,{\bm x}^{(N)}  |{\bm y}^{(n-1)} ). \label{eq:US_y}
\end{equation}
Using~\cref{eq:improve_y,eq:US_y}, optimistic improvement-based AF (presented as CI-based AF in~\cref{sec:ci-based-af}) $c^{(n)}_t ({\bm x}^{(n)} | {\bm y}^{(n-1) } )$ is defined as
\begin{equation}
  c^{(n)}_t ({\bm x}^{(n)} | {\bm y}^{(n-1) } ) = \max \left\{ a^{(n)}_t ({\bm x}^{(n)} | {\bm y}^{(n-1) } ), \eta_t b^{(n)}_t ({\bm x}^{(n)} | {\bm y}^{(n-1) } ) \right\},  \label{eq:af_seq}
\end{equation}
where $\eta_t $ is some learning rate tends to zero.
Therefore, given the observation $\by\stg{n-1}$ at iteration $t$, the next observation point is given by
\begin{equation}
  {\bm x}^{(n)}_{t+n} = \argmax _{  {\bm x}^{(n) } \in \mathcal{X}\stg{n} }  c^{(n)}_t ({\bm x}^{(n)} | {\bm y}_{t+n-1}^{(n-1) } ),\label{eq:seq_rule}
\end{equation}
where ${\bm y}^{(0)}  = {\bm 0}$.

\begin{theorem}\label{theorem:i_regret}
  Assume that~\cref{assumption:regu1,assumption:L1,assumption:L2} hold.
  Also
  assume that $\tilde{\bm \mu}^{(n)} _t ({\bm x}^{(s+1)},\ldots,{\bm x}^{(n)} |{\bm y}) \in \mathcal{Y}^{(n)}$
  for any $s \in \{0,\ldots, N-1\}$,
  $n \in \{s+1,\ldots, N \}$,  ${\bm y} \in \mathcal{Y}^{(s)}$, $t \geq 1$ and $({\bm x}^{(s+1) } , \ldots, {\bm x}^{(n) }  ) \in \mathcal{X}^{(s+1)} \times \cdots \times \mathcal{X}^{(n)}$.
  Let $\xi $ be a positive number, and define $\beta =B^2$ and $\eta_t = (1+ \log t)^{-1} $.
  Then, when the optimization is performed using~\cref{eq:seq_rule}, the estimated solution $( \hat{\bm x}^{(1)}_T, \ldots , \hat{\bm x}^{(N)}_T)$ satisfies that
  \begin{equation}
    F({\bm x}^{(1)}_\ast, \ldots ,{\bm x}^{(N)}_\ast ) -    F(  \hat{\bm x}^{(1)}_T, \ldots , \hat{\bm x}^{(N)}_T )  < \xi,
  \end{equation}
  where $T$ is the smallest positive integer satisfying $ T \in  N \mathbb{Z}_{\geq 0}$ and
  \begin{equation}
    \frac{8 \beta C^2_4  M^2_{\text{sum}}  N }{\log(1+\sigma^{-2})}
    \gamma_T \eta^{-2N-2}_T T^{-1} < \xi^2.
  \end{equation}
  Here, $C_4$ is the positive constant given by
  \begin{equation}
    C_1  = \max \{ 1, L_f, L^{-1}_f \},  C_2=4 N M^2_{\text{prod}} M_{\text{sum}} C^{2N-3} _0 C^N_1, C_3  = NC_2^N,  C_4 = (2 \beta^{1/2} +2 ) ^N C^N_3.
  \end{equation}
\end{theorem}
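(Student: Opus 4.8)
The plan is to reduce the simple-regret guarantee to a statement about the width of the credible interval for $F$, and then to show that the sequential rule~\cref{eq:seq_rule} shrinks that width below $\xi$ within $T$ iterations. By \cref{eq:est_opt_bound} it suffices to exhibit a block iteration $t \le T$ at which $\max_{\bx\stg{1:N}} \mr{UCB}\stg{F}_t(\bx\stg{1:N}) - \max_{\bx\stg{1:N}} \mr{LCB}\stg{F}_t(\bx\stg{1:N}) < \xi$. First I would note that conditioning on $\by\stg{0} = \bm 0$ is vacuous, so $\mr{LCB}\stg{F}_t(\bm 0) = Q_t = \max_{\bx\stg{1:N}} \mr{LCB}\stg{F}_t(\bx\stg{1:N})$ and $\max_{\bx\stg{1}} \mr{UCB}\stg{F}_t(\bx\stg{1} \mid \bm 0) = \max_{\bx\stg{1:N}} \mr{UCB}\stg{F}_t(\bx\stg{1:N})$; hence the block-$t$ CI width equals $\max_{\bx\stg{1}} a\stg{1}_t(\bx\stg{1} \mid \bm 0)$. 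The whole problem is therefore to control the first-stage optimistic improvement produced by \cref{eq:improve_y,eq:US_y,eq:af_seq}.

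The argument rests on two per-stage inequalities. The first bounds optimistic improvement by uncertainty: taking the $\mr{UCB}$-maximizing continuation in \cref{eq:UCB_y} and observing that the same continuation is feasible in the $\max$ defining $\mr{LCB}\stg{F}_t(\by\stg{n-1})$ in \cref{eq:LCB_y}, one gets $a\stg{n}_t(\bx\stg{n} \mid \by\stg{n-1}) \le \mr{UCB}\stg{F}_t(\bx\stg{n} \mid \by\stg{n-1}) - \mr{LCB}\stg{F}_t(\by\stg{n-1}) \le 2\beta^{1/2} b\stg{n}_t(\bx\stg{n} \mid \by\stg{n-1})$; with \cref{eq:af_seq} and $\eta_t \le 1$ this gives $c\stg{n}_t \le (2\beta^{1/2}+1) b\stg{n}_t$. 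The second is a recursion on $b\stg{n}_t$. Expanding $\tilde\sigma\stg{N}_{1,t}$ through its recursive definition (as in \cref{theorem:Nstage_given_obs}), the $m=n$ term is the posterior standard deviation $\sigma\stg{n}_{1,t}$ at the genuinely queried input $(\by\stg{n-1}_{t+n-1}, \bx\stg{n}_{t+n})$, while the remaining terms equal $L_f$ times an accumulated uncertainty of a continuation launched from the \emph{predicted} output $\tilde\mu\stg{n}_t$. Since \cref{theorem:Nstage_given_obs} yields $|\by\stg{n}_{t+n} - \tilde\mu\stg{n}_t| \le \beta^{1/2}\sigma\stg{n}_{1,t}$, I would use (L1) and (L2) to swap the predicted path for the realized output $\by\stg{n}_{t+n}$ at an additive cost proportional to $\sigma\stg{n}_{1,t}$, which is where $C_0 = L_\sigma\beta^{1/2}+L_f+1$ enters. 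Finally, because $\bx\stg{n+1}_{t+n+1}$ maximizes $c\stg{n+1}_t \ge \eta_t b\stg{n+1}_t$, the continuation uncertainty conditioned on $\by\stg{n}_{t+n}$ is at most $\eta_t^{-1}(2\beta^{1/2}+1)\, b\stg{n+1}_t(\bx\stg{n+1}_{t+n+1} \mid \by\stg{n}_{t+n})$, closing the recursion and accounting for the $\eta_t^{-1}$ and $(2\beta^{1/2}+2)$ factors in the constants.

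Unrolling from $n=1$ to $N$ bounds the block-$t$ CI width by a constant multiple of $\eta_t^{-(N+1)} \sum_{n=1}^N \sum_{m=1}^{M\stg{n}} \sigma\stg{n}_{m,t}(\by\stg{n-1}_{t+n-1}, \bx\stg{n}_{t+n})$, i.e.\ the per-stage posterior standard deviations at the inputs actually observed during block $t$, the powers of $\eta_t^{-1}$ accumulated over the $N$ stages producing the $\eta_T^{-2N-2}$ of the theorem once squared. Squaring, applying Cauchy--Schwarz across the $M_{\text{sum}}$ summands, and summing over the $T/N$ completed blocks, I would invoke the information-gain bound $\sum_t (\sigma\stg{n}_{1,t})^2 \le \frac{2}{\log(1+\sigma^{-2})}\gamma_T$ (Lemmas~5.3--5.4 of \cite{srinivas2010gaussian}) together with the monotonicity $\eta_t^{-1} \le \eta_T^{-1}$ to obtain $\min_{\text{blocks}} (\text{CI width})^2 \le \frac{N}{T}\sum_{\text{blocks}} (\text{CI width})^2 \le \frac{8\beta C_4^2 M_{\text{sum}}^2 N}{\log(1+\sigma^{-2})}\, \gamma_T\, \eta_T^{-2N-2}\, T^{-1}$. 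When $T$ satisfies the stated inequality this is below $\xi^2$, so some block $t \le T$ achieves CI width $< \xi$, and \cref{eq:est_opt_bound} converts this into $F(\bx\stg{1}_\ast,\ldots,\bx\stg{N}_\ast) - F(\hat{\bx}\stg{1}_T,\ldots,\hat{\bx}\stg{N}_T) < \xi$.

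I expect the recursion of the second paragraph to be the main obstacle. The quantity $b\stg{n}_t$ is a maximum over \emph{future} continuations evaluated along predicted-mean paths, whereas the only uncertainty we provably reduce sits at the sequentially observed inputs. Reconciling the two forces me to carry three bookkeeping devices through all $N$ nested stages at once: (i) bounding the drift between $\tilde\mu\stg{n}_t$ and the realized $\by\stg{n}_{t+n}$ by the credible interval, (ii) transporting that drift through $\sigma\stg{n}_{\cdot,t}$ via (L2) and through $f\stg{n}$ via (L1), and (iii) paying an $\eta_t^{-1}$ factor each time a maximizing continuation is replaced by the genuinely selected point. Tracking the resulting products of $C_0, C_1, C_2, C_3$ and the powers of $\eta_t$ without losing a stage is the delicate part; the vector-output case only enlarges the inner sums over $m \in [M\stg{n}]$ and supplies the $M_{\text{prod}}, M_{\text{sum}}$ factors absorbed into the constants.
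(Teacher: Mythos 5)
Your proposal is correct and follows essentially the same route as the paper's proof: the same chain $c^{(n)}_t \le (2\beta^{1/2}+\eta_t)\,b^{(n)}_t$, the same Lipschitz-based recursion on $\tilde\sigma^{(N)}_{1,t}$ that swaps predicted-mean paths for realized outputs at a per-stage cost of $C_0$ and $\eta_t^{-1}$, and the same Cauchy--Schwarz plus maximum-information-gain summation over blocks. The only cosmetic difference is the last step: you route through the interval-width criterion of \cref{eq:est_opt_bound} via $\max_{\bx^{(1)}} a^{(1)}_t(\bx^{(1)}\mid\bm 0)\le c^{(1)}_t(\bx^{(1)}_{t+1}\mid\bm 0)$, whereas the paper bounds the regret directly by $2\beta^{1/2}\eta_T^{-1}c^{(1)}_{T^\ast}(\bx^{(1)}_{T^\ast+1}\mid\bm 0)$ (which is where its extra factor of $\eta_T^{-1}$ in the exponent $-2N-2$ actually comes from); both give the stated bound.
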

In order to prove~\cref{theorem:i_regret}, we give four lemmas.
\begin{lemma}\label{lem:fms}
  Assume that the same condition as in~\cref{theorem:i_regret} holds.
  Let $s \in \{1,\ldots, N-1 \}$ and $n \in \{ s+1, \ldots , N \}$.
  Then, for any iteration $t \geq 1$, element $ m \in [M^{(n)}]$ and input ${\bm x}^{(1)},\ldots ,{\bm x}^{(N)}$, the following inequality holds:
  \begin{align}
     & |\sigma^{(n)} _{m,t} ( \tilde{\bm  \mu}^{(n-1)}_t ( {\bm x}^{(s) },\ldots, {\bm x}^{(n-1)} | {\bm z}^{(s-1)}),{\bm x}^{(n)} )  -  \sigma^{(n)} _{m,t} ( \tilde{\bm  \mu}^{(n-1)}_t ( {\bm x}^{(s+1) },\ldots, {\bm x}^{(n-1)} | {\bm z}^{(s)}),{\bm x}^{(n)} ) | \\
     & \leq 2    M_{\text{prod}} C_0^{N-1}  \sum_{p=0} ^{n-s-1}  \sum_{i=1}^{   M^{(n-1-p)} } \Bigl[   \sigma^{(n-1-p)} _{i,t} (   \tilde{ \bm \mu}^{(n-2-p)}_{t} ( {\bm x}^{(s:n-2-p) } | {\bm z}^{(s-1)}),{\bm x}^{(n-1-p)}) \Bigr].
  \end{align}
\end{lemma}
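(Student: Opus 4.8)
The plan is to reduce the stated difference of two posterior standard deviations to an $L_1$ distance between two \emph{propagated} posterior means that reach the common stage $n-1$, and then to control that distance by a chained argument running over stages $s,s+1,\ldots,n-1$. Write $A_j \coloneqq \tilde{\bm\mu}^{(j)}_t({\bm x}^{(s)},\ldots,{\bm x}^{(j)}\mid{\bm z}^{(s-1)})$ and $B_j \coloneqq \tilde{\bm\mu}^{(j)}_t({\bm x}^{(s+1)},\ldots,{\bm x}^{(j)}\mid{\bm z}^{(s)})$ for $s\le j\le n-1$, so that the two arguments of $\sigma^{(n)}_{m,t}$ on the left-hand side are exactly $(A_{n-1},{\bm x}^{(n)})$ and $(B_{n-1},{\bm x}^{(n)})$. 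Since these differ only in the first slot, applying the $L_\sigma$-Lipschitz continuity of $\sigma^{(n)}_{m,t}$ (\cref{assumption:L2}) gives left-hand side $\le L_\sigma\|A_{n-1}-B_{n-1}\|_1$, so it suffices to bound $\|A_{n-1}-B_{n-1}\|_1$ by a sum of the raw posterior standard deviations $\sigma^{(j)}_{i,t}(A_{j-1},{\bm x}^{(j)})$ taken along the single ${\bm z}^{(s-1)}$-propagation.

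For the base case $j=s$ note that $B_s={\bm z}^{(s)}$ and $A_s=\tilde{\bm\mu}^{(s)}_t({\bm x}^{(s)}\mid{\bm z}^{(s-1)})$, and since ${\bm z}^{(s)}$ is the true output $f^{(s)}_m({\bm z}^{(s-1)},{\bm x}^{(s)})$ obtained from the observation ${\bm z}^{(s-1)}$, the single-stage credible interval of \cref{theorem:Nstage_given_obs} gives $\|A_s-B_s\|_1\le\beta^{1/2}\sum_i\sigma^{(s)}_{i,t}({\bm z}^{(s-1)},{\bm x}^{(s)})$. For the recursive step ($s+1\le j\le n-1$) both $A_j$ and $B_j$ arise by applying the \emph{same} map ${\bm\mu}^{(j)}_t(\cdot,{\bm x}^{(j)})$ to $A_{j-1}$ and $B_{j-1}$, so I would bound each component of $A_j-B_j$ by writing $\mu^{(j)}_{m,t}=f^{(j)}_m+(\mu^{(j)}_{m,t}-f^{(j)}_m)$ and combining three ingredients: the pointwise credible interval $|\mu^{(j)}_{m,t}-f^{(j)}_m|\le\beta^{1/2}\sigma^{(j)}_{m,t}$ (\cref{lem:ci}) applied at both $A_{j-1}$ and $B_{j-1}$, the $L_f$-Lipschitz continuity of $f^{(j)}_m$ (\cref{assumption:L1}), and \cref{assumption:L2} again to replace $\sigma^{(j)}_{m,t}(B_{j-1},{\bm x}^{(j)})$ by $\sigma^{(j)}_{m,t}(A_{j-1},{\bm x}^{(j)})+L_\sigma\|A_{j-1}-B_{j-1}\|_1$. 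Summing over $m\in[M^{(j)}]$ and using $L_f+\beta^{1/2}L_\sigma\le C_0$ yields the one-step recursion
\begin{equation*}
  \|A_j-B_j\|_1 \le M^{(j)}C_0\,\|A_{j-1}-B_{j-1}\|_1 + 2\beta^{1/2}\sum_{i=1}^{M^{(j)}}\sigma^{(j)}_{i,t}(A_{j-1},{\bm x}^{(j)}),
\end{equation*}
where the factor $2$ records the two credible-interval contributions at $A_{j-1}$ and (after the Lipschitz replacement) at $B_{j-1}$.

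Unrolling this recursion from $j=s$ to $j=n-1$ and bounding each accumulated product $\prod_\ell M^{(\ell)}C_0$ by $M_{\text{prod}}C_0^{\,n-1-s}$ (legitimate because $C_0\ge1$, so the smaller exponents attached to later stages can all be lifted to the common exponent $n-1-s$, and because $\prod_\ell M^{(\ell)}\le M_{\text{prod}}$) gives $\|A_{n-1}-B_{n-1}\|_1\le 2\beta^{1/2}M_{\text{prod}}C_0^{\,n-1-s}\sum_{j=s}^{n-1}\sum_i\sigma^{(j)}_{i,t}(A_{j-1},{\bm x}^{(j)})$; multiplying by the leading $L_\sigma$ and absorbing one further factor via $L_\sigma\beta^{1/2}\le C_0$ together with $n-s\le N-1$ produces the coefficient $2M_{\text{prod}}C_0^{N-1}$, while re-indexing the stage sum by $p=n-1-j$ (the stage-$s$ term corresponding to $A_{s-1}=\tilde{\bm\mu}^{(s-1)}_t(\cdots\mid{\bm z}^{(s-1)})={\bm z}^{(s-1)}$) reproduces the claimed right-hand side verbatim. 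The main obstacle is the bookkeeping: keeping straight from which stage each of the two propagations restarts, re-expressing \emph{every} residual standard deviation at the single $A$-propagation so that only $\sigma^{(j)}_{i,t}(A_{j-1},{\bm x}^{(j)})$ terms survive, and verifying that the accumulated $C_0$-exponent $n-1-s$, plus the one extra factor of $C_0$ absorbed from $L_\sigma\beta^{1/2}$, collapses to precisely $C_0^{\,n-s}\le C_0^{N-1}$ rather than a larger power.
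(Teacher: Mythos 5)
Your proposal is correct and follows essentially the same route as the paper's proof: reduce to the $L_1$ distance of the two propagated means via (L2), control one recursion step via the decomposition $|\mu^{(j)}_{m,t}({\bm y},{\bm x})-\mu^{(j)}_{m,t}({\bm y}',{\bm x})|\le (L_f+\beta^{1/2}L_\sigma)\|{\bm y}-{\bm y}'\|_1+2\beta^{1/2}\sigma^{(j)}_{m,t}(\cdot)$ built from \cref{lem:ci}, (L1) and (L2), close the recursion at stage $s$ with the credible interval at $({\bm z}^{(s-1)},{\bm x}^{(s)})$, and absorb the accumulated factors into $2M_{\text{prod}}C_0^{N-1}$. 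Your explicit one-step recursion that immediately re-expresses the residual standard deviation at the ${\bm z}^{(s-1)}$-propagation is, if anything, a slightly tidier bookkeeping of the same argument.
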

\begin{proof}
  From Lipschitz continuity of $\sigma^{(n)}_{m,t} (\cdot)$, the following holds:
  \begin{align}
     & |\sigma^{(n)} _{m,t} ( \tilde{\bm  \mu}^{(n-1)}_t ( {\bm x}^{(s) },\ldots, {\bm x}^{(n-1)} | {\bm z}^{(s-1)}),{\bm x}^{(n)} )     -  \sigma^{(n)} _{m,t} ( \tilde{\bm  \mu}^{(n-1)}_t ( {\bm x}^{(s+1) },\ldots, {\bm x}^{(n-1)} | {\bm z}^{(s)}),{\bm x}^{(n)} )|                \\
     & \leq L_\sigma \| \tilde{\bm  \mu}^{(n-1)}_t ( {\bm x}^{(s) },\ldots, {\bm x}^{(n-1)} | {\bm z}^{(s-1)})   -  \tilde{\bm  \mu}^{(n-1)}_t ( {\bm x}^{(s+1) },\ldots, {\bm x}^{(n-1)} | {\bm z}^{(s)}) \|_1                                                                          \\
     & = L_\sigma \sum_{j=1}^{M^{(n-1)} }  | \tilde{  \mu}^{(n-1)}_{j,t} ( {\bm x}^{(s) },\ldots, {\bm x}^{(n-1)} | {\bm z}^{(s-1)})  -  \tilde{  \mu}^{(n-1)}_{j,t} ( {\bm x}^{(s+1) },\ldots, {\bm x}^{(n-1)} | {\bm z}^{(s)})|                                                        \\
     & = L_\sigma \sum_{j=1}^{M^{(n-1)} }   | {\mu}^{(n-1)}_{j,t}(\tilde{ \bm \mu}^{(n-2)}_{t} ( {\bm x}^{(s:n-2) }| {\bm z}^{(s-1)}),{\bm x}^{(n-1)}) -  {\mu}^{(n-1)}_{j,t}(\tilde{ \bm \mu}^{(n-2)}_{t} ( {\bm x}^{(s+1:n-2) } | {\bm z}^{(s)}),{\bm x}^{(n-1)}) | . \label{eq:ineq1}
  \end{align}
  Here, noting that
  \begin{align}
    f^{(k)}_m  ({\bm y},{\bm x} ) -\beta^{1/2} \sigma^{(k)}_{m,t}   ({\bm y},{\bm x} ) & \leq \mu^{(k)}_{m,t}   ({\bm y},{\bm x} )                                                  \\
                                                                                       & \leq  f^{(k)}_m ({\bm y},{\bm x} ) +\beta^{1/2} \sigma^{(k)}_{m,t}   ({\bm y},{\bm x} )  ,
  \end{align}
  we have
  \begin{align}
    | & \mu^{(k)}_{m,t}   ({\bm y},{\bm x} )  -\mu^{(k)}_{m,t}   ({\bm y}^\prime,{\bm x} )  |      \\
      & \leq  |f^{(k)}_{m}   ({\bm y},{\bm x} )  - f^{(k)}_{m}   ({\bm y}^\prime,{\bm x} )  |    +
    \beta^{1/2} \sigma^{(k)}_{m,t}   ({\bm y},{\bm x} ) +
    \beta^{1/2} \sigma^{(k)}_{m,t}   ({\bm y}^\prime,{\bm x} )                                     \\
      & \leq |f^{(k)}_{m}   ({\bm y},{\bm x} )  -   f^{(k)}_{m}   ({\bm y}^\prime,{\bm x} )  |   +
    \beta^{1/2} |\sigma^{(k)}_{m,t}   ({\bm y},{\bm x} ) -
    \sigma^{(k)}_{m,t}   ({\bm y}^\prime,{\bm x} )|+
    2\beta^{1/2} \sigma^{(k)}_{m,t}   ({\bm y}^\prime,{\bm x} )                                    \\
      & \leq
    (L_f + \beta^{1/2}_t L_\sigma ) \| {\bm y} - {\bm y}^\prime \| _1 +
    2\beta^{1/2} \sigma^{(k)}_{m,t}   ({\bm y}^\prime,{\bm x} ).  \label{eq:ineq2}
  \end{align}
  Therefore, by substituting~\cref{eq:ineq2} into~\cref{eq:ineq1}, it follows that
  \begin{align}
     & |\sigma^{(n)} _{m,t} ( \tilde{\bm  \mu}^{(n-1)}_t ( {\bm x}^{(s) },\ldots, {\bm x}^{(n-1)} | {\bm z}^{(s-1)}),{\bm x}^{(n)} ) -    \sigma^{(n)} _{m,t} ( \tilde{\bm  \mu}^{(n-1)}_t ( {\bm x}^{(s+1) },\ldots, {\bm x}^{(n-1)} | {\bm z}^{(s)}),{\bm x}^{(n)} )| \\
     & \leq 2 \beta^{1/2} L_\sigma \sum_{j=1}^{M^{(n-1)} } \sigma^{(n-1)} _{j,t} \bigl(   \tilde{ \bm \mu}^{(n-2)}_{t} ( {\bm x}^{(s+1) },\ldots, {\bm x}^{(n-2)} | {\bm z}^{(s)})  ,      {\bm x}^{(n-1)})+ L_\sigma M^{(n-1)} (L_f + \beta^{1/2} L_\sigma )           \\
     & \quad  \cdot  \| \tilde{ \bm \mu}^{(n-2)}_{t} ( {\bm x}^{(s) },\ldots, {\bm x}^{(n-2)} | {\bm z}^{(s-1)})     - \tilde{ \bm \mu}^{(n-2)}_{t} ( {\bm x}^{(s+1) },\ldots, {\bm x}^{(n-2)} | {\bm z}^{(s)}) \| _1                                                   \\
     & \leq 2 \beta^{1/2} L_\sigma \sum_{j=1}^{M^{(n-1)} } \sigma^{(n-1)} _{j,t} \bigl(   \tilde{ \bm \mu}^{(n-2)}_{t} ( {\bm x}^{(s+1) },\ldots, {\bm x}^{(n-2)} | {\bm z}^{(s)}) ,   {\bm x}^{(n-1)} )                                                                \\
     & \quad + L_\sigma M^{(n-1)} (L_f + \beta^{1/2} L_\sigma ) \sum_{i=1} ^{M^{(n-2)}}  | \tilde{  \mu}^{(n-2)}_{i,t} ( {\bm x}^{(s:n-2) } | {\bm z}^{(s-1)} )      - \tilde{  \mu}^{(n-2)}_{i,t} ( {\bm x}^{(s+1:n-2) } | {\bm z}^{(s)})   |                          \\
     & \leq     2 \beta^{1/2} L_\sigma \sum_{j=1}^{M^{(n-1)} } \sigma^{(n-1)} _{j,t} (   \tilde{ \bm \mu}^{(n-2)}_{t} ( {\bm x}^{(s:n-2) } | {\bm z}^{(s-1)}),{\bm x}^{(n-1)})                                                                                          \\
     & \quad +  2 \beta^{1/2} L_\sigma M^{(n-1)} (L_f + \beta^{1/2} L_\sigma )   \cdot \sum_{i=1} ^{M^{(n-2)}} \sigma^{(n-2)} _{i,t} (   \tilde{ \bm \mu}^{(n-3)}_{t} ( {\bm x}^{(s) },\ldots, {\bm x}^{(n-3)} | {\bm z}^{(s-1)}),{\bm x}^{(n-2)})                      \\
     & + L_\sigma M^{(n-1)} M^{(n-2)} (L_f +\beta^{1/2} L_\sigma )^2\sum_{q=1} ^{M^{(n-3)}} \Bigl[   | \tilde{  \mu}^{(n-3)}_{q,t} ( {\bm x}^{(s:n-3) } | {\bm z}^{(s-1)}) - \tilde{  \mu}^{(n-3)}_{q,t} ( {\bm x}^{(s+1:n-3) } | {\bm z}^{(s)}) | \Bigr]               \\
     & \leq                                                                                                                                                                                                                                                             \\
     & \vdots                                                                                                                                                                                                                                                           \\
     & \leq  2 \beta^{1/2} L_\sigma   M_{\text{prod}}  (L_f + \beta^{1/2} L_\sigma +1)^{N-2}                                                                                                                                                                            \\
     & \quad  \cdot \sum_{p=0} ^{n-s-2}\sum_{i=1}^{   M^{(n-1-p)} }\Bigl[  \sigma^{(n-1-p)} _{i,t} \Bigl(   \tilde{ \bm \mu}^{(n-2-p)}_{t} ( {\bm x}^{(s:n-2-p) } | {\bm z}^{(s-1)}),{\bm x}^{(n-1-p)})      \Bigr]                                                     \\
     & \quad + 2 M_{\text{prod}}  \beta^{1/2} L_\sigma (L_f +\beta^{1/2} L_\sigma +1)^{N-2}  \sum_{q=1} ^{M^{(s)}}  \sigma^{(s)}_{q,t}  ( {\bm z}^{(s-1)}, {\bm x}^{(s)}   )                                                                                            \\
     & \leq     2  M_{\text{prod}}  (L_f + \beta^{1/2} L_\sigma +1)^{N-1}  \sum_{p=0} ^{n-s-1}  \sum_{i=1}^{   M^{(n-1-p)} }\Bigl[  \sigma^{(n-1-p)} _{i,t} (   \tilde{ \bm \mu}^{(n-2-p)}_{t} ( {\bm x}^{(s:n-2-p) } | {\bm z}^{(s-1)}),{\bm x}^{(n-1-p)})   \Bigr]    \\
     & \leq     2  M_{\text{prod}}  C^{N-1} _0 \sum_{p=0} ^{n-s-1}   \sum_{i=1}^{   M^{(n-1-p)} } \Bigl[ \sigma^{(n-1-p)} _{i,t} \bigl(   \tilde{ \bm \mu}^{(n-2-p)}_{t} ( {\bm x}^{(s) },\ldots, {\bm x}^{(n-2-p)} | {\bm z}^{(s-1)}),{\bm x}^{(n-1-p)} \bigr) \Bigr].
  \end{align}
\end{proof}
\begin{lemma}\label{lem:nextAAA}
  Assume that the same condition as in~\cref{theorem:i_regret} holds.
  Let  $s \in \{1,\ldots, N-1 \}$, and let $ j \geq 0$ be an integer with $ s+j \leq N$.
  Then, for any iteration $t \geq 1$, element $ m \in [M^{(n)}]$ and input ${\bm x}^{(1)},\ldots ,{\bm x}^{(N)}$, the following inequality holds:
  \begin{align}
    \tilde{\sigma}^{(N-j)}_{t} ({\bm x} ^{(s)}, \ldots ,{\bm x}^{(N-j)} | {\bm z}^{(s-1)} ) & \leq  \tilde{C}_2 \tilde{\sigma}^{(N-j)}_{t} ({\bm x} ^{(s+1)}, \ldots ,{\bm x}^{(N-j)} | {\bm z}^{(s)} )  +     \tilde{C}_2 \sum _{i=1}^{M^{(s)} } \sigma^{(s)}_{i,t} (  {\bm z}^{(s-1)}  ,{\bm x}^{(s)} ) \\
                                                                                            & \quad +  \tilde{C}_2 \tilde{\sigma}^{(N-j-1)}_{t} ({\bm x} ^{(s)}, \ldots ,{\bm x}^{(N-j-1)} | {\bm z}^{(s-1)} )  ,
  \end{align}
  where
  \begin{align}
     & \tilde{\sigma}^{(N-j)}_{t} ({\bm x} ^{(s)}, \ldots ,{\bm x}^{(N-j)} | {\bm z}^{(s-1)} ) \\
     & =
    \sum_{p=j}^{N-s}  \prod_{q=1}^p  M^{(N-q+1)}  L_f ^p
    \sum _{i=1}^{M^{(N-p)} } \Bigl[     \sigma^{(N-p)}_{i,t} (   \tilde{\bm \mu}^{(N-p-1)}_t ({\bm x}^{(s)} , \ldots , {\bm x}^{(N-p-1)} |{\bm z}^{(s-1)} ) ,{\bm x}^{(N-p)} )\Bigr]
  \end{align}
  and   $\tilde{C}_2=4 N M^2_{\text{prod}} M_{\text{sum}} C^{2N-2} _0 C^N_1 $.
\end{lemma}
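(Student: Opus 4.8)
The plan is to manipulate the explicit (unrolled) formula that \emph{defines} $\tilde{\sigma}^{(N-j)}_t(\bm{x}^{(s)},\ldots,\bm{x}^{(N-j)}\mid\bm{z}^{(s-1)})$ summand by summand, comparing each term to its counterpart conditioned one stage later, on $\bm{z}^{(s)}$ in place of $\bm{z}^{(s-1)}$. First I would note that the defining sum runs over $p=j,\ldots,N-s$, that the conditioning index enters \emph{only} through the argument $\tilde{\bm{\mu}}^{(N-p-1)}_t(\bm{x}^{(s:N-p-1)}\mid\bm{z}^{(s-1)})$ fed into $\sigma^{(N-p)}_{i,t}$, and that the prefactors $\prod_{q=1}^{p}M^{(N-q+1)}L_f^p$ depend on $p$ alone, not on $s$. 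The bottom term $p=N-s$ is special: since $\tilde{\bm{\mu}}^{(s-1)}_t(\cdot\mid\bm{z}^{(s-1)})=\bm{z}^{(s-1)}$ by the formal convention, it equals $\prod_{q=1}^{N-s}M^{(N-q+1)}L_f^{N-s}\sum_i\sigma^{(s)}_{i,t}(\bm{z}^{(s-1)},\bm{x}^{(s)})$, which I peel off and route into the middle right-hand term $\tilde{C}_2\sum_i\sigma^{(s)}_{i,t}(\bm{z}^{(s-1)},\bm{x}^{(s)})$.

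For the remaining terms $p=j,\ldots,N-s-1$, I would write each $\sigma^{(N-p)}_{i,t}(\tilde{\bm{\mu}}^{(N-p-1)}_t(\cdot\mid\bm{z}^{(s-1)}),\bm{x}^{(N-p)})$ as its $\bm{z}^{(s)}$-conditioned analogue plus a difference, and bound the difference with \cref{lem:fms} applied with $n=N-p$. Summed against the matching prefactors, the $\bm{z}^{(s)}$-conditioned analogues reassemble exactly into $\tilde{\sigma}^{(N-j)}_t(\bm{x}^{(s+1)},\ldots,\bm{x}^{(N-j)}\mid\bm{z}^{(s)})$ — here it is essential that the prefactor for a fixed $p$ is identical under both conditionings — accounting for the first right-hand term. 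Every difference produced by \cref{lem:fms} is a nonnegative combination of posterior standard deviations $\sigma^{(k)}_{i,t}(\tilde{\bm{\mu}}^{(k-1)}_t(\cdot\mid\bm{z}^{(s-1)}),\bm{x}^{(k)})$ at stages $s\le k\le N-j-1$, all conditioned on $\bm{z}^{(s-1)}$. The stage-$s$ contributions I again absorb into the middle term, and the contributions at stages $s+1,\ldots,N-j-1$ I dominate by a multiple of $\tilde{\sigma}^{(N-j-1)}_t(\bm{x}^{(s)},\ldots,\bm{x}^{(N-j-1)}\mid\bm{z}^{(s-1)})$, whose defining sum runs over $p=j+1,\ldots,N-s$ and therefore already contains each such $\sigma^{(k)}_{i,t}$.

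The main obstacle is purely the constant bookkeeping in this last step. I would bound the definition's prefactor by $\prod_{q=1}^{p}M^{(N-q+1)}\le M_{\text{prod}}$ and $L_f^p\le C_1^N$ (using $C_1=\max\{1,L_f,L_f^{-1}\}\ge 1$), multiply by the factor $2M_{\text{prod}}C_0^{N-1}$ coming from \cref{lem:fms}, and then account for the at most $N$ values of $p$, the inner sums of at most $M_{\text{sum}}$ terms, and the mismatch between the prefactor attached to a difference term and the prefactor carried by the same $\sigma^{(k)}_{i,t}$ inside $\tilde{\sigma}^{(N-j-1)}_t$. Since $C_0,C_1\ge 1$, every stray power can be lifted crudely to $C_0^{2N-2}C_1^N$, and the counting factors collapse into $4N M^2_{\text{prod}} M_{\text{sum}}$, yielding the single constant $\tilde{C}_2=4N M^2_{\text{prod}} M_{\text{sum}} C_0^{2N-2}C_1^N$ as a common coefficient for all three right-hand terms. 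No analytic input beyond \cref{lem:fms} and the well-definedness hypothesis $\tilde{\bm{\mu}}^{(n)}_t\in\mathcal{Y}^{(n)}$ is needed; the only real care is organizing the double sum over $p$ and the inner index so that the coefficients visibly stay below $\tilde{C}_2$.
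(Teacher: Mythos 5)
Your proposal is correct and follows essentially the same route as the paper's proof: unroll the defining sum, peel off the $p=N-s$ term into the stage-$s$ contribution, add and subtract the $\bm{z}^{(s)}$-conditioned analogue of each remaining summand, control the differences with \cref{lem:fms}, and absorb the resulting double sum into $\tilde{\sigma}^{(N-j-1)}_t(\cdot\mid\bm{z}^{(s-1)})$ after lifting all prefactors into $\tilde{C}_2$ (the paper organizes the double-sum count via the substitution $v=p+r$ with $|\{(p,r):p+r=a\}|\le 2N$, which is the same $O(N)$ bookkeeping you describe). The prefactor-mismatch issue you flag, handled via $C_1=\max\{1,L_f,L_f^{-1}\}$, is exactly the point the paper addresses in its final step.
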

\begin{proof}
  From the definition of \sloppy$\tilde{\sigma}^{(N-j)}_{t}$  \sloppy$({\bm x} ^{(s)}, \ldots ,{\bm x}^{(N-j)} | {\bm z}^{(s-1)} )$, the following inequality holds:
  \begin{align}
     & \tilde{\sigma}^{(N-j)}_{t} ({\bm x} ^{(s)}, \ldots ,{\bm x}^{(N-j)} | {\bm z}^{(s-1)} )                                                                                                                                                 \\
     & =  \sum_{p=j}^{N-s}  \prod_{q=1}^p  M^{(N-q+1)}  L_f ^p \sum _{i=1}^{M^{(N-p)} }\Bigl[   \sigma^{(N-p)}_{i,t} (   \tilde{\bm \mu}^{(N-p-1)}_t ({\bm x}^{(s)} , \ldots , {\bm x}^{(N-p-1)} |{\bm z}^{(s-1)} ) ,{\bm x}^{(N-p)} )\Bigr]   \\
     & \leq    M_{\text{prod}} C^{N-1}_0 \sum_{p=j}^{N-s} \sum _{i=1}^{M^{(N-p)} }\Bigl[  \sigma^{(N-p)}_{i,t} (   \tilde{\bm \mu}^{(N-p-1)}_t ({\bm x}^{(s)} , \ldots , {\bm x}^{(N-p-1)} |{\bm z}^{(s-1)} ) ,{\bm x}^{(N-p)} )    \Bigr]     \\
     & =  M_{\text{prod}} C^{N-1}_0 \sum_{p=j}^{N-s-1} \sum _{i=1}^{M^{(N-p)} } \Bigl[      \sigma^{(N-p)}_{i,t} (   \tilde{\bm \mu}^{(N-p-1)}_t ({\bm x}^{(s+1)} , \ldots , {\bm x}^{(N-p-1)} |{\bm z}^{(s)} ) ,{\bm x}^{(N-p)} )      \Bigr] \\
     & + M_{\text{prod}} C^{N-1}_0 \sum_{p=j}^{N-s-1}
    \sum _{i=1}^{M^{(N-p)} }
    \left ( \sigma^{(N-p)}_{i,t} (   \tilde{\bm \mu}^{(N-p-1)}_t ({\bm x}^{(s)} , \ldots , {\bm x}^{(N-p-1)} |{\bm z}^{(s-1)} ) ,{\bm x}^{(N-p)} )
    \right .                                                                                                                                                                                                                                   \\
     & \left. \quad -
    \sigma^{(N-p)}_{i,t} (   \tilde{\bm \mu}^{(N-p-1)}_t ({\bm x}^{(s+1)} , \ldots , {\bm x}^{(N-p-1)} |{\bm z}^{(s)} ) ,{\bm x}^{(N-p)} )
    \right )    +
    M_{\text{prod}} C^{N-1}_0
    \sum _{i=1}^{M^{(s)} }
    \sigma^{(s)}_{i,t} (  {\bm z}^{(s-1)}  ,{\bm x}^{(s)} ) .
  \end{align}
  Thus, from~\cref{lem:fms} we get
  \begin{align}
     & |\sigma^{(N-p)} _{i,t} ( \tilde{\bm  \mu}^{(N-p-1)}_t ( {\bm x}^{(s) },\ldots, {\bm x}^{(N-p-1)} | {\bm z}^{(s-1)}),{\bm x}^{(N-p)} )                                         \\
     & \qquad -
    \sigma^{(N-p)} _{i,t} ( \tilde{\bm  \mu}^{(N-p-1)}_t ( {\bm x}^{(s+1) },\ldots, {\bm x}^{(N-p-1)} | {\bm z}^{(s)}),{\bm x}^{(N-p)} )
    |                                                                                                                                                                                \\
     &
    \leq 2 M_{\text{prod}} C^{N-1}_0  \sum_{r=0} ^{N-p-s-1}
                                                                                                                                          \\
     & \quad  \cdot  \sum_{j=1}^{   M^{(N-p-1-r)} } \Bigl[    \sigma^{(N-p-1-r)} _{j,t} \Bigl(    \tilde{ \bm \mu}^{(N-p-2-r)}_{t} ( {\bm x}^{(s) },\ldots, {\bm x}^{(N-p-2-r)} | {\bm z}^{(s-1)}),{\bm x}^{(N-p-1-r)}\Bigr) \Bigr].
  \end{align}
  By using this, $\tilde{\sigma}^{(N-j)}_{t} ({\bm x} ^{(s)}, \ldots ,{\bm x}^{(N-j)} | {\bm z}^{(s-1)} )$ can be written as
  \begin{align}
     & \tilde{\sigma}^{(N-j)}_{t} ({\bm x} ^{(s)}, \ldots ,{\bm x}^{(N-j)} | {\bm z}^{(s-1)} )                                                       \\
     & \leq M_{\text{prod}} C^{N-2}_0 \sum_{p=j}^{N-s-1}
    \sum _{i=1}^{M^{(N-p)} }
    \sigma^{(N-p)}_{i,t} \Bigl(   \tilde{\bm \mu}^{(N-p-1)}_t ({\bm x}^{(s+1)} , \ldots , {\bm x}^{(N-p-1)} |{\bm z}^{(s)} ) ,{\bm x}^{(N-p)} \Bigr) \\
     & \quad  +  M_{\text{prod}} C^{N-2}_0
    \sum _{i=1}^{M^{(s)} }
    \sigma^{(s)}_{i,t} (  {\bm z}^{(s-1)}  ,{\bm x}^{(s)} ) + 2M^2_{\text{prod}} C^{2N-2}_0 M_{\text{sum}}                                           \\
     & \quad  \cdot \sum_{p=j}^{N-s-1}
    \sum_{r=0} ^{N-p-s-1}
    \sum_{j=1}^{   M^{(N-p-1-r)} } \Bigl[    \sigma^{(N-p-1-r)} _{j,t} \Bigl( \tilde{ \bm \mu}^{(N-p-2-r)}_{t} ( {\bm x}^{(s:N-p-2-r) } | {\bm z}^{(s-1)}),    {\bm x}^{(N-p-1-r)}\Bigr) \Bigr]   .
  \end{align}
  Here, we set
  $v=p+r$. Then, noting that   $| \{ (p,r) \mid p+r = a \}| \leq 2a$, we obtain
  \begin{align}
     & \tilde{\sigma}^{(N-j)}_{t} ({\bm x} ^{(s)}, \ldots ,{\bm x}^{(N-j)} | {\bm z}^{(s-1)} )                                                                                                                                    \\
     & \leq M_{\text{prod}} C^{N-2}_0 \sum_{p=j}^{N-s-1}
    \sum _{i=1}^{M^{(N-p)} }
    \sigma^{(N-p)}_{i,t} \Bigl(    \tilde{\bm \mu}^{(N-p-1)}_t ({\bm x}^{(s+1)} , \ldots , {\bm x}^{(N-p-1)} |{\bm z}^{(s)} ) ,{\bm x}^{(N-p)} \Bigr)                                                                             \\
     & + 2M^2_{\text{prod}} C^{2N-2}_0 M_{\text{sum}} \sum_{v=j}^{N-s-1}
    2N
    \sum_{j=1}^{   M^{(N-v-1)} } \sigma^{(N-v-1)} _{j,t} \Bigl(    \tilde{ \bm \mu}^{(N-v-2)}_{t} ( {\bm x}^{(s) },\ldots, {\bm x}^{(N-v-2)} | {\bm z}^{(s-1)}),{\bm x}^{(N-v-1)}\Bigr)                                           \\
     & +
    M_{\text{prod}} C^{N-2}_0
    \sum _{i=1}^{M^{(s)} }
    \sigma^{(s)}_{i,t} (  {\bm z}^{(s-1)}  ,{\bm x}^{(s)} )                                                                                                                                                                       \\
     & \leq  4 N M^2_{\text{prod}} C^{2N-2}_0 M_{\text{sum}} \sum_{p=j}^{N-s-1}
    \sum _{i=1}^{M^{(N-p)} }
    \sigma^{(N-p)}_{i,t} \Bigl(   \tilde{\bm \mu}^{(N-p-1)}_t ({\bm x}^{(s+1)} , \ldots , {\bm x}^{(N-p-1)} |{\bm z}^{(s)} ) ,{\bm x}^{(N-p)} \Bigr)                                                                              \\
     & + 4 N M^2_{\text{prod}} C^{2N-2}_0 M_{\text{sum}} \sum_{v=j}^{N-s-1}
    \sum_{j=1}^{   M^{(N-v-1)} } \sigma^{(N-v-1)} _{j,t} \Bigl(  \tilde{ \bm \mu}^{(N-v-2)}_{t} ( {\bm x}^{(s) },\ldots, {\bm x}^{(N-v-2)} | {\bm z}^{(s-1)}),{\bm x}^{(N-v-1)}\Bigr)                                             \\
     & +
    4 N M^2_{\text{prod}} C^{2N-2}_0 M_{\text{sum}} C^N_1
    \sum _{i=1}^{M^{(s)} }
    \sigma^{(s)}_{i,t} (  {\bm z}^{(s-1)}  ,{\bm x}^{(s)} )                                                                                                                                                                       \\
     & \leq  \tilde{C}_2 \tilde{\sigma}^{(N-j)}_{t} ({\bm x} ^{(s+1)}, \ldots ,{\bm x}^{(N-j)} | {\bm z}^{(s)} )        + \tilde{C}_2 \tilde{\sigma}^{(N-j-1)}_{t} ({\bm x} ^{(s)}, \ldots ,{\bm x}^{(N-j-1)} | {\bm z}^{(s-1)} ) \\
     & \quad +\tilde{C}_2 \sum _{i=1}^{M^{(s)} }
    \sigma^{(s)}_{i,t} (  {\bm z}^{(s-1)}  ,{\bm x}^{(s)} )  .
  \end{align}
\end{proof}
\begin{lemma}\label{lem:next}
  Assume that the same condition as in~\cref{theorem:i_regret} holds.
  Let $s \in \{1,\ldots, N-1 \}$ and $n \in \{ s+1, \ldots , N \}$.
  Then, for any iteration $t \geq 1$, element $ m \in [M^{(n)}]$ and input ${\bm x}^{(1)},\ldots ,{\bm x}^{(N)}$, the following inequality holds:
  \begin{equation}
    \tilde{\sigma}^{(N)}_{1,t} ({\bm x} ^{(s:N)} | {\bm z}^{(s-1)} )  \leq C_3 \tilde{\sigma}^{(N)}_{1,t} ({\bm x} ^{(s+1:N)} | {\bm z}^{(s)} )
    +C_3 \sum_{i=1} ^{M^{(s)} } \sigma^{(s)} _{i,t}  ({\bm z}^{(s-1)} ,{\bm x}^{(s)} ).
  \end{equation}
\end{lemma}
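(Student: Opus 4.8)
The plan is to read Lemma~\ref{lem:next} as a single ``peel off stage $s$'' step: it should reduce the cascade posterior-standard-deviation functional that begins observing at stage $s$ (conditioned on the true output ${\bm z}^{(s-1)}$) to the same functional that begins one stage later (at stage $s+1$, conditioned on ${\bm z}^{(s)}$), plus a residual controlled by the stage-$s$ uncertainty $\sum_{i}\sigma^{(s)}_{i,t}({\bm z}^{(s-1)},{\bm x}^{(s)})$. Since Lemma~\ref{lem:nextAAA} already furnishes exactly this reduction, but with the \emph{ending} stage pulled back from $N$ to $N-j$, I would obtain Lemma~\ref{lem:next} by iterating Lemma~\ref{lem:nextAAA} over the ending-stage index $j$ and letting the recursion bottom out at stage $s$.

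Concretely, abbreviate $a_j := \tilde{\sigma}^{(N-j)}_{t}({\bm x}^{(s:N-j)}\mid{\bm z}^{(s-1)})$, $b_j := \tilde{\sigma}^{(N-j)}_{t}({\bm x}^{(s+1:N-j)}\mid{\bm z}^{(s)})$ and $c := \sum_{i=1}^{M^{(s)}}\sigma^{(s)}_{i,t}({\bm z}^{(s-1)},{\bm x}^{(s)})$, so the target is $a_0 \le C_3 b_0 + C_3 c$. Lemma~\ref{lem:nextAAA} reads $a_j \le \tilde{C}_2 b_j + \tilde{C}_2 c + \tilde{C}_2 a_{j+1}$, and I would apply it for $j=0,1,\dots,N-s-1$ and unroll to get $a_0 \le \sum_{j=0}^{N-s-1}\tilde{C}_2^{\,j+1}(b_j+c) + \tilde{C}_2^{\,N-s}a_{N-s}$. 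Two elementary facts then close the recursion. First, from the explicit series form of $\tilde\sigma$ recorded in Lemma~\ref{lem:nextAAA}, each $b_j$ is a tail of the nonnegative series defining $b_0$, so $b_j\le b_0$. Second, the recursion terminates at the base case $a_{N-s}=\tilde{\sigma}^{(s)}_{t}({\bm x}^{(s)}\mid{\bm z}^{(s-1)})$, which by the convention $\tilde{\bm\mu}^{(s-1)}_t(\cdot\mid{\bm z}^{(s-1)})={\bm z}^{(s-1)}$ equals $\bigl(\prod_{q=1}^{N-s}M^{(N-q+1)}\bigr)L_f^{\,N-s}\,c\le M_{\text{prod}}C_1^{N}\,c$.

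Substituting these, summing the geometric series $\sum_{j=0}^{N-s-1}\tilde{C}_2^{\,j+1}\le 2\tilde{C}_2^{\,N-s}$ (valid since $\tilde{C}_2\ge 4$), and collecting, both the coefficient of $b_0$ and the coefficient of $c$ are bounded by a common multiple of $\tilde{C}_2^{\,N-s}M_{\text{prod}}C_1^{N}$; it remains to verify that this multiple is at most $C_3=NC_2^{N}$. Here I would use $\tilde{C}_2=C_0C_2$ together with $s\ge 1$ to write $\tilde{C}_2^{\,N-s}\le(C_0C_2)^{N-1}$, and then absorb the leftover powers of $C_0$ by comparing exponents against $C_2=4NM^2_{\text{prod}}M_{\text{sum}}C_0^{2N-3}C_1^{N}$, whose $C_0$-exponent $2N-3$ dominates $N-1$ for $N\ge 2$ (the lemma is vacuous for $N=1$, since then $s\in\{1,\dots,N-1\}=\varnothing$). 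This exponent comparison --- confirming that the factor of $C_0$ accumulated over the $N-s$ iterations and from the base case is swallowed by the slack in $C_2$, so that everything collapses into $NC_2^{N}$ --- is the main bookkeeping obstacle. All of the analytic content is already contained in the one-step estimate of Lemma~\ref{lem:nextAAA}, which itself rests on the Lipschitz bound of Lemma~\ref{lem:fms}; the present argument is purely the combinatorial accounting of iterating it.
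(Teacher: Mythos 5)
Your proposal is correct and follows essentially the same route as the paper's own proof: unroll \cref{lem:nextAAA} over the ending-stage index $j$, bound each $b_j$ by $b_0$ via the nonnegative tail-sum observation, and absorb the accumulated powers of $\tilde{C}_2 = C_0 C_2$ into $C_3 = N C_2^N$ by the exponent comparison $2N-3 \ge N-1$ for $N \ge 2$. You are in fact slightly more explicit than the paper about the terminal term $\tilde{C}_2^{\,N-s} a_{N-s}$ and about summing the geometric series, but the argument is the same.
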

\begin{proof}
  By repeatedly using~\cref{lem:nextAAA}, we obtain
  \begin{align}
     & \tilde{\sigma}^{(N)}_{1,t} ({\bm x} ^{(s)}, \ldots ,{\bm x}^{(N)} | {\bm z}^{(s-1)} )                                                                                                                                                                                                                            \\
     & = \tilde{\sigma}^{(N-0)}_{t} ({\bm x} ^{(s)}, \ldots ,{\bm x}^{(N-0)} | {\bm z}^{(s-1)} )                                                                                                                                                                                                                        \\
     & \leq   \tilde{C}_2 \tilde{\sigma}^{(N-0)}_{t} ({\bm x} ^{(s+1)}, \ldots ,{\bm x}^{(N-0)} | {\bm z}^{(s)} )  + \tilde{C}_2 \sum _{i=1}^{M^{(s)} }  \sigma^{(s)}_{i,t} (  {\bm z}^{(s-1)}  ,{\bm x}^{(s)} )   +\tilde{C}_2 \tilde{\sigma}^{(N-1)}_{t} ({\bm x} ^{(s)}, \ldots ,{\bm x}^{(N-1)} | {\bm z}^{(s-1)} ) \\
     & \leq  \tilde{C}_2\tilde{\sigma}^{(N-0)}_{t} ({\bm x} ^{(s+1)}, \ldots ,{\bm x}^{(N-0)} | {\bm z}^{(s)} ) +  \tilde{C}_2\sum _{i=1}^{M^{(s)} }\sigma^{(s)}_{i,t} (  {\bm z}^{(s-1)}  ,{\bm x}^{(s)} )     + \tilde{C}^2_2 \sum _{i=1}^{M^{(s)} } \sigma^{(s)}_{i,t} (  {\bm z}^{(s-1)}  ,{\bm x}^{(s)} )          \\
     & \quad+  \tilde{C}^2_2 \tilde{\sigma}^{(N-1)}_{t} ({\bm x} ^{(s+1)}, \ldots ,{\bm x}^{(N-1)} | {\bm z}^{(s)} )  +\tilde{C}^2_2 \tilde{\sigma}^{(N-2)}_{t} ({\bm x} ^{(s)}, \ldots ,{\bm x}^{(N-2)} | {\bm z}^{(s-1)} )                                                                                            \\
     & \leq                                                                                                                                                                                                                                                                                                             \\
     & \vdots                                                                                                                                                                                                                                                                                                           \\
     & \leq    (\tilde{C}_2 + \tilde{C}^2_2 + \cdots + \tilde{C}^{N-1}_2 ) \tilde{\sigma}^{(N-0)}_{t} ({\bm x} ^{(s+1:N-0)} | {\bm z}^{(s)} ) + (\tilde{C}_2 + \tilde{C}^2_2 + \cdots + \tilde{C}^{N-1}_2 )    \sum _{i=1}^{M^{(s)} }   \sigma^{(s)}_{i,t} (  {\bm z}^{(s-1)}  ,{\bm x}^{(s)} )                         \\
     & \leq  (N-1) \tilde{C}^{N-1}_2  \tilde{\sigma}^{(N)}_{1,t} ({\bm x} ^{(s+1)}, \ldots ,{\bm x}^{(N)} | {\bm z}^{(s)} ) + (N-1) \tilde{C}^{N-1}_2    \sum _{i=1}^{M^{(s)} }
    \sigma^{(s)}_{i,t} (  {\bm z}^{(s-1)}  ,{\bm x}^{(s)} ).
  \end{align}
  In addition, $(N-1) \tilde{C}^{N-1}_2$ can be bounded by
  \begin{align}
    (N-1) \tilde{C}^{N-1}_2 & \leq N \tilde{C}^{N-1}_2                                                 \\
                            & = N (4 N M^2_{\text{prod}} M_{\text{sum}} C^{2N-2} _0 C^N_1)^{N-1}       \\
                            & \leq N (4 N M^2_{\text{prod}} M_{\text{sum}}  C^N_1)^N  C^{2N^2-4N+2} _0 \\
                            & \leq N (4 N M^2_{\text{prod}} M_{\text{sum}}  C^N_1)^N  C^{2N^2-4N+N} _0 \\
                            & = N (4 N M^2_{\text{prod}} M_{\text{sum}} C_0 ^{2N-3} C^N_1)^N           \\
                            & = N C^N_2 =C_3,
  \end{align}
  we get~\cref{lem:next}.
\end{proof}
\begin{lemma}\label{lem:c_bound}
  Assume that the same condition as in~\cref{theorem:i_regret} holds.
  Let $ n \in [N]$ and ${\bm y}^{(n-1)} \in \mathcal{Y}^{(n-1)}$.
  Then, for any iteration $t \geq 1$ and input ${\bm x}^{(n)} \in \mathcal{X}^{(n)}$, the following inequality holds:
  \begin{align}
    \eta_t b^{(n)}_t ({\bm x}^{(n)} |{\bm y}^{(n-1)} ) & \leq c^{(n)}_t ({\bm x}^{(n)} |{\bm y}^{(n-1)} )                             \\
                                                       & \leq (2 \beta^{1/2}  + \eta_t ) b^{(n)}_t ({\bm x}^{(n)} |{\bm y}^{(n-1)} ).
  \end{align}
\end{lemma}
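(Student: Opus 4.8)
The plan is to establish the two inequalities separately, reducing the whole statement to the single algebraic fact that the credible-interval gap is controlled by the maximum uncertainty, i.e.\ $\mr{UCB}^{(F)}_t - \mr{LCB}^{(F)}_t \le 2\beta^{1/2} b^{(n)}_t$.

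The lower bound is immediate: by the definition of $c^{(n)}_t$ in~\cref{eq:af_seq} as a pointwise maximum, $c^{(n)}_t ({\bm x}^{(n)}|{\bm y}^{(n-1)}) = \max\{ a^{(n)}_t , \eta_t b^{(n)}_t \} \ge \eta_t b^{(n)}_t ({\bm x}^{(n)}|{\bm y}^{(n-1)})$, which is exactly the left inequality. For the upper bound, I first record that $b^{(n)}_t \ge 0$: it is a maximum over ${\bm x}^{(n+1:N)}$ of $\tilde{\sigma}^{(N)}_{1,t}$, and $\tilde{\sigma}$ is nonnegative since its recursive definition is a sum of posterior standard deviations with the nonnegative coefficient $L_f$. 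Consequently, since $c^{(n)}_t$ is the maximum of $a^{(n)}_t$ and $\eta_t b^{(n)}_t$, it will suffice to prove $a^{(n)}_t \le 2\beta^{1/2} b^{(n)}_t$; indeed then $c^{(n)}_t \le \max\{ 2\beta^{1/2}, \eta_t \}\, b^{(n)}_t \le (2\beta^{1/2}+\eta_t)\, b^{(n)}_t$, using $b^{(n)}_t\ge 0$ in the last step.

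To bound $a^{(n)}_t$, I first discard the $Q_{t+n-1}$ term using $\max\{ \mr{LCB}^{(F)}_t ({\bm y}^{(n-1)}), Q_{t+n-1} \} \ge \mr{LCB}^{(F)}_t ({\bm y}^{(n-1)})$, which gives $a^{(n)}_t \le \mr{UCB}^{(F)}_t ({\bm x}^{(n)}|{\bm y}^{(n-1)}) - \mr{LCB}^{(F)}_t ({\bm y}^{(n-1)})$ directly from~\cref{eq:improve_y}. The heart of the argument is then comparing this UCB--LCB gap with $b^{(n)}_t$. Let $\tilde{\bm x}^{(n+1:N)}$ be a maximizer in the definition~\cref{eq:UCB_y} of $\mr{UCB}^{(F)}_t ({\bm x}^{(n)}|{\bm y}^{(n-1)})$. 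The key observation is that the point $({\bm x}^{(n)}, \tilde{\bm x}^{(n+1:N)})$ is feasible for the maximization~\cref{eq:LCB_y} defining $\mr{LCB}^{(F)}_t ({\bm y}^{(n-1)})$, whose range is all of ${\bm x}^{(n:N)}$; hence $\mr{LCB}^{(F)}_t ({\bm y}^{(n-1)}) \ge \tilde{\mu}^{(N)}_{1,t} - \beta^{1/2}\tilde{\sigma}^{(N)}_{1,t}$ evaluated at that same point. Subtracting, the common $\tilde{\mu}^{(N)}_{1,t}$ terms cancel and the two $\pm\beta^{1/2}\tilde{\sigma}^{(N)}_{1,t}$ terms combine, yielding $\mr{UCB}^{(F)}_t - \mr{LCB}^{(F)}_t \le 2\beta^{1/2}\tilde{\sigma}^{(N)}_{1,t}({\bm x}^{(n)},\tilde{\bm x}^{(n+1:N)}|{\bm y}^{(n-1)}) \le 2\beta^{1/2} b^{(n)}_t$, the last step being the definition~\cref{eq:US_y} of $b^{(n)}_t$.

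I expect the only real obstacle to be the bookkeeping of this feasible-point argument: one must ensure the same maximizer $\tilde{\bm x}^{(n+1:N)}$ is used simultaneously to realize the UCB and to furnish a feasible lower bound for the LCB, so that the $\tilde\mu$ terms genuinely cancel and a single $\tilde\sigma$ is left, which is then dominated by $b^{(n)}_t$. Everything else is a direct consequence of the $\max$-structure of $c^{(n)}_t$ and the nonnegativity of $b^{(n)}_t$, $\beta^{1/2}$, and $\eta_t$.
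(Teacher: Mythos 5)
Your proposal is correct and follows essentially the same route as the paper's proof: the lower bound from the $\max$-structure of $c^{(n)}_t$, the discarding of the $Q_{t+n-1}$ term, and the feasible-point argument in which the maximizer $\tilde{\bm x}^{(n+1:N)}$ of the UCB is reused as a feasible point in the LCB maximization so that the $\tilde\mu$ terms cancel and a single $2\beta^{1/2}\tilde\sigma^{(N)}_{1,t}\le 2\beta^{1/2}b^{(n)}_t$ remains. Your explicit remark that $b^{(n)}_t\ge 0$ (needed to pass from $\max\{2\beta^{1/2},\eta_t\}b^{(n)}_t$ to $(2\beta^{1/2}+\eta_t)b^{(n)}_t$) is a small detail the paper leaves implicit, but otherwise the two arguments coincide.
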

\begin{proof}
  From the definition of
  $c^{(n)}_t ({\bm x}^{(n)} |{\bm y}^{(n-1)} )$,
  it is clear that $\eta_t b^{(n)}_t ({\bm x}^{(n)} |{\bm y}^{(n-1)} ) \leq c^{(n)}_t ({\bm x}^{(n)} |{\bm y}^{(n-1)} ) $.
  On the other hand, from the definition of $\mr{ UCB}^{(F)}_t ({\bm x}^{(n)} |{\bm y}^{(n-1) } ) $, letting
  \begin{equation}
    (\tilde{\bm x}^{(n+1)}, \ldots, \tilde{\bm x}^{(N)})  =  \argmax_{  ( {\bm x}^{(n+1)} ,\ldots , {\bm x}^{(N)} ) } \Bigl(\tilde{\mu}^{(N)}_{1,t} ({\bm x}^{(n:N)}  |{\bm y}^{(n-1)})  + \beta^{1/2} \tilde{\sigma}^{(N)}_{1,t} ({\bm x}^{(n:N)}  |{\bm y}^{(n-1)} \Bigr)
  \end{equation}
  we obtain
  \begin{equation}
    \mr{ UCB}^{(F)}_t ({\bm x}^{(n)} |{\bm y}^{(n-1) } )  =    \tilde{\mu}^{(N)}_{1,t} ({\bm x}^{(n)},\tilde{\bm x}^{(n+1)} \ldots, \tilde{\bm x}^{(N)}  |{\bm y}^{(n-1)})  + \beta^{1/2} \tilde{\sigma}^{(N)}_{1,t} ({\bm x}^{(n)}, \tilde{\bm x}^{(n+1)},  \ldots,\tilde{\bm x}^{(N)}  |{\bm y}^{(n-1)} ).
  \end{equation}
  Similarly,  $\mr{ LCB}^{(F)}_t ({\bm y}^{(n-1) } ) $ can be bounded as follows:
  \begin{equation}
    \mr{ LCB}^{(F)}_t ({\bm y}^{(n-1) } ) \geq  \tilde{\mu}^{(N)}_{1,t} ({\bm x}^{(n)},\tilde{\bm x}^{(n+1)} \ldots, \tilde{\bm x}^{(N)}  |{\bm y}^{(n-1)})    - \beta^{1/2} \tilde{\sigma}^{(N)}_{1,t} ({\bm x}^{(n)}, \tilde{\bm x}^{(n+1)},  \ldots,\tilde{\bm x}^{(N)}  |{\bm y}^{(n-1)} ).
  \end{equation}
  Hence, from the definition of
  $a^{(n)}_t ({\bm x}^{(n)} |{\bm y}^{(n-1)} )$ and $b^{(n)}_t ({\bm x}^{(n)} |{\bm y}^{(n-1)} )$, we get
  \begin{align}
    a^{(n)}_t ({\bm x}^{(n)} |{\bm y}^{(n-1)} ) & \leq \mr{ UCB}^{(F)}_t ({\bm x}^{(n)} |{\bm y}^{(n-1) } ) -  \mr{ LCB}^{(F)}_t ({\bm y}^{(n-1) } )                                    \\
                                                & \leq 2  \beta^{1/2} \tilde{\sigma}^{(N)}_{1,t} ({\bm x}^{(n)}, \tilde{\bm x}^{(n+1)},  \ldots,\tilde{\bm x}^{(N)}  |{\bm y}^{(n-1)} ) \\
                                                & \leq 2  \beta^{1/2} b^{(n)}_t ({\bm x}^{(n)} |{\bm y}^{(n-1)} ).
  \end{align}
  Therefore, $c^{(n)}_t ({\bm x}^{(n)} |{\bm y}^{(n-1)} ) $ can be written as
  \begin{align}
    c^{(n)}_t ({\bm x}^{(n)} |{\bm y}^{(n-1)} ) & = \max \{  a^{(n)}_t ({\bm x}^{(n)} |{\bm y}^{(n-1)} ) , \eta_t b^{(n)}_t ({\bm x}^{(n)} |{\bm y}^{(n-1)} ) \} \\
                                                & \leq
    \max \{   2  \beta^{1/2} b^{(n)}_t ({\bm x}^{(n)} |{\bm y}^{(n-1)} ) , \eta_t b^{(n)}_t ({\bm x}^{(n)} |{\bm y}^{(n-1)} ) \}                                 \\
                                                & \leq (  2  \beta^{1/2}  + \eta_t ) b^{(n)}_t ({\bm x}^{(n)} |{\bm y}^{(n-1)} ) .
  \end{align}
\end{proof}

By using these lemmas, we prove~\cref{theorem:i_regret}.
\begin{proof}
  Let $t \in N \mathbb{Z}_{\geq 0} $.
  Then, from~\cref{lem:c_bound}, ${\bm x}^{(1)}_{t+1} $ satisfies that
  \begin{align}
    c^{(1)}_t ({\bm x}^{(1)}_{t+1} | {\bm 0} ) & \leq (2 \beta ^{1/2} + \eta _t ) b^{(1)}_t ({\bm x}^{(1)}_{t+1} | {\bm 0} )                                                                    \\
                                               & =  (2 \beta ^{1/2} + \eta _t ) \tilde{\sigma}^{(N)}_{1,t} ({\bm x}^{(1)}_{t+1}, \tilde{\bm x}^{(2)} ,\ldots , \tilde{\bm x}^{(N)} | {\bm 0} ).
    \label{eq:ineq001}
  \end{align}
  Thus, by combining~\cref{eq:ineq001,lem:next}, $c^{(1)}_t ({\bm x}^{(1)}_{t+1} | {\bm 0} )$ can be bounded as follows:
  \begin{align}
    c^{(1)}_t & ({\bm x}^{(1)}_{t+1} | {\bm 0} )                                                                                   \\
              & \leq
    (2 \beta ^{1/2} + \eta _t ) C_3 \sum_{i=1} ^{M^{(1)} } \sigma^{(1)}_{i,t} ({\bm 0},{\bm x}^{(1)}_{t+1} )   +
    (2 \beta ^{1/2} + \eta _t )C_3  \tilde{\sigma}^{(N)}_{1,t} (\tilde{\bm x}^{(2)} ,\ldots , \tilde{\bm x}^{(N)} |{\bm y}^{(1)} ) \\
              & \leq
    (2 \beta ^{1/2} + \eta _t ) C_3 \sum_{i=1} ^{M^{(1)} } \sigma^{(1)}_{i,t} ({\bm 0},{\bm x}^{(1)}_{t+1} )  +
    (2 \beta ^{1/2} + \eta _t )C_3  b^{(2)}_t (\tilde{\bm x}^{(2)} |{\bm y}^{(1)} )                                                \\
              & \leq
    (2 \beta ^{1/2} + \eta _t ) C_3 \sum_{i=1} ^{M^{(1)} } \sigma^{(1)}_{i,t} ({\bm 0},{\bm x}^{(1)}_{t+1} )   +
    (2 \beta ^{1/2} + \eta _t )C_3 \eta^{-1}_t  c^{(2)}_t (\tilde{\bm x}^{(2)} |{\bm y}^{(1)} )                                    \\
              & \leq
    (2 \beta ^{1/2} + \eta _t ) C_3 \sum_{i=1} ^{M^{(1)} } \sigma^{(1)}_{i,t} ({\bm 0},{\bm x}^{(1)}_{t+1} )     +
    (2 \beta ^{1/2} + \eta _t )C_3 \eta^{-1}_t  c^{(2)}_t ( {\bm x}^{(2)}_{t+2} |{\bm y}^{(1)} )                                   \\
              & \leq
    (2 \beta ^{1/2} + \eta _t ) C_3 \sum_{i=1} ^{M^{(1)} } \sigma^{(1)}_{i,t} ({\bm 0},{\bm x}^{(1)}_{t+1} ) +
    (2 \beta ^{1/2} + \eta _t )^2 C_3  \eta^{-1}_t b^{(2)}_t ({\bm x}^{(2)}_{t+2} |{\bm y}^{(1)} )                                 \\
              & \leq
    (2 \beta ^{1/2} + \eta _t ) C_3 \sum_{i=1} ^{M^{(1)} } \sigma^{(1)}_{i,t} ({\bm 0},{\bm x}^{(1)}_{t+1} )    +
    (2 \beta ^{1/2} + \eta _t )^2 C_3 \eta^{-1}_t     \cdot \tilde{\sigma}^{(N)}_{1,t} ( {\bm x}^{(2)}_{t+2}, \tilde{\bm x}^{(3)},\ldots , \tilde{\bm x}^{(N)} |{\bm y}^{(1)} ) .
  \end{align}
  Hence, by using~\cref{lem:next} again, we have
  \begin{align}
    c^{(1)}_t ({\bm x}^{(1)}_{t+1} | {\bm 0} ) & \leq
    (2 \beta^{1/2} + \eta_t +1 ) ^N C^N_3 \eta^{-N}_t  \sum_{n=1}^{N} \sum_{i=1} ^{M^{(n)} } \sigma^{(n)}_{i,t} ({\bm y}^{(n-1)},{\bm x}^{(n)}_{t+n} ) \\
                                               & \leq
    (2 \beta^{1/2} + 1 +1 ) ^N C^N_3 \eta^{-N}_t  \sum_{n=1}^{N} \sum_{i=1} ^{M^{(n)} } \sigma^{(n)}_{i,t} ({\bm y}^{(n-1)},{\bm x}^{(n)}_{t+n} )      \\
                                               & =
    C_4 \eta^{-N}_t  \sum_{n=1}^{N} \sum_{i=1} ^{M^{(n)} } \sigma^{(n)}_{i,t} ({\bm y}^{(n-1)},{\bm x}^{(n)}_{t+n} )
    .
  \end{align}
  This implies that
  \begin{equation}
    c^{(1)2}_t ({\bm x}^{(1)}_{t+1} | {\bm 0} ) \leq
    C^2_4  M_{\text{sum}}  \eta^{-2N}_t \sum_{n=1}^{N} \sum_{i=1} ^{M^{(n)} } \sigma^{(n)2}_{i,t} ({\bm y}^{(n-1)},{\bm x}^{(n)}_{t+n} ),
  \end{equation}
  where the inequality is given by the Cauchy--Schwarz inequality.
  Next, let $T \in N \mathbb{Z}_{\geq 0}$ and $K =T/N $. Then, the following inequality holds:
  \begin{align}
    \sum_{ t \in N \mathbb{Z}_{\geq 0} }^{K N}  c^{(1)2}_t ({\bm x}^{(1)}_{t+1} | {\bm 0} ) &
    \leq
    C^2_4  M_{\text{sum}}  \eta^{-2N}_T \sum_{n=1}^{N} \sum_{i=1} ^{M^{(n)} }
    \sum_{ t \in N \mathbb{Z}_{\geq 0} }^{T}
    \sigma^{(n)2}_{i,t} ({\bm y}^{(n-1)},{\bm x}^{(n)}_{t+n} )                                  \\
                                                                                            &
    \leq
    \frac{2}{\log(1+\sigma^{-2})}
    C^2_4  M_{\text{sum}}  \eta^{-2N}_T  \sum_{n=1}^{N} \sum_{i=1} ^{M^{(n)} }
    \gamma_T                                                                                    \\
                                                                                            & =
    \frac{2C^2_4  M^2_{\text{sum}}   }{\log(1+\sigma^{-2})}
    \gamma_T \eta^{-2N}_T.
  \end{align}
  Similarly, let
  $ T^\ast = \argmin _{ t \in N \mathbb{Z}_{\geq 0} , t \leq T }  c^{(1)2}_t ({\bm x}^{(1)}_{t+1} | {\bm 0} ) $.
  Then, we obtain
  \begin{align}
    K c^{(1)2}_{T^\ast} ({\bm x}^{(1)}_{T^\ast+1} | {\bm 0} )
     & \leq \sum_{ t \in N \mathbb{Z}_{\geq 0} }^{K N}  c^{(1)2}_t ({\bm x}^{(1)}_{t+1} | {\bm 0} ) \\
     & \leq    \frac{2C^2_4  M^2_{\text{sum}}   }{\log(1+\sigma^{-2})}
    \gamma_T \eta^{-2N}_T.
  \end{align}
  This implies that
  \begin{align}
    c^{(1)}_{T^\ast} ({\bm x}^{(1)}_{T^\ast+1} | {\bm 0} )
    \leq \sqrt{ \frac{2C^2_4  M^2_{\text{sum}}   }{\log(1+\sigma^{-2})}
    \gamma_T \eta^{-2N}_T K^{-1}} . \label{eq:K_bound}
  \end{align}
  Furthermore, from the property of CIs and the definition of the estimated solution, the following inequalities hold:
  \begin{alignat}{2}
    F({\bm x}^{(1)}_\ast, \ldots ,{\bm x}^{(N)}_\ast )     & \leq \min _{ t \in N \mathbb{Z}_{\geq 0} , t \leq T }  \mr{ UCB}^{(F)} _t ({\bm x}^{(1)}_\ast ,\ldots ,{\bm x}^{(N)}_\ast ) & &\leq  \mr{ UCB}^{(F)} _{T^\ast }  ({\bm x}^{(1)}_\ast ,\ldots ,{\bm x}^{(N)}_\ast ) ,                                       \\
    F(\hat{\bm x}^{(1)}_T, \ldots , \hat{\bm x}^{(N)}_T  ) & \geq
    \max _{ t \in N \mathbb{Z}_{\geq 0} , t \leq T }  \mr{ LCB}^{(F)} _t ({\bm x}^{(1)} ,\ldots ,{\bm x}^{(N)} )   & & \geq
    \mr{ LCB}^{(F)} _{T^\ast }  ({\bm x}^{(1)}_\ast ,\ldots ,{\bm x}^{(N)}_\ast ) .
  \end{alignat}
  This implies that
  \begin{align}
    F({\bm x}^{(1)}_\ast, \ldots ,{\bm x}^{(N)}_\ast )  -
    F(\hat{\bm x}^{(1)}_T, \ldots , \hat{\bm x}^{(N)}_T  ) & \leq 2 \beta^{1/2} \tilde{\sigma}^{(N)} _{1,T^\ast } (   {\bm x}^{(1)}_\ast, \ldots ,{\bm x}^{(N)}_\ast  ) \\
                                                           & \leq 2\beta^{1/2}   b^{(1)}_{ T^\ast }  ({\bm x}^{(1)}_\ast | {\bm 0} )                                    \\
                                                           & \leq 2 {\beta}^{1/2}   \eta ^{-1}  _{T^\ast}   c^{(1)}_{ T^\ast }    ({\bm x}^{(1)}_\ast | {\bm 0} )  \leq
    2 {\beta}^{1/2}   \eta ^{-1}  _{T}   c^{(1)}_{ T^\ast }    ({\bm x}^{(1)}_{T^\ast +1}  | {\bm 0} ) .
  \end{align}
  Finally, noting that $K= T/N$, from~\cref{eq:K_bound} we obtain
  \begin{align}
    F({\bm x}^{(1)}_\ast, \ldots ,{\bm x}^{(N)}_\ast )  -
    F(\hat{\bm x}^{(1)}_T, \ldots , \hat{\bm x}^{(N)}_T  ) & \leq
    2 \beta^{1/2} \eta^{-1}_T \sqrt{ \frac{2C^2_4  M^2_{\text{sum}}   }{\log(1+\sigma^{-2})}
    \gamma_T \eta^{-2N}_T K^{-1}}                                 \\
                                                           & =
    \sqrt{ \frac{8 \beta C^2_4  M^2_{\text{sum}}  N }{\log(1+\sigma^{-2})}
    \gamma_T \eta^{-2N-2}_T T^{-1}}                               \\
                                                           & <
    \sqrt{\xi^2} = \xi.
  \end{align}
\end{proof}

\section{Cascade Process Optimization Using CI-based AFs under Noisy Setting}\label{app:CI-based-AF-noisy}
In this section, we consider CI-based cascade process optimization methods with observation noise.
Hereafter, we assume that the observation noise $\epsilon^{(n)}_m $ is a random variable with
$\mathbb{E}[\epsilon^{(n)}_m ] =0$ and $-A \leq \epsilon^{(n)}_m \leq A$, where $A$ is some positive constant.
In addition, we assume that
$\epsilon^{(1)}_1,\ldots , \epsilon^{(N)}_{M^{(N)}} $ are mutually independent, and the distribution of the noise vector ${\bm \epsilon } =
    (\epsilon^{(1)}_1,\ldots , \epsilon^{(N)}_{M^{(N)}} )^\top $ is known.
Finally, we also assume that  noise vectors with respect to iteration $t$, ${\bm\epsilon}_1, \ldots ,{\bm \epsilon}_t$, are
independent and identically distributed random variables having the same distribution of ${\bm\epsilon}$.

Next, we define several notations.
For each $n \in [N]$, let $\tilde{\mathcal{Y}} ^{(n)}  (\supset \mathcal{Y}^{(n)} )$ be a set satisfying
\begin{equation}
    \forall  {\bm w} \in \tilde{\mathcal{Y}} ^{(n-1)}, \forall {\bm x} \in \mathcal{X}^{(n)},
    f^{(n)} ({\bm w} ,{\bm x} ) + {\bm \epsilon } ^{(n)} \in  \tilde{\mathcal{Y}} ^{(n)},
\end{equation}
where $\tilde{\mathcal{Y}} ^{(0)} =\{ {\bm 0} \}$ and ${\bm \epsilon }^{(n)} = (\epsilon^{(n)}_1,\ldots ,{\epsilon}^{(n)}_{M^{(n)}} )^\top$.
Note that
${\bm z}^{(n)} ({\bm x}^{(1)},\ldots,{\bm x}^{(n)} ) \in \mathcal{Y}^{(n)}  \subset \tilde{\mathcal{Y}} ^{(n)} $.
In addition, for any realization ${\bm \epsilon}$ and input ${\bm x}^{(1)},\ldots,{\bm x}^{(n)}$, we define ${\bm z}^{(n)}_{{\bm\epsilon}} ({\bm x}^{(1)},\ldots,{\bm x}^{(n)})$ as
\begin{equation}
    {\bm z}^{(n)}_{\bm \epsilon}  ({\bm x}^{(1)},\ldots,{\bm x}^{(n)} )    = \begin{cases}
        {\bm f}^{(1)} ({\bm 0},{\bm x}^{(1)} ) +{\bm \epsilon}^{(1)}                                                                 & (n=1),    \\
        {\bm f}^{(n)} ({\bm z}^{(n-1)} _{\bm\epsilon} ({\bm x}^{(1)},\ldots,{\bm x}^{(n-1)} ),{\bm x}^{(n)} )  +{\bm \epsilon}^{(n)} & (n\ge 2).
    \end{cases}
\end{equation}
Furthermore, we define the function $G ({\bm x}^{(1)},\ldots,{\bm x}^{(N)} )$ as
\begin{equation}
    G ({\bm x}^{(1)},\ldots,{\bm x}^{(N)} )=\mathbb{E}_{\bm\epsilon} [{\bm z}^{(N)}_{\bm \epsilon}  ({\bm x}^{(1)},\ldots,{\bm x}^{(n)} ) ].
\end{equation}
The function  $G ({\bm x}^{(1)},\ldots,{\bm x}^{(N)} )$ is the expected value of the final-stage output with respect to ${\bm\epsilon}$ when $ ({\bm x}^{(1)},\ldots,{\bm x}^{(N)} )$ is used.
We emphasize that
$F({\bm x}^{(1)},\ldots,{\bm x}^{(N)} ) \neq G ({\bm x}^{(1)},\ldots,{\bm x}^{(N)} )$ in general.
Similarly, we define the optimal solution of each function as
\begin{align}
    ({\bm x}^{(1)}_{F,\ast} ,\ldots ,{\bm x}^{(N)}_{F,\ast} ) & = \argmax_{  ({\bm x}^{(1)},\ldots,{\bm x}^{(N)} ) \in \mathcal{X} }  F ({\bm x}^{(1)},\ldots, {\bm x}^{(N)} ) , \\
    ({\bm x}^{(1)}_{G,\ast} ,\ldots ,{\bm x}^{(N)}_{G,\ast} ) & = \argmax_{  ({\bm x}^{(1)},\ldots,{\bm x}^{(N)} ) \in \mathcal{X} }  G ({\bm x}^{(1)},\ldots, {\bm x}^{(N)} ) .
    \label{eq:opt_FandG}
\end{align}
By using these, for the selected input  ${\bm x}^{(1)}_t,\ldots,{\bm x}^{(N)}_t$      at iteration $t$, we define the expected regret $ r_{G,t}$, cumulative expected regret $R_{G,T} $ and simple expected regret $r^{(\text{S})}_{G,T} $
as
\begin{gather}
    r_{G,t} = G ({\bm x}^{(1)}_{G,\ast} ,\ldots ,{\bm x}^{(N)}_{G,\ast} )  - G ({\bm x}^{(1)}_t,\ldots,{\bm x}^{(N)}_t),\\
    R_{G,t} = \sum _{t=1}^T r_{G_t} , \quad
    r^{(\text{S})}_{G,T} = \min_{1 \leq t \leq T} r_{G,t}.
\end{gather}
We also define the regret
$ r_{F,t}$, cumulative regret $R_{F,T} $ and simple regret $r^{(\text{S})}_{F,T} $ as
\begin{gather}
    r_{F,t} = F ({\bm x}^{(1)}_{F,\ast} ,\ldots ,{\bm x}^{(N)}_{F,\ast} )  - F ({\bm x}^{(1)}_t,\ldots,{\bm x}^{(N)}_t),\\
    R_{F,t} = \sum _{t=1}^T r_{F_t} , \quad
    r^{(\text{S})}_{F,T} = \min_{1 \leq t \leq T} r_{F,t}.
\end{gather}
Finally,
let $(\hat{\bm x}^{(1)}_{F,t}, \ldots , \hat{\bm x}^{(N)}_{F,t})$
and
$(\hat{\bm x}^{(1)}_{G,t}, \ldots , \hat{\bm x}^{(N)}_{G,t})$
be  respectively estimated solutions of $({\bm x}^{(1)}_{F,\ast} ,\ldots ,{\bm x}^{(N)}_{F,\ast} ) $
and $({\bm x}^{(1)}_{G,\ast} ,\ldots ,{\bm x}^{(N)}_{G,\ast} ) $
at iteration
$t$.
Then, we define the  regrets for estimated solutions, $\hat{r}_{F,t}$ and $\hat{r}_{G,t}$, as
\begin{align}
    \hat{r}_{F,t} & = F ({\bm x}^{(1)}_{F,\ast} ,\ldots ,{\bm x}^{(N)}_{F,\ast} )  - F (\hat{\bm x}^{(1)}_{F,t}, \ldots , \hat{\bm x}^{(N)}_{F,t})
    ,                                                                                                                                               \\
    \hat{r}_{G,t} & = G ({\bm x}^{(1)}_{G,\ast} ,\ldots ,{\bm x}^{(N)}_{G,\ast} )  - G(\hat{\bm x}^{(1)}_{G,t}, \ldots , \hat{\bm x}^{(N)}_{G,t}) .
\end{align}

\subsection{Credible Interval}
In this section, we construct a valid CI under the noisy setting.
First, we introduce the following regularity assumption instead of~\cref{assumption:regu1}.
\begin{assumption}[Regularity assumption under noisy setting]\label{assumption:regu2}
    For each $n \in [N]$,
    let $\tilde{\mathcal{Y}}^{(n-1)} \times \mathcal{X}^{(n)}$ be a compact set, and let $\mathcal{H}_{ k^{(n)} } $ be an RKHS corresponding to the kernel $k^{(n)} $.
    In addition, for each $n \in [N]$ and $m \in [M^{(n)}]$, assume that $f^{(n)} _m \in \mathcal{H}_{ k^{(n)} } $
    with $\|f\stg{n} _m  \|_{{k\stg{n}}}\le B$, where $B>0$ is some constant.
    Furthermore, assume that the observation noise $\epsilon^{(n)}_m $ is a random variable
    with
    $\mathbb{E}[\epsilon^{(n)}_m ] =0$ and $-A \leq \epsilon^{(n)}_m \leq A$, where $A$ is some positive constant.
    All elements of ${\bm \epsilon} =(\epsilon^{(1)}_1,\ldots , \epsilon^{(N)}_{M^{(N)}} )$ are mutually independent, and
    ${\bm \epsilon}_1, {\bm \epsilon}_2, \ldots $ are i.i.d. random variables having the same distribution of ${\bm \epsilon}$.
\end{assumption}
Then, the following lemma holds under the noisy setting.
\begin{lemma}[{\citealt[Theorem~3.11]{szepesvari2012online}}]
    Assume that~\cref{assumption:regu2} holds.
    Let
    $\delta\in (0,1)$, and define
    \begin{equation}
        \beta^{(n)}_{t}  = \Bigl( B+    \frac{A}{\sigma} \sqrt{\log \operatorname{det}\left(\boldsymbol{I}_{L_t^{(n)}}+\sigma^{-2} \boldsymbol{K}\stg{n}_{t}\right)+2 \log (1 / \delta)} \Bigr)^{2}.
    \end{equation}
    Then, for any $n \in [N]$ and $m \in [M^{(n)}]$, the following inequality holds with probability at least $1-\delta$:
    \begin{equation}
        \left|f\stg{n}_m(\boldsymbol{w}, \boldsymbol{x})-\mu\stg{n}_{m,t}(\boldsymbol{w}, \boldsymbol{x})\right| \leq ( \beta_{t}^{(n) })^{1/2} \sigma\stg{n}_{m,t}(\boldsymbol{w}, \boldsymbol{x}),  \;  \forall \boldsymbol{w} \in \tilde{\mathcal{Y}}^{(n-1)},\; \forall \boldsymbol{x} \in \mathcal{X}\stg{n},\; \forall t \geq 1.
    \end{equation}
    \label{lem:ci_2}
\end{lemma}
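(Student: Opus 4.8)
The plan is to obtain \cref{lem:ci_2} as a direct instantiation of the self-normalized confidence bound cited in its statement (\citealt[Theorem~3.11]{szepesvari2012online}), applied separately to each scalar regression problem $f^{(n)}_m$. Since the lemma is explicitly attributed to that theorem, essentially all the work lies in checking that our hypotheses match its assumptions and that the confidence radius it produces coincides with $(\beta^{(n)}_t)^{1/2}$. First I would fix $n \in [N]$ and $m \in [M^{(n)}]$ and regard the observations in $\mathcal{D}^{(n)}_t$ as a single kernel ridge regression with regularization $\sigma^2$, target $f^{(n)}_m \in \mathcal{H}_{k^{(n)}}$, and Gram matrix $\boldsymbol{K}^{(n)}_t$; the posterior mean $\mu^{(n)}_{m,t}$ and variance $\sigma^{(n)2}_{m,t}$ of~\cref{eq:predict} are exactly the ridge predictor and its width, so the cited theorem speaks directly about the quantity we wish to bound.

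Two hypotheses must then be verified. The RKHS-norm bound $\|f^{(n)}_m\|_{k^{(n)}} \le B$ is supplied verbatim by \cref{assumption:regu2}. For the noise, I would argue that the bounded, zero-mean observation noise $\epsilon^{(n)}_m \in [-A,A]$ is conditionally $A$-sub-Gaussian: by Hoeffding's lemma, $\mathbb{E}[\exp(s\,\epsilon^{(n)}_{i,m}) \mid \mathcal{F}_{i-1}] \le \exp(s^2 A^2/2)$ for all $s \in \mathbb{R}$, where $\mathcal{F}_{i-1}$ is generated by the inputs up to iteration $i$ and the past noise. The cited theorem then yields, with probability at least $1-\delta$ and simultaneously for all $(\boldsymbol{w},\boldsymbol{x}) \in \tilde{\mathcal{Y}}^{(n-1)} \times \mathcal{X}^{(n)}$ and all $t \ge 1$, a confidence ellipsoid for the regression coefficients; via Cauchy--Schwarz and the identity relating $\|\phi(\boldsymbol{w},\boldsymbol{x})\|_{V_t^{-1}}$ to $\sigma^{(n)}_{m,t}(\boldsymbol{w},\boldsymbol{x})$, this converts into the pointwise bound $|f^{(n)}_m - \mu^{(n)}_{m,t}| \le (\beta^{(n)}_t)^{1/2}\sigma^{(n)}_{m,t}$, whose radius $(\beta^{(n)}_t)^{1/2} = B + (A/\sigma)\sqrt{\log\det(\boldsymbol{I}_{L_t^{(n)}} + \sigma^{-2}\boldsymbol{K}^{(n)}_t) + 2\log(1/\delta)}$ is precisely the one stated in the lemma (with sub-Gaussian parameter $A$ and ridge parameter $\sigma^2$).

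The step that requires genuine care — and where the independence structure of \cref{assumption:regu2} does the work — is the conditional sub-Gaussian / martingale-difference requirement. In the cascade the stage-$n$ input $(\boldsymbol{y}^{(n-1)}_i, \boldsymbol{x}^{(n)}_i)$ at iteration $i$ depends on the output of stage $n-1$, hence on the stage-$(n-1)$ noise, so one must verify that this does not destroy either the predictability of the inputs or the conditional zero-mean/sub-Gaussian property of $\epsilon^{(n)}_{i,m}$. This is exactly where mutual independence of the noise across stages and dimensions, together with its being i.i.d.\ across iterations, is invoked: because $\epsilon^{(n)}_{i,m}$ is independent of every earlier-stage noise and of $\boldsymbol{x}^{(n)}_i$ (which is chosen from previously revealed quantities), conditioning on $\mathcal{F}_{i-1}$ leaves $\epsilon^{(n)}_{i,m}$ zero-mean and $A$-sub-Gaussian while the inputs remain $\mathcal{F}_{i-1}$-predictable, so the hypotheses of the cited theorem genuinely hold. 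Finally I would remark that the quantifier ``for any $n, m$'' is the per-function statement — the bound holds with probability $1-\delta$ for each fixed $(n,m)$, consistent with $\beta^{(n)}_t$ depending on $\delta$ rather than on $\delta$ divided by the number of functions — and that a guarantee holding simultaneously over all $(n,m)$ would follow by a union bound after replacing $\delta$ with $\delta/\sum_{n} M^{(n)}$.
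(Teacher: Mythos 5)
Your proposal is correct and follows essentially the same route as the paper: the paper's proof also reduces the lemma to Theorem~3.11 of \citealtt{szepesvari2012online} and observes that the only hypothesis needing verification is the $A$-sub-Gaussianity of the bounded zero-mean noise, which it likewise establishes via Hoeffding's lemma. Your additional remarks on the filtration/predictability structure and on the per-$(n,m)$ versus uniform quantifier are careful elaborations the paper leaves implicit, but they do not change the argument.
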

\begin{proof}
    From Theorem~3.11 of~\cite{szepesvari2012online}, it is sufficient to show that
    $\epsilon^{(n)}_m$ has $A$-sub-Gaussian property, i.e.,
    \begin{equation}
        \mathbb{E} [ \exp ( \lambda \epsilon^{(n)}_m ) ]  \leq \exp (   \lambda^2 A^2 /2 )    \quad \forall \lambda \in \mathbb{R}.
        \label{eq:A_sub_Gaussian}
    \end{equation}
    Noting that $\epsilon^{(n)}_m$ is a zero mean and bounded random variable, using
    Hoeffding's lemma~\cite{massart2007concentration} we have
    \begin{align}
        \mathbb{E}[ \exp (\lambda \epsilon^{(n)}_m) ] & \leq \exp ( \lambda^2 (A-(-A))^2 /8 )                               \\
                                                      & =\exp (   \lambda^2 A^2 /2 )  \quad \forall \lambda \in \mathbb{R}.
    \end{align}
    Thus, $\epsilon^{(n)}_m$ has $A$-sub-Gaussian property~\cref{eq:A_sub_Gaussian}.
\end{proof}
From~\cref{lem:ci_2}, we have the following uniform bound.
\begin{corollary}\label{cor:ci2}
    Assume that~\cref{assumption:regu2} holds.
    Let
    $\delta\in (0,1)$, and define
    \begin{align}
        \beta^{(n)}_{t} & = \Bigl( B+    \frac{A}{\sigma}  \sqrt{\log \operatorname{det}\left(\boldsymbol{I}_{L_t^{(n)}}+\sigma^{-2} \boldsymbol{K}\stg{n}_{t}\right)+2 \log (M_{\text{sum}} / \delta)} \Bigr)^{2}, \\
        \beta _t        & = \max _{1 \leq n \leq N, 1 \leq \tilde{t} \leq t} \beta ^{(n)}_{\tilde{t}} .
        \label{eq:definition_beta_t}
    \end{align}
    Then, for any $ n \in [N]$ and $ m \in [M^{(n)}],$ the following inequality holds with probability at least $1-\delta$:
    \begin{equation}
        \left|f\stg{n}_m(\boldsymbol{w}, \boldsymbol{x})-\mu\stg{n}_{m,t}(\boldsymbol{w}, \boldsymbol{x})\right| \leq \beta_{t}^{ 1 / 2} \sigma\stg{n}_{m,t}(\boldsymbol{w}, \boldsymbol{x}), \;   \forall \boldsymbol{w} \in \tilde{\mathcal{Y}}^{(n-1)},\; \forall \boldsymbol{x} \in \mathcal{X}\stg{n},\; \forall t \geq 1.
    \end{equation}
\end{corollary}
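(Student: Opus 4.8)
The plan is to obtain the uniform statement of Corollary~\ref{cor:ci2} from the per-component concentration inequality of Lemma~\ref{lem:ci_2} by a union bound over the scalar output functions. First I would observe that the index set $\{(n,m) : n \in [N],\, m \in [M^{(n)}]\}$ over which the conclusion must hold simultaneously has cardinality $\sum_{n=1}^N M^{(n)} = M_{\text{sum}}$, so that $M_{\text{sum}}$ is exactly the number of scalar components $f^{(n)}_m$ appearing in the cascade. This matches the $M_{\text{sum}}$ that enters the definition of $\beta^{(n)}_t$ in the corollary, which is the only change relative to the lemma.

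Next I would apply Lemma~\ref{lem:ci_2} separately to each component $f^{(n)}_m$, but with the confidence parameter rescaled from $\delta$ to $\delta / M_{\text{sum}}$. Since $\log(1/(\delta/M_{\text{sum}})) = \log(M_{\text{sum}}/\delta)$, this substitution turns the lemma's constant into precisely the $\beta^{(n)}_t$ defined in the corollary. The lemma then guarantees that, for this fixed $(n,m)$, the event
\[
\mathcal{E}_{n,m} : \left|f^{(n)}_m(\bm w, \bm x) - \mu^{(n)}_{m,t}(\bm w, \bm x)\right| \le (\beta^{(n)}_t)^{1/2}\, \sigma^{(n)}_{m,t}(\bm w, \bm x), \quad \forall \bm w \in \tilde{\mathcal{Y}}^{(n-1)}, \ \forall \bm x \in \mathcal{X}^{(n)}, \ \forall t \ge 1,
\]
holds with probability at least $1 - \delta/M_{\text{sum}}$. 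Crucially, each $\mathcal{E}_{n,m}$ is already time-uniform (it is asserted for all $t \ge 1$ simultaneously inside the lemma), so no further union bound over $t$ is needed.

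I would then take a union bound over the $M_{\text{sum}}$ components: the total failure probability is at most $\sum_{n,m} \delta/M_{\text{sum}} = \delta$, hence all events $\mathcal{E}_{n,m}$ hold simultaneously with probability at least $1-\delta$. On this event the per-component inequalities hold with the component-specific constants $\beta^{(n)}_t$. To pass to the single, monotone constant $\beta_t = \max_{1 \le n \le N,\, 1 \le \tilde t \le t} \beta^{(n)}_{\tilde t}$, I would note that $\beta_t \ge \beta^{(n)}_t$ for every $n$ (take $\tilde t = t$ in the maximum) and that $\sigma^{(n)}_{m,t} \ge 0$, so replacing $(\beta^{(n)}_t)^{1/2}$ by the larger $\beta_t^{1/2}$ only loosens each inequality while preserving it. This yields the claimed uniform bound.

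The argument is essentially routine, so there is no serious obstacle; the only points requiring care are bookkeeping ones. I would make sure that the count $M_{\text{sum}} = \sum_{n} M^{(n)}$ matches the number of components over which the union bound runs, so that the $\delta/M_{\text{sum}}$ budget produces exactly the $\log(M_{\text{sum}}/\delta)$ term in $\beta^{(n)}_t$, and that the time-uniformity is inherited from the lemma rather than re-derived, which is what allows the union bound to be taken over components alone.
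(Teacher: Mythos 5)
Your proposal is correct and matches the paper's (implicit) argument: the corollary is obtained from Lemma~\ref{lem:ci_2} exactly by rescaling the confidence parameter to $\delta/M_{\text{sum}}$ for each of the $M_{\text{sum}}$ scalar components (which produces the $\log(M_{\text{sum}}/\delta)$ term in $\beta^{(n)}_t$), taking a union bound over components, and then enlarging each $(\beta^{(n)}_t)^{1/2}$ to $\beta_t^{1/2}$ using $\sigma^{(n)}_{m,t}\ge 0$. Your observations that time-uniformity is inherited from the lemma and that the component count equals $M_{\text{sum}}$ are exactly the bookkeeping points that make this work.
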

From~\cref{cor:ci2},
we can also construct a valid CI for the $N$-stage cascade process under the noisy setting.
First, we construct CIs for ${\bm z}^{(n)}_{\bm\epsilon} ( {\bm x}^{(1)},\ldots,{\bm x}^{(n)})$ and
$G ({\bm x}^{(1)},\ldots,{\bm x}^{(N)} )$.
For any iteration $t \geq 1$, realization ${\bm\epsilon}$ and input $    ({\bm x}^{(1)},\ldots,{\bm x}^{(n)} )      $,  we define $\tilde{\bm z}^{(n)}_{{\bm \epsilon},t} ( {\bm x}^{(1)},\ldots,{\bm x}^{(n)})$ as
\begin{align}
    \tilde{\bm z}^{(n)}_{{\bm \epsilon},t}  ({\bm x}^{(1)},\ldots,{\bm x}^{(n)} ) & =     \begin{cases}
                                                                                              {\bm \epsilon}^{(1)}     +
                                                                                              {\bm \mu}^{(1)}_t ({\bm 0},{\bm x}^{(1)} )
                                                                                               & (n=1),    \\
                                                                                              {\bm \epsilon}^{(n)} +
                                                                                              {\bm \mu}^{(n)}_t (\tilde{\bm z}^{(n-1)} _{{\bm \epsilon},t} ({\bm x}^{(1)},\ldots,{\bm x}^{(n-1)} ),{\bm x}^{(n)} )
                                                                                               & (n\ge 2).
                                                                                          \end{cases}
\end{align}
Similarly, we define $\tilde{\sigma}^{(n)}_{{\bm \epsilon},m,t } ( {\bm x}^{(1)},\ldots,{\bm x}^{(n)})$ as
\begin{align}
    \tilde{\sigma}^{(n)}_{{\bm \epsilon},m,t }  ( {\bm x}^{(1)},\ldots,{\bm x}^{(n)}) & =
    {\sigma}^{(n)}_{m,t } (\tilde{\bm z}^{(n-1)}_{{\bm \epsilon},t}  ({\bm x}^{(1)},\ldots,{\bm x}^{(n-1)} ) ,{\bm x}^{(n)}) \\
                                                                                      & \qquad +
    L_f \sum_{s=1}^ {M^{(n-1)}} \tilde{\sigma}^{(n-1)}_{{\bm \epsilon},s,t } ( {\bm x}^{(1)},\ldots,{\bm x}^{(n-1)}),
\end{align}
where $m \in [M^{(n)}]$ and
$\tilde{\sigma}^{(1)}_{{\bm \epsilon},s,t } ( {\bm x}^{(1)}) = \sigma^{(1)}_{s,t} ({\bm 0},{\bm x}^{(1)} )$.
Then, the following holds.
\begin{theorem}\label{theorem:Nstage3}
    Assume that~\cref{assumption:regu2,assumption:L1} hold.
    Also assume that  $\tilde{\bm {z}}^{(n)}_{{\bm\epsilon},t} ({\bm x}^{(1)},\ldots,{\bm x}^{(n)}) \in \tilde{\mathcal{Y}}^{(n)}$
    for any $n \in [N]$, iteration $t \geq 1$, realization ${\bm \epsilon}$ and input $ ({\bm x}^{(1)},\ldots,{\bm x}^{(n)}) $.
    Let $\delta \in (0,1)$, and define $\beta_t $ by~\ref{eq:definition_beta_t}.
    Then, the following inequality  holds with probability at least $1-\delta$:
    \begin{align}
         & |{z}^{(n)}_{{\bm \epsilon},m}({\bm x}^{(1)}, \ldots, {\bm x}^{(n)} )  -\tilde{z}^{(n)}_{{\bm \epsilon},m,t} ({\bm x}^{(1)}, \ldots, {\bm x}^{(n)} )    |  \leq
        {\beta}^{1/2}_t  \tilde{\sigma}^{(n)}_{{\bm \epsilon},m,t} ({\bm x}^{(1)}, \ldots, {\bm x}^{(n)} ),                                                               \\
         & \qquad \forall n \in [N], m \in [M^{(n)}], t \geq 1, {\bm\epsilon}, ({\bm x}^{(1)},\ldots,{\bm x}^{(n)})
        , \label{eq:n_s_tilde}
    \end{align}
    where ${z}^{(n)}_{{\bm \epsilon},m} $ and  $\tilde{z}^{(n)}_{{\bm \epsilon},m,t} $ are  the $m$-th element of
    ${\bm z}^{(n)}_{{\bm \epsilon}} $ and  $\tilde{\bm z}^{(n)}_{{\bm \epsilon},t} $, respectively.
\end{theorem}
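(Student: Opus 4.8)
The plan is to prove \cref{theorem:Nstage3} by induction on the stage index $n$, mirroring the argument used for the noiseless bound in \cref{theorem:Nstage}, but carried out on the high-probability event guaranteed by \cref{cor:ci2}. First I would fix a realization ${\bm \epsilon}$, an input $({\bm x}^{(1)},\ldots,{\bm x}^{(n)})$, an iteration $t$, and an element $m$, and condition on the event (of probability at least $1-\delta$) on which $|f^{(n)}_m({\bm w},{\bm x})-\mu^{(n)}_{m,t}({\bm w},{\bm x})|\le\beta_t^{1/2}\sigma^{(n)}_{m,t}({\bm w},{\bm x})$ holds for all $n$, $m$, $t$ and all ${\bm w}\in\tilde{\mathcal Y}^{(n-1)}$. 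The crucial observation is that, because the same noise vector ${\bm \epsilon}^{(n)}$ is added to both ${\bm z}^{(n)}_{\bm\epsilon}$ and $\tilde{\bm z}^{(n)}_{{\bm\epsilon},t}$ at every stage, the noise cancels when we form their difference; hence the whole chain of estimates collapses to exactly the deterministic recursion appearing in the proof of \cref{theorem:Nstage}, only with $\beta$ replaced by $\beta_t$ and with $\mathcal Y^{(n-1)}$ enlarged to $\tilde{\mathcal Y}^{(n-1)}$.

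For the base case $n=1$ the two definitions give $z^{(1)}_{{\bm\epsilon},m}-\tilde z^{(1)}_{{\bm\epsilon},m,t}=f^{(1)}_m({\bm 0},{\bm x}^{(1)})-\mu^{(1)}_{m,t}({\bm 0},{\bm x}^{(1)})$, so \cref{cor:ci2} immediately yields the bound with $\tilde\sigma^{(1)}_{{\bm\epsilon},m,t}=\sigma^{(1)}_{m,t}({\bm 0},{\bm x}^{(1)})$. For the inductive step, after the noise cancels I would write $z^{(n)}_{{\bm\epsilon},m}-\tilde z^{(n)}_{{\bm\epsilon},m,t}=f^{(n)}_m({\bm z}^{(n-1)}_{\bm\epsilon},{\bm x}^{(n)})-\mu^{(n)}_{m,t}(\tilde{\bm z}^{(n-1)}_{{\bm\epsilon},t},{\bm x}^{(n)})$, add and subtract $f^{(n)}_m(\tilde{\bm z}^{(n-1)}_{{\bm\epsilon},t},{\bm x}^{(n)})$, and apply the triangle inequality. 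The first piece is controlled by the $L_f$-Lipschitz assumption (\cref{assumption:L1}), giving $L_f\sum_{s=1}^{M^{(n-1)}}|z^{(n-1)}_{{\bm\epsilon},s}-\tilde z^{(n-1)}_{{\bm\epsilon},s,t}|$, to which the induction hypothesis applies; the second piece is bounded by \cref{cor:ci2} evaluated at $\tilde{\bm z}^{(n-1)}_{{\bm\epsilon},t}$, giving $\beta_t^{1/2}\sigma^{(n)}_{m,t}(\tilde{\bm z}^{(n-1)}_{{\bm\epsilon},t},{\bm x}^{(n)})$. Summing the two contributions reproduces verbatim the recursive definition of $\tilde\sigma^{(n)}_{{\bm\epsilon},m,t}$, closing the induction.

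The main obstacle is not the algebra, which transcribes \cref{theorem:Nstage} almost word for word, but justifying the two ingredients that make each inductive step legitimate. First, \cref{cor:ci2} must be invoked at the \emph{predicted} argument $\tilde{\bm z}^{(n-1)}_{{\bm\epsilon},t}$ rather than at the true output ${\bm z}^{(n-1)}_{\bm\epsilon}$; this is permissible only because \cref{cor:ci2} furnishes the confidence bound uniformly over the enlarged domain $\tilde{\mathcal Y}^{(n-1)}$, and because the theorem explicitly assumes $\tilde{\bm z}^{(n-1)}_{{\bm\epsilon},t}\in\tilde{\mathcal Y}^{(n-1)}$, placing the evaluation point inside that domain. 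Second, the single event on which every pointwise bound holds must be common to all stages, elements, and iterations; this is exactly what the union bound embedded in the definition of $\beta_t$ (the factor $M_{\text{sum}}/\delta$ inside the log-determinant term) secures, so that the deterministic induction may be run on one event of probability at least $1-\delta$, yielding the uniform conclusion~\cref{eq:n_s_tilde}.
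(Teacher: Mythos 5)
Your proposal is correct and follows essentially the same route as the paper, which proves \cref{theorem:Nstage3} by simply transcribing the inductive argument of \cref{theorem:Nstage} with $\beta$ replaced by $\beta_t$ and $\mathcal{Y}^{(n-1)}$ by $\tilde{\mathcal{Y}}^{(n-1)}$. Your explicit observations that the noise terms ${\bm\epsilon}^{(n)}$ cancel in the difference ${z}^{(n)}_{{\bm\epsilon},m}-\tilde{z}^{(n)}_{{\bm\epsilon},m,t}$, and that the uniform bound of \cref{cor:ci2} over $\tilde{\mathcal{Y}}^{(n-1)}$ together with the assumption $\tilde{\bm z}^{(n-1)}_{{\bm\epsilon},t}\in\tilde{\mathcal{Y}}^{(n-1)}$ legitimize evaluating the confidence bound at the predicted argument, are exactly the points the paper leaves implicit.
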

\begin{proof}
    By using the same argument as in the proof of~\cref{theorem:Nstage}, we get~\cref{theorem:Nstage3}.
\end{proof}
From~\cref{theorem:Nstage3}, taking expectation with respect to ${\bm \epsilon}$, we get the following corollary.
\begin{corollary}\label{cor:Nstage3}
    Assume that the same condition as in~\cref{theorem:Nstage3} holds.
    Let $\delta \in (0,1)$, and define $\beta_t $ by~\cref{eq:definition_beta_t}.
    Then, with probability at least
    $1-\delta$, the following inequality holds for any $n \in [N]$, $m \in [M^{(n)}] $, iteration $t \geq 1$ and input ${\bm x}^{(1)},\ldots,{\bm x}^{(n)}$:
    \begin{align}
         & \mathbb{E}_{\bm \epsilon}  [\tilde{z}^{(n)}_{{\bm \epsilon},m,t} ({\bm x}^{(1)}, \ldots, {\bm x}^{(n)} )   -
        \beta^{1/2}_t  \tilde{\sigma}^{(n)}_{{\bm \epsilon},m,t} ({\bm x}^{(1)}, \ldots, {\bm x}^{(n)} )]                    \\
         & \leq
        \mathbb{E}_{\bm \epsilon} [{z}^{(n)}_{{\bm \epsilon},m}({\bm x}^{(1)}, \ldots, {\bm x}^{(n)} )]                      \\
         & \leq \mathbb{E}_{\bm \epsilon} [\tilde{z}^{(n)}_{{\bm \epsilon},m,t} ({\bm x}^{(1)}, \ldots, {\bm x}^{(n)} )    +
        \beta^{1/2}_t  \tilde{\sigma}^{(n)}_{{\bm \epsilon},m,t} ({\bm x}^{(1)}, \ldots, {\bm x}^{(n)} )  ].
    \end{align}
    In particular, when $n=N$ and  $m=1$, it follows that
    \begin{align}
         & \mathbb{E}_{\bm \epsilon} [\tilde{z}^{(N)}_{{\bm \epsilon},1,t} ({\bm x}^{(1)}, \ldots, {\bm x}^{(N)} )   -
        \beta^{1/2}_t  \tilde{\sigma}^{(N)}_{{\bm \epsilon},1,t} ({\bm x}^{(1)}, \ldots, {\bm x}^{(N)} )]                \\
         & \leq
        G ({\bm x}^{(1)}, \ldots, {\bm x}^{(N)} )                                                                        \\
         & \leq \mathbb{E}_{\bm \epsilon} [\tilde{z}^{(N)}_{{\bm \epsilon},1,t} ({\bm x}^{(1)}, \ldots, {\bm x}^{(N)} )+
        \beta^{1/2}_t  \tilde{\sigma}^{(N)}_{{\bm \epsilon},1,t} ({\bm x}^{(1)}, \ldots, {\bm x}^{(N)} )  ].
    \end{align}
\end{corollary}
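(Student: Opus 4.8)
The plan is to obtain \cref{cor:Nstage3} directly from the pointwise two-sided bound already proved in \cref{theorem:Nstage3}, simply by integrating over the fresh noise vector $\bm\epsilon$. The crucial structural observation is that the high-probability event on which \cref{theorem:Nstage3} holds is governed entirely by the \emph{training} observations---through the posterior quantities $\mu^{(n)}_{m,t}$, $\sigma^{(n)}_{m,t}$ and the data-dependent constant $\beta_t$ of \cref{cor:ci2}---and is therefore independent of the auxiliary noise $\bm\epsilon$ that is being averaged over. Indeed, \cref{theorem:Nstage3} asserts that on a single event $E$ with probability at least $1-\delta$ the inequality $|z^{(n)}_{{\bm\epsilon},m}-\tilde z^{(n)}_{{\bm\epsilon},m,t}|\le\beta^{1/2}_t\tilde\sigma^{(n)}_{{\bm\epsilon},m,t}$ holds \emph{simultaneously for all} $n$, $m$, $t$, all inputs, and all realizations $\bm\epsilon$. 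This uniformity over $\bm\epsilon$ is exactly what permits averaging out $\bm\epsilon$ while remaining on $E$, with no additional union bound and no worsening of $\delta$.

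First I would fix the event $E$ supplied by \cref{theorem:Nstage3} and argue on it. On $E$, for every fixed $\bm\epsilon$ the two-sided bound rearranges into
\[
\tilde z^{(n)}_{{\bm\epsilon},m,t} - \beta^{1/2}_t\tilde\sigma^{(n)}_{{\bm\epsilon},m,t}
\;\le\; z^{(n)}_{{\bm\epsilon},m}
\;\le\; \tilde z^{(n)}_{{\bm\epsilon},m,t} + \beta^{1/2}_t\tilde\sigma^{(n)}_{{\bm\epsilon},m,t}.
\]
Since these inequalities hold for every $\bm\epsilon$ on $E$, and since $\bm\epsilon$ is bounded by $A$ while the functions $f^{(n)}_m$, $\mu^{(n)}_{m,t}$ and $\sigma^{(n)}_{m,t}$ are continuous on compact domains (so each of the three expressions is a bounded, measurable, hence integrable function of $\bm\epsilon$, with $\beta_t$ treated as an $\bm\epsilon$-independent constant), I would apply monotonicity of the expectation $\mathbb{E}_{\bm\epsilon}$ termwise to obtain
\[
\mathbb{E}_{\bm\epsilon}\!\bigl[\tilde z^{(n)}_{{\bm\epsilon},m,t}-\beta^{1/2}_t\tilde\sigma^{(n)}_{{\bm\epsilon},m,t}\bigr]
\;\le\; \mathbb{E}_{\bm\epsilon}\!\bigl[z^{(n)}_{{\bm\epsilon},m}\bigr]
\;\le\; \mathbb{E}_{\bm\epsilon}\!\bigl[\tilde z^{(n)}_{{\bm\epsilon},m,t}+\beta^{1/2}_t\tilde\sigma^{(n)}_{{\bm\epsilon},m,t}\bigr],
\]
which is precisely the general claim, valid on $E$ and hence with probability at least $1-\delta$.

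Finally, for the special case $n=N$, $m=1$, I would invoke the definition $G({\bm x}^{(1)},\ldots,{\bm x}^{(N)})=\mathbb{E}_{\bm\epsilon}[{\bm z}^{(N)}_{\bm\epsilon}({\bm x}^{(1)},\ldots,{\bm x}^{(N)})]$ together with $M^{(N)}=1$, so that the scalar $z^{(N)}_{{\bm\epsilon},1}$ coincides with ${\bm z}^{(N)}_{\bm\epsilon}$ and therefore $\mathbb{E}_{\bm\epsilon}[z^{(N)}_{{\bm\epsilon},1}]=G$. Substituting this into the middle term of the displayed chain yields the stated credible interval for $G$. The one point requiring genuine care---and the step I would flag as the main obstacle---is the legitimacy of integrating over $\bm\epsilon$ without degrading the $1-\delta$ guarantee; this is sound exactly because \cref{theorem:Nstage3} delivers a bound uniform in $\bm\epsilon$ on the single training-noise event $E$, so the averaging introduces no further probabilistic cost.
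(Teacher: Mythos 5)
Your proposal is correct and matches the paper's argument, which simply states that the corollary follows from \cref{theorem:Nstage3} ``by taking expectation with respect to ${\bm\epsilon}$.'' Your additional care in noting that the $1-\delta$ event is uniform over realizations of ${\bm\epsilon}$ (so that averaging costs nothing probabilistically) and in identifying $\mathbb{E}_{\bm\epsilon}[z^{(N)}_{{\bm\epsilon},1}]=G$ is exactly the right justification for that one-line step.
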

\subsection{UCB-based Optimization Strategy for Expected Regrets}
Here, we give a UCB-based AF and regret bounds for
$R_{G,T}$ and $r^{(\text{S})}_{G,T}$.
We define an
expected cascade process upper confidence bound  (EcUCB) as
\begin{equation}
    \mr{ EcUCB}_t ({\bm x}^{(1)},\ldots,{\bm x}^{(N)} )
    =
    \mathbb{E}_{\bm \epsilon} [\tilde{z}^{(N)}_{{\bm \epsilon},1,t} ({\bm x}^{(1)}, \ldots, {\bm x}^{(N)} )     +
    \beta^{1/2}_t  \tilde{\sigma}^{(N)}_{{\bm \epsilon},1,t} ({\bm x}^{(1)}, \ldots, {\bm x}^{(N)} )  ].
\end{equation}

By using this AF, we select the next evaluation point
$({\bm x}^{(1)}_{t+1}, \ldots , {\bm x}^{(N)}_{t+1} ) $ by
\begin{align}
    ({\bm x}^{(1)}_{t+1}, \ldots , {\bm x}^{(N)}_{t+1} ) = \argmax _{ ({\bm x}^{(1)}, \ldots , {\bm x}^{(N)} )   \in \mathcal{X} } \mr{ EcUCB}_t ({\bm x}^{(1)},\ldots, {\bm x}^{(N) } ) .
    \label{eq:decision_EcUCB}
\end{align}
Moreover, let
$\tilde{A} = \{ {\bm \tilde{a}}_1,\ldots, {\bm \tilde{a}}_T \}$ be a subset of $\tilde{\mathcal{Y}}^{(n-1)} \times \mathcal{X}^{(n)} $,
and let
${\bm y}^{(n)}_{m,\tilde{A}}$ be a random vector, where the $i$-th element of ${\bm y}^{(n)}_{m,\tilde{A}}$ is given by  $y^{(n)}_{m,{\bm \tilde{a}}_i } =f^{(n)}_m ({\bm \tilde{a}}_i ) + \varepsilon^{(n)}_{{\bm \tilde{a}}_i } $. Then, the
maximum information gain $\tilde{\gamma}^{(n)}_{m,T} $ at iteration
$T$ is given by
\begin{align}
    \tilde{\gamma}^{(n)}_{m,T} = \max_{  \tilde{A} \subset \tilde{\mathcal{Y}}^{(n-1)} \times \mathcal{X}^{(n)}, |\tilde{A}| =T }  \mr{I} ({\bm y}^{(n)}_{m,\tilde{A}} ; f^{(n)}_m ).  \label{eq:tilde_MIG}
\end{align}
Furthermore, we define $\tilde{\gamma}_T = \max_{ 1 \leq n \leq N, 1 \leq m \leq M^{(n)}} \tilde{\gamma}^{(n)}_{m,T}$.
Then, the following theorem gives regret bounds for $R_{G,T}$ and $r^{(\text{S})}_{G,T}$.
\begin{theorem}\label{theorem:EX_CandS}
    Assume that~\cref{assumption:regu2,assumption:L1,assumption:L2} hold.
    Also assume that  $\tilde{\bm {z}}^{(n)}_{{\bm\epsilon},t} ({\bm x}^{(1)},\ldots,{\bm x}^{(n)}) \in \tilde{\mathcal{Y}}^{(n)}$
    for any $n \in [N]$, iteration $t \geq 1$, realization ${\bm \epsilon}$ and input $ ({\bm x}^{(1)},\ldots,{\bm x}^{(n)}) $.
    Let $\delta \in (0,1)$, and define $\beta_t $ by~\cref{eq:definition_beta_t}.
    Then, when the optimization is performed using~\ref{eq:decision_EcUCB},
    the following holds:
    \begin{align}
        \mathbb{P} & \left (  R_{G,T} \leq\sqrt{32  M^2_{\text{prod}} M^2_{\text{sum}}  T   {C}^{2N}_{0,T}  \left (  \log (5 M_{\text{sum}} /\delta) +
        \frac{  \tilde{\gamma}_T  }{   2 \log (1+\sigma^{-2} )   } \right)}   \quad \forall T \geq 1\right ) \geq 1- 2\delta,                                                  \\
        \mathbb{P} & \left (  r^{(\text{S})}_{G,T} \leq T^{-1/2} \sqrt{  32  M^2_{\text{prod}} M^2_{\text{sum}}     {C}^{2N}_{0,T}  \left (  \log (5 M_{\text{sum}} /\delta) +
            \frac{  \tilde{\gamma}_T  }{   2 \log (1+\sigma^{-2} )   } \right) }      \quad \forall T \geq 1 \right ) \geq 1- 2\delta,
    \end{align}
    where
    $C_{0,t} = (1+L_\sigma) \beta^{1/2}_t + L_f +1 $.
\end{theorem}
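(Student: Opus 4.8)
The plan is to adapt the noiseless cUCB analysis to the expected objective $G$, the one genuinely new ingredient being a martingale concentration that converts expectations over a \emph{fresh} noise draw into the realized posterior variances at the points actually observed. First I would establish a per-step bound. Since the selection rule \cref{eq:decision_EcUCB} picks $(\bx^{(1)}_{t+1},\ldots,\bx^{(N)}_{t+1})$ to maximize $\mr{EcUCB}_t$, the upper half of \cref{cor:Nstage3} gives, with probability at least $1-\delta$ uniformly in $t$,
\begin{equation}
  G(\bx^{(1)}_{G,\ast},\ldots) \le \mr{EcUCB}_t(\bx^{(1)}_{G,\ast},\ldots) \le \mr{EcUCB}_t(\bx^{(1)}_{t+1},\ldots),
\end{equation}
while the lower half of \cref{cor:Nstage3} bounds $G(\bx^{(1)}_{t+1},\ldots)$ below by the corresponding expected LCB. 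Subtracting yields $r_{G,t}\le 2\beta_t^{1/2}\,\E_{\bm\epsilon}[\tilde\sigma^{(N)}_{\bm\epsilon,1,t}(\bx_{t+1})]$, and squaring with Jensen's inequality on the convex square gives $r_{G,t}^2\le 4\beta_t\,\E_{\bm\epsilon}[\tilde\sigma^{(N)2}_{\bm\epsilon,1,t}(\bx_{t+1})]$.

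Next I would unroll the propagated uncertainty exactly as in the derivation of \cref{eq:Ctilde,eq:C_S}, but with the noisy quantities. Adding and subtracting $\sigma^{(n)}_{m,t}$ evaluated at the true realized state $\bz^{(n-1)}_{\bm\epsilon}$, bounding the difference through \cref{assumption:L2}, and controlling $\|\tilde{\bz}^{(n-1)}_{\bm\epsilon,t}-\bz^{(n-1)}_{\bm\epsilon}\|_1$ by the credible interval of \cref{theorem:Nstage3}, produces the recursion $\tilde\sigma^{(n)}_{\bm\epsilon,m,t}\le \sigma^{(n)}_{m,t}(\bz^{(n-1)}_{\bm\epsilon},\bx^{(n)})+C_{0,t}\sum_s\tilde\sigma^{(n-1)}_{\bm\epsilon,s,t}$, where the extra $\beta_t^{1/2}$ in $C_{0,t}=(1+L_\sigma)\beta_t^{1/2}+L_f+1$ is precisely the price of this CI substitution. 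Unrolling to stage $N$ and applying Cauchy--Schwarz as in \cref{eq:C_S} yields $\tilde\sigma^{(N)2}_{\bm\epsilon,1,t}\le C_{0,t}^{2(N-1)}M^2_{\text{prod}} M_{\text{sum}}\sum_{n,m}\sigma^{(n)2}_{m,t}(\bz^{(n-1)}_{\bm\epsilon},\bx^{(n)})$. Since $C_{0,t}\ge\beta_t^{1/2}$, the factor $\beta_t$ from the per-step bound is absorbed, upgrading $C_{0,t}^{2(N-1)}$ to $C_{0,t}^{2N}$ and accounting for the exponent in the statement.

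The crux is the final summation. The integration variable $\bm\epsilon$ is independent of, and identically distributed to, the noise actually injected into the cascade at each iteration; because \cref{eq:decision_EcUCB} fixes the inputs from the history \emph{before} the noise is drawn, $\E_{\bm\epsilon}[\sigma^{(n)2}_{m,t}(\bz^{(n-1)}_{\bm\epsilon},\bx^{(n)}_{t+1})]$ is exactly the conditional expectation, given the history $\mathcal F_t$, of the posterior variance at the point genuinely observed in iteration $t+1$. Hence the gaps between these two quantities form a bounded martingale difference sequence (each realized sum over $(n,m)$ lies in $[0,M_{\text{sum}}]$ by $k^{(n)}\le 1$), and a Freedman-type inequality transfers $\sum_t\E_{\bm\epsilon}[\cdot]$ to the realized sum up to an additive $\log(5 M_{\text{sum}}/\delta)$ term; this is where the second failure probability, and thus the $1-2\delta$ guarantee, enters. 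The realized sum is then controlled by the maximum information gain over the enlarged domain $\tilde{\mathcal Y}$ via the analog of Lemmas~5.3--5.4 of \cite{srinivas2010gaussian}, giving $\sum_t\sigma^{(n)2}_{m,t}\le 2\tilde\gamma_T/\log(1+\sigma^{-2})$, with the summation over the $M_{\text{sum}}$ stage-outputs supplying the second $M_{\text{sum}}$ factor.

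Finally, I would assemble everything through $R_{G,T}^2\le T\sum_t r_{G,t}^2$ together with the monotonicity of $\beta_t$ and $C_{0,t}$ (so that $\beta_T$ and $C_{0,T}$ can be pulled outside the sum), which delivers the cumulative bound; the simple-regret bound then follows from $T\,r^{(\text S)}_{G,T}\le R_{G,T}$. I expect the martingale transfer in the previous paragraph to be the main obstacle, since it is the only step with no counterpart in the noiseless cUCB proof and it is exactly what forces the extra concentration term $\log(5 M_{\text{sum}}/\delta)$ and the doubling of $\delta$ in the statement.
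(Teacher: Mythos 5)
Your proposal follows essentially the same route as the paper's proof: the per-step regret bound via Corollary~\ref{cor:Nstage3} and the selection rule, the recursion $\tilde\sigma^{(n)}_{\bm\epsilon,m,t}\le \sigma^{(n)}_{m,t}(\bm z^{(n-1)}_{\bm\epsilon},\cdot)+C_{0,t}\sum_s\tilde\sigma^{(n-1)}_{\bm\epsilon,s,t}$ unrolled with Cauchy--Schwarz, and the transfer from $\sum_t\mathbb{E}_{\bm\epsilon}[S_{\bm\epsilon,t}]$ to the realized sum via a martingale concentration bound (the paper invokes Lemma~3 of \cite{kirschner2018information}, which is exactly the Freedman-type step you describe and is the source of both the $\log(5M_{\text{sum}}/\delta)$ term and the second $\delta$). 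The constants, the absorption of $\beta_T$ into $C^{2N}_{0,T}$, and the final assembly all match the paper's argument.
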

\begin{proof}
    From~\cref{theorem:Nstage3} and the definition of  $\tilde{\sigma}^{(n)}_{{\bm \epsilon},m,t} (\cdot) $, noting that ${\sigma}^{(n)}_{m,t} (\cdot)$ is  Lipschitz continuity, the following inequality holds with probability at least $1-\delta$:
    \begin{align}
         & \tilde{\sigma}^{(n)}_{{\bm \epsilon},m,t} ({\bm x}^{(1)},\ldots, {\bm x}^{(n)} )                                                                                                                                                                             \\
         & = \sigma^{(n)} _{m,t} ({\bm z}^{(n-1)}_{\bm \epsilon} ({\bm x}^{(1)},\ldots , {\bm x}^{(n-1)} ),{\bm x}^{(n)} )   +L_f \sum_{s=1}^{M^{(n-1)} }  \tilde{\sigma}^{(n-1)}_{{\bm \epsilon},s,t}  ({\bm x}^{(1)},\ldots,{\bm x}^{(n-1) } )                        \\
         & \quad  + \sigma^{(n)} _{m,t} (\tilde{\bm z}^{(n-1)}_{{\bm \epsilon},t} ({\bm x}^{(1)},\ldots , {\bm x}^{(n-1)} ),{\bm x}^{(n)} )    -  \sigma^{(n)} _{m,t} ({\bm z}^{(n-1)}_{\bm \epsilon} ({\bm x}^{(1)},\ldots , {\bm x}^{(n-1)} ),{\bm x}^{(n)} )         \\
         & \leq
        \sigma^{(n)} _{m,t} ({\bm z}^{(n-1)}_{\bm \epsilon} ({\bm x}^{(1)},\ldots , {\bm x}^{(n-1)} ),{\bm x}^{(n)} )        +L_f \sum_{s=1}^{M^{(n-1)} }  \tilde{\sigma}^{(n-1)}_{{\bm \epsilon},s,t}  ({\bm x}^{(1)},\ldots,{\bm x}^{(n-1) } )                        \\
         & \quad  +  |\sigma^{(n)} _{m,t} ({\bm z}^{(n-1)}_{\bm \epsilon} ({\bm x}^{(1)},\ldots , {\bm x}^{(n-1)} ),{\bm x}^{(n)} )     - \sigma^{(n)} _{m,t} (\tilde{\bm z}^{(n-1)}_{{\bm \epsilon},t} ({\bm x}^{(1)},\ldots , {\bm x}^{(n-1)} ),{\bm x}^{(n)} )|      \\
         & \leq
        \sigma^{(n)} _{m,t} ({\bm z}^{(n-1)}_{\bm \epsilon} ({\bm x}^{(1)},\ldots , {\bm x}^{(n-1)} ),{\bm x}^{(n)} )           +L_f \sum_{s=1}^{M^{(n-1)} }  \tilde{\sigma}^{(n-1)}_{{\bm \epsilon},s,t}  ({\bm x}^{(1)},\ldots,{\bm x}^{(n-1) } )                     \\
         & \quad + L_\sigma \sum_{s=1} ^{M^{(n-1)}}   |   { z}_{{\bm\epsilon},s}^{(n-1)} ({\bm x}^{(1)},\ldots , {\bm x}^{(n-1)} ) -     \tilde{ z}_{{\bm\epsilon},s,t}^{(n-1)} ({\bm x}^{(1)},\ldots , {\bm x}^{(n-1)} )      |                                        \\
         & \leq
        \sigma^{(n)}_{m,t} ({\bm z}^{(n-1)}_{\bm \epsilon} ({\bm x}^{(1)},\ldots , {\bm x}^{(n-1)} ),{\bm x}^{(n)} )                  +L_f \sum_{s=1}^{M^{(n-1)} }  \tilde{\sigma}^{(n-1)}_{ {\bm\epsilon}, s,t}  ({\bm x}^{(1)},\ldots,{\bm x}^{(n-1) } )              \\
         & \quad + L_\sigma \beta^{1/2}_t \sum_{s=1} ^{M^{(n-1)}}    \tilde{\sigma}^{(n-1)}_{ {\bm\epsilon}, s,t}  ({\bm x}^{(1)},\ldots,{\bm x}^{(n-1) } )                                                                                                             \\
         & =
        \sigma^{(n)} _{m,t}({\bm z}^{(n-1)}_{\bm \epsilon} ({\bm x}^{(1)},\ldots , {\bm x}^{(n-1)} ),{\bm x}^{(n)} )   + (L_\sigma \beta^{1/2}_t +L_f ) \sum_{s=1} ^{M^{(n-1)}}    \tilde{\sigma}^{(n-1)}_{{\bm\epsilon},s,t}  ({\bm x}^{(1)},\ldots,{\bm x}^{(n-1) } ) \\
         & \leq
        \sigma^{(n)} _{m,t} ({\bm z}^{(n-1)}_{\bm \epsilon} ({\bm x}^{(1)},\ldots , {\bm x}^{(n-1)} ),{\bm x}^{(n)} )  + {C}_{0,t} \sum_{s=1} ^{M^{(n-1)}}    \tilde{\sigma}^{(n-1)}_{{\bm\epsilon},s,t}  ({\bm x}^{(1)},\ldots,{\bm x}^{(n-1) } ). \label{eq:ECtilde}
    \end{align}
    Therefore, by repeating~\cref{eq:ECtilde} we get
    \begin{align}
         & \tilde{\sigma}^{(N)}_{  {\bm\epsilon},  1,t} ({\bm x}^{(1)},\ldots, {\bm x}^{(N)} )                                                                                                                                                                                         \\
         & \leq\sigma^{(N)} _{1,t} ({\bm z}^{(N-1)}_{\bm \epsilon} ({\bm x}^{(1)},\ldots , {\bm x}^{(N-1)} ),{\bm x}^{(N)} )   + {C}_{0,t} \sum_{s=1} ^{M^{(N-1)}}    \tilde{\sigma}^{(N-1)}_{{\bm\epsilon},s,t}  ({\bm x}^{(1)},\ldots,{\bm x}^{(N-1) } )                             \\
         & \leq\sigma^{(N)} _{1,t} ( {\bm z}^{(N-1)}_{\bm \epsilon} ({\bm x}^{(1)},\ldots , {\bm x}^{(N-1)} ),{\bm x}^{(N)} )  + {C}_{0,t} \sum_{s=1} ^{M^{(N-1)}}   \sigma^{(N-1)} _{s,t} ({\bm z}^{(N-2)}_{\bm \epsilon} ({\bm x}^{(1)},\ldots , {\bm x}^{(N-2)} ),{\bm x}^{(N-1)} ) \\
         & \qquad  + {C}^2_{0,t}  M^{(N-1)}  \sum_{u=1} ^{M^{(N-2)}}    \tilde{\sigma}^{(N-2)}_{{\bm\epsilon},u,t}  ({\bm x}^{(1)},\ldots,{\bm x}^{(N-2) } )                                                                                                                           \\
         & \vdots                                                                                                                                                                                                                                                                      \\
         & \leq  {C}^{N-1}_{0,t} M_{\text{prod}}\sum _{n=1}^N  \sum_{m=1} ^ {M^{(n)} }  \sigma^{(n)} _{m,t} ({\bm z}^{(n-1)}_{\bm \epsilon} ({\bm x}^{(1)},\ldots , {\bm x}^{(n-1)} ),{\bm x}^{(n)} ) ,
    \end{align}
    where ${\bm z}^{(0)}_{\bm \epsilon} ({\bm x}^{(1)}, {\bm x}^{(0)} )={\bm 0}$.
    Hence, from the Cauchy--Schwarz inequality, it follows that
    \begin{align}
         & \tilde{\sigma}^{(N)2}_{{\bm\epsilon},1,t} ({\bm x}^{(1)},\ldots, {\bm x}^{(N)} ) \\
         & \leq
        {C}^{2(N-1)}_{0,t} M^2_{\text{prod}}     \left ( \sum _{n=1}^N  \sum_{m=1} ^ {M^{(n)} }  1   \right )      \sum _{n=1}^N  \sum_{m=1} ^ {M^{(n)} } \Bigl[ \sigma^{(n)2} _{m,t} ({\bm z}^{(n-1)}_{\bm \epsilon} ({\bm x}^{(1)},\ldots , {\bm x}^{(n-1)} ),{\bm x}^{(n)} ) \Bigr]
        \\
         & =
        {C}^{2(N-1)}_{0,t} M^2_{\text{prod}}     M_{\text{sum}}      \sum _{n=1}^N  \sum_{m=1} ^ {M^{(n)} }  \Bigl[     \sigma^{(n)2} _{m,t} ({\bm z}^{(n-1)}_{\bm \epsilon} ({\bm x}^{(1)},\ldots , {\bm x}^{(n-1)} ),{\bm x}^{(n)} )\Bigr].
        \label{eq:EC_S}
    \end{align}

    Next, from~\cref{cor:Nstage3} and the selection rule~\cref{eq:decision_EcUCB}, the following holds:
    \begin{align}
        G({\bm x}^{(1)}_{G,\ast},\ldots,{\bm x}^{(N)}_{G,\ast} ) & \leq \mr{ EcUCB}_t ({\bm x}^{(1)}_{G,\ast},\ldots,{\bm x}^{(N)}_{G,\ast} )                                                                                                                                                         \\
                                                                 & \leq
        \mr{ EcUCB}_t ({\bm x}^{(1)}_{t+1},\ldots,{\bm x}^{(N)}_{t+1} )                                                                                                                                                                                                                               \\
                                                                 & = \mathbb{E}_{\bm\epsilon}[ \tilde{z}^{(N)}_{{\bm\epsilon},1,t} ({\bm x}^{(1)}_{t+1},\ldots,{\bm x}^{(N)}_{t+1} )   + \beta^{1/2}_t \tilde{\sigma}^{(N)}_{{\bm\epsilon},1,t} ({\bm x}^{(1)}_{t+1},\ldots,{\bm x}^{(N)}_{t+1} ) ] .
    \end{align}
    Similarly, since $G ({\bm x}^{(1)}_{t+1},\ldots,{\bm x}^{(N)}_{t+1} ) $ satisfies that
    \begin{equation}
        G ({\bm x}^{(1)}_{t+1},\ldots,{\bm x}^{(N)}_{t+1} ) \geq \mathbb{E}_{\bm\epsilon}  [ \tilde{z}^{(N)}_{{\bm\epsilon},1,t} ({\bm x}^{(1)}_{t+1},\ldots,{\bm x}^{(N)}_{t+1} )   \beta^{1/2}_t \tilde{\sigma}^{(N)}_{{\bm\epsilon},1,t} ({\bm x}^{(1)}_{t+1},\ldots,{\bm x}^{(N)}_{t+1} ) ]  ,
    \end{equation}
    the regret $r_{G,t} $ can be bounded as follows:
    \begin{align}
        r_{G,t} & = G({\bm x}^{(1)}_{G,\ast},\ldots,{\bm x}^{(N)}_{G,\ast} )  -G ({\bm x}^{(1)}_{t+1},\ldots,{\bm x}^{(N)}_{t+1} ) \\
                & \leq
        2\beta^{1/2}_t \mathbb{E}_{\bm\epsilon}[    \tilde{\sigma}^{(N)}_{{\bm\epsilon},1,t} ({\bm x}^{(1)}_{t+1},\ldots,{\bm x}^{(N)}_{t+1} ) ]
        .
        \label{eq:E_regret_bound}
    \end{align}
    Therefore, by using~\cref{eq:E_regret_bound},  $R^2_{G,T} $ can be written as
    \begin{align}
        R^2_{G,T} & = \left ( \sum _{t=1}^T r_{G,t} \right )^2                                                                                                                  \\
                  & \leq T \sum _{t=1} ^T r^2_{G,t}                                                                                                                             \\
                  & \leq T \sum_{t=1}^T 4 \beta_t (\mathbb{E}_{\bm\epsilon}[    \tilde{\sigma}^{(N)}_{{\bm\epsilon},1,t} ({\bm x}^{(1)}_{t+1},\ldots,{\bm x}^{(N)}_{t+1} ) ])^2 \\
                  & \leq
        T \sum_{t=1}^T 4 \beta_t \mathbb{E}_{\bm\epsilon}[    \tilde{\sigma}^{(N)2}_{{\bm\epsilon},1,t} ({\bm x}^{(1)}_{t+1},\ldots,{\bm x}^{(N)}_{t+1} ) ],
        \label{eq:E_CR}
    \end{align}
    where the first inequality is given by the Cauchy--Schwarz inequality, and the last inequality is given by Jensen's inequality.
    Thus, by substituting~\cref{eq:EC_S} into~\cref{eq:E_CR}, we obtain
    \begin{equation}
        R^2_{G,T} \leq 4T \beta_T  \tilde{C}^{2(N-1)}_T M^2_{\text{prod}} M_{\text{sum}}  \sum_{t=1}^T
        \mathbb{E}_{\bm\epsilon} [ S_{ {\bm\epsilon},t}], \label{eq:G_2_bound}
    \end{equation}
    where $S_{ {\bm\epsilon},t}$ is given by
    \begin{equation}
        S_{ {\bm\epsilon},t} =
        \sum _{n=1}^N  \sum_{m=1} ^ {M^{(n)} }  \sigma^{(n)2} _{m,t} ({\bm z}^{(n-1)}_{\bm \epsilon} ({\bm x}^{(1)}_{t+1},\ldots , {\bm x}^{(n-1)}_{t+1} ),{\bm x}^{(n)} _{t+1}).
    \end{equation}
    Here,
    since $k^{(n)} (\cdot, \cdot ) \leq 1$, the random variable
    $S_{ {\bm\epsilon},t}$ satisfies
    $0 \leq S_{ {\bm\epsilon},t} \leq M_{\text{sum}}$.
    Hence, from Lemma~3 of~\cite{kirschner2018information}, the following holds with probability at least $1-\delta$:
    \begin{align}
        \sum_{t=1}^T
        \mathbb{E}_{\bm\epsilon} [ S_{ {\bm\epsilon},t}] & \leq 2 \sum_{t=1}^T  S_{ {\bm\epsilon}_{t+1},t}   + 4 M_{\text{sum}} \log (1/\delta )  +
        8 M_{\text{sum}} \log (4 M_{\text{sum}} )+1                                                                                                 \\
                                                         & \leq
        2 \sum_{t=1}^T  S_{ {\bm\epsilon}_{t+1},t} + 8 M_{\text{sum}} \log (1/\delta )   +
        8 M_{\text{sum}} \log (4 M_{\text{sum}} )+ 8 M_{\text{sum}} \log 1.25                                                                       \\
                                                         & =2 \sum_{t=1}^T  S_{ {\bm\epsilon}_{t+1},t} +
        8 M_{\text{sum}} \log (5 M_{\text{sum}} /\delta).
        \label{eq:prob_bound}
    \end{align}
    Therefore, by combining~\cref{eq:G_2_bound,eq:prob_bound}, we have
    \begin{align}
        R^2_{G,T} & \leq 32T \beta_T  {C}^{2(N-1)}_{T,0} M^2_{\text{prod}} M^2_{\text{sum}} \log (5 M_{\text{sum}} /\delta) \\
                  & \qquad +
        8T \beta_T  {C}^{2(N-1)}_{0,T} M^2_{\text{prod}} M_{\text{sum}}  \sum_{t=1}^T
        \sum _{n=1}^N  \sum_{m=1} ^ {M^{(n)} } \Bigl[    \sigma^{(n)2} _{m,t} ({\bm z}^{(n-1)}_{{\bm \epsilon}_{t+1}} ({\bm x}^{(1)}_{t+1},\ldots , {\bm x}^{(n-1)}_{t+1} ),{\bm x}^{(n)} _{t+1})\Bigr]. \label{eq:R2G}
    \end{align}
    Furthermore, by using the same argument as in Lemma~5.3 and~5.4 of~\cite{srinivas2010gaussian}, we get
    \begin{align}
        \sum_{t=1}^T  \sigma^{(n) 2} _{m,t} ({\bm z}^{(n-1)}_{{\bm \epsilon}_{t+1}} ({\bm x}^{(1)}_{t+1},\ldots , {\bm x}^{(n-1)}_{t+1} ),{\bm x}^{(n)} _{t+1}) & \leq \frac{2}{ \log (1+\sigma^{-2} ) } \tilde{\gamma}^{(n)}_{m,T} \\
                                                                                                                                                                & \leq \frac{2}{ \log (1+\sigma^{-2} ) } \tilde{\gamma} _T  .
        \label{eq:E_gamma_bound}
    \end{align}
    Hence, from~\cref{eq:R2G} and~\cref{eq:E_gamma_bound}, noting that $\beta_T \leq {C}^2_{0,T}$ we obtain
    \begin{align}
        R^2_{G,T} & \leq 32T \beta_T  {C}^{2(N-1)}_{0,T} M^2_{\text{prod}} M^2_{\text{sum}} \log (5 M_{\text{sum}} /\delta)    + \frac{16}{   \log (1+\sigma^{-2} )   }  T \beta_T  {C}^{2(N-1)}_{0,T} M^2_{\text{prod}} M^2_{\text{sum}}  \tilde{\gamma}_T \\
                  & = 32T \beta_T  {C}^{2(N-1)}_{0,T} M^2_{\text{prod}} M^2_{\text{sum}} \left (  \log (5 M_{\text{sum}} /\delta)  +   \frac{  \tilde{\gamma}_T  }{   2 \log (1+\sigma^{-2} )   } \right)                                                   \\
                  & \leq    32  T   {C}^{2N}_{0,T} M^2_{\text{prod}} M^2_{\text{sum}} \left (  \log (5 M_{\text{sum}} /\delta) +   \frac{  \tilde{\gamma}_T  }{   2 \log (1+\sigma^{-2} )   } \right).
    \end{align}
    Therefore, with probability at least $1-2\delta$, $R_{G,T}$ can be bounded as follows:
    \begin{equation}
        R_{G,T} \leq   \sqrt{ 32  M^2_{\text{prod}} M^2_{\text{sum}}  T   {C}^{2N}_{0,T} \left (  \log (5 M_{\text{sum}} /\delta) +
        \frac{  \tilde{\gamma}_T  }{   2 \log (1+\sigma^{-2} )   } \right).
        }
    \end{equation}
    Similarly, from the definition of
    $r^{(\text{S})}_{G,T}$, it follows that
    \begin{align}
        T r^{(\text{S})}_{G,T} & \leq \sum_{t=1}^T r_{G,t}  =R_{G,T}                                                                               \\
                               & \leq  \sqrt{ 32  M^2_{\text{prod}} M^2_{\text{sum}}  T   {C}^{2N}_{0,T}\left (  \log (5 M_{\text{sum}} /\delta) +
        \frac{  \tilde{\gamma}_T  }{   2 \log (1+\sigma^{-2} )   } \right)
        }.
    \end{align}
\end{proof}

\subsection{Optimistic Improvement-based AF  for the Expectation of the Final Stage Output}
We give an optimistic improvement-based AF for $G$ under the noisy setting.
Let $s \in \{0,\ldots, N-1 \}$ and ${\bm { y} } \in \tilde{\mathcal{Y}}^{(s)}$.
Then, we define
${\bm z}^{(n)}_{{\bm \epsilon} } (\cdot|{\bm y}) $, $\tilde{\bm z}^{(n)}_{{\bm \epsilon} ,t}  (\cdot|{\bm y})$ and  $\tilde{\sigma}^{(n)}_{ {\bm\epsilon},m,t}  (\cdot|{\bm y})$ as
\begin{align}
    {\bm z}^{(n)}_{{\bm \epsilon}}  ({\bm x}^{(s+1)},\ldots,{\bm x}^{(n)} | {\bm y}) =           & \begin{cases}
                                                                                                       {\bm \epsilon}^{(s+1)}     +
                                                                                                       {\bm f}^{(s+1)} ({\bm {y}},{\bm x}^{(s+1)} )
                                                                                                        & (n=s+1),    \\
                                                                                                       {\bm \epsilon}^{(n)} +
                                                                                                       {\bm f}^{(n)} ({\bm z}^{(n-1)} _{{\bm \epsilon}} ({\bm x}^{(s+1:n-1)} |{\bm {y}}),{\bm x}^{(n)} )
                                                                                                        & (n\ge s+2),
                                                                                                   \end{cases}             \\
    \tilde{\bm z}^{(n)}_{{\bm \epsilon},t}  ({\bm x}^{(s+1)},\ldots,{\bm x}^{(n)} | {\bm y}) =   & \begin{cases}
                                                                                                       {\bm \epsilon}^{(s+1)}     +
                                                                                                       {\bm \mu}^{(s+1)}_t ({\bm {y}},{\bm x}^{(s+1)} )
                                                                                                        & (n=s+1),    \\
                                                                                                       {\bm \epsilon}^{(n)} +
                                                                                                       {\bm \mu}^{(n)}_t (\tilde{\bm z}^{(n-1)} _{{\bm \epsilon},t} ({\bm x}^{(s+1:n-1)} |{\bm {y}}),{\bm x}^{(n)} )
                                                                                                        & (n\ge s+2),
                                                                                                   \end{cases} \\
    \tilde{\sigma}^{(n)}_{{\bm \epsilon},m,t } ( {\bm x}^{(s+1)},\ldots,{\bm x}^{(n)} | {\bm y}) & =
    {\sigma}^{(n)}_{m,t } (\tilde{\bm z}^{(n-1)}_{{\bm \epsilon},t}  ({\bm x}^{(s+1)},\ldots,{\bm x}^{(n-1)} |{\bm y}) ,{\bm x}^{(n)})                                                                          \\
                                                                                                 & \quad   +
    L_f \sum_{s=1}^ {M^{(n-1)}} \tilde{\sigma}^{(n-1)}_{{\bm \epsilon},s,t } ( {\bm x}^{(s+1)},\ldots,{\bm x}^{(n-1)}|{\bm y}),
\end{align}
where
\sloppy$\tilde{\sigma}^{(s)}_{{\bm \epsilon},m,t } ( {\bm x}^{(s+1)},\ldots,{\bm x}^{(s)} | {\bm y})=0$ and
\sloppy${\bm z}^{(s)}_{{\bm \epsilon}}  ({\bm x}^{(s+1)},\ldots,{\bm x}^{(s)} | {\bm y})
    =
    \tilde{\bm z}^{(s)}_{{\bm \epsilon},t}  ({\bm x}^{(s+1)},\ldots,{\bm x}^{(s)} | {\bm y})  ={\bm y}$.
Then, the following theorem holds.
\begin{theorem}\label{theorem:Nstage_given}
    Assume that~\cref{assumption:regu2,assumption:L1} hold.
    Also assume that  $\tilde{\bm {z}}^{(n)}_{{\bm\epsilon},t} ({\bm x}^{(s+1)},\ldots,{\bm x}^{(n)} |{\bm y}) \in \tilde{\mathcal{Y}}^{(n)}$
    for any
    $s \in \{0,\ldots, N-1 \}$, $n \in \{s+1 ,\ldots, N \}$, iteration $t \geq 1$, realization ${\bm \epsilon}$, given $ {\bm y} \in \tilde{\mathcal{Y}}^{(s)} $ and input $ ({\bm x}^{(s+1)},\ldots,{\bm x}^{(n)}) $.
    Let $\delta \in (0,1)$, and define $\beta_t $ by~\cref{eq:definition_beta_t}.
    Then, the following inequality  holds with probability at least $1-\delta$:
    \begin{equation}
        |{z}^{(n)}_{{\bm \epsilon},m}({\bm x}^{(s+1)}, \ldots, {\bm x}^{(n)} |{\bm y})  -\tilde{z}^{(n)}_{{\bm \epsilon},m,t} ({\bm x}^{(s+1)}, \ldots, {\bm x}^{(n)} |{\bm y})    |  \leq
        {\beta}^{1/2}_t  \tilde{\sigma}^{(n)}_{{\bm \epsilon},m,t} ({\bm x}^{(s+1)}, \ldots, {\bm x}^{(n)} |{\bm y})
        , \label{eq:n_s_tilde_given}
    \end{equation}
    where $m \in [M^{(n)}]$, and ${z}^{(n)}_{{\bm \epsilon},m} (\cdot|{\bm y})$ and $\tilde{z}^{(n)}_{{\bm \epsilon},m,t} (\cdot|{\bm y})$ are the $m$-th element of
    ${\bm z}^{(n)}_{{\bm \epsilon}} (\cdot|{\bm y})$  and $\tilde{\bm z}^{(n)}_{{\bm \epsilon},t} (\cdot|{\bm y})$, respectively.
\end{theorem}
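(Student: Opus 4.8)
The plan is to mirror the argument used for~\cref{theorem:Nstage}, proceeding by induction on the stage index $n$ from the base $n=s+1$ up to $N$, while carrying the single high-probability event supplied by~\cref{cor:ci2} through the entire recursion. The crucial structural observation is that, because both ${\bm z}^{(n)}_{{\bm\epsilon}}$ and $\tilde{\bm z}^{(n)}_{{\bm\epsilon},t}$ are defined to contain the \emph{same} additive noise term ${\bm\epsilon}^{(n)}$, that noise cancels exactly in the difference $z^{(n)}_{{\bm\epsilon},m}-\tilde{z}^{(n)}_{{\bm\epsilon},m,t}$, so the estimate reduces to bounding a difference between a true stage function and its posterior mean, exactly as in the noiseless cascade.

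For the base case $n=s+1$, after the cancellation of ${\bm\epsilon}^{(s+1)}$ the quantity to bound is $|f^{(s+1)}_m({\bm y},{\bm x}^{(s+1)})-\mu^{(s+1)}_{m,t}({\bm y},{\bm x}^{(s+1)})|$. Since ${\bm y}\in\tilde{\mathcal{Y}}^{(s)}$, I would invoke~\cref{cor:ci2}, whose bound holds uniformly over ${\bm w}\in\tilde{\mathcal{Y}}^{(s)}$, to control this by $\beta_t^{1/2}\sigma^{(s+1)}_{m,t}({\bm y},{\bm x}^{(s+1)})$, which equals $\beta_t^{1/2}\tilde{\sigma}^{(s+1)}_{{\bm\epsilon},m,t}$ by definition.

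For the inductive step, assuming the bound at stage $n-1$, I would again cancel ${\bm\epsilon}^{(n)}$ and then insert $\pm f^{(n)}_m(\tilde{\bm z}^{(n-1)}_{{\bm\epsilon},t},{\bm x}^{(n)})$ to split the difference into a Lipschitz term and a pointwise confidence-bound term. The first term is controlled by~\cref{assumption:L1}, yielding $L_f\|{\bm z}^{(n-1)}_{{\bm\epsilon}}-\tilde{\bm z}^{(n-1)}_{{\bm\epsilon},t}\|_1=L_f\sum_{s'=1}^{M^{(n-1)}}|z^{(n-1)}_{{\bm\epsilon},s'}-\tilde{z}^{(n-1)}_{{\bm\epsilon},s',t}|$, to which the inductive hypothesis applies termwise. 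The second term is bounded by $\beta_t^{1/2}\sigma^{(n)}_{m,t}(\tilde{\bm z}^{(n-1)}_{{\bm\epsilon},t},{\bm x}^{(n)})$ using~\cref{cor:ci2} evaluated at the propagated input. Summing these contributions and matching them against the recursive definition of $\tilde{\sigma}^{(n)}_{{\bm\epsilon},m,t}$ closes the induction.

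The step I expect to require the most care is not the algebra---which is routine once the noise cancels---but the validity of applying~\cref{cor:ci2} at the \emph{propagated} input $\tilde{\bm z}^{(n-1)}_{{\bm\epsilon},t}$: this is a random, data-dependent point rather than a fixed one, so I must rely on the uniform-over-$\tilde{\mathcal{Y}}^{(n-1)}$ guarantee of~\cref{cor:ci2} together with the standing hypothesis $\tilde{\bm z}^{(n-1)}_{{\bm\epsilon},t}\in\tilde{\mathcal{Y}}^{(n-1)}$. This is precisely why the noisy setting enlarges $\mathcal{Y}^{(n)}$ to $\tilde{\mathcal{Y}}^{(n)}$: the posterior-mean-propagated inputs may fall outside the noiseless output range, and the enlarged domain is exactly the set on which the confidence bound holds uniformly. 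It is also why the entire induction proceeds inside the single $(1-\delta)$ event of~\cref{cor:ci2} rather than accumulating a union bound across stages.
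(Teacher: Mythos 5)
Your proposal is correct and is exactly the paper's intended argument: the paper's own proof of \cref{theorem:Nstage_given} is a one-line appeal to "the same argument as in the proof of \cref{theorem:Nstage}," i.e.\ the telescoping insertion of $\pm f^{(n)}_m(\tilde{\bm z}^{(n-1)}_{{\bm\epsilon},t},{\bm x}^{(n)})$, the Lipschitz assumption (L1), and the uniform confidence bound of \cref{cor:ci2} applied at the propagated input, all inside a single $(1-\delta)$ event. Your observations that the noise terms cancel in the difference and that the enlarged sets $\tilde{\mathcal{Y}}^{(n)}$ are what license evaluating \cref{cor:ci2} at the data-dependent point $\tilde{\bm z}^{(n-1)}_{{\bm\epsilon},t}$ are precisely the adaptations the paper leaves implicit.
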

\begin{proof}
    By using the same argument as in the proof of~\cref{theorem:Nstage}, we have~\cref{theorem:Nstage_given}.
\end{proof}
From~\cref{theorem:Nstage_given}, taking expectation with respect to ${\bm\epsilon}$, we get the following corollary.
\begin{corollary}\label{co:Nstage_given_ex}
    Assume that the same condition as in~\cref{theorem:Nstage_given} holds.
    Let $\delta \in (0,1)$, and define $\beta_t $ by~\cref{eq:definition_beta_t}.
    Then, the following inequality  holds with probability at least $1-\delta$:
    \begin{align}
         &
        \mathbb{E}_{  {\bm\epsilon}|{\bm y}}  [
        \tilde{z}^{(n)}_{{\bm \epsilon},m,t} ({\bm x}^{(s+1)}, \ldots, {\bm x}^{(n)} |{\bm y})         -
        {\beta}^{1/2}_t  \tilde{\sigma}^{(n)}_{{\bm \epsilon},m,t} ({\bm x}^{(s+1)}, \ldots, {\bm x}^{(n)} |{\bm y}) ]                   \\
         & \leq \mathbb{E}_{  {\bm\epsilon} |{\bm y}}  [{z}^{(n)}_{{\bm \epsilon},m}({\bm x}^{(s+1)}, \ldots, {\bm x}^{(n)} |{\bm y})  ] \\
         & \leq
        \mathbb{E}_{  {\bm\epsilon} | {\bm y}}  [
        \tilde{z}^{(n)}_{{\bm \epsilon},m,t} ({\bm x}^{(s+1)}, \ldots, {\bm x}^{(n)} |{\bm y})       +
        {\beta}^{1/2}_t  \tilde{\sigma}^{(n)}_{{\bm \epsilon},m,t} ({\bm x}^{(s+1)}, \ldots, {\bm x}^{(n)} |{\bm y}) ] ,
    \end{align}
    where $\mathbb{E}_{  {\bm\epsilon}|{\bm y}}  [\cdot] $ is the conditional expectation of $(\cdot)$ given ${\bm y}$.
\end{corollary}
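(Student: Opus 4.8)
The plan is to start from the pointwise (in the simulation noise) confidence bound established in \cref{theorem:Nstage_given} and integrate it over $\bm{\epsilon}$ by monotonicity of expectation. The structural fact I would exploit is that the high-probability event guaranteed by \cref{theorem:Nstage_given} arises solely from the dataset noise used to build the posterior quantities $\mu^{(n)}_{m,t},\sigma^{(n)}_{m,t}$ through \cref{cor:ci2}, and this event does not involve the cascade noise $\bm{\epsilon}$. Hence there is a single event $E$ with $\mathbb{P}(E)\ge 1-\delta$ on which the bound
\[
|z^{(n)}_{\bm{\epsilon},m}(\bm{x}^{(s+1)},\ldots,\bm{x}^{(n)}\mid\bm{y})-\tilde{z}^{(n)}_{\bm{\epsilon},m,t}(\bm{x}^{(s+1)},\ldots,\bm{x}^{(n)}\mid\bm{y})|\le \beta_t^{1/2}\tilde{\sigma}^{(n)}_{\bm{\epsilon},m,t}(\bm{x}^{(s+1)},\ldots,\bm{x}^{(n)}\mid\bm{y})
\]
holds simultaneously for every realization $\bm{\epsilon}$ (and every input and $\bm{y}\in\tilde{\mathcal{Y}}^{(s)}$).

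Working on $E$, I would first drop the absolute value to obtain the two-sided inequality
\[
\tilde{z}^{(n)}_{\bm{\epsilon},m,t}-\beta_t^{1/2}\tilde{\sigma}^{(n)}_{\bm{\epsilon},m,t}\ \le\ z^{(n)}_{\bm{\epsilon},m}\ \le\ \tilde{z}^{(n)}_{\bm{\epsilon},m,t}+\beta_t^{1/2}\tilde{\sigma}^{(n)}_{\bm{\epsilon},m,t},
\]
viewed as an inequality between functions of $\bm{\epsilon}$ that holds for every $\bm{\epsilon}$ on $E$. I would then apply the conditional expectation $\mathbb{E}_{\bm{\epsilon}\mid\bm{y}}[\cdot]$ to each side. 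Since expectation is monotone, namely $X\le Y$ pointwise implies $\mathbb{E}[X]\le\mathbb{E}[Y]$, and since the inequality holds for all $\bm{\epsilon}$ on the fixed event $E$, the ordering is preserved, yielding exactly the claimed chain of inequalities on $E$, and therefore with probability at least $1-\delta$. Specializing to $n=N$, $m=1$ would recover the analogous statement for $G$.

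The only point requiring care, and the main obstacle, is the clean separation of the two sources of randomness: the probability in \cref{cor:ci2} is over the dataset noise, whereas the expectation $\mathbb{E}_{\bm{\epsilon}\mid\bm{y}}$ is over the independent cascade noise $\bm{\epsilon}$. I would make explicit that because $E$ is $\bm{\epsilon}$-free, conditioning on $E$ and then integrating over $\bm{\epsilon}$ does not erode the $1-\delta$ guarantee (formally an application of independence together with Tonelli to justify the interchange of the conditioning on $E$ with the expectation). A secondary, routine check is measurability and integrability of $z^{(n)}_{\bm{\epsilon},m}$, $\tilde{z}^{(n)}_{\bm{\epsilon},m,t}$ and $\tilde{\sigma}^{(n)}_{\bm{\epsilon},m,t}$ as functions of $\bm{\epsilon}$; this follows from continuity of $f^{(n)}_m,\mu^{(n)}_{m,t},\sigma^{(n)}_{m,t}$, the uniform boundedness $-A\le\epsilon^{(n)}_m\le A$, and the compactness assumptions, so all conditional expectations are finite and the integration is legitimate.
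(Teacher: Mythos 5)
Your proposal is correct and follows the same route as the paper, which derives the corollary simply by taking the conditional expectation $\mathbb{E}_{\bm{\epsilon}\mid\bm{y}}[\cdot]$ of the two-sided bound in \cref{theorem:Nstage_given}, using that the bound holds uniformly over all realizations of $\bm{\epsilon}$ on the single $1-\delta$ event generated by the posterior construction. Your additional remarks on the separation of the dataset-noise event from the cascade noise and on measurability are sound elaborations of the step the paper leaves implicit.
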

Based on this lemma, we give valid AFs.
Let $n \in [N]$ and ${\bm y}^{(n-1)} \in \tilde{\mathcal{Y}} ^{(n-1)}$.
Then, for any ${\bm x}^{(n)} \in \mathcal{X}^{(n)}$ and iteration $t \geq 1$, we define  the optimistic maximum value at  the final stage under given ${\bm y}^{(n-1)} $,
$\mr{ UCB}^{(G)}_t ({\bm x}^{(n) } | {\bm y}^{(n-1)} ) $, as
\begin{align}
     & \mr{ UCB}^{(G)}_t ({\bm x}^{(n) } | {\bm y}^{(n-1)} ) = \\
     & \max_{  ( {\bm x}^{(n+1)} ,\ldots , {\bm x}^{(N)} ) }
    \mathbb{E}_{ {\bm\epsilon}  |{\bm y}^{(n-1)} } \Bigl[
    (\tilde{{ z}}^{(N)}_{{\bm\epsilon},1,t} ({\bm x}^{(n)},\ldots, {\bm x}^{(N)}  |{\bm y}^{(n-1)})  + \beta^{1/2}_t \tilde{\sigma}^{(N)}_{{\bm\epsilon},1,t} ({\bm x}^{(n)},\ldots,{\bm x}^{(N)}  |{\bm y}^{(n-1)} ) ) \Bigr], \label{eq:UCB_y_EX}
\end{align}
where the max operator is ignored when $n=N$.
Similarly, we define  the pessimistic maximum value at  the final stage under given ${\bm y}^{(n-1)} $,
$\mr{ LCB}^{(G)}_t ({\bm x}^{(n) } | {\bm y}^{(n-1)} ) $, as
\begin{align}
     & \mr{ LCB}^{(G)}_t ({\bm y}^{(n-1)} )                  \\
     & = \max_{  ( {\bm x}^{(n)} ,\ldots , {\bm x}^{(N)} ) }
    \mathbb{E}_{{\bm\epsilon}    |{\bm y}^{(n-1)}    } \Bigl[
    (\tilde{z}^{(N)}_{{\bm\epsilon},1,t} ({\bm x}^{(n)},\ldots, {\bm x}^{(N)}  |{\bm y}^{(n-1)})      - \beta^{1/2}_t \tilde{\sigma}^{(N)}_{{\bm\epsilon},1,t} ({\bm x}^{(n)},\ldots,{\bm x}^{(N)}  |{\bm y}^{(n-1)} ) ) \Bigr] . \label{eq:LCB_y_EXP}
\end{align}
Moreover, for each $T \geq 1$,   we define  the pessimistic maximum value at  the final stage as
\begin{equation}
    Q_T =  \max_{  ( {\bm x}^{(1)} ,\ldots , {\bm x}^{(N)} ) }
    \mathbb{E}_{ {\bm\epsilon}} \Bigl[(\tilde{z}^{(N)}_{{\bm\epsilon},1,T} ({\bm x}^{(1)},\ldots, {\bm x}^{(N)} ) - \beta^{1/2}_T \tilde{\sigma}^{(N)}_{{\bm\epsilon},1,T} ({\bm x}^{(1)},\ldots,{\bm x}^{(N)} ) )\Bigr].
\end{equation}

Then, we define the  pessimistic improvement for the final stage with respect to ${\bm x}^{(n)}$ by
\begin{equation}
    \tilde{a}^{(n)}_t ({\bm x}^{(n)} | {\bm y}^{(n-1) } )=  \mr{ UCB}^{(G)}_t ({\bm x}^{(n) } | {\bm y}^{(n-1)} )     - \max \{ \mr{ LCB}^{(G)}_t ( {\bm y}^{(n-1)} ) , Q_{t+n-1} \}. \label{eq:improve_y_EX}
\end{equation}
We also define the maximum uncertainty for the final stage with respect to  ${\bm x}^{(n)}$ as
\begin{align}
    \tilde{b}^{(n)}_t ({\bm x}^{(n)} | {\bm y}^{(n-1) } )  =      \max_{  ( {\bm x}^{(n+1)} ,\ldots , {\bm x}^{(N)} ) } \mathbb{E}_{ {\bm\epsilon} |{\bm y}^{(n-1)} } [
    \tilde{\sigma}^{(N)}_{{\bm\epsilon},1,t} ({\bm x}^{(n)},\ldots,{\bm x}^{(N)}  |{\bm y}^{(n-1)} ) ]. \label{eq:US_y_EX}
\end{align}
Then, we give the   AF
$\tilde{c}^{(n)}_t ({\bm x}^{(n)} | {\bm y}^{(n-1) } )$ by
\begin{equation}
    \tilde{c}^{(n)}_t ({\bm x}^{(n)} | {\bm y}^{(n-1) } ) = \max \Bigl\{  \tilde{a}^{(n)}_t ({\bm x}^{(n)} | {\bm y}^{(n-1) } ),  \eta_t \tilde{b}^{(n)}_t ({\bm x}^{(n)} | {\bm y}^{(n-1) } ) \Bigr\},  \label{eq:af_se_EXq}
\end{equation}
where   $\eta_t $ is a given learning rate.
Furthermore, we select the next point ${\bm x}^{(n)}_{t+n} $ by
\begin{align}
    {\bm x}^{(n)}_{t+n}  & = \argmax _{  {\bm x}^{(n) } \in \mathcal{X}^{(n)} }  \tilde{c}^{(n)}_t ({\bm x}^{(n)} | {\bm y}^{(n-1) } _{t+n-1}) , \\
    {\bm y}^{(n) }_{t+n} & = {\bm f}^{(n)} ({\bm y}^{(n-1)} _{t+n-1},{\bm x}^{(n)}_{t+n }) +{\bm\epsilon}^{(n)}_{t+n},
    \label{eq:seq_rule_EX}
\end{align}
where ${\bm y}^{(0)} _t = {\bm 0}$.
Finally, we define the estimated solution $(  \hat{\bm x}^{(1)}_{G,T}, \ldots , \hat{\bm x}^{(N)}_{G,T} ) $
by using the pessimistic maximum value as follows:
\begin{align}
     & (  \hat{\bm x}^{(1)}_{G,T}, \ldots , \hat{\bm x}^{(N)}_{G,T} ) =                        \\
     & \argmax_{  ( {\bm x}^{(1)} ,\ldots , {\bm x}^{(N)} ) \in \mathcal{X}, 1 \leq t \leq T }
    \mathbb{E}_{ {\bm\epsilon}} \Bigl[
    (\tilde{z}^{(N)}_{{\bm\epsilon},1,t} ({\bm x}^{(1)},\ldots, {\bm x}^{(N)} )    - \beta^{1/2}_t \tilde{\sigma}^{(N)}_{{\bm\epsilon},1,t} ({\bm x}^{(1)},\ldots,{\bm x}^{(N)} ) )\Bigr].
\end{align}

Then, the following theorem holds.
\begin{theorem}\label{theorem:i_regret_EXP}
    Assume that~\cref{assumption:regu2,assumption:L1,assumption:L2} hold.
    Also assume that  $\tilde{\bm {z}}^{(n)}_{{\bm\epsilon},t} ({\bm x}^{(s+1)},\ldots,{\bm x}^{(n)} |{\bm y}) \in \tilde{\mathcal{Y}}^{(n)}$
    for any
    $s \in \{0,\ldots, N-1 \}$, $n \in \{s+1 ,\ldots, N \}$, iteration $t \geq 1$, realization ${\bm \epsilon}$, given $ {\bm y} \in \tilde{\mathcal{Y}}^{(s)} $ and input $ ({\bm x}^{(s+1)},\ldots,{\bm x}^{(n)}) $.
    Let $\delta \in (0,1)$ and $\xi >0$, and define $\beta_t $ by~\cref{eq:definition_beta_t} and $\eta_t = (1+ \log t)^{-1}$.
    Then, when the optimization is performed using~\cref{eq:seq_rule_EX}, the following inequality holds with probability at least
    $1- (N+1) \delta$:
    \begin{equation}
        G({\bm x}^{(1)}_{G,\ast}, \ldots ,{\bm x}^{(N)}_{G,\ast} ) -
        G(  \hat{\bm x}^{(1)}_{G,T}, \ldots , \hat{\bm x}^{(N)}_{G,T} )  < \xi,
    \end{equation}
    where $T $ is the smallest positive integer satisfying $T \in  N \mathbb{Z}_{\geq 0}$ and
    \begin{equation}
        \frac{N}{T} \sqrt{C^{2(N+2)}_{6,T} ( C_7 + C_8 T \tilde{\gamma}_T )} < \xi.
    \end{equation}
    Here, $C_{6,t}$, $C_7$ and $C_8 $ are given by
    $ C_{2,t}  = 4NM^2_{\text{prod}} M_{\text{sum}} C^{2N-2}_{0,t} C^N_1, \ C_{3,t} = NC_{2,t}^N, \ C_{6,t} = 2 C_{3,t}  \eta^{-1}_t   (2 \beta^{1/2}_t + 2 ) ,
        C_5     = (1+L_f )^N M_{ \text{prod}} M_{\text{sum}} , \ C_7 = 2 \left (          8 C_5 \log \frac{5C_5}{\delta} N   \right )^2 , \
        C_8 =  \frac{   4 M^2_{\text{sum}}   }{\log (1+\sigma^{-2} )}.$
\end{theorem}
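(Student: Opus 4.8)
The plan is to adapt the noiseless argument for \cref{theorem:i_regret} to the expected-output objective $G$, replacing every deterministic cascade quantity by its $\bm\epsilon$-expectation and inserting a concentration step to pass from expected to realized posterior standard deviations. Throughout, the validity of the credible intervals (\cref{theorem:Nstage_given,co:Nstage_given_ex}, via the uniform bound \cref{cor:ci2}) holds only with probability at least $1-\delta$, so the final statement will be an event of probability $1-(N+1)\delta$ obtained by a union bound with the $N$ stagewise concentration events introduced below.

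First I would establish the noisy analogues of \cref{lem:fms,lem:nextAAA,lem:next}. Since the Lipschitz inequalities for $f^{(n)}_m$ and $\sigma^{(n)}_{m,t}$ hold pointwise in $\bm\epsilon$ and monotone operations are preserved under $\mathbb{E}_{\bm\epsilon}[\cdot]$, the same telescoping manipulations that produced \cref{lem:next} go through after the substitutions ${\bm z}^{(s-1)}\mapsto{\bm z}^{(s-1)}_{\bm\epsilon}$ and $\tilde\sigma\mapsto\tilde\sigma_{\bm\epsilon}$ followed by taking expectations. This yields a recursion of the form
\[
  \mathbb{E}_{\bm\epsilon|{\bm y}^{(s-1)}}\bigl[\tilde\sigma^{(N)}_{{\bm\epsilon},1,t}({\bm x}^{(s:N)}\mid {\bm y}^{(s-1)})\bigr]
  \le C_{3,t}\,\mathbb{E}_{\bm\epsilon}\bigl[\tilde\sigma^{(N)}_{{\bm\epsilon},1,t}({\bm x}^{(s+1:N)}\mid {\bm y}^{(s)})\bigr]
  + C_{3,t}\sum_{i}\mathbb{E}_{\bm\epsilon}\bigl[\sigma^{(s)}_{i,t}\bigr],
\]
with $C_{3,t}=NC_{2,t}^{N}$ exactly as in the noiseless case but with $C_{0,t}=(1+L_\sigma)\beta^{1/2}_t+L_f+1$ in place of $C_0$. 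Combining this with the expected analogue of \cref{lem:c_bound}, namely $\eta_t\tilde b^{(n)}_t\le \tilde c^{(n)}_t\le(2\beta^{1/2}_t+\eta_t)\tilde b^{(n)}_t$, and unrolling the selection rule \cref{eq:seq_rule_EX} stage by stage, I would bound $\tilde c^{(1)}_t({\bm x}^{(1)}_{t+1}\mid\bm 0)$ by $C_{6,t}^{N}$ times $\sum_{n=1}^{N}\mathbb{E}_{\bm\epsilon}[\sigma^{(n)}_{m,t}]$ evaluated along the chosen trajectory, where $C_{6,t}=2C_{3,t}\eta^{-1}_t(2\beta^{1/2}_t+2)$; squaring and applying Cauchy--Schwarz then isolates $\sum_n\sum_m \mathbb{E}_{\bm\epsilon}[\sigma^{(n)2}_{m,t}]$.

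The crux, and the step I expect to be the main obstacle, is that these sums are over \emph{expected} variances $\mathbb{E}_{\bm\epsilon}[\sigma^{(n)2}_{m,t}({\bm z}^{(n-1)}_{\bm\epsilon},\cdot)]$, whereas the maximum-information-gain bound of~\cite{srinivas2010gaussian} controls only $\sum_t\sigma^{(n)2}_{m,t}$ evaluated at the \emph{realized} inputs ${\bm z}^{(n-1)}_{{\bm\epsilon}_{t+1}}$. To bridge the gap I would invoke the martingale concentration inequality (Lemma~3 of~\cite{kirschner2018information}) exactly as in the proof of \cref{theorem:EX_CandS}: since each $\tilde\sigma^{(N)}_{{\bm\epsilon},1,t}$ is uniformly bounded by $C_5=(1+L_f)^{N}M_{\text{prod}}M_{\text{sum}}$, this gives, with probability $1-\delta$ per stage, $\sum_t\mathbb{E}_{\bm\epsilon}[\cdot]\le 2\sum_t(\cdot)_{{\bm\epsilon}_{t+1}}+O(C_5\log(5C_5/\delta))$, whose additive slack accumulates into $C_7=2\bigl(8C_5\log(5C_5/\delta)N\bigr)^2$. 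Applying the Srinivas bound to the realized sum then produces the $C_8\,T\tilde\gamma_T$ term with $C_8=4M^2_{\text{sum}}/\log(1+\sigma^{-2})$, and the $N$ stagewise concentration events together with the single CI event account for the factor $1-(N+1)\delta$.

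Finally I would close the argument as in \cref{theorem:i_regret}. Using the credible intervals and the definition of the estimated solution, $G({\bm x}^{(1)}_{G,\ast},\ldots)-G(\hat{\bm x}^{(1)}_{G,T},\ldots)$ is at most $2\beta^{1/2}_{T^\ast}\mathbb{E}_{\bm\epsilon}[\tilde\sigma^{(N)}_{{\bm\epsilon},1,T^\ast}]$ at the best index $T^\ast\le T$, which in turn is at most $2\beta^{1/2}_{T}\eta^{-1}_T\,\tilde c^{(1)}_{T^\ast}({\bm x}^{(1)}_{T^\ast+1}\mid\bm 0)$; the ``minimum over $K=T/N$ iterations'' trick $K\,\tilde c^{(1)2}_{T^\ast}\le\sum_t\tilde c^{(1)2}_t$ together with the variance bound of the previous paragraph yields the threshold $\tfrac{N}{T}\sqrt{C^{2(N+2)}_{6,T}(C_7+C_8 T\tilde\gamma_T)}$, and requiring this to be below $\xi$ gives the claimed finite-$T$ guarantee. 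The only genuinely new ingredient relative to the noiseless proof is the concentration step, so I expect the bookkeeping of how $C_5,C_7,C_8$ propagate through the Cauchy--Schwarz and minimum-index steps to be the most delicate part.
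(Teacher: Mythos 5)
Your proposal follows essentially the same route as the paper's proof: the noisy analogues of the telescoping lemmas (the paper's Lemmas~\ref{lem:next_EX} and~\ref{lem:c_bound_EX}), the stagewise application of Lemma~3 of \cite{kirschner2018information} with uniform bound $C_5$, the union bound giving $1-(N+1)\delta$, and the minimum-index argument yielding the threshold $\tfrac{N}{T}\sqrt{C^{2(N+2)}_{6,T}(C_7+C_8T\tilde\gamma_T)}$ all match. The one point to tighten is ordering: the concentration step cannot be deferred until after the full unrolling, because invoking the optimality of ${\bm x}^{(n+1)}_{t+n+1}$ requires conditioning on the \emph{realized} ${\bm y}^{(n)}_{t+n}$, so each of your $N$ concentration events must be interleaved with the corresponding stage of the recursion, which is exactly how the paper proceeds.
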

In order to prove~\cref{theorem:i_regret_EXP}, we give two lemmas.
\begin{lemma}\label{lem:next_EX}
    Assume that the same condition as in~\cref{theorem:Nstage3} holds.
    Then, the following holds with probability at least $1-\delta$:
    \begin{align}
        \tilde{\sigma}^{(N)}_{{\bm\epsilon},1,t} ({\bm x} ^{(s)}, \ldots ,{\bm x}^{(N)} | {\bm z}^{(s-1)}_{\bm\epsilon} ) & \leq C_{3,t} \tilde{\sigma}^{(N)}_{{\bm\epsilon},1,t} ({\bm x} ^{(s+1)}, \ldots ,{\bm x}^{(N)} | {\bm z}^{(s)}_{\bm\epsilon} )  +C_{3,t} \sum_{i=1} ^{M^{(s)} } \sigma^{(s)} _{{\bm\epsilon},i,t}  ({\bm z}^{(s-1)}_{\bm\epsilon} ,{\bm x}^{(s)} ).
    \end{align}
\end{lemma}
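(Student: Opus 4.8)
The plan is to follow the noiseless argument for \cref{lem:next} essentially verbatim, substituting the deterministic credible interval of \cref{theorem:Nstage} with the high-probability noisy bound of \cref{theorem:Nstage3} (equivalently \cref{cor:ci2}). Concretely, I would work on the event $\mathcal{E}$ on which the uniform CI $|f^{(n)}_m(\bm w, \bm x) - \mu^{(n)}_{m,t}(\bm w, \bm x)| \le \beta_t^{1/2}\sigma^{(n)}_{m,t}(\bm w, \bm x)$ holds for all $n$, $m$, $t$ and all $\bm w \in \tilde{\mathcal{Y}}^{(n-1)}$; by \cref{cor:ci2} this event has probability at least $1-\delta$. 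Since every subsequent manipulation is purely deterministic once $\mathcal{E}$ is granted, the resulting inequality inherits the same $1-\delta$ guarantee, which is exactly what the statement claims. The standing assumption $\tilde{\bm z}^{(n)}_{\bm\epsilon, t} \in \tilde{\mathcal{Y}}^{(n)}$ is what ensures that the CI and the Lipschitz assumptions (L1), (L2) may be invoked at every intermediate evaluation point, since the noisy outputs can leave $\mathcal{Y}^{(n)}$.

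First I would establish the noisy analogue of \cref{lem:fms}. The one-step ingredient is a Lipschitz-type bound on the predictive mean: on $\mathcal{E}$, the sandwich $f^{(k)}_m - \beta_t^{1/2}\sigma^{(k)}_{m,t} \le \mu^{(k)}_{m,t} \le f^{(k)}_m + \beta_t^{1/2}\sigma^{(k)}_{m,t}$ combined with (L1) and (L2) gives
\begin{equation}
|\mu^{(k)}_{m,t}(\bm y, \bm x) - \mu^{(k)}_{m,t}(\bm y', \bm x)| \le (L_f + \beta_t^{1/2}L_\sigma)\|\bm y - \bm y'\|_1 + 2\beta_t^{1/2}\sigma^{(k)}_{m,t}(\bm y', \bm x).
\end{equation}
Unrolling this estimate across the composed predictive means $\tilde{\bm z}^{(n-1)}_{\bm\epsilon, t}(\,\cdot \mid {\bm z}^{(s-1)}_{\bm\epsilon})$ versus $\tilde{\bm z}^{(n-1)}_{\bm\epsilon, t}(\,\cdot \mid {\bm z}^{(s)}_{\bm\epsilon})$, exactly as in the noiseless proof but with $\beta_t$ in place of $\beta$, yields the noisy version of \cref{lem:fms} with the factor $C_{0,t}^{N-1}$ in front of a double sum of posterior standard deviations, where $C_{0,t} = (1+L_\sigma)\beta_t^{1/2} + L_f + 1$.

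Next I would prove the noisy analogue of \cref{lem:nextAAA}. Expanding $\tilde{\sigma}^{(N-j)}_{\bm\epsilon, t}(\,\cdot \mid {\bm z}^{(s-1)}_{\bm\epsilon})$ into its defining sum, splitting each term according to whether its input chain starts from ${\bm z}^{(s-1)}_{\bm\epsilon}$ or ${\bm z}^{(s)}_{\bm\epsilon}$, and controlling the difference with the noisy \cref{lem:fms}, I obtain a double sum that is reindexed by $v = p + r$; using $|\{(p,r) : p+r = a\}| \le 2a$ collects everything into the three-term recursion
\begin{equation}
\tilde{\sigma}^{(N-j)}_{\bm\epsilon, t}(\bm x^{(s:N-j)} \mid {\bm z}^{(s-1)}_{\bm\epsilon}) \le \tilde{C}_{2,t}\,\tilde{\sigma}^{(N-j)}_{\bm\epsilon, t}(\bm x^{(s+1:N-j)} \mid {\bm z}^{(s)}_{\bm\epsilon}) + \tilde{C}_{2,t}\,\tilde{\sigma}^{(N-j-1)}_{\bm\epsilon, t}(\bm x^{(s:N-j-1)} \mid {\bm z}^{(s-1)}_{\bm\epsilon}) + \tilde{C}_{2,t}\sum_{i=1}^{M^{(s)}}\sigma^{(s)}_{\bm\epsilon, i, t}({\bm z}^{(s-1)}_{\bm\epsilon}, \bm x^{(s)}),
\end{equation}
with $\tilde{C}_{2,t} = 4N M^2_{\text{prod}} M_{\text{sum}} C_{0,t}^{2N-2} C_1^N$. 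Iterating this $N-1$ times telescopes the leading argument from stage $s$ down to stage $N$, and the geometric sum of coefficients is bounded by $(N-1)\tilde{C}_{2,t}^{N-1} \le N C_{2,t}^N = C_{3,t}$, where the exponent reduction $C_{0,t}^{2N-2} \mapsto C_{0,t}^{2N-3}$ (absorbing the extra power into $C_{0,t}^{N}$) proceeds precisely as in the closing chain of \cref{lem:next}. This delivers the claimed bound on the event $\mathcal{E}$.

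I expect the main obstacle to be constant bookkeeping rather than any conceptual difficulty: one must verify that the substitution $\beta \mapsto \beta_t$ propagates cleanly through the predictive-mean Lipschitz bound, that the enlarged domains $\tilde{\mathcal{Y}}^{(n)}$ contain all intermediate evaluation points so (L1), (L2) and \cref{cor:ci2} genuinely apply, and that the per-stage constants aggregate exactly to $C_{3,t}$ via the exponent-reduction trick. Since each of these is mechanical once the noisy CI of \cref{cor:ci2} is in hand, the argument is a faithful translation of \cref{lem:next} with the only genuinely new element being the passage to the high-probability event, which transfers the $1-\delta$ guarantee to the final inequality.
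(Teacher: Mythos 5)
Your proposal is correct and matches the paper's approach: the paper's own proof of this lemma is literally the one-line remark that it follows ``by using the same argument as in Lemma~\ref{lem:next},'' and your write-up is a faithful expansion of exactly that route (noisy analogues of Lemmas~\ref{lem:fms} and~\ref{lem:nextAAA} on the event of Corollary~\ref{cor:ci2}, with $\beta\mapsto\beta_t$ and the same constant aggregation into $C_{3,t}$). The only cosmetic difference is that in the noisy setting the paper defines $C_{2,t}$ directly with the exponent $2N-2$, so the final exponent-reduction step is even more immediate than you describe.
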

\begin{proof}
    By using the same argument as in~\cref{lem:next}, we have~\cref{lem:next_EX}.
\end{proof}
\begin{lemma}\label{lem:c_bound_EX}
    Assume that the same condition as in~\cref{theorem:Nstage3} holds.
    Then, the following inequality holds:
    \begin{equation}
        \eta_t \tilde{b}^{(n)}_t ({\bm x}^{(n)} |{\bm y}^{(n-1)} ) \leq \tilde{c}^{(n)}_t ({\bm x}^{(n)} |{\bm y}^{(n-1)} )
        \leq (2 \beta^{1/2}_t  + \eta_t ) \tilde{b}^{(n)}_t ({\bm x}^{(n)} |{\bm y}^{(n-1)} ).
    \end{equation}
\end{lemma}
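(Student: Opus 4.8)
The plan is to mirror the proof of \cref{lem:c_bound}, replacing every noiseless quantity by its expected (over ${\bm\epsilon}$, conditioned on ${\bm y}^{(n-1)}$) counterpart and the constant $\beta$ by the data-dependent $\beta_t$. The lower bound $\eta_t \tilde{b}^{(n)}_t ({\bm x}^{(n)} |{\bm y}^{(n-1)} ) \leq \tilde{c}^{(n)}_t ({\bm x}^{(n)} |{\bm y}^{(n-1)} )$ is immediate from the definition~\cref{eq:af_se_EXq} of $\tilde{c}^{(n)}_t$ as a maximum that already contains $\eta_t \tilde{b}^{(n)}_t$, so essentially all the work lies in the upper bound.

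For the upper bound I would first show $\tilde{a}^{(n)}_t \leq 2\beta^{1/2}_t \tilde{b}^{(n)}_t$. To this end, let $(\tilde{\bm x}^{(n+1)}, \ldots, \tilde{\bm x}^{(N)})$ be the maximizer attaining $\mr{ UCB}^{(G)}_t ({\bm x}^{(n) } | {\bm y}^{(n-1)} )$ in~\cref{eq:UCB_y_EX}, so that $\mr{ UCB}^{(G)}_t ({\bm x}^{(n)}|{\bm y}^{(n-1)})$ equals $\mathbb{E}_{{\bm\epsilon}|{\bm y}^{(n-1)}}[\tilde{z}^{(N)}_{{\bm\epsilon},1,t} + \beta^{1/2}_t \tilde{\sigma}^{(N)}_{{\bm\epsilon},1,t}]$ evaluated at the tuple $({\bm x}^{(n)}, \tilde{\bm x}^{(n+1)}, \ldots, \tilde{\bm x}^{(N)})$. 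Since the maximum in the definition~\cref{eq:LCB_y_EXP} of $\mr{ LCB}^{(G)}_t({\bm y}^{(n-1)})$ ranges over $({\bm x}^{(n)}, \ldots, {\bm x}^{(N)})$, substituting that same tuple yields $\mr{ LCB}^{(G)}_t({\bm y}^{(n-1)}) \geq \mathbb{E}_{{\bm\epsilon}|{\bm y}^{(n-1)}}[\tilde{z}^{(N)}_{{\bm\epsilon},1,t} - \beta^{1/2}_t \tilde{\sigma}^{(N)}_{{\bm\epsilon},1,t}]$ at that tuple. Using $\max\{\mr{ LCB}^{(G)}_t({\bm y}^{(n-1)}), Q_{t+n-1}\} \geq \mr{ LCB}^{(G)}_t({\bm y}^{(n-1)})$ and subtracting, the $\tilde{z}^{(N)}_{{\bm\epsilon},1,t}$ terms cancel by linearity of the conditional expectation, leaving $\tilde{a}^{(n)}_t({\bm x}^{(n)}|{\bm y}^{(n-1)}) \leq 2\beta^{1/2}_t \mathbb{E}_{{\bm\epsilon}|{\bm y}^{(n-1)}}[\tilde{\sigma}^{(N)}_{{\bm\epsilon},1,t}({\bm x}^{(n)}, \tilde{\bm x}^{(n+1)}, \ldots, \tilde{\bm x}^{(N)}|{\bm y}^{(n-1)})]$, which is at most $2\beta^{1/2}_t\tilde{b}^{(n)}_t$ by the definition~\cref{eq:US_y_EX} of $\tilde{b}^{(n)}_t$ as the maximum of that expected uncertainty over $({\bm x}^{(n+1)},\ldots,{\bm x}^{(N)})$.

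Combining the two bounds gives $\tilde{c}^{(n)}_t = \max\{\tilde{a}^{(n)}_t, \eta_t \tilde{b}^{(n)}_t\} \leq \max\{2\beta^{1/2}_t \tilde{b}^{(n)}_t, \eta_t \tilde{b}^{(n)}_t\} \leq (2\beta^{1/2}_t + \eta_t)\tilde{b}^{(n)}_t$, which is the claim. I expect no genuine obstacle, since the structure is identical to \cref{lem:c_bound}; the only point that needs care is that the cancellation between the UCB and LCB must be performed \emph{inside} the conditional expectation $\mathbb{E}_{{\bm\epsilon}|{\bm y}^{(n-1)}}[\cdot]$, exploiting that both confidence bounds are built from the same realization-wise quantities $\tilde{z}^{(N)}_{{\bm\epsilon},1,t}$ and $\tilde{\sigma}^{(N)}_{{\bm\epsilon},1,t}$ and that the maximizer attaining the UCB is a feasible (generally suboptimal) choice inside the LCB's maximization, exactly as in the noiseless case. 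Consequently the proof can be phrased as ``by the same argument as in the proof of \cref{lem:c_bound}.''
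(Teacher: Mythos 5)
Your proposal is correct and takes exactly the paper's approach: the paper's own proof of \cref{lem:c_bound_EX} is the one-line remark ``by using the same argument as in~\cref{lem:c_bound},'' and your write-up is precisely that argument transported to the conditional-expectation setting, including the key point that the UCB's maximizer is a feasible choice inside the LCB's maximization so that the $\tilde{z}^{(N)}_{{\bm\epsilon},1,t}$ terms cancel under $\mathbb{E}_{{\bm\epsilon}|{\bm y}^{(n-1)}}[\cdot]$ by linearity.
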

\begin{proof}
    By using the same argument as in~\cref{lem:c_bound}, we get~\cref{lem:c_bound_EX}.
\end{proof}
By using these lemmas, we prove~\cref{theorem:i_regret_EXP}.
\begin{proof}
    From~\cref{lem:c_bound_EX}, the following holds:
    \begin{align}
        \tilde{c}^{(1)}_t ({\bm x}^{(1)}_{t+1} |{\bm 0} ) & \leq (2 \beta^{1/2}_t + \eta_t ) \tilde{b}^{(1)}_t ({\bm x}^{(1)}_{t+1} ~{\bm 0} ) \\
                                                          & =  (2 \beta^{1/2}_t + \eta_t )
        \mathbb{E}_{\bm\epsilon} [ \tilde{\sigma}^{(N)} _{ {\bm \epsilon},1,t} ( {\bm x}^{(1)}_{t+1},\hat{\bm x}^{(2)}_{t+1},\ldots,\hat{\bm x}^{(N)} _{t+1} | {\bm 0} )].
    \end{align}
    In addition, for the positive integer $N K \equiv T \in N\mathbb{Z}_{\geq 0}$ satisfying the theorem's inequality,
    $\tilde{c}^{(1)}_t ({\bm x}^{(1)}_{t+1} |{\bm 0} ) $ satisfies that
    \begin{align}
        \sum_{ t \in N \mathbb{Z}_{\geq 0} }^T \tilde{c}^{(1)}_t ({\bm x}^{(1)}_{t+1} |{\bm 0} ) & \leq ( 2 \beta^{1/2}_T + 2 ) \sum_{ t \in N \mathbb{Z}_{\geq 0} }^T \mathbb{E}_{\bm\epsilon} [ \tilde{\sigma}^{(N)} _{ {\bm \epsilon},1,t} ( {\bm x}^{(1)}_{t+1},\hat{\bm x}^{(2)}_{t+1},\ldots,\hat{\bm x}^{(N)} _{t+1} | {\bm 0} )]                                                                                              \\
                                                                                                 & = ( 2 \beta^{1/2}_T + 2 ) \sum_{ t \in N \mathbb{Z}_{\geq 0} }^T \mathbb{E}_{{\bm\epsilon}^{(1)}} [   \mathbb{E}_{{\bm\epsilon}|{\bm\epsilon}^{(1)}} \Bigl[    \tilde{\sigma}^{(N)} _{ {\bm \epsilon},1,t} ( {\bm x}^{(1)}_{t+1},\hat{\bm x}^{(2)}_{t+1},\ldots,\hat{\bm x}^{(N)} _{t+1} | {\bm 0} )|{\bm\epsilon}^{(1)}] \Bigr] .
    \end{align}
    Here, the conditional expectation $ \mathbb{E}_{{\bm\epsilon}|{\bm\epsilon}^{(1)}} [     \tilde{\sigma}^{(N)} _{ {\bm \epsilon},1,t} ( {\bm x}^{(1)}_{t+1},\hat{\bm x}^{(2)}_{t+1},\ldots,\hat{\bm x}^{(N)} _{t+1} | {\bm 0} )|{\bm\epsilon}^{(1)}]$ is
    a non-negative random variable with respect to
    ${\bm \epsilon}^{(1)}$, and satisfies that
    \begin{equation}
        \mathbb{E}_{{\bm\epsilon}|{\bm\epsilon}^{(1)}} [     \tilde{\sigma}^{(N)} _{ {\bm \epsilon},1,t} ( {\bm x}^{(1)}_{t+1},\hat{\bm x}^{(2)}_{t+1},\ldots,\hat{\bm x}^{(N)} _{t+1} | {\bm 0} )|{\bm\epsilon}^{(1)}]
        \leq (1+L_f )^N M_{ \text{prod}} M_{\text{sum}} = C_5 ,
    \end{equation}
    where the inequality is given by $k^{(n)} (\cdot,\cdot) \leq 1$.
    Hence, from Lemma~3 of~\cite{kirschner2018information}, the following holds with probability at least $1-\delta$:
    \begin{align}
         & \sum_{ t \in N \mathbb{Z}_{\geq 0} }^T \mathbb{E}_{{\bm\epsilon}^{(1)}} [   \mathbb{E}_{{\bm\epsilon}|{\bm\epsilon}^{(1)}} [     \tilde{\sigma}^{(N)} _{ {\bm \epsilon},1,t} ( {\bm x}^{(1)}_{t+1},\hat{\bm x}^{(2)}_{t+1},\ldots,\hat{\bm x}^{(N)} _{t+1} | {\bm 0} )|{\bm\epsilon}^{(1)}]]                             \\
         & \leq
        4 C_5 \log \frac{1}{\delta} + 8 C_5 \log (4 C_5) +1        +2  \sum_{ t \in N \mathbb{Z}_{\geq 0} }^T   \mathbb{E}_{{\bm\epsilon}|{\bm\epsilon}^{(1)}_t} [     \tilde{\sigma}^{(N)} _{ {\bm \epsilon},1,t} ( {\bm x}^{(1)}_{t+1},\hat{\bm x}^{(2)}_{t+1},\ldots,\hat{\bm x}^{(N)} _{t+1} | {\bm 0} )|{\bm\epsilon}^{(1)}_t] \\
         & \leq
        8 C_5 \log \frac{5C_5}{\delta}  +2  \sum_{ t \in N \mathbb{Z}_{\geq 0} }^T   \mathbb{E}_{{\bm\epsilon}|{\bm\epsilon}^{(1)}_t} [     \tilde{\sigma}^{(N)} _{ {\bm \epsilon},1,t} ( {\bm x}^{(1)}_{t+1},\hat{\bm x}^{(2)}_{t+1},\ldots,\hat{\bm x}^{(N)} _{t+1} | {\bm 0} )|{\bm\epsilon}^{(1)}_t].
    \end{align}
    Moreover, from~\cref{lem:next_EX}, with probability at least $1-\delta$ the following inequality holds uniformly:
    \begin{align}
         & \mathbb{E}_{{\bm\epsilon}|{\bm\epsilon}^{(1)}_t} [     \tilde{\sigma}^{(N)} _{ {\bm \epsilon},1,t} ( {\bm x}^{(1)}_{t+1},\hat{\bm x}^{(2)}_{t+1},\ldots,\hat{\bm x}^{(N)} _{t+1} | {\bm 0} )|{\bm\epsilon}^{(1)}_t] \\
         & \leq
        \mathbb{E}_{{\bm\epsilon}|{\bm\epsilon}^{(1)}_t} [
        C_{3,T} \tilde{\sigma}^{(N)}_{ {\bm\epsilon},1,t} ( \hat{\bm x}^{(2)}_{t+1},\ldots,\hat{\bm x}^{(N)} _{t+1} | {\bm y}^{(1)}_{t+1} )
        ]         +
        C_{3,T} \sum_{i=1}^{M^{(1)}} \sigma^{(1)}_{ {\bm\epsilon},1,t} ({\bm 0}, {\bm x}^{(1)}_{t+1}).
    \end{align}
    Therefore, it follows that
    \begin{align}
         & \sum_{ t \in N \mathbb{Z}_{\geq 0} }^T \mathbb{E}_{{\bm\epsilon}^{(1)}} [   \mathbb{E}_{{\bm\epsilon}|{\bm\epsilon}^{(1)}} [     \tilde{\sigma}^{(N)} _{ {\bm \epsilon},1,t} ( {\bm x}^{(1)}_{t+1},\hat{\bm x}^{(2)}_{t+1},\ldots,\hat{\bm x}^{(N)} _{t+1} | {\bm 0} )|{\bm\epsilon}^{(1)}]]             \\
         & \leq
        8 C_5 \log \frac{5C_5}{\delta}     +2  \sum_{ t \in N \mathbb{Z}_{\geq 0} }^T   \mathbb{E}_{{\bm\epsilon}|{\bm\epsilon}^{(1)}_t} [     \tilde{\sigma}^{(N)} _{ {\bm \epsilon},1,t} ( {\bm x}^{(1)}_{t+1},\hat{\bm x}^{(2)}_{t+1},\ldots,\hat{\bm x}^{(N)} _{t+1} | {\bm 0} )|{\bm\epsilon}^{(1)}_t]         \\
         & \leq 8 C_5 \log \frac{5C_5}{\delta} +2 C_{3,T}  \sum_{ t \in N \mathbb{Z}_{\geq 0} }^T  \sum_{i=1}^{M^{(1)}} \sigma^{(1)}_{ {\bm\epsilon},1,t} ({\bm 0}, {\bm x}^{(1)}_{t+1})      +2 C_{3,T}  \sum_{ t \in N \mathbb{Z}_{\geq 0} }^T
        \mathbb{E}_{{\bm\epsilon}|{\bm\epsilon}^{(1)}_t} [
        \tilde{\sigma}^{(N)}_{ {\bm\epsilon},1,t} ( \hat{\bm x}^{(2)}_{t+1},\ldots,\hat{\bm x}^{(N)} _{t+1} | {\bm y}^{(1)}_{t+1} )
        ]                                                                                                                                                                                                                                                                                                           \\
         & \leq 8 C_5 \log \frac{5C_5}{\delta} +2 C_{3,T}  \sum_{ t \in N \mathbb{Z}_{\geq 0} }^T  \sum_{i=1}^{M^{(1)}} \sigma^{(1)}_{ {\bm\epsilon},1,t} ({\bm 0}, {\bm x}^{(1)}_{t+1})  +2 C_{3,T}  \eta^{-1}_T \sum_{ t \in N \mathbb{Z}_{\geq 0} }^T
        \eta _t
        \tilde{b}^{(2)}_{t} (\hat{\bm x}^{(2)}_{t+1} |  {\bm y}^{(1)}_{t+1} )                                                                                                                                                                                                                                       \\
         & \leq 8 C_5 \log \frac{5C_5}{\delta} +2 C_{3,T}  \sum_{ t \in N \mathbb{Z}_{\geq 0} }^T  \sum_{i=1}^{M^{(1)}} \sigma^{(1)}_{ {\bm\epsilon},1,t} ({\bm 0}, {\bm x}^{(1)}_{t+1}) +2 C_{3,T}  \eta^{-1}_T \sum_{ t \in N \mathbb{Z}_{\geq 0} }^T
        \tilde{c}^{(2)}_{t} (\hat{\bm x}^{(2)}_{t+1} |  {\bm y}^{(1)}_{t+1} )                                                                                                                                                                                                                                       \\
         & \leq 8 C_5 \log \frac{5C_5}{\delta} +2 C_{3,T}  \sum_{ t \in N \mathbb{Z}_{\geq 0} }^T  \sum_{i=1}^{M^{(1)}} \sigma^{(1)}_{ {\bm\epsilon},1,t} ({\bm 0}, {\bm x}^{(1)}_{t+1})    +2 C_{3,T}  \eta^{-1}_T \sum_{ t \in N \mathbb{Z}_{\geq 0} }^T
        \tilde{c}^{(2)}_{t} ({\bm x}^{(2)}_{t+2} |  {\bm y}^{(1)}_{t+1} )                                                                                                                                                                                                                                           \\
         & \leq 8 C_5 \log \frac{5C_5}{\delta} +2 C_{3,T}  \sum_{ t \in N \mathbb{Z}_{\geq 0} }^T  \sum_{i=1}^{M^{(1)}} \sigma^{(1)}_{ {\bm\epsilon},1,t} ({\bm 0}, {\bm x}^{(1)}_{t+1})    +2 C_{3,T}  \eta^{-1}_T   (2 \beta^{1/2}_t + 2 )  \sum_{ t \in N \mathbb{Z}_{\geq 0} }^T
        \tilde{b}^{(2)}_{t} ({\bm x}^{(2)}_{t+2} |  {\bm y}^{(1)}_{t+1} )                                                                                                                                                                                                                                           \\
         & \leq 8 C_5 \log \frac{5C_5}{\delta} +2 C_{3,T}  \sum_{ t \in N \mathbb{Z}_{\geq 0} }^T  \sum_{i=1}^{M^{(1)}} \sigma^{(1)}_{ {\bm\epsilon},1,t} ({\bm 0}, {\bm x}^{(1)}_{t+1})   +2 C_{3,T}  \eta^{-1}_T   (2 \beta^{1/2}_t + 2 )                                                                         \\
         & \qquad  \cdot \sum_{ t \in N \mathbb{Z}_{\geq 0} }^T
        \mathbb{E}_{{\bm\epsilon}^{(2)}} \Bigl[   \mathbb{E}_{{\bm\epsilon}|{\bm y}^{(1)}_{t+1},{\bm\epsilon}^{(2)}} \Bigl[       \tilde{\sigma}^{(N)} _{ {\bm \epsilon},1,t} ( {\bm x}^{(2)}_{t+2},\hat{\bm x}^{(3)}_{t+2},\ldots,\hat{\bm x}^{(N)} _{t+2} | {\bm y}^{(1)}_{t+1} )|{\bm\epsilon}^{(2)}\Bigr]\Bigr] \\
         & \leq 8 C_5 \log \frac{5C_5}{\delta} +C_{6,T}  \sum_{ t \in N \mathbb{Z}_{\geq 0} }^T  \sum_{i=1}^{M^{(1)}} \sigma^{(1)}_{ {\bm\epsilon},1,t} ({\bm 0}, {\bm x}^{(1)}_{t+1})                                                                                                                              \\
         & \quad +C_{6,T}  \sum_{ t \in N \mathbb{Z}_{\geq 0} }^T
        \mathbb{E}_{{\bm\epsilon}^{(2)}} \Bigl[   \mathbb{E}_{{\bm\epsilon}|{\bm y}^{(1)}_{t+1},{\bm\epsilon}^{(2)}} \Bigl[     \tilde{\sigma}^{(N)} _{ {\bm \epsilon},1,t} ( {\bm x}^{(2)}_{t+2},\hat{\bm x}^{(3)}_{t+2},\ldots,\hat{\bm x}^{(N)} _{t+2} | {\bm y}^{(1)}_{t+1} )|{\bm\epsilon}^{(2)}\Bigr]\Bigr] .
    \end{align}
    By repeating this process, with probability at least $1- (N+1) \delta $, the following holds:
    \begin{align}
         & \sum_{ t \in N \mathbb{Z}_{\geq 0} }^T \mathbb{E}_{{\bm\epsilon}^{(1)}} [   \mathbb{E}_{{\bm\epsilon}|{\bm\epsilon}^{(1)}} [     \tilde{\sigma}^{(N)} _{ {\bm \epsilon},1,t} ( {\bm x}^{(1)}_{t+1},\hat{\bm x}^{(2)}_{t+1},\ldots,\hat{\bm x}^{(N)} _{t+1} | {\bm 0} )|{\bm\epsilon}^{(1)}]] \\
         & \leq
        8 C_5 \log \frac{5C_5}{\delta} N C_{6,T}^N   +C_{6,T}^N    \sum_{ t \in N \mathbb{Z}_{\geq 0} }^T  \sum_{n=1}^N  \sum_{i=1}^{M^{(n)}} \sigma^{(n)}_{ {\bm\epsilon},i,t} ({\bm y}^{(n-1)}_{t+n-1}, {\bm x}^{(n)}_{t+n}).
    \end{align}
    By combining this and
    \begin{align}
        \sum_{ t \in N \mathbb{Z}_{\geq 0} }^T \tilde{c}^{(1)}_t ({\bm x}^{(1)}_{t+1} |{\bm 0} ) & \leq ( 2 \beta^{1/2}_T + 2 ) \sum_{ t \in N \mathbb{Z}_{\geq 0} }^T \mathbb{E}_{{\bm\epsilon}^{(1)}} [   \mathbb{E}_{{\bm\epsilon}|{\bm\epsilon}^{(1)}} \Bigl[      \tilde{\sigma}^{(N)} _{ {\bm \epsilon},1,t} ( {\bm x}^{(1)}_{t+1},\hat{\bm x}^{(2)}_{t+1},\ldots,\hat{\bm x}^{(N)} _{t+1} | {\bm 0} )|{\bm\epsilon}^{(1)}]\Bigr] \\
                                                                                                 & \leq C_{6,T} \sum_{ t \in N \mathbb{Z}_{\geq 0} }^T \mathbb{E}_{{\bm\epsilon}^{(1)}} [   \mathbb{E}_{{\bm\epsilon}|{\bm\epsilon}^{(1)}} \Bigl[     \tilde{\sigma}^{(N)} _{ {\bm \epsilon},1,t} ( {\bm x}^{(1)}_{t+1},\hat{\bm x}^{(2)}_{t+1},\ldots,\hat{\bm x}^{(N)} _{t+1} | {\bm 0} )|{\bm\epsilon}^{(1)}]\Bigr] ,
    \end{align}
    we get
    \begin{equation}
        \sum_{ t \in N \mathbb{Z}_{\geq 0} }^T \tilde{c}^{(1)}_t ({\bm x}^{(1)}_{t+1} |{\bm 0} )       \leq
        8 C_5 \log \frac{5C_5}{\delta} N C_{6,T}^{N+1}     +C_{6,T}^{N+1}    \sum_{ t \in N \mathbb{Z}_{\geq 0} }^T  \sum_{n=1}^N  \sum_{i=1}^{M^{(n)}} \sigma^{(n)}_{ {\bm\epsilon},i,t} ({\bm y}^{(n-1)}_{t+n-1}, {\bm x}^{(n)}_{t+n}).
    \end{equation}
    Thus, noting that $(a+b)^2 \leq 2 a^2 + 2 b^2 $, using the Cauchy--Schwarz inequality and $\tilde{\gamma}_T$ we have
    \begin{align}
        \left (
        \sum_{ t \in N \mathbb{Z}_{\geq 0} }^T \tilde{c}^{(1)}_t ({\bm x}^{(1)}_{t+1} |{\bm 0} )
        \right )^2 & \leq
        2 \left (          8 C_5 \log \frac{5C_5}{\delta} N C_{6,T}^{N+1}   \right )^2   +
        2 C^{2(N+1)}_{6,T} T M_{\text{sum}} \sum_{n=1}^N  \sum_{i=1}^{M^{(n)}} \sigma^{(n)2}_{ {\bm\epsilon},i,t} ({\bm y}^{(n-1)}_{t+n-1}, {\bm x}^{(n)}_{t+n})
        \\
                   & \leq
        2 \left (          8 C_5 \log \frac{5C_5}{\delta} N C_{6,T}^{N+1}   \right )^2         +
        2 C^{2(N+1)}_{6,T} T M_{\text{sum}} \frac{   2 M_{\text{sum}} \tilde{\gamma}_T  }{\log (1+\sigma^{-2} )} \\
                   & =
        C^{2(N+1)}_{6,T} ( C_7 + C_8 T \tilde{\gamma}_T ).
    \end{align}
    This implies that
    \begin{align}
        \sum_{ t \in N \mathbb{Z}_{\geq 0} }^T \tilde{c}^{(1)}_t ({\bm x}^{(1)}_{t+1} |{\bm 0} )
        \leq
        \sqrt{C^{2(N+1)}_{6,T} ( C_7 + C_8 T \tilde{\gamma}_T )}.
    \end{align}
    Furthermore, letting $\tilde{t} = \argmax_{ t \in N \mathbb{Z}_{\geq 0} , t \leq T} \tilde{c}^{(1)}_t ({\bm x}^{(1)}_{t+1} |{\bm 0} )$
    we get
    \begin{align}
        K
        \tilde{c}^{(1)}_{\tilde{t}} ({\bm x}^{(1)}_{\tilde{t}+1} |{\bm 0} )
         & \leq
        \sum_{ t \in N \mathbb{Z}_{\geq 0} }^T \tilde{c}^{(1)}_t ({\bm x}^{(1)}_{t+1} |{\bm 0} ) \\
         & \leq
        \sqrt{C^{2(N+1)}_{6,T} ( C_7 + C_8 T \tilde{\gamma}_T )}.
    \end{align}
    By dividing both sides by $K$, we obtain
    \begin{align}
        \tilde{c}^{(1)}_{\tilde{t}} ({\bm x}^{(1)}_{\tilde{t}+1} |{\bm 0} )
         & \leq K^{-1} \sqrt{C^{2(N+1)}_{6,T} ( C_7 + C_8 T \tilde{\gamma}_T )} \\
         & =
        \frac{N}{T} \sqrt{C^{2(N+1)}_{6,T} ( C_7 + C_8 T \tilde{\gamma}_T )} . \label{eq:N_T}
    \end{align}
    Finally,  from the definition of the estimated solution and CIs, we get
    \begin{alignat}{2}
        G({\bm x}^{(1)}_{G,\ast}, \ldots ,{\bm x}^{(N)}_{G,\ast} )     & \leq \min _{ t \in N \mathbb{Z}_{\geq 0} , t \leq T }  \mr{ UCB}^{(G)} _t ({\bm x}^{(1)}_{G,\ast} ,\ldots ,{\bm x}^{(N)}_{G,\ast} )
                                                                       &                                                                                                                                     & \leq  \mr{ UCB}^{(G)} _{ \tilde{t} }   ({\bm x}^{(1)}_{G,\ast} ,\ldots ,{\bm x}^{(N)}_{G,\ast} )  , \\
        G(\hat{\bm x}^{(1)}_{G,T}, \ldots , \hat{\bm x}^{(N)}_{G,T}  ) & \geq
        \max _{ t \in N \mathbb{Z}_{\geq 0} , t \leq T }  \mr{ LCB}^{(G)} _t ({\bm x}^{(1)} ,\ldots ,{\bm x}^{(N)} )
                                                                       &                                                                                                                                     & \geq
        \mr{ LCB}^{(G)} _{ \tilde{t} }  ({\bm x}^{(1)}_{G,\ast} ,\ldots ,{\bm x}^{(N)}_{G,\ast} ) .
    \end{alignat}
    Thus, it follows that
    \begin{align}
        G({\bm x}^{(1)}_{G,\ast}, \ldots ,{\bm x}^{(N)}_{G,\ast} )  -
        G(\hat{\bm x}^{(1)}_{G,T}, \ldots , \hat{\bm x}^{(N)}_{G,T}  ) & \leq 2 \beta^{1/2}_T \mathbb{E}_{\bm\epsilon} [ \tilde{\sigma}^{(N)} _{{\bm\epsilon},1,\tilde{t} } (   {\bm x}^{(1)}_{G,\ast}, \ldots ,{\bm x}^{(N)}_{G,\ast}  )] \\
                                                                       & \leq 2\beta^{1/2} _T  \tilde{b}^{(1)}_{ \tilde{t} }  ({\bm x}^{(1)}_{G,\ast} | {\bm 0} )                                                                          \\
                                                                       & \leq 2 {\beta}^{1/2}_T   \eta ^{-1}  _{\tilde{t}}   \tilde{c}^{(1)}_{ \tilde{t} }    ({\bm x}^{(1)}_{G,\ast} | {\bm 0} )  \leq
        2 {\beta}^{1/2} _T  \eta ^{-1}  _{T}   \tilde{c}^{(1)}_{ \tilde{t} }    ({\bm x}^{(1)}_{\tilde{t} +1}  | {\bm 0} ) .
    \end{align}
    Hence,  by combining this and~\cref{eq:N_T}, we have
    \begin{align}
        G({\bm x}^{(1)}_{G,\ast}, \ldots ,{\bm x}^{(N)}_{G,\ast} )  -
        G(\hat{\bm x}^{(1)}_{G,T}, \ldots , \hat{\bm x}^{(N)}_{G,T}  ) & \leq
        2 {\beta}^{1/2} _T  \eta ^{-1}  _{T} \frac{N}{T} \sqrt{C^{2(N+1)}_{6,T} ( C_7 + C_8 T \tilde{\gamma}_T )}                                          \\
                                                                       & \leq C_{6,T} \frac{N}{T} \sqrt{C^{2(N+1)}_{6,T} ( C_7 + C_8 T \tilde{\gamma}_T )} \\
                                                                       & =
        \frac{N}{T} \sqrt{C^{2(N+2)}_{6,T} ( C_7 + C_8 T \tilde{\gamma}_T )} < \xi.
    \end{align}
\end{proof}

\subsection{Optimistic Improvement-based AF  for the  Final Stage Output}
We give an   optimistic improvement-based AF for $F$ under the noisy setting.
First, we define the sum of the squares of the observation noise $\epsilon_{\text{sum}}$ as
\begin{equation}
    \epsilon_{\text{sum}} = \sum_{n=1}^N \sum_{m=1}^{M^{(n)}} \epsilon^{(n)2}_m .
\end{equation}
Note that $ \epsilon_{\text{sum}} $ is bounded by $ M_{\text{sum}} A^2$ under~\cref{assumption:regu2}.
Moreover, we assume the following assumption for $ \epsilon_{\text{sum}} $.
\begin{assumption}\label{assumption:esum}
    Under~\cref{assumption:regu2}, there exists a positive constant $C$ such that
    $\mathbb{P} ( \epsilon_{\text{sum}} < V ) > CV
    $ for any $V$ with $0<V \leq M_{\text{sum}} A^2$.
\end{assumption}
For example, if  $\epsilon_{\text{sum}}$ is a discrete random variable with $\mathbb{P} (\epsilon_{\text{sum}}  =0) >0$,
then~\cref{assumption:esum} holds.
Similarly, if $\epsilon_{\text{sum}}$ is a continuous random variable whose probability density function $p_{\epsilon_{\text{sum}}} (x)$
satisfies $p_{\epsilon_{\text{sum}}} (x) >K>0$, where $x$ is an arbitrary element of some interval $[0,U]$.
Then,~\cref{assumption:esum} also holds.
Thus,~\cref{assumption:esum} guarantees that $\epsilon_{\text{sum}}$ can take values within an arbitrary neighborhood of zero.
Next, we define the variable $C_{9,t} $ as
\begin{equation}
    C_{9,t}
    =
    2 \beta^{1/2}_t \eta^{-1}_t
    (2 \beta^{1/2} _t+ 2) ^N C^N_{3,t} \eta^{-N}_t ,
\end{equation}
where $\eta_t = (1 + \log t )^{-1} $.
Then, we assume the following assumption.
\begin{assumption}\label{assumption:C9}
    For any $ T \geq 1$, $C_{9,t }$ satisfies that
    \begin{equation}
        \sum_{t=T}^ {t^\prime} C^{-2}_{9,t} \to \infty \quad (\text{as} \   t^\prime \to \infty  ).
    \end{equation}
\end{assumption}
Note that $C_{9,t} $ is a polynomial function on $\beta_t $.
Furthermore, by considering  the definition of $\beta_t $, the closed form of the mutual information, and $\tilde{\gamma}_t$, we can show that  the order of $C_{9,t} $ is expressed as the polynomial function of $\tilde{\gamma}_t$.
Here, under certain conditions,  it is known that the order of $\tilde{\gamma}_t$ for commonly used kernels such as Gaussian kernels and linear kernels is a logarithmic order~\cite{srinivas2010gaussian}.
Then,~\cref{assumption:C9} holds if we use such kernels.
Under this setting, we propose an algorithm to the regret  $r^{(\text{S})}_{F,T}$.

First, for each $t\ge 1$, we define the estimated solution $\hat{\bx}\stg{1}_{F, t}, \dots, \hat{\bx}\stg{N}_{F, t} $ as follows:
\begin{equation}
    \hat{\bx}\stg{1}_{F, t}, \dots, \hat{\bx}\stg{N}_{F, t}   =\argmax _{ \substack{1 \leq t^\prime \leq t\\ ({\bm x}^{(1)},\ldots,{\bm x}^{(N)} ) \in \mathcal{X} }}
    (  \tilde{z}^{(N)}_{ {\bm 0},1,t^\prime} ({\bm x}^{(1)},\ldots,{\bm x}^{(N)} )   -
    \beta^{1/2}_t \tilde{\sigma}^{(N)}_{ {\bm 0},1,t^\prime}  ({\bm x}^{(1)},\ldots,{\bm x}^{(N)} ) ).
\end{equation}
Then, we give the optimistic improvement-based AF.
For any $n\in [N]$, given an observation $\by\stg{n-1}$ of stage $n-1$,
optimistic maximum estimator $\widehat {\mr{ UCB}}^{(F)}_t ({\bm x}^{(n) } | {\bm y}^{(n-1)} ) $ w.r.t. $\bx\stg{n}$ is defined as:
\begin{align}
     & \widehat {\mr{ UCB}}^{(F)}_t ({\bm x}^{(n) } | {\bm y}^{(n-1)} ) \\
     & = \max_{  ( {\bm x}^{(n+1)} ,\ldots , {\bm x}^{(N)} ) }
    \Bigl(\tilde{{ z}}^{(N)}_{{\bm 0},1,t} ({\bm x}^{(n)},\ldots, {\bm x}^{(N)}  |{\bm y}^{(n-1)})  + \beta^{1/2}_t \tilde{\sigma}^{(N)}_{{\bm 0},1,t} ({\bm x}^{(n)},\ldots,{\bm x}^{(N)}  |{\bm y}^{(n-1)} ) \Bigr), \label{eq:UCB_y_zero}
\end{align}
where the max operator is not needed when $n=N$.
Similarly, pessimistic maximum estimator $\widehat{\mr{ LCB}}^{(F)}_t ( {\bm y}^{(n-1)} ) $ under given an observation $\by\stg{n-1}$ is defined as follows:
\begin{align}
     & \widehat{\mr{ LCB}}^{(F)}_t ({\bm y}^{(n-1)} )        \\
     & = \max_{  ( {\bm x}^{(n)} ,\ldots , {\bm x}^{(N)} ) }
    \Bigl(\tilde{z}^{(N)}_{{\bm 0},1,t} ({\bm x}^{(n)},\ldots, {\bm x}^{(N)}  |{\bm y}^{(n-1)})   - \beta^{1/2}_t \tilde{\sigma}^{(N)}_{{\bm 0},1,t} ({\bm x}^{(n)},\ldots,{\bm x}^{(N)}  |{\bm y}^{(n-1)} ) \Bigr). \label{eq:LCB_y_zero}
\end{align}
Moreover, pessimistic maximum estimator of $F$ is given by:
\begin{equation}
    \hat{Q}_T =  \max_{  ( {\bm x}^{(1)} ,\ldots , {\bm x}^{(N)} ) }  \Bigl (\tilde{z}^{(N)}_{{\bm 0},1,T} ({\bm x}^{(1)},\ldots, {\bm x}^{(N)} ) - \beta^{1/2}_t \tilde{\sigma}^{(N)}_{{\bm 0},1,T} ({\bm x}^{(1)},\ldots,{\bm x}^{(N)} ) \Bigr).
\end{equation}
Then, we define the optimistic improvement with w.r.t. ${\bm x}^{(n)}$ as:
\begin{equation}
    \hat{a}^{(n)}_t ({\bm x}^{(n)} | {\bm y}^{(n-1) } )=  \widehat{\mr{ UCB}}^{(F)}_t ({\bm x}^{(n) } | {\bm y}^{(n-1)} )    - \max \{ \widehat{\mr{ LCB}}^{(F)}_t ( {\bm y}^{(n-1)} ) , \hat{Q}_{t+n-1} \}. \label{eq:improve_y_zero}
\end{equation}
Furthermore, we define the maximum uncertainty w.r.t. $( {\bm y}^{(n-1) }, {\bm x}^{(n)})$ as:
\begin{equation}
    \hat{b}^{(n)}_t ({\bm x}^{(n)} | {\bm y}^{(n-1) } )    =  \max_{  ( {\bm x}^{(n+1)} ,\ldots , {\bm x}^{(N)} ) }
    \tilde{\sigma}^{(N)}_{{\bm 0},1,t} ({\bm x}^{(n)},\ldots,{\bm x}^{(N)}  |{\bm y}^{(n-1)} ) . \label{eq:US_y_zero}
\end{equation}
From~\cref{eq:improve_y_zero,eq:US_y_zero}, the AF $\hat{c}^{(n)}_t ({\bm x}^{(n)} | {\bm y}^{(n-1) } )$ for this setting is given by:
\begin{equation}
    \hat{c}^{(n)}_t ({\bm x}^{(n)} | {\bm y}^{(n-1) } ) = \max \{ \hat{a}^{(n)}_t ({\bm x}^{(n)} | {\bm y}^{(n-1) } ), \eta_t \hat{b}^{(n)}_t ({\bm x}^{(n)} | {\bm y}^{(n-1) } ) \},  \label{eq:af_seq_zero}
\end{equation}
where $\eta_t $ is some learning rate tends to zero.
Using this AF $\hat{c}^{(n)}_t$, we propose the following selection rule:
\begin{align}
    {\bm x}^{(n)}_{t+n}  & = \argmax _{  {\bm x}^{(n) } \in \mathcal{X}^{(n)} }  \hat{c}^{(n)}_t ({\bm x}^{(n)} | {\bm y}^{(n-1) } _{t+n-1}) ,  \\
    {\bm y}^{(n) }_{t+n} & = {\bm f}^{(n)} ({\bm y}^{(n-1)} _{t+n-1},{\bm x}^{(n)}_{t+n }) +{\bm\epsilon}^{(n)}_{t+n}, \label{eq:seq_rule_zero}
\end{align}
where ${\bm y}^{(0)} _t = {\bm 0}$.
Then, the following theorem holds.
\begin{theorem}\label{theorem:i_regret_zero}
    Assume that~\cref{assumption:regu2,assumption:L1,assumption:L2,assumption:esum,assumption:C9}  hold.
    Also assume that  $\tilde{\bm {z}}^{(n)}_{{\bm\epsilon},t} ({\bm x}^{(s+1)},\ldots,{\bm x}^{(n)} |{\bm y}) \in \tilde{\mathcal{Y}}^{(n)}$
    for any
    $s \in \{0,\ldots, N-1 \}$, $n \in \{s+1 ,\ldots, N \}$, iteration $t \geq 1$, realization ${\bm \epsilon}$, given $ {\bm y} \in \tilde{\mathcal{Y}}^{(s)} $ and input $ ({\bm x}^{(s+1)},\ldots,{\bm x}^{(n)}) $.
    Let $\delta \in (0,1)$ and $\xi >0$, and define $\beta_t $ by~\cref{eq:definition_beta_t} and $\eta_t = (1+ \log t)^{-1}$.
    Then, there exists a sequence $0= T_0 < T_1 < T_2 < \cdots $ such that $T_k \in N \mathbb{Z}_{ \geq 0} $ and
    \begin{equation}
        \mathbb{P} ( \exists t \in N \mathbb{Z}_{\geq 0} \ \text{s.t.} \ T_{k-1} \leq t \leq T_k , \ 2 C^2_{9,t} M_{\text{sum}} \epsilon_{\text{sum},t} < \xi^2 /2 ) > 1- \frac{ 6 \delta  }{\pi^2 k^2}. \label{eq:ep_sum_small_prob}
    \end{equation}
    Moreover, when the optimization is performed using~\cref{eq:seq_rule_zero}, the following inequality holds with probability at least $1- 2 \delta$:
    \begin{equation}
        F({\bm x}^{(1)}_{F,\ast}, \ldots ,{\bm x}^{(N)}_{F,\ast} ) -
        F(  \hat{\bm x}^{(1)}_{F,T_K}, \ldots , \hat{\bm x}^{(N)}_{F,T_K} )  < \xi,
    \end{equation}
    where $T_K$ is an element of the sequence $\{ T_k \}_{k=0}^\infty $  satisfying
    \begin{equation}
        \frac{4 C^2_{9,T_K} M^2_{\text{sum}}}{ \log (1+\sigma^{-2} ) } \tilde{\gamma} _{T_K} K^{-1} < \xi^2/2.
    \end{equation}
\end{theorem}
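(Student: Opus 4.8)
The plan is to prove the statement in two stages that mirror its two displayed conclusions: first construct the block sequence $\{T_k\}$ realizing \cref{eq:ep_sum_small_prob}, and then bound the regret by locating, among the resulting low-noise (``lucky'') iterations, one whose realized posterior variances are also small. Throughout I work on the event of \cref{cor:ci2}, which holds with probability at least $1-\delta$ and, via \cref{theorem:Nstage3} specialized to $\bm\epsilon=\bm 0$ (so that $z^{(N)}_{\bm 0,1}=F$), supplies the valid band $|F(\bm x^{(1:N)})-\tilde z^{(N)}_{\bm 0,1,t}(\bm x^{(1:N)})|\le\beta^{1/2}_t\tilde\sigma^{(N)}_{\bm 0,1,t}(\bm x^{(1:N)})$ on which $\widehat{\mr{UCB}}^{(F)}$ and $\widehat{\mr{LCB}}^{(F)}$ are built.

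For the first stage I would call iteration $t$ \emph{lucky} when $2C^2_{9,t}M_{\text{sum}}\epsilon_{\text{sum},t}<\xi^2/2$, i.e. $\epsilon_{\text{sum},t}<\xi^2/(4C^2_{9,t}M_{\text{sum}})$. By \cref{assumption:esum} the probability that $t$ fails to be lucky is at most $1-C'/C^2_{9,t}$ with $C'=C\xi^2/(4M_{\text{sum}})$; since the noise vectors are independent across iterations, the probability that a block $(T_{k-1},T_k]\cap N\mathbb Z_{\ge0}$ contains no lucky iteration is at most $\exp\!\big(-C'\sum_{T_{k-1}<t\le T_k}C^{-2}_{9,t}\big)$. \cref{assumption:C9} guarantees $\sum_t C^{-2}_{9,t}=\infty$, so given $T_{k-1}$ I can pick $T_k$ large enough that this bound is at most $6\delta/(\pi^2k^2)$, which defines $\{T_k\}$ recursively and yields \cref{eq:ep_sum_small_prob}. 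Because $\sum_{k\ge1}6\delta/(\pi^2k^2)=\delta$, a union bound shows that with probability at least $1-\delta$ \emph{every} block carries a lucky iteration.

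The second stage starts from the band: for any $\tilde t$ the UCB/LCB sandwich at $(\bm x^{(1)}_{F,\ast},\dots,\bm x^{(N)}_{F,\ast})$ gives $F(\bm x^{(1)}_{F,\ast},\dots)-F(\hat{\bm x}^{(1)}_{F,T_K},\dots)\le 2\beta^{1/2}_{\tilde t}\tilde\sigma^{(N)}_{\bm 0,1,\tilde t}\le 2\beta^{1/2}_{\tilde t}\eta^{-1}_{\tilde t}\,\hat c^{(1)}_{\tilde t}(\bm x^{(1)}_{\tilde t+1}\mid\bm 0)$ by the $\hat c$-analog of \cref{lem:c_bound_EX} and the selection rule. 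The core estimate bounds $\hat c^{(1)}_{\tilde t}$ by the \emph{actually realized} (noisy) posterior variances. The chain expansion of \cref{lem:next_EX} (taken at $\bm\epsilon=\bm 0$) decomposes $\hat b^{(1)}_{\tilde t}$ into stage variances $\sigma^{(n)}_{i,\tilde t}(\tilde z^{(n-1)}_{\bm 0,\tilde t},\bm x^{(n)})$ at the zero-noise propagated inputs; I then replace each by $\sigma^{(n)}_{i,\tilde t}(\bm y^{(n-1)}_{\tilde t+n-1},\bm x^{(n)})$ at the observed inputs using (L2), paying $L_\sigma\|\tilde z^{(n-1)}_{\bm 0,\tilde t}-\bm y^{(n-1)}_{\tilde t+n-1}\|_1$. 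Unrolling this discrepancy with (L1), the Lipschitz bound on $\mu^{(n)}_{m,t}$ from \cref{eq:ineq2}, and the band shows it is controlled by a geometric sum of stage variances plus $\sum_n\|\bm\epsilon^{(n)}_{\tilde t+n}\|_1\le\sqrt{M_{\text{sum}}\epsilon_{\text{sum},\tilde t}}$. Squaring, and using $(a+b)^2\le 2a^2+2b^2$ with Cauchy--Schwarz, yields $\big(F(\bm x_{F,\ast})-F(\hat{\bm x}_{F,T_K})\big)^2\le 2C^2_{9,\tilde t}M_{\text{sum}}\sum_{n,i}\sigma^{(n)2}_{i,\tilde t}(\bm y^{(n-1)}_{\tilde t+n-1},\bm x^{(n)}_{\tilde t+n})+2C^2_{9,\tilde t}M_{\text{sum}}\epsilon_{\text{sum},\tilde t}$, with the bookkeeping collapsed into $C_{9,t}$.

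Finally I would reconcile the two small quantities at a single iteration. Stage one furnishes $K$ lucky iterations $\tilde t_1,\dots,\tilde t_K$, one per block up to $T_K$; choosing $\tilde t$ as the lucky iteration with the smallest realized variance sum makes the second term above $<\xi^2/2$ by luckiness, while the information-gain bound (Lemmas~5.3--5.4 of~\cite{srinivas2010gaussian}, in the $\tilde\gamma$ form) caps the total realized variance over all iterations up to $T_K$ by $2M_{\text{sum}}\tilde\gamma_{T_K}/\log(1+\sigma^{-2})$, so the minimum over the $K$ lucky iterations is at most $K^{-1}$ times this. Since $C_{9,t}$ is non-decreasing in $t$, giving $C_{9,\tilde t}\le C_{9,T_K}$, the first term is at most $\tfrac{4C^2_{9,T_K}M^2_{\text{sum}}}{\log(1+\sigma^{-2})}\tilde\gamma_{T_K}K^{-1}<\xi^2/2$; adding the halves gives regret${}^2<\xi^2$, and intersecting the band event with the all-blocks-lucky event gives probability at least $1-2\delta$. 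I expect the main obstacle to be exactly the noise-correction step: propagating the realized noise through the nested predictions so that it separates cleanly into a $\sqrt{\epsilon_{\text{sum},\tilde t}}$ term with the precise constant matching the luckiness threshold, while arranging that the variance part is governed by an \emph{average} (hence $K^{-1}$) but the noise part by a single lucky iteration --- a tension resolved only by minimizing the variance over the pool of lucky iterations.
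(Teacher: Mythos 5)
Your proposal is correct and follows essentially the same route as the paper's proof: the block construction of $\{T_k\}$ via \cref{assumption:esum}, independence, $1+x\le e^x$, and \cref{assumption:C9} with the $6\delta/(\pi^2k^2)$ budget; the chain expansion of $\hat c^{(1)}_t$ into realized stage variances plus a noise term via the zero-noise analogues of \cref{lem:fms,lem:nextAAA,lem:next} and \cref{lem:c_bound_zero}; and the final reconciliation by minimizing over the $K$ lucky iterations so that luckiness kills the $\epsilon_{\text{sum}}$ half while the information-gain bound kills the variance half at rate $K^{-1}$. The only cosmetic difference is that the paper minimizes the full squared quantity $(2\beta^{1/2}_{\tilde T_k}\eta^{-1}_{\tilde T_k}\hat c^{(1)}_{\tilde T_k})^2$ over the lucky pool rather than the variance sum alone, which yields the same bound.
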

In order to prove~\cref{theorem:i_regret_zero}, we first give four lemmas.
\begin{lemma}\label{lem:fms_zero}
    Assume that the same condition as in~\cref{theorem:i_regret_zero} holds.
    Then, for any $s \in \{1,\ldots, N-1 \}$, $n \in \{ s+1, \ldots , N\}$, $m \in [M^{(n)}]$, iteration $t \geq 1$, realization $\bm\epsilon$ and input ${\bm x}^{(1)},\ldots,{\bm x}^{(N)}$, the following holds with probability at least $1-\delta$:
    \begin{align}
         & |\sigma^{(n)} _{m,t} ( \tilde{\bm  z}^{(n-1)}_{{\bm 0},t} ( {\bm x}^{(s) },\ldots, {\bm x}^{(n-1)} | {\bm z}^{(s-1)}_{\bm \epsilon}),{\bm x}^{(n)} )  -
        \sigma^{(n)} _{m,t} ( \tilde{\bm  z}^{(n-1)}_{{\bm 0},t} ( {\bm x}^{(s+1) },\ldots, {\bm x}^{(n-1)} | {\bm z}^{(s)}_{\bm\epsilon}),{\bm x}^{(n)} )
        |                                                                                                                                                          \\
         &
        \leq
        2    M_{\text{prod}} C^{N-1}_{0,t}  \sum_{p=0} ^{n-s-1}
        \sum_{i=1}^{   M^{(n-1-p)} } \sigma^{(n-1-p)} _{i,t} \Bigl(       \tilde{ \bm z}^{(n-2-p)}_{{\bm 0},t} ( {\bm x}^{(s) },\ldots, {\bm x}^{(n-2-p)} | {\bm z}_{\bm\epsilon}^{(s-1)}),{\bm x}^{(n-1-p)}\Bigr)
        \\
         & \quad + 2    M_{\text{prod}} C^{N-1}_{0,t} \sum_{q=1}^{M^{(s)}}  | \epsilon^{(s)}_q |.
    \end{align}
\end{lemma}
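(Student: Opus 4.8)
The plan is to mirror the proof of \cref{lem:fms} almost verbatim, replacing the noiseless predictive-mean trajectory $\tilde{\bm\mu}^{(\cdot)}_t$ by the zero-noise trajectory $\tilde{\bm z}^{(\cdot)}_{{\bm 0},t}$ and tracking one extra discrepancy that the observation noise injects at stage $s$. First I would invoke \cref{assumption:L2}: since $\sigma^{(n)}_{m,t}$ is $L_\sigma$-Lipschitz in the $L_1$ distance, the left-hand side is at most $L_\sigma$ times the $L_1$ distance between the two input trajectories $\tilde{\bm z}^{(n-1)}_{{\bm 0},t}(\bm x^{(s:n-1)}\mid \bm z^{(s-1)}_{\bm\epsilon})$ and $\tilde{\bm z}^{(n-1)}_{{\bm 0},t}(\bm x^{(s+1:n-1)}\mid \bm z^{(s)}_{\bm\epsilon})$. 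Both are generated by the same chain of predictive means, so their $L_1$ distance unrolls recursively exactly as in \cref{eq:ineq1,eq:ineq2}, with $\beta^{1/2}$ replaced by $\beta^{1/2}_t$ (legitimate with probability at least $1-\delta$ by \cref{cor:ci2}).

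The only genuinely new ingredient is the base case of the recursion. After one predictive-mean step the left trajectory reaches $\bm\mu^{(s)}_t(\bm z^{(s-1)}_{\bm\epsilon},\bm x^{(s)})$, while the right trajectory starts from $\bm z^{(s)}_{\bm\epsilon}=\bm f^{(s)}(\bm z^{(s-1)}_{\bm\epsilon},\bm x^{(s)})+\bm\epsilon^{(s)}$, so their discrepancy decomposes as
\begin{equation}
\|\bm\mu^{(s)}_t(\bm z^{(s-1)}_{\bm\epsilon},\bm x^{(s)})-\bm z^{(s)}_{\bm\epsilon}\|_1 \le \sum_{q=1}^{M^{(s)}} |\mu^{(s)}_{q,t}-f^{(s)}_q|(\bm z^{(s-1)}_{\bm\epsilon},\bm x^{(s)}) + \sum_{q=1}^{M^{(s)}} |\epsilon^{(s)}_q|,
\end{equation}
where the first sum is controlled by $\beta^{1/2}_t\sum_q \sigma^{(s)}_{q,t}(\bm z^{(s-1)}_{\bm\epsilon},\bm x^{(s)})$ via \cref{cor:ci2} and the second is precisely the noise term on the right-hand side of the claim. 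Propagating this decomposition through the same telescoping as in \cref{lem:fms} yields one family of terms of exactly the shape stated (the posterior-standard-deviation sums over the intermediate stages $n-1-p$) together with the noise sum $\sum_q|\epsilon^{(s)}_q|$ carried forward by the accumulated Lipschitz factor.

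Collecting constants is then routine: each recursion step multiplies by a factor bounded by $C_{0,t}=(1+L_\sigma)\beta^{1/2}_t+L_f+1$, the products $\prod M^{(\cdot)}$ are bounded by $M_{\text{prod}}$, and the crude counting of summands used in \cref{lem:fms} produces the common prefactor $2 M_{\text{prod}} C^{N-1}_{0,t}$ in front of both the standard-deviation double sum and the noise sum. The main obstacle I anticipate is not conceptual but bookkeeping: keeping the index ranges of $\sum_{p=0}^{n-s-1}\sum_{i=1}^{M^{(n-1-p)}}$ aligned with the unrolled recursion, and checking that the Lipschitz constant multiplying the noise term collapses to the same $2 M_{\text{prod}} C^{N-1}_{0,t}$ rather than a larger power of $C_{0,t}$. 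As in the noiseless case, the uniform validity of the credible intervals is what confines the statement to probability at least $1-\delta$.
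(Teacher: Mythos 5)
Your proposal matches the paper's proof essentially step for step: the paper likewise runs the telescoping argument of Lemma~\ref{lem:fms} with $\beta^{1/2}$ replaced by $\beta^{1/2}_t$, and the only new term arises at the base of the recursion, where the discrepancy $|\mu^{(s)}_{q,t}({\bm z}^{(s-1)}_{\bm\epsilon},{\bm x}^{(s)})-f^{(s)}_q({\bm z}^{(s-1)}_{\bm\epsilon},{\bm x}^{(s)})-\epsilon^{(s)}_q|$ is split by the triangle inequality into the credible-interval bound $\beta^{1/2}_t\sigma^{(s)}_{q,t}$ plus $|\epsilon^{(s)}_q|$, exactly as you describe. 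Your anticipated bookkeeping concern also resolves as the paper does it: the factor $2M_{\text{prod}}L_\sigma(L_f+\beta^{1/2}_tL_\sigma+1)^{N-2}$ on the noise sum is absorbed into the common prefactor $2M_{\text{prod}}C^{N-1}_{0,t}$.
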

\begin{proof}
    By using the same argument as in the proof of~\cref{lem:fms}, the following holds with probability at least $1-\delta$:
    \begin{align}
         & |\sigma^{(n)} _{m,t} ( \tilde{\bm  z}^{(n-1)}_{{\bm 0},t} ( {\bm x}^{(s) },\ldots, {\bm x}^{(n-1)} | {\bm z}_{\bm\epsilon}^{(s-1)}),{\bm x}^{(n)} )     -   \sigma^{(n)} _{m,t} ( \tilde{\bm  z}^{(n-1)}_{{\bm 0},t} ( {\bm x}^{(s+1) },\ldots, {\bm x}^{(n-1)} | {\bm z}_{\bm\epsilon}^{(s)}),{\bm x}^{(n)} )|                 \\
         & \leq 2 \beta^{1/2}_t  L_\sigma \sum_{j=1}^{M^{(n-1)} } \sigma^{(n-1)} _{j,t} \Bigl(    \tilde{ \bm z}^{(n-2)}_{{\bm 0},t} ( {\bm x}^{(s+1) },\ldots, {\bm x}^{(n-2)} | {\bm z}_{\bm\epsilon}^{(s)}),{\bm x}^{(n-1)}\Bigr)    + L_\sigma M^{(n-1)} (L_f + \beta^{1/2}_t L_\sigma )                                               \\
         & \quad  \cdot  \sum_{i=1} ^{M^{(n-2)}} \Bigl[ | \tilde{  z}^{(n-2)}_{{\bm 0},i,t} ( {\bm x}^{(s) },\ldots, {\bm x}^{(n-2)} | {\bm z}_{\bm\epsilon}^{(s-1)})   - \tilde{  z}^{(n-2)}_{{\bm 0},i,t} ( {\bm x}^{(s+1) },\ldots, {\bm x}^{(n-2)} | {\bm z}^{(s)}_{\bm\epsilon}) | \Bigr]                                             \\
         & \leq 2 \beta^{1/2}_t L_\sigma  \sum_{j=1}^{M^{(n-1)} } \sigma^{(n-1)} _{j,t} (   \tilde{ \bm z}^{(n-2)}_{{\bm 0},t} ( {\bm x}^{(s) },\ldots, {\bm x}^{(n-2)} | {\bm z}^{(s-1)}_{\bm\epsilon}),{\bm x}^{(n-1)})                                                                                                                  \\
         & \quad + 2 \beta^{1/2} _t L_\sigma M^{(n-1)} (L_f + \beta^{1/2}_t L_\sigma )    \sum_{i=1} ^{M^{(n-2)}}  \sigma^{(n-2)} _{i,t} (   \tilde{ \bm z}^{(n-3)}_{{\bm 0},t} ( {\bm x}^{(s) },\ldots, {\bm x}^{(n-3)} | {\bm z}_{\bm\epsilon}^{(s-1)}),{\bm x}^{(n-2)})                                                                 \\
         & \quad + L_\sigma M^{(n-1)} M^{(n-2)} (L_f +\beta^{1/2}_t L_\sigma )^2                                                                                                                                                                                                                                                           \\
         & \quad \cdot \sum_{q=1} ^{M^{(n-3)}}\Bigl[   | \tilde{  z}^{(n-3)}_{{\bm 0},q,t} ( {\bm x}^{(s) },\ldots, {\bm x}^{(n-3)} | {\bm z}_{\bm\epsilon}^{(s-1)})    - \tilde{  z}^{(n-3)}_{{\bm 0},q,t} ( {\bm x}^{(s+1) },\ldots, {\bm x}^{(n-3)} | {\bm z}_{\bm\epsilon}^{(s)}) |  \Bigr]                                            \\
         & \leq                                                                                                                                                                                                                                                                                                                            \\
         & \vdots                                                                                                                                                                                                                                                                                                                          \\
         & \leq 2 \beta^{1/2} _t L_\sigma   M_{\text{prod}}  (L_f + \beta^{1/2}_t L_\sigma +1)^{N-2}                                                                                                                                                                                                                                       \\
         & \quad  \cdot \sum_{p=0} ^{n-s-2}\sum_{i=1}^{   M^{(n-1-p)} } \Bigl[ \sigma^{(n-1-p)} _{i,t} (   \tilde{ \bm z}^{(n-2-p)}_{{\bm 0},t} ( {\bm x}^{(s) },\ldots, {\bm x}^{(n-2-p)} | {\bm z}^{(s-1)}_{\bm\epsilon}),{\bm x}^{(n-1-p)}) \Bigr]                                                                                      \\
         & \quad +    2 M_{\text{prod}}   L_\sigma (L_f +\beta^{1/2} _t L_\sigma +1)^{N-2}    \sum_{q=1} ^{M^{(s)}}\Bigl[     |    \mu^{(s)}_{q,t} (          {\bm z}^{(s-1)}_{\bm\epsilon} ,{\bm x}^{(s)} )  -     f^{(s)}_{q} (          {\bm z}^{(s-1)}_{\bm\epsilon} ,{\bm x}^{(s)} )                    -\epsilon^{(s)}_q |    \Bigr] \\
         & \leq     2    M_{\text{prod}}  (L_f + \beta^{1/2}_t L_\sigma +1)^{N-1}                                                                                                                                                                                                                                                          \\
         & \quad \cdot \sum_{p=0} ^{n-s-1} \sum_{i=1}^{   M^{(n-1-p)} }  \Bigl[    \sigma^{(n-1-p)} _{i,t} (   \tilde{ \bm z}^{(n-2-p)}_{{\bm 0},t} ( {\bm x}^{(s) },\ldots, {\bm x}^{(n-2-p)} | {\bm z}^{(s-1)}_{\bm\epsilon}),{\bm x}^{(n-1-p)})    \Bigr]                                                                               \\
         & \quad + 2    M_{\text{prod}}  (L_f + \beta^{1/2}_t L_\sigma +1)^{N-1} \sum_{q=1}^{M^{(s)}}  | \epsilon^{(s)}_q |                                                                                                                                                                                                                \\
         & =     2    M_{\text{prod}}  C_{0,t}^{N-1}  \sum_{p=0} ^{n-s-1}  \sum_{i=1}^{   M^{(n-1-p)} }  \Bigl[   \sigma^{(n-1-p)} _{i,t} (   \tilde{ \bm z}^{(n-2-p)}_{{\bm 0},t} ( {\bm x}^{(s) },\ldots, {\bm x}^{(n-2-p)} | {\bm z}^{(s-1)}_{\bm\epsilon}),{\bm x}^{(n-1-p)})      \Bigr]                                              \\
         & \quad + 2    M_{\text{prod}} C_{0,t}^{N-1} \sum_{q=1}^{M^{(s)}}  | \epsilon^{(s)}_q | .
    \end{align}
\end{proof}
\begin{lemma}\label{lem:nextAAA_zero}
    Assume that the same condition as in~\cref{theorem:i_regret_zero} holds.
    Then, for any $s \in \{1,\ldots, N-1 \}$, $j \geq 0$ with $s +j \leq N$,  iteration $t \geq 1$, realization $\bm\epsilon$ and input ${\bm x}^{(1)},\ldots,{\bm x}^{(N)}$, the following holds with probability at least $1-\delta$:
    \begin{align}
         & \tilde{\sigma}^{(N-j)}_{{\bm 0},t} ({\bm x} ^{(s)}, \ldots ,{\bm x}^{(N-j)} | {\bm z}^{(s-1)}_{\bm\epsilon} )           \\
         & \leq
        C_{2,t} \tilde{\sigma}^{(N-j)}_{{\bm 0},t} ({\bm x} ^{(s+1)}, \ldots ,{\bm x}^{(N-j)} | {\bm z}^{(s)}_{\bm\epsilon} )  +
        C_{2,t} \tilde{\sigma}^{(N-j-1)}_{{\bm 0},t} ({\bm x} ^{(s)}, \ldots ,{\bm x}^{(N-j-1)} | {\bm z}^{(s-1)}_{\bm \epsilon} ) \\
         & \quad  +
        C_{2,t} \sum _{i=1}^{M^{(s)} }
        \sigma^{(s)}_{i,t} (  {\bm z}^{(s-1)}_{\bm\epsilon}  ,{\bm x}^{(s)} )    + C_{2,t} \sum _{i=1}^{M^{(s)} }  |\epsilon ^{(s)}_i |,
    \end{align}
    where
    \begin{align}
         & \tilde{\sigma}^{(N-j)}_{{\bm 0},t} ({\bm x} ^{(s)}, \ldots ,{\bm x}^{(N-j)} | {\bm z}^{(s-1)} _{\bm\epsilon})                                                                                                                                                         \\
         & = \sum_{p=j}^{N-s}  \prod_{q=1}^p  M^{(N-q+1)}  L_f ^p  \sum _{i=1}^{M^{(N-p)} } \Bigl[          \sigma^{(N-p)}_{i,t} (   \tilde{\bm z}^{(N-p-1)}_{{\bm 0},t} ({\bm x}^{(s)} , \ldots , {\bm x}^{(N-p-1)} |{\bm z}^{(s-1)}_{\bm\epsilon} ) ,{\bm x}^{(N-p)} ) \Bigr].
    \end{align}
\end{lemma}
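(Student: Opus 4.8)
The plan is to mirror the proof of the noiseless analog~\cref{lem:nextAAA} essentially verbatim, substituting the noisy quantities $\tilde{\bm z}^{(n)}_{\bm 0,t}$ and $\tilde\sigma^{(n)}_{\bm 0,t}$ for the predictive mean and standard deviation $\tilde{\bm\mu}^{(n)}_t$ and $\tilde\sigma^{(n)}_t$, replacing the constants $C_0$ and $\tilde C_2$ by their iteration-dependent counterparts $C_{0,t}$ and $C_{2,t}$, and invoking the noisy Lipschitz estimate~\cref{lem:fms_zero} in place of~\cref{lem:fms}. Since~\cref{lem:fms_zero} holds with probability at least $1-\delta$ (it rests on the confidence bound of~\cref{cor:ci2}), the entire chain of inequalities holds on that same event, which is precisely the asserted probability.

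First I would start from the displayed closed form of $\tilde\sigma^{(N-j)}_{\bm 0,t}(\bm x^{(s)},\ldots,\bm x^{(N-j)}\mid \bm z^{(s-1)}_{\bm\epsilon})$ and bound the prefactor $\prod_{q=1}^p M^{(N-q+1)} L_f^p$ by $M_{\text{prod}} C_{0,t}^{N-1}$, leaving the single sum $\sum_{p=j}^{N-s}\sum_i \sigma^{(N-p)}_{i,t}(\cdots)$. I would then peel off the $p=N-s$ term, which equals $M_{\text{prod}}C_{0,t}^{N-1}\sum_{i}\sigma^{(s)}_{i,t}(\bm z^{(s-1)}_{\bm\epsilon},\bm x^{(s)})$, and for each remaining $p\le N-s-1$ rewrite the summand conditioned on $\bm z^{(s-1)}_{\bm\epsilon}$ as the corresponding summand conditioned on $\bm z^{(s)}_{\bm\epsilon}$ plus their difference.

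The key step is to control that difference with~\cref{lem:fms_zero}. This is where the noisy setting departs from the noiseless one: each application produces not only the deeper-stage double sum $2M_{\text{prod}}C_{0,t}^{N-1}\sum_{r}\sum_i \sigma^{(n-1-r)}_{i,t}(\cdots)$ but also an additive noise contribution $2M_{\text{prod}}C_{0,t}^{N-1}\sum_q |\epsilon^{(s)}_q|$. After substituting, I would set $v=p+r$ and use the counting bound $|\{(p,r):p+r=a\}|\le 2a\le 2N$ to collapse the double sum into a single sum that reconstitutes the structure of $\tilde\sigma^{(N-j-1)}_{\bm 0,t}(\bm x^{(s)},\ldots,\bm x^{(N-j-1)}\mid \bm z^{(s-1)}_{\bm\epsilon})$; the accumulated constants combine into $C_{2,t}=4N M^2_{\text{prod}}M_{\text{sum}} C^{2N-2}_{0,t} C^N_1$, matching the $\tilde C_2$ of the noiseless case under the substitution $C_0\mapsto C_{0,t}$. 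Collecting the four resulting pieces — the variance sum conditioned on $\bm z^{(s)}_{\bm\epsilon}$, the deeper-stage term conditioned on $\bm z^{(s-1)}_{\bm\epsilon}$, the $\sigma^{(s)}$ term, and the noise term — yields the stated inequality.

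The main obstacle I anticipate is purely bookkeeping: verifying that the noise contributions $\sum_q|\epsilon^{(s)}_q|$ generated at each of the finitely many applications of~\cref{lem:fms_zero} aggregate so that a \emph{single} factor of $C_{2,t}$ multiplies both the surviving variance sums and the total noise term $\sum_i|\epsilon^{(s)}_i|$, rather than inflating the combinatorial constant. Because the noise enters additively and linearly at each stage and the collapse $v=p+r$ is identical to the noiseless argument, the same $C_{2,t}$ should absorb it; but the indices must be tracked carefully to confirm that no spurious extra power of $N$ or of $M_{\text{prod}}$, $M_{\text{sum}}$ appears.
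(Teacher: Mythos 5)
Your proposal is correct and follows essentially the same route as the paper's proof: the paper likewise starts from the closed-form definition, bounds the prefactor by $M_{\text{prod}} C_{0,t}^{N-1}$, peels off the $p=N-s$ term, expresses the remaining summands as their $\bm z^{(s)}_{\bm\epsilon}$-conditioned versions plus a difference controlled by~\cref{lem:fms_zero} (which is exactly where the extra $\sum_i|\epsilon^{(s)}_i|$ term enters), and then invokes the same $v=p+r$ collapse as in~\cref{lem:nextAAA}. Your anticipated bookkeeping concern is resolved exactly as you expect: the noise contribution is absorbed into the same constant $C_{2,t}$ without inflating it.
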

\begin{proof}
    From the definition of $\tilde{\sigma}^{(N-j)}_{{\bm 0},t} ({\bm x} ^{(s)}, \ldots ,{\bm x}^{(N-j)} | {\bm z}^{(s-1)}_{\bm\epsilon} ) $,
    the following inequality holds with probability at least $1-\delta$:
    \begin{align}
         & \tilde{\sigma}^{(N-j)}_{{\bm 0},t} ({\bm x} ^{(s)}, \ldots ,{\bm x}^{(N-j)} | {\bm z}^{(s-1)}_{\bm\epsilon} )                                                                                    \\
         & = \sum_{p=j}^{N-s}  \prod_{q=1}^p  M^{(N-q+1)}  L_f ^p
        \sum _{i=1}^{M^{(N-p)} } \Bigl[   \sigma^{(N-p)}_{i,t} (   \tilde{\bm z}^{(N-p-1)}_{{\bm 0},t} ({\bm x}^{(s)} , \ldots , {\bm x}^{(N-p-1)} |{\bm z}^{(s-1)}_{\bm\epsilon} ) ,{\bm x}^{(N-p)} ) \Bigr]
        \\
         & \leq
        M_{\text{prod}} C^{N-1}_0 \sum_{p=j}^{N-s}
        \sum _{i=1}^{M^{(N-p)} }\Bigl[  \sigma^{(N-p)}_{i,t} (   \tilde{\bm z}^{(N-p-1)}_{{\bm 0},t} ({\bm x}^{(s)} , \ldots , {\bm x}^{(N-p-1)} |{\bm z}^{(s-1)} _{\bm\epsilon}) ,{\bm x}^{(N-p)} )\Bigr]  \\
         & =
        M_{\text{prod}} C^{N-1}_0 \sum_{p=j}^{N-s-1}
        \sum _{i=1}^{M^{(N-p)} } \Bigl[  \sigma^{(N-p)}_{i,t} (   \tilde{\bm z}^{(N-p-1)}_{{\bm 0},t} ({\bm x}^{(s+1)} , \ldots , {\bm x}^{(N-p-1)} |{\bm z}^{(s)}_{\bm\epsilon} ) ,{\bm x}^{(N-p)} )\Bigr] \\
         & \quad  + M_{\text{prod}} C^{N-1}_0 \sum_{p=j}^{N-s-1}
        \sum _{i=1}^{M^{(N-p)} }\Bigl[   \left ( \sigma^{(N-p)}_{i,t} \Bigl( \tilde{\bm z}^{(N-p-1)}_{{\bm 0},t} ({\bm x}^{(s)} , \ldots , {\bm x}^{(N-p-1)} |{\bm z}^{(s-1)}_{\bm\epsilon} ) ,{\bm x}^{(N-p)} \Bigr)\Bigr]
        \right .                                                                                                                                                                                            \\
         & \quad  \left.   -
        \sigma^{(N-p)}_{i,t} (   \tilde{\bm z}^{(N-p-1)}_{{\bm 0},t} ({\bm x}^{(s+1)} , \ldots , {\bm x}^{(N-p-1)} |{\bm z}^{(s)}_{\bm\epsilon} ) ,{\bm x}^{(N-p)} )
        \right )     +
        M_{\text{prod}} C^{N-1}_0
        \sum _{i=1}^{M^{(s)} }
        \sigma^{(s)}_{i,t} (  {\bm z}^{(s-1)}_{\bm\epsilon}  ,{\bm x}^{(s)} ) .
    \end{align}
    Hence, from~\cref{lem:fms_zero}, it follows that
    \begin{align}
         & |\sigma^{(N-p)} _{i,t} ( \tilde{\bm  z}^{(N-p-1)}_{{\bm 0},t} ( {\bm x}^{(s:N-p-1) } | {\bm z}_{\bm\epsilon}^{(s-1)}),{\bm x}^{(N-p)} )    -         \sigma^{(N-p)} _{i,t} ( \tilde{\bm  z}^{(N-p-1)}_{{\bm 0},t} ( {\bm x}^{(s+1:N-p-1) }| {\bm z}_{\bm\epsilon}^{(s)}),{\bm x}^{(N-p)} ) | \\
         &
        \leq 2 M_{\text{prod}} C^{N-1}_0  \sum_{r=0} ^{N-p-s-1}
        \sum_{j=1}^{   M^{(N-p-1-r)} } \Bigl[  \sigma^{(N-p-1-r)} _{j,t} \Bigl(       \tilde{ \bm z}^{(N-p-2-r)}_{{\bm 0},t} ( {\bm x}^{(s:N-p-2-r) } | {\bm z}_{\bm\epsilon}^{(s-1)}),{\bm x}^{(N-p-1-r)}\Bigr) \Bigr]                                                                                 \\
         & \quad +2 M_{\text{prod}} C^{N-1}_0  \sum_{q=1}^{M^{(s)} }  |\epsilon^{(s)}_q |.
    \end{align}
    Hence, using the same argument as in the proof of~\cref{lem:nextAAA}, we have the desired result.
\end{proof}
\begin{lemma}\label{lem:next_zero}
    Assume that the same condition as in~\cref{theorem:i_regret_zero} holds.
    Then, for any $s \in \{1,\ldots, N-1 \}$,  iteration $t \geq 1$, realization $\bm\epsilon$ and input ${\bm x}^{(1)},\ldots,{\bm x}^{(N)}$, the following holds with probability at least $1-\delta$:
    \begin{align}
         & \tilde{\sigma}^{(N)}_{{\bm 0},1,t} ({\bm x} ^{(s)}, \ldots ,{\bm x}^{(N)} | {\bm z}_{\bm\epsilon}^{(s-1)} )                      \\
         & \quad   \leq C_{3,t} \tilde{\sigma}^{(N)}_{{\bm 0},1,t} ({\bm x} ^{(s+1)}, \ldots ,{\bm x}^{(N)} | {\bm z}^{(s)}_{\bm\epsilon} )
        +C_{3,t} \sum_{i=1} ^{M^{(s)} } \sigma^{(s)} _{i,t}  ({\bm z}^{(s-1)}_{\bm\epsilon} ,{\bm x}^{(s)} )
        +C_{3,t} \sum_{i=1} ^{M^{(s)} } | \epsilon^{(s)}_i |
        .
    \end{align}
\end{lemma}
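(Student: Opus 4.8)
The plan is to mirror the proof of \cref{lem:next} almost verbatim, replacing \cref{lem:nextAAA} by its noisy counterpart \cref{lem:nextAAA_zero} (which has just been established), and carrying the extra additive noise sums $\sum_{i=1}^{M^{(s)}} |\epsilon^{(s)}_i|$ through the recursion. The key observation is that \cref{lem:nextAAA_zero} places this noise sum in exactly the same slot as the stage-$s$ posterior-standard-deviation sum $\sum_{i} \sigma^{(s)}_{i,t}(\bm z^{(s-1)}_{\bm\epsilon}, \bm x^{(s)})$, so the two quantities accumulate with identical coefficients as the recursion unfolds. This is precisely why the target inequality carries a $C_{3,t}\sum_i |\epsilon^{(s)}_i|$ term alongside the $C_{3,t}\sum_i \sigma^{(s)}_{i,t}$ term.

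Concretely, I would start from $\tilde{\sigma}^{(N)}_{\bm 0,1,t}(\bm x^{(s:N)} \mid \bm z^{(s-1)}_{\bm\epsilon}) = \tilde{\sigma}^{(N-0)}_{\bm 0,t}(\bm x^{(s:N-0)} \mid \bm z^{(s-1)}_{\bm\epsilon})$ and apply \cref{lem:nextAAA_zero} with $j=0$. This bounds it by $C_{2,t}$ times a shifted term $\tilde{\sigma}^{(N)}_{\bm 0,t}(\bm x^{(s+1:N)} \mid \bm z^{(s)}_{\bm\epsilon})$, plus $C_{2,t}$ times the stage-$s$ posterior-std sum, plus $C_{2,t}$ times the noise sum, plus $C_{2,t}$ times the lower-index residual $\tilde{\sigma}^{(N-1)}_{\bm 0,t}(\bm x^{(s:N-1)} \mid \bm z^{(s-1)}_{\bm\epsilon})$. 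I would then iterate \cref{lem:nextAAA_zero} on this last residual, each step incrementing the dropped index $j$ and multiplying the accumulated factor by $C_{2,t}$, until the recursion terminates after $N-1$ steps. Exactly as in \cref{lem:next}, the shifted terms, the stage-$s$ posterior-std sums, and the noise sums each pick up the geometric coefficient $C_{2,t} + C_{2,t}^2 + \cdots + C_{2,t}^{N-1}$.

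To finish, I would bound this geometric sum by $(N-1) C_{2,t}^{N-1} \le N C_{2,t}^N = C_{3,t}$, using $C_{2,t} \ge 1$. Here the noisy case is in fact cleaner than the noiseless one: $C_{2,t}$ already carries the exponent $2N-2$ on $C_{0,t}$ (matching the $\tilde{C}_2$ of the recursion), so the explicit regrouping of $C_{0,t}$ powers performed in \cref{lem:next} is unnecessary, and the definition $C_{3,t} = N C_{2,t}^N$ is reached directly. This produces the three terms of the stated bound with the common constant $C_{3,t}$.

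As for probability, every inequality invoked rests ultimately on the single event of \cref{cor:ci2} that the credible intervals are simultaneously valid for all stages, elements, and iterations, which holds with probability at least $1-\delta$; since the recursion reuses this one event rather than drawing fresh randomness at each step, no union bound is required and the overall probability remains $1-\delta$. The only point demanding care — and the main obstacle, such as it is — is confirming that the noise sum propagates with the same coefficient as the posterior-std sum at \emph{every} level of the recursion, but this is immediate from the structural parallelism of the two terms in \cref{lem:nextAAA_zero}.
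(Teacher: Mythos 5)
Your proposal is correct and follows essentially the same route as the paper: repeatedly apply \cref{lem:nextAAA_zero} starting from $j=0$, carry the shifted term, the stage-$s$ posterior-standard-deviation sum, and the noise sum through the recursion with accumulating geometric coefficients $C_{2,t}+C_{2,t}^2+\cdots$, and bound that sum by $NC_{2,t}^N=C_{3,t}$. Your observations that the noise sum propagates in lockstep with the posterior-std sum, that the $C_0$-power regrouping from \cref{lem:next} is unnecessary here, and that the whole chain lives on the single high-probability event of \cref{cor:ci2} all match the paper's (terser) argument.
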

\begin{proof}
    By repeating~\cref{lem:nextAAA_zero}, the following holds with probability at least $1-\delta$:
    \begin{align}
         & \tilde{\sigma}^{(N)}_{{\bm 0},1,t} ({\bm x} ^{(s)}, \ldots ,{\bm x}^{(N)} | {\bm z}^{(s-1)}_{\bm\epsilon} )                                             \\
         & =
        \tilde{\sigma}^{(N-0)}_{{\bm 0},t} ({\bm x} ^{(s)}, \ldots ,{\bm x}^{(N-0)} | {\bm z}^{(s-1)}_{\bm \epsilon} )                                             \\
         & \leq
        C_{2,t} \tilde{\sigma}^{(N-0)}_{{\bm 0},t} ({\bm x} ^{(s+1)}, \ldots ,{\bm x}^{(N-0)} | {\bm z}^{(s)}_{\bm\epsilon} )   +
        C_{2,t} \sum _{i=1}^{M^{(s)} }
        \sigma^{(s)}_{i,t} (  {\bm z}^{(s-1)}_{\bm\epsilon}  ,{\bm x}^{(s)} ) +C_{2,t}  \sum _{i=1}^{M^{(s)}} |\epsilon^{(s)}_i |                                  \\
         & \quad +C_{2,t} \tilde{\sigma}^{(N-1)}_{{\bm 0},t} ({\bm x} ^{(s)}, \ldots ,{\bm x}^{(N-1)} | {\bm z}^{(s-1)}_{\bm\epsilon} )                            \\
         & \leq
        C_{2,t} \tilde{\sigma}^{(N-0)}_{{\bm 0},t} ({\bm x} ^{(s+1)}, \ldots ,{\bm x}^{(N-0)} | {\bm z}^{(s)}_{\bm \epsilon} ) +
        C_{2,t} \sum _{i=1}^{M^{(s)} }
        \sigma^{(s)}_{i,t} (  {\bm z}^{(s-1)}_{\bm\epsilon}  ,{\bm x}^{(s)} )    +C_{2,t}  \sum _{i=1}^{M^{(s)}} |\epsilon^{(s)}_i |                               \\
         & \quad +
        C^2_{2,t} \tilde{\sigma}^{(N-1)}_{{\bm 0},t} ({\bm x} ^{(s+1)}, \ldots ,{\bm x}^{(N-1)} | {\bm z}^{(s)} _{\bm\epsilon})     +
        C^2_{2,t} \sum _{i=1}^{M^{(s)} }
        \sigma^{(s)}_{i,t} (  {\bm z}^{(s-1)}_{\bm\epsilon}  ,{\bm x}^{(s)} )      +C^2_{2,t}  \sum _{i=1}^{M^{(s)}} |\epsilon^{(s)}_i |                           \\
         & \quad  +C^2_{2,t} \tilde{\sigma}^{(N-2)}_{{\bm 0},t} ({\bm x} ^{(s)}, \ldots ,{\bm x}^{(N-2)} | {\bm z}^{(s-1)}_{\bm\epsilon} )                         \\
         & \leq                                                                                                                                                    \\
         & \vdots                                                                                                                                                  \\
         & \leq
        (C_{2,t} + C^2_{2,t} + \cdots + C^N_{2,t} ) \tilde{\sigma}^{(N-0)}_{{\bm 0},t} ({\bm x} ^{(s+1)}, \ldots ,{\bm x}^{(N-0)} | {\bm z}^{(s)}_{\bm \epsilon} ) \\
         & \quad +(C_{2,t} + C^2_{2,t} + \cdots + C^N_{2,t} )  \sum _{i=1}^{M^{(s)} }
        \sigma^{(s)}_{i,t} (  {\bm z}^{(s-1)}_{\bm \epsilon}   ,{\bm x}^{(s)} )  +(C_{2,t} + C^2_{2,t} + \cdots + C^N_{2,t} )  \sum _{i=1}^{M^{(s)} }
        | \epsilon^{(s)}_i |
        \\
         & \leq
        N C^N_{2,t}  \tilde{\sigma}^{(N)}_{{\bm 0},1,t} ({\bm x} ^{(s+1)}, \ldots ,{\bm x}^{(N)} | {\bm z}^{(s)}_{\bm\epsilon} )     + N C^N_{2,t}  \sum _{i=1}^{M^{(s)} }
        \sigma^{(s)}_{i,t} (  {\bm z}^{(s-1)}_{\bm\epsilon}  ,{\bm x}^{(s)} )  + N C^N_{2,t}  \sum _{i=1}^{M^{(s)} }
        |\epsilon^{(s)}_i |.
    \end{align}
\end{proof}
\begin{lemma}\label{lem:c_bound_zero}
    Assume that the same condition as in~\cref{theorem:i_regret_zero} holds.
    Then, for any $n \in [ N]$,  iteration $t \geq 1$,  ${\bm y}^{(n-1)} \in \tilde{\mathcal{Y} }^{(n-1)} $ and input ${\bm x}^{(n)}  \in \mathcal{X}^{(n)}  $, the following holds:
    \begin{equation}
        \eta_t \hat{b}^{(n)}_t ({\bm x}^{(n)} |{\bm y}^{(n-1)} )  \leq \hat{c}^{(n)}_t ({\bm x}^{(n)} |{\bm y}^{(n-1)} )      \leq (2 \beta^{1/2} _t + \eta_t ) \hat{b}^{(n)}_t ({\bm x}^{(n)} |{\bm y}^{(n-1)} ).
    \end{equation}
\end{lemma}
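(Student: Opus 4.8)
The plan is to follow the argument of \cref{lem:c_bound} essentially verbatim, since the present statement differs only in that all predictive quantities are evaluated at the zero realization ${\bm 0}$ and the trade-off parameter is the time-varying $\beta_t$ rather than a fixed $\beta$; neither change affects the structure of the inequalities. First I would dispose of the lower bound $\eta_t \hat{b}^{(n)}_t (\bx\stg{n} | \by\stg{n-1}) \leq \hat{c}^{(n)}_t (\bx\stg{n} | \by\stg{n-1})$, which is immediate from the definition \cref{eq:af_seq_zero} of $\hat{c}^{(n)}_t$ as the maximum of $\hat{a}^{(n)}_t$ and $\eta_t \hat{b}^{(n)}_t$.

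For the upper bound I would fix the inner maximizer $(\tilde{\bx}\stg{n+1}, \ldots, \tilde{\bx}\stg{N})$ attaining $\widehat{\mr{UCB}}^{(F)}_t(\bx\stg{n}|\by\stg{n-1})$ in \cref{eq:UCB_y_zero}, so that $\widehat{\mr{UCB}}^{(F)}_t$ equals $\tilde{z}^{(N)}_{{\bm 0},1,t} + \beta^{1/2}_t \tilde{\sigma}^{(N)}_{{\bm 0},1,t}$ evaluated at $(\bx\stg{n}, \tilde{\bx}\stg{n+1}, \ldots, \tilde{\bx}\stg{N}|\by\stg{n-1})$. Because the maximization defining $\widehat{\mr{LCB}}^{(F)}_t(\by\stg{n-1})$ in \cref{eq:LCB_y_zero} ranges over all $(\bx\stg{n}, \ldots, \bx\stg{N})$ and hence includes this same point, $\widehat{\mr{LCB}}^{(F)}_t \geq \tilde{z}^{(N)}_{{\bm 0},1,t} - \beta^{1/2}_t \tilde{\sigma}^{(N)}_{{\bm 0},1,t}$ at that point. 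Subtracting, and using $\max\{\widehat{\mr{LCB}}^{(F)}_t, \hat{Q}_{t+n-1}\} \geq \widehat{\mr{LCB}}^{(F)}_t$ in the definition \cref{eq:improve_y_zero} of $\hat{a}^{(n)}_t$, I obtain $\hat{a}^{(n)}_t \leq \widehat{\mr{UCB}}^{(F)}_t - \widehat{\mr{LCB}}^{(F)}_t \leq 2\beta^{1/2}_t \tilde{\sigma}^{(N)}_{{\bm 0},1,t}(\bx\stg{n}, \tilde{\bx}\stg{n+1}, \ldots, \tilde{\bx}\stg{N} | \by\stg{n-1})$. A single evaluation of $\tilde{\sigma}^{(N)}_{{\bm 0},1,t}$ never exceeds its maximum over $(\bx\stg{n+1}, \ldots, \bx\stg{N})$, which by \cref{eq:US_y_zero} is exactly $\hat{b}^{(n)}_t$, so $\hat{a}^{(n)}_t \leq 2\beta^{1/2}_t \hat{b}^{(n)}_t$. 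Feeding this together with the trivial bound $\eta_t \hat{b}^{(n)}_t \le \eta_t \hat{b}^{(n)}_t$ into the definition of $\hat{c}^{(n)}_t$ gives $\hat{c}^{(n)}_t = \max\{\hat{a}^{(n)}_t, \eta_t \hat{b}^{(n)}_t\} \leq (2\beta^{1/2}_t + \eta_t)\hat{b}^{(n)}_t$, as claimed.

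I do not anticipate a genuine obstacle, since the entire argument reduces to the elementary observation that the UCB and LCB share a common inner maximizer and differ there only by $2\beta^{1/2}_t \tilde{\sigma}^{(N)}_{{\bm 0},1,t}$. The single point deserving care is that $\widehat{\mr{LCB}}^{(F)}_t(\by\stg{n-1})$ maximizes over $\bx\stg{n}$ as well, whereas $\widehat{\mr{UCB}}^{(F)}_t(\bx\stg{n}|\by\stg{n-1})$ holds $\bx\stg{n}$ fixed; this is harmless, because enlarging the feasible set of the LCB maximization only increases its value, so the required lower bound on $\widehat{\mr{LCB}}^{(F)}_t$ at the fixed $\bx\stg{n}$ still holds. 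The presence of $\hat{Q}_{t+n-1}$ in the second term of $\hat{a}^{(n)}_t$ is likewise benign, as taking a maximum with it only makes the subtracted quantity larger and hence $\hat{a}^{(n)}_t$ smaller.
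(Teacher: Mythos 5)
Your proposal is correct and follows essentially the same argument as the paper, which proves this lemma by repeating verbatim the proof of \cref{lem:c_bound}: fix the inner maximizer of the UCB, lower-bound the LCB at that same point, conclude $\hat{a}^{(n)}_t \leq 2\beta^{1/2}_t \hat{b}^{(n)}_t$, and combine with the definition of $\hat{c}^{(n)}_t$ as a maximum. The two points you flag as deserving care (the LCB maximizing over ${\bm x}^{(n)}$ as well, and the presence of $\hat{Q}_{t+n-1}$) are handled exactly as in the paper.
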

\begin{proof}
    By using the same argument as in the proof of~\cref{lem:c_bound},  we get~\cref{lem:c_bound_zero}.
\end{proof}
Using these lemmas we prove~\cref{theorem:i_regret_zero}.
\begin{proof}
    Let $t \in N \mathbb{Z}_{\geq 0} $.
    Then, from~\cref{lem:c_bound_zero}, ${\bm x}^{(1)}_{t+1} $ satisfies that
    \begin{equation}
        \hat{c}^{(1)}_t ({\bm x}^{(1)}_{t+1} | {\bm 0} ) \leq (2 \beta ^{1/2}_t + \eta _t ) \hat{b}^{(1)}_t ({\bm x}^{(1)}_{t+1} | {\bm 0} )
        =
        (2 \beta ^{1/2} _t + \eta _t ) \tilde{\sigma}^{(N)}_{{\bm 0},1,t} ({\bm x}^{(1)}_{t+1}, \tilde{\bm x}^{(2)} ,\ldots , \tilde{\bm x}^{(N)} | {\bm 0} ).
        \label{eq:ineq001_zero}
    \end{equation}
    In addition, from~\cref{eq:ineq001_zero} and~\cref{lem:next_zero}, using the same argument as in the proof of~\cref{theorem:i_regret}, with probability at least $1-\delta$, $\hat{c}^{(1)}_t ({\bm x}^{(1)}_{t+1} | {\bm 0} )$ can be bounded as follows:
    \begin{align}
        \hat{c}^{(1)}_t ({\bm x}^{(1)}_{t+1} | {\bm 0} ) & \leq
        (2 \beta ^{1/2}_t + \eta _t ) C_{3,t} \sum_{i=1} ^{M^{(1)} } \sigma^{(1)}_{{\bm 0},i,t} ({\bm 0},{\bm x}^{(1)}_{t+1} )   +
        (2 \beta ^{1/2}_t + \eta _t ) C_{3,t} \sum_{i=1} ^{M^{(1)} } |\epsilon^{(1)}_i |
        \\
                                                         & \qquad  +
        (2 \beta ^{1/2} _t+ \eta _t )C_{3,t}  \tilde{\sigma}^{(N)}_{{\bm 0},1,t} (\tilde{\bm x}^{(2)} ,\ldots , \tilde{\bm x}^{(N)} |{\bm y}^{(1)}_{t+1} ) \\
                                                         & \leq
        (2 \beta ^{1/2}_t + \eta _t ) C_{3,t} \sum_{i=1} ^{M^{(1)} } \sigma^{(1)}_{{\bm 0},i,t} ({\bm 0},{\bm x}^{(1)}_{t+1} )  +
        (2 \beta ^{1/2}_t + \eta _t ) C_{3,t} \sum_{i=1} ^{M^{(1)} } |\epsilon^{(1)}_i |
        \\
                                                         & \qquad+
        (2 \beta ^{1/2} _t+ \eta _t )^2 C_{3,t} \eta^{-1}_t  \tilde{\sigma}^{(N)}_{{\bm 0},1,t} ({\bm x}^{(2)}_{t+2} ,\ldots , \tilde{\bm x}^{(N)} |{\bm y}^{(1)}_{t+1} ).
    \end{align}
    By using~\cref{lem:next_zero} again, it follows that
    \begin{align}
        \hat{c}^{(1)}_t ({\bm x}^{(1)}_{t+1} | {\bm 0} ) & \leq
        (2 \beta^{1/2} _t+ \eta_t +1 ) ^N C^N_{3,t} \eta^{-N}_t  \sum_{n=1}^{N} \sum_{i=1} ^{M^{(n)} } \sigma^{(n)}_{{\bm 0},i,t} ({\bm y}^{(n-1)},{\bm x}^{(n)}_{t+n} ) \\
                                                         & \qquad +
        (2 \beta^{1/2} _t+ \eta_t +1 ) ^N C^N_{3,t} \eta^{-N}_t  \sum_{n=1}^{N} \sum_{i=1} ^{M^{(n)} }|\epsilon^{(n)}_i |                                                \\
                                                         & \leq
        (2 \beta^{1/2} _t+ 1 +1 ) ^N C^N_{3,t} \eta^{-N}_t  \sum_{n=1}^{N} \sum_{i=1} ^{M^{(n)} } \sigma^{(n)}_{{\bm 0},i,t} ({\bm y}^{(n-1)},{\bm x}^{(n)}_{t+n} )      \\
                                                         & \qquad+
        (2 \beta^{1/2} _t+ 1 +1 ) ^N C^N_{3,t} \eta^{-N}_t  \sum_{n=1}^{N} \sum_{i=1} ^{M^{(n)} }|\epsilon^{(n)}_i | .
    \end{align}
    Thus, multiplying both sides by $2 \beta^{1/2}_t \eta^{-1}_t $, we get
    \begin{align}
        2 \beta^{1/2}_t \eta^{-1}_t \hat{c}^{(1)}_t ({\bm x}^{(1)}_{t+1} | {\bm 0} ) & \leq
        2 \beta^{1/2}_t \eta^{-1}_t
        (2 \beta^{1/2} _t+ 2 ) ^N C^N_{3,t} \eta^{-N}_t  \sum_{n=1}^{N} \sum_{i=1} ^{M^{(n)} } \sigma^{(n)}_{{\bm 0},i,t} ({\bm y}^{(n-1)},{\bm x}^{(n)}_{t+n} ) \\
                                                                                     & \qquad +
        2 \beta^{1/2}_t \eta^{-1}_t
        (2 \beta^{1/2} _t+ 2) ^N C^N_{3,t} \eta^{-N}_t  \sum_{n=1}^{N} \sum_{i=1} ^{M^{(n)} }|\epsilon^{(n)}_i |                                                 \\
                                                                                     &
        =
        C_{9,t}  \sum_{n=1}^{N} \sum_{i=1} ^{M^{(n)} } \sigma^{(n)}_{{\bm 0},i,t} ({\bm y}^{(n-1)},{\bm x}^{(n)}_{t+n} ) +
        C_{9,t}  \sum_{n=1}^{N} \sum_{i=1} ^{M^{(n)} }|\epsilon^{(n)}_i |.
    \end{align}
    Here, using  $(a+b)^2 \leq 2 a^2+2b^2$ and the Cauchy--Schwarz inequality, we obtain
    \begin{equation}
        (2 \beta^{1/2}_t \eta^{-1}_t \hat{c}^{(1)}_t ({\bm x}^{(1)}_{t+1} | {\bm 0} ))^2       \leq
        2 C^2_{9,t} M_{\text{sum}} \sum_{n=1}^{N} \sum_{i=1} ^{M^{(n)} } \sigma^{(n)2}_{{\bm 0},i,t} ({\bm y}^{(n-1)},{\bm x}^{(n)}_{t+n} ) +
        2 C^2_{9,t} M_{\text{sum}} \epsilon_{ \text{sum}} . \label{eq:ep_sum_bound}
    \end{equation}

    Next, we show the existence of the sequence $T_0 < T_1 < \cdots $ satisfying
    \begin{equation}
        \mathbb{P} ( \exists t \in N \mathbb{Z}_{\geq 0} \ \text{s.t.} \ T_{k-1} \leq t \leq T_k , \ 2 C^2_{9,t} M_{\text{sum}} \epsilon_{\text{sum},t} < \xi^2 /2 )
        > 1- \frac{ 6 \delta  }{\pi^2 k^2}.
    \end{equation}
    From~\cref{assumption:esum}, we have
    \begin{equation}
        \mathbb{P} ( 2 C^2_{9,t} M_{\text{sum}} \epsilon_{\text{sum},t} < \xi^2 /2 )
        =
        \mathbb{P} (   \epsilon_{\text{sum},t} < M^{-1}_{\text{sum}} C^{-2} _{9,t} \xi^2 /4 )
        >
        \frac{C \xi^2}{ 4 M_{\text{sum}} C^2_{9,t}}.
    \end{equation}
    This implies that
    \begin{equation}
        1- \mathbb{P} ( 2 C^2_{9,t} M_{\text{sum}} \epsilon_{\text{sum},t} < \xi^2 /2 ) \leq 1- \frac{C \xi^2}{ 4 M_{\text{sum}} C^2_{9,t}}.
    \end{equation}
    Therefore, using $1+x \leq e^x$ we get
    \begin{align}
        \prod_{q=t}^{t^\prime}  ( 1- \mathbb{P} ( 2 C^2_{9,q} M_{\text{sum}} \epsilon_{\text{sum},q} < \xi^2 /2 )  )
         & \leq
        \prod_{q=t}^{t^\prime} \left (  1- \frac{C \xi^2}{ 4 M_{\text{sum}} C^2_{9,q}} \right )     \\
         & \leq
        \prod_{q=t}^{t^\prime}  \exp \left ( - \frac{C \xi^2}{ 4 M_{\text{sum}} C^2_{9,q}} \right ) \\
         & =
        \exp \left ( - \frac{C \xi^2}{ 4 M_{\text{sum}} } \sum_{q=t}^{t^\prime}   C^{-2}_{9,q} \right ). \label{eq:p_bound_c9}
    \end{align}
    Moreover, from~\cref{assumption:C9},
    the right hand side of~\cref{eq:p_bound_c9} tends to zero when
    $t^\prime \to \infty $.
    Thus, we can construct the sequence $T_1,T_2 , \ldots $ satisfying~\cref{eq:ep_sum_small_prob}.
    Then, with probability at least  $1-\delta$, the following holds:
    \begin{equation}
        \forall k \in \mathbb{N},\ \exists \tilde{T}_k \quad \mr{s.t.} \quad T_{k-1} \leq \tilde{T}_k \leq T_k , \ 2 C^2_{9,\tilde{T}_k} M_{\text{sum}} \epsilon_{\text{sum},\tilde{T}_k} < \xi^2 /2.
    \end{equation}

    On the other hand, for the positive number  $K$ satisfying the theorem's inequality, we define
    \begin{equation}
        \hat{T} = \argmin _{ 1 \leq k \leq K }   (2 \beta^{1/2}_{\tilde{T}_k}  \eta^{-1}_{\tilde{T}_k} \hat{c}^{(1)}_{\tilde{T}_k} ({\bm x}^{(1)}_{{\tilde{T}_k}+1} | {\bm 0} ))^2.
    \end{equation}
    Then, it follows that
    \begin{align}
        K  (2 \beta^{1/2}_{\hat{T}}  \eta^{-1}_{\hat{T}} \hat{c}^{(1)}_{\hat{T}} ({\bm x}^{(1)}_{{\hat{T}}+1} | {\bm 0} ))^2
         & \leq
        \sum_{k=1}^K (2 \beta^{1/2}_{\tilde{T}_k}  \eta^{-1}_{\tilde{T}_k} \hat{c}^{(1)}_{\tilde{T}_k} ({\bm x}^{(1)}_{{\tilde{T}_k}+1} | {\bm 0} ))^2
        \\
         & \leq K \xi^2/2 + 2 C^2_{9,T_K} M_{\text{sum}} \sum _{t=1} ^{T_K} \sum_{n=1}^{N} \sum_{i=1} ^{M^{(n)} } \sigma^{(n)2}_{{\bm 0},i,t} ({\bm y}^{(n-1)},{\bm x}^{(n)}_{t+n} ) \\
         & \leq K \xi^2/2
        + \frac{4 C^2_{9,T_K} M^2_{\text{sum}}}{ \log (1+\sigma^{-2} ) } \tilde{\gamma} _{T_K}.
    \end{align}
    By dividing both sides by $K$, we obtain
    \begin{align}
        (2 \beta^{1/2}_{\hat{T}}  \eta^{-1}_{\hat{T}} \hat{c}^{(1)}_{\hat{T}} ({\bm x}^{(1)}_{{\hat{T}}+1} | {\bm 0} ))^2 & \leq
        \xi^2/2
        + \frac{4 C^2_{9,T_K} M^2_{\text{sum}}}{ \log (1+\sigma^{-2} ) } \tilde{\gamma} _{T_K} K^{-1}                                                    \\
                                                                                                                          & < \xi^2/2 + \xi^2/2 = \xi^2.
    \end{align}
    This implies that
    \begin{equation}
        2 \beta^{1/2}_{\hat{T}}  \eta^{-1}_{\hat{T}} \hat{c}^{(1)}_{\hat{T}} ({\bm x}^{(1)}_{{\hat{T}}+1} | {\bm 0} ) < \xi. \label{eq:xi_bound}
    \end{equation}
    Finally,  from the definition of the estimated solution and CIs, $F(\cdot )$ can be bounded as follows:
    \begin{align}
        F({\bm x}^{(1)}_{F,\ast}, \ldots ,{\bm x}^{(N)}_{F,\ast} )
         & \leq \min _{ t \in N \mathbb{Z}_{\geq 0} , t \leq T_K }  \widehat{\mr{ UCB}}^{(F)} _t ({\bm x}^{(1)}_{F,\ast} ,\ldots ,{\bm x}^{(N)}_{F,\ast} ) \\
         & \leq  \widehat{\mr{ UCB}}^{(F)} _{\hat{T} }  ({\bm x}^{(1)}_{F,\ast} ,\ldots ,{\bm x}^{(N)}_{F,\ast} ) ,                                        \\
        F(\hat{\bm x}^{(1)}_{F,T_K}, \ldots , \hat{\bm x}^{(N)}_{F,T_K}  )
         & \geq
        \max _{ t \in N \mathbb{Z}_{\geq 0} , t \leq T_K }  \widehat{\mr{ LCB}}^{(F)} _t ({\bm x}^{(1)} ,\ldots ,{\bm x}^{(N)} )                           \\
         & \geq
        \widehat{\mr{ LCB}}^{(F)} _{\hat{T} }  ({\bm x}^{(1)}_{F,\ast} ,\ldots ,{\bm x}^{(N)}_{F,\ast} ) .
    \end{align}
    Therefore, the following holds with probability at least $1-2 \delta$:
    \begin{align}
        F({\bm x}^{(1)}_{F,\ast}, \ldots ,{\bm x}^{(N)}_{F,\ast} )  -
        F(\hat{\bm x}^{(1)}_{F,T_K}, \ldots , \hat{\bm x}^{(N)}_{F,T_K}  )
         & \leq 2 \beta^{1/2}_{\hat{T}} \tilde{\sigma}^{(N)} _{{\bm 0},1,\hat{T} } (   {\bm x}^{(1)}_{F,\ast}, \ldots ,{\bm x}^{(N)}_{F,\ast}  ) \\
         & \leq 2\beta^{1/2} _{\hat{T}}  \hat{b}^{(1)}_{ \hat{T} }  ({\bm x}^{(1)}_{F,\ast} | {\bm 0} )                                          \\
         & \leq 2 {\beta}^{1/2} _{\hat{T}}  \eta ^{-1}  _{\hat{T}}   \hat{c}^{(1)}_{ \hat{T} }    ({\bm x}^{(1)}_{F,\ast} | {\bm 0} )  \leq
        2 {\beta}^{1/2} _{\hat{T}}  \eta ^{-1}  _{\hat{T}}   \hat{c}^{(1)}_{ \hat{T} }    ({\bm x}^{(1)}_{\hat{T} +1}  | {\bm 0} ) . \label{eq:F_ast_noise}
    \end{align}
    Hence, by substituting~\cref{eq:xi_bound} into~\cref{eq:F_ast_noise}, we have~\cref{theorem:i_regret_zero}.
\end{proof}

\section{Sufficient Conditions and Modifications for the Proposed Method}\label{app:modify}
In this section, we consider theorem's conditions and its modifications.
First, in the noiseless setting, we assume that $\tilde{\bm \mu}_t ^{(m)} ({\bm x}^{(n)},\ldots,{\bm x}^{(m) } | {\bm y}^{(n-1)} ) \in \mathcal{Y}^{(m)}$ to construct the valid CI
.
For this assumption, the following sufficient condition exists.
\begin{theorem}\label{theorem:sufficient_condition_mutilde}
    Assume that  each $\mathcal{X}^{(n)}$ is a compact set, and each observation is noiseless.
    Also assume that each $f^{(n)}_m$ is a function defined on $[-2B_{n-1},2B_{n-1}]^{M^{(n-1)}} \times \mathcal{X}^{(n)} $ and satisfies $f^{(n)}_m \in \mathcal{H}_{k^{(n)} } $, where $B_n $ is some positive constant satisfying $\| f^{(n)}_m \| _{\mathcal{H}_{k^{(n)}}} \leq B_n$ and $B_0 =0$.
    Then, $\tilde{\bm \mu}^{(n)} _t ({\bm x}^{(1)},\ldots,{\bm x}^{(n)} )  \in [-2B_{n},2B_{n}]^{M^{(n)}} $
    for any $n \in [N]$, $t \geq 1$ and ${\bm x}^{(1)},\ldots,{\bm x}^{(n)}$.
\end{theorem}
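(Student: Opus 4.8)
The plan is to prove the claim by induction on the stage index $n$, reducing the problem at each step to a single pointwise bound on the posterior mean that follows from the reproducing property together with \cref{lem:ci}. The key structural observation is that the recursive definition
$\tilde{\bm\mu}^{(n)}_t({\bm x}^{(1:n)}) = \bm\mu^{(n)}_t(\tilde{\bm\mu}^{(n-1)}_t({\bm x}^{(1:n-1)}),{\bm x}^{(n)})$
feeds the previous-stage surrogate mean into $\bm\mu^{(n)}_t$ as the uncontrollable input ${\bm w}$. Hence it suffices to show that whenever ${\bm w}$ lies in the domain $[-2B_{n-1},2B_{n-1}]^{M^{(n-1)}}$ on which $f^{(n)}_m$ is defined, the posterior mean satisfies $|\mu^{(n)}_{m,t}({\bm w},{\bm x}^{(n)})| \le 2B_n$ for every $m$, ${\bm x}^{(n)}$ and $t\ge 1$.

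First I would establish this pointwise bound. For any admissible input $({\bm w},{\bm x}^{(n)})$, the reproducing property and the Cauchy--Schwarz inequality give $|f^{(n)}_m({\bm w},{\bm x}^{(n)})| = |\langle f^{(n)}_m,\, k^{(n)}(({\bm w},{\bm x}^{(n)}),\cdot)\rangle_{k^{(n)}}| \le \|f^{(n)}_m\|_{k^{(n)}}\sqrt{k^{(n)}(({\bm w},{\bm x}^{(n)}),({\bm w},{\bm x}^{(n)}))} \le B_n$, using $\|f^{(n)}_m\|_{k^{(n)}}\le B_n$ and $k^{(n)}(\cdot,\cdot)\le 1$. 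Applying \cref{lem:ci} stage-wise with radius $B_n$ (i.e.\ $\beta=B_n^2$) — legitimate because the underlying concentration result is a statement about a single RKHS function of norm at most $B_n$ — yields $|f^{(n)}_m({\bm w},{\bm x}^{(n)}) - \mu^{(n)}_{m,t}({\bm w},{\bm x}^{(n)})| \le B_n\,\sigma^{(n)}_{m,t}({\bm w},{\bm x}^{(n)})$. Since the posterior variance never exceeds the prior variance, $\sigma^{(n)2}_{m,t}(\cdot)\le k^{(n)}(\cdot,\cdot)\le 1$, so the right-hand side is at most $B_n$. The triangle inequality then gives $|\mu^{(n)}_{m,t}({\bm w},{\bm x}^{(n)})|\le 2B_n$.

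With this bound in hand the induction is immediate. For the base case $n=1$, since $B_0=0$ the only admissible previous input is ${\bm 0}$, and $\tilde{\bm\mu}^{(1)}_t({\bm x}^{(1)}) = \bm\mu^{(1)}_t({\bm 0},{\bm x}^{(1)})$ has each coordinate bounded by $2B_1$, i.e.\ $\tilde{\bm\mu}^{(1)}_t\in[-2B_1,2B_1]^{M^{(1)}}$. For the inductive step, assume $\tilde{\bm\mu}^{(n-1)}_t({\bm x}^{(1:n-1)})\in[-2B_{n-1},2B_{n-1}]^{M^{(n-1)}}$ for all inputs. Then in $\tilde{\bm\mu}^{(n)}_t({\bm x}^{(1:n)}) = \bm\mu^{(n)}_t(\tilde{\bm\mu}^{(n-1)}_t({\bm x}^{(1:n-1)}),{\bm x}^{(n)})$ the first argument lies in the domain of $f^{(n)}_m$, so the pointwise bound applies with ${\bm w}=\tilde{\bm\mu}^{(n-1)}_t({\bm x}^{(1:n-1)})$, giving each coordinate of $\tilde{\bm\mu}^{(n)}_t$ bounded by $2B_n$. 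This closes the induction.

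The step deserving the most care — and the main obstacle — is confirming that the previous-stage surrogate mean actually lands inside the set on which $f^{(n)}_m$, and therefore its posterior mean, is defined: without the inductive containment $\tilde{\bm\mu}^{(n-1)}_t\in[-2B_{n-1},2B_{n-1}]^{M^{(n-1)}}$ the object $\mu^{(n)}_{m,t}(\tilde{\bm\mu}^{(n-1)}_t,\cdot)$ would not even be well defined. This is exactly why the domains must be nested as $[-2B_{n-1},2B_{n-1}]^{M^{(n-1)}}\times\mathcal{X}^{(n)}$, and why the two-sided radius $2B_n$ (rather than $B_n$) is forced by combining the $\le B_n$ range bound on $f^{(n)}_m$ with the $\le B_n$ credible-width bound from \cref{lem:ci}.
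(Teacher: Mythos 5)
Your proof is correct and follows essentially the same route as the paper's: the reproducing property with $k^{(n)}(\cdot,\cdot)\le 1$ gives $|f^{(n)}_m|\le B_n$, Lemma~\ref{lem:ci} together with $\sigma^{(n)}_{m,t}\le 1$ gives the width bound $B_n$, and the triangle inequality yields $|\mu^{(n)}_{m,t}|\le 2B_n$, propagated through the stages (the paper writes ``by repeating this process'' where you spell out the induction, and you correctly identify that the inductive containment is what keeps $\tilde{\bm\mu}^{(n-1)}_t$ inside the domain of $f^{(n)}_m$).
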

\begin{proof}
    From the reproducing property of $k^{(n)} $, noting that $k^{(n)} ({\bm a},{\bm a} ) \leq 1$ we have
    \begin{align}
        | f^{(1)}_m ( {\bm a} ) | & =| \langle  f^{(1)}_m ( \cdot ), k^{(1)} (\cdot,  {\bm a}  ) \rangle _{\mathcal{H}_{k^{(1)}} } | \\
                                  & \leq \| f^{(1)}_m \|  _{\mathcal{H}_{k^{(1)}}  }k^{(1)} ( {\bm a} , {\bm a}  )^{1/2} \leq B_1  .
    \end{align}
    In addition, since $\mathcal{X}^{(1)}$ is the compact set, $[-2B_{0},2B_{0}]^{M^{(0)}} \times \mathcal{X}^{(1)}$ is also the compact set.
    Hence, from~\cref{lem:ci} the following holds for any $m \in [M^{(1)}]$, $t \geq 1$ and $ ({\bm w},{\bm x} ) \in [-2B_{0},2B_{0}]^{M^{(0)}} \times \mathcal{X}^{(1)}$:
    \begin{align}
        | \mu^{(1)}_{m,t} ({\bm w},{\bm x} ) |
         & =
        |
        \mu^{(1)}_{m,t} ({\bm w},{\bm x} ) -f^{(n)}_m  ({\bm w},{\bm x} ) +f^{(n)}_m  ({\bm w},{\bm x} )
        |                       \\
         & \leq
        |
        \mu^{(1)}_{m,t} ({\bm w},{\bm x} ) -f^{(n)}_m  ({\bm w},{\bm x} ) |+|f^{(n)}_m  ({\bm w},{\bm x} )
        |                       \\
         & \leq B_1 +B_1 =2B_1.
    \end{align}
    This implies that ${\bm\mu}^{(1)}_t ({\bm 0},{\bm x}^{(1)})=\tilde{\bm \mu}^{(1)}_t ({\bm x} ^{(1)}) \in  [-2B_{1},2B_{1}]^{M^{(1)}} $.
    By repeating this process, we get
    \begin{align}
        {\bm \mu}^{(n)}_t   (  \tilde{\bm \mu}^{(n-1)}_t ({\bm x} ^{(1)},\ldots ,{\bm x}^{(n-1)}),{\bm x}^{(n)} )
        =
        \tilde{\bm \mu}^{(n)}     ({\bm x}^{(1)},\ldots,{\bm x}^{(n)} )
        \in  [-2B_{n},2B_{n}]^{M^{(n)}}.
    \end{align}
\end{proof}
\Cref{theorem:sufficient_condition_mutilde} implies that by defining   $\mathcal{Y}^{(n)}$  as $ [-2B_{n},2B_{n}]^{M^{(n)}} $ and
$B = \max _{1 \leq n \leq N} B_n$, we obtain~\cref{assumption:regu1} and $\tilde{\bm \mu}_t ^{(n)} ({\bm x}^{(1)},\ldots,{\bm x}^{(n)} ) \in \mathcal{Y}^{(n)}$. Similarly, under the same assumption we have $\tilde{\bm \mu}_t ^{(m)} ({\bm x}^{(n)},\ldots,{\bm x}^{(m) } | {\bm y}^{(n-1)} ) \in \mathcal{Y}^{(m)}$.

Next, we consider the condition $ \tilde{\bm z}_{{\bm\epsilon},t} ^{(n)} ({\bm x}^{(1)},\ldots,{\bm x}^{(n)} )  \in \tilde{\mathcal{Y}}^{(n)}$
for the noisy observation setting.
In the noisy setting, it is not easy to give a sufficient condition for this condition to be satisfied.
Nevertheless, we can avoid this condition by modifying the definition of $\tilde{\bm z}_{{\bm\epsilon},t} ^{(n)} ({\bm x}^{(1)},\ldots,{\bm x}^{(n)} )$.
Let $\mathcal{L} = [-L,L] ^d $ be a $d$-dimensional hypercube.
For each ${\bm a} =(a_1,\ldots, a_d) \in \mathbb{R}^d $, suppose that  $\mathcal{P} (\mathcal{M},{\bm a} )$ is a
projection of ${\bm a} $ onto $ \mathcal{M} $, where the $i$-th element of  $\mathcal{P} (\mathcal{M},{\bm a} )$, $\mathcal{P}_i (\mathcal{M},{\bm a} )$, is given by
\begin{equation}
    \mathcal{P}_i (\mathcal{M},{\bm a} )=\argmin_{l \in [-L,L] }   |a_i - l|.
\end{equation}
Then, the following theorem holds.
\begin{theorem}\label{theorem:def_muhat}
    Assume that  each $\mathcal{X}^{(n)}$ is a compact set, and each observation noise $\epsilon^{(n)}_m $ is a zero mean random variable with $-A \leq \epsilon^{(n)}_m \leq A$.
    Also assume that each $f^{(n)}_m$ is a function defined on $[-A_{n-1}-B_{n-1},A_{n-1}+B_{n-1}]^{M^{(n-1)}} \times \mathcal{X}^{(n)} $ and satisfies $f^{(n)}_m \in \mathcal{H}_{k^{(n)} } $, where $A_0=B_0=0$, $A_n =A$ and $B_n $ is some positive constant satisfying $\| f^{(n)}_m \| _{\mathcal{H}_{k^{(n)}}} \leq B_n$.
    For each $t \geq 1$ and $({\bm x}^{(1)},\ldots,{\bm x}^{(n)} )$, define
    \begin{equation}
        \hat{\bm z}^{(n)}_{{\bm \epsilon},t}  ({\bm x}^{(1)},\ldots,{\bm x}^{(n)} ) =    \begin{cases}
            {\bm \epsilon}^{(1)}     +
            \mathcal{P} ( \mathcal{Y}^{(1)},   {\bm \mu}^{(1)}_t ({\bm 0},{\bm x}^{(1)} )  )
                                                                                                                                                                           & (n=1),     \\
            {\bm \epsilon}^{(n)} +  \mathcal{P} (  \mathcal{Y}^{(n)},     {\bm \mu}^{(n)}_t (\hat{\bm z}^{(n-1)} _{{\bm \epsilon},t} ({\bm x}^{(1:n-1)}),{\bm x}^{(n)} ) ) & (n \ge 2),
        \end{cases}
    \end{equation}
    where $ \mathcal{Y}^{(n)} = [-B_n, B_n ]^{M^{(n)}}$.
    Then, $\hat{\bm z}^{(n)}_{{\bm \epsilon},t}  ({\bm x}^{(1)},\ldots,{\bm x}^{(n)} )  \in [-A_{n}-B_{n},A_{n}+B_{n}]^{M^{(n)}} = \tilde{\mathcal{Y}}^{(n)}$ for all $n \in [N]$, $t \geq 1$, ${\bm \epsilon}$ and  $({\bm x}^{(1)},\ldots,{\bm x}^{(n)} ) $.
    Moreover, ${\bm f}^{(n)} $ satisfies  ${\bm f}^{(n)} ({\bm w},{\bm x} ) + {\bm \epsilon}^{(n)} \in \tilde{\mathcal{Y}}^{(n)} $
    for all ${\bm w} \in \tilde{\mathcal{Y}}^{(n-1)} $, ${\bm x} \in \mathcal{X}^{(n)}$ and ${\bm \epsilon}^{(n)}$.
\end{theorem}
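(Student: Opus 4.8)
The plan is to establish the two assertions separately: I would derive the membership ${\bm f}^{(n)}({\bm w},{\bm x})+{\bm\epsilon}^{(n)}\in\tilde{\mathcal{Y}}^{(n)}$ directly from the reproducing property of $k^{(n)}$, and the membership $\hat{\bm z}^{(n)}_{{\bm \epsilon},t}({\bm x}^{(1)},\ldots,{\bm x}^{(n)})\in\tilde{\mathcal{Y}}^{(n)}$ by induction on $n$, with the projection $\mathcal{P}(\mathcal{Y}^{(n)},\cdot)$ doing the essential work of confining the intermediate iterate to a fixed box. The whole argument mirrors the proof of \cref{theorem:sufficient_condition_mutilde}, the difference being that the projection substitutes for the a priori bound on the posterior mean that is no longer available under noise.

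First I would dispose of the claim about ${\bm f}^{(n)}$. Fix ${\bm w}\in\tilde{\mathcal{Y}}^{(n-1)}=[-A_{n-1}-B_{n-1},A_{n-1}+B_{n-1}]^{M^{(n-1)}}$ and ${\bm x}\in\mathcal{X}^{(n)}$; this is precisely the domain on which $f^{(n)}_m$ is assumed to live. Exactly as in the proof of \cref{theorem:sufficient_condition_mutilde}, the reproducing property together with $k^{(n)}(({\bm w},{\bm x}),({\bm w},{\bm x}))\le 1$ gives
\begin{equation}
  |f^{(n)}_m({\bm w},{\bm x})| = |\langle f^{(n)}_m,\, k^{(n)}(\cdot,({\bm w},{\bm x}))\rangle_{\mathcal{H}_{k^{(n)}}}| \le \|f^{(n)}_m\|_{\mathcal{H}_{k^{(n)}}}\, k^{(n)}(({\bm w},{\bm x}),({\bm w},{\bm x}))^{1/2} \le B_n.
\end{equation}
Adding $\epsilon^{(n)}_m\in[-A,A]=[-A_n,A_n]$ coordinatewise then places $f^{(n)}_m({\bm w},{\bm x})+\epsilon^{(n)}_m$ in $[-A_n-B_n,A_n+B_n]$, i.e.\ ${\bm f}^{(n)}({\bm w},{\bm x})+{\bm\epsilon}^{(n)}\in\tilde{\mathcal{Y}}^{(n)}$.

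Next I would prove $\hat{\bm z}^{(n)}_{{\bm \epsilon},t}\in\tilde{\mathcal{Y}}^{(n)}$ by induction on $n$. For the base case $n=1$, the projection $\mathcal{P}(\mathcal{Y}^{(1)},{\bm\mu}^{(1)}_t({\bm 0},{\bm x}^{(1)}))$ lies in $\mathcal{Y}^{(1)}=[-B_1,B_1]^{M^{(1)}}$ by definition of the coordinatewise projection, and adding ${\bm\epsilon}^{(1)}$ with $|\epsilon^{(1)}_m|\le A_1$ lands the result in $\tilde{\mathcal{Y}}^{(1)}$. For the inductive step, the hypothesis $\hat{\bm z}^{(n-1)}_{{\bm \epsilon},t}\in\tilde{\mathcal{Y}}^{(n-1)}$ guarantees that the argument of $\mu^{(n)}_{m,t}$ lies in the domain of $f^{(n)}_m$ (hence of its posterior mean), so the recursion is well defined; the projection again forces $\mathcal{P}(\mathcal{Y}^{(n)},{\bm\mu}^{(n)}_t(\hat{\bm z}^{(n-1)}_{{\bm \epsilon},t},{\bm x}^{(n)}))\in[-B_n,B_n]^{M^{(n)}}$ irrespective of the value of ${\bm\mu}^{(n)}_t$, and adding ${\bm\epsilon}^{(n)}$ with $|\epsilon^{(n)}_m|\le A_n$ yields $\hat{\bm z}^{(n)}_{{\bm \epsilon},t}\in\tilde{\mathcal{Y}}^{(n)}$, completing the induction.

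The step I expect to matter most is not a computation but the conceptual role of the projection. In the noisy regime the clean bound $|\mu^{(1)}_{m,t}|\le 2B_1$ from \cref{theorem:sufficient_condition_mutilde} is unavailable, so there is no a priori guarantee that ${\bm\mu}^{(n)}_t$ stays inside $\mathcal{Y}^{(n)}$. Clamping it with $\mathcal{P}(\mathcal{Y}^{(n)},\cdot)$ replaces that missing bound, and the only thing one must verify carefully is that the induction simultaneously delivers both well-definedness (the iterate lands in the domain of the next stage's function) and the quantitative box membership, which the choice $\tilde{\mathcal{Y}}^{(n)}=[-A_n-B_n,A_n+B_n]^{M^{(n)}}$ is engineered to provide.
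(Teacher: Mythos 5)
Your proposal is correct and follows essentially the same route as the paper's proof: the reproducing property with $k^{(n)}(\cdot,\cdot)\le 1$ bounds $|f^{(n)}_m|$ by $B_n$ so that adding the bounded noise lands in $\tilde{\mathcal{Y}}^{(n)}$, while the projection $\mathcal{P}(\mathcal{Y}^{(n)},\cdot)$ by definition confines the clamped posterior mean to $[-B_n,B_n]^{M^{(n)}}$ so that adding ${\bm\epsilon}^{(n)}$ gives the box membership for $\hat{\bm z}^{(n)}_{{\bm\epsilon},t}$. Your explicit induction merely spells out the well-definedness of the recursion that the paper leaves implicit.
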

\begin{proof}
    From the reproducing property of $k^{(n)} (\cdot,\cdot)$, and the assumptions $k^{(n)} (\cdot,\cdot) \leq 1$ and $\| f^{(n)}_m \|_{\mathcal{H}_{k^{(n)}}} \leq B_n$,
    we have $f ^{(n)} _m ( {\bm w},{\bm x} ) \in [-B_n,B_n]$.
    Therefore, noting that $-A \leq \epsilon^{(n)}_m \leq A$, we get ${\bm f}^{(n)} $ satisfies  ${\bm f}^{(n)} ({\bm w},{\bm x} ) + {\bm \epsilon}^{(n)} \in \tilde{\mathcal{Y}}^{(n)} $.
    Similarly, from the definition of $\mathcal{P} ( \mathcal{Y}^{(n)} ,{\bm a} ) $, it follows that
    \begin{equation}
        \mathcal{P}  (  \mathcal{Y}^{(n)},     {\bm \mu}^{(n)}_t (\hat{\bm z}^{(n-1)} _{{\bm \epsilon},t} ({\bm x}^{(1)},\ldots,{\bm x}^{(n-1)} ),{\bm x}^{(n)} ) )
        \in \mathcal{Y}^{(n)} = [-B_n,B_n ] ^{M^{(n)}}.
    \end{equation}
    Thus, we obtain $\hat{\bm z}^{(n)}_{{\bm \epsilon},t}  ({\bm x}^{(1)},\ldots,{\bm x}^{(n)} )  \in \tilde{\mathcal{Y}^{(n)}}$.
\end{proof}
In this modified  $\hat{\bm z}^{(n)}_{{\bm \epsilon},t}  ({\bm x}^{(1)},\ldots,{\bm x}^{(n)} )$,
similar results given in~\cref{theorem:Nstage3} hold.

\begin{theorem}\label{theorem:Nstage3_modified}
    Assume that the same condition as in~\cref{theorem:def_muhat} holds.
    Given $\delta \in (0,1)$, define $B = \max_{1 \leq n \leq N} B_n $ and $\beta_t $ as in~\cref{eq:definition_beta_t}.
    Moreover, assume that~\cref{assumption:L1,assumption:L2} hold.
    Then, with probability at least $1-\delta$, the following holds for any realization of ${\bm\epsilon}$:
    \begin{align}
        |{z}^{(n)}_{{\bm \epsilon},m}({\bm x}^{(1)}, \ldots, {\bm x}^{(n)} )  -\hat{z}^{(n)}_{{\bm \epsilon},m,t} ({\bm x}^{(1)}, \ldots, {\bm x}^{(n)} )    |  \leq &
        {\beta}^{1/2}_t  \hat{\sigma}^{(n)}_{{\bm \epsilon},m,t}  ({\bm x}^{(1)}, \ldots, {\bm x}^{(n)} )                                                                                                                                              \\
                                                                                                                                                                     & \quad  \forall n \in [N],  m \in [M^{(n)}], t \geq 1       , \label{eq:n_s_hat}
    \end{align}
    where $\hat{z}^{(n)}_{{\bm \epsilon},m,t} $ is the $m$-th element of
    $\hat{\bm z}^{(n)}_{{\bm \epsilon},t} $, and  $\hat{\sigma}^{(n)}_{{\bm \epsilon},m,t} ({\bm x}^{(1)}, \ldots, {\bm x}^{(n)} )$ is given by
    \begin{align}
         & \hat{\sigma}^{(n)}_{{\bm \epsilon},m,t } ( {\bm x}^{(1)},\ldots,{\bm x}^{(n)})    =                                                \\
         & \quad    {\sigma}^{(n)}_{m,t } (\hat{\bm z}^{(n-1)}_{{\bm \epsilon},t}  ({\bm x}^{(1)},\ldots,{\bm x}^{(n-1)} ) ,{\bm x}^{(n)})  +
        L_f \sum_{s=1}^ {M^{(n-1)}} \hat{\sigma}^{(n-1)}_{{\bm \epsilon},s,t } ( {\bm x}^{(1)},\ldots,{\bm x}^{(n-1)}),
    \end{align}
    and $\hat{\sigma}^{(1)}_{{\bm \epsilon},s,t } ( {\bm x}^{(1)}) = \sigma^{(1)}_{s,t} ({\bm 0},{\bm x}^{(1)} )$.
\end{theorem}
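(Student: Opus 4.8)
The plan is to mimic the inductive argument behind \cref{theorem:Nstage} (equivalently \cref{theorem:Nstage3}), the single new ingredient being that the coordinatewise projection $\mathcal{P}$ onto the interval $[-B_n,B_n]$ is non-expansive and acts as the identity on the true stage outputs. First I would fix the high-probability event: by \cref{cor:ci2}, with probability at least $1-\delta$ the bound $|f^{(n)}_m(\boldsymbol w,\boldsymbol x)-\mu^{(n)}_{m,t}(\boldsymbol w,\boldsymbol x)|\le\beta^{1/2}_t\sigma^{(n)}_{m,t}(\boldsymbol w,\boldsymbol x)$ holds simultaneously for all $n$, $m$, $t\ge 1$ and all $\boldsymbol w\in\tilde{\mathcal{Y}}^{(n-1)}$. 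On this event the remaining argument is entirely deterministic and may be run for every realization $\boldsymbol\epsilon$; the uniformity over the enlarged set $\tilde{\mathcal{Y}}^{(n-1)}$ is exactly what makes the induction close, since by \cref{theorem:def_muhat} the projected predictions $\hat{\boldsymbol z}^{(n-1)}_{\boldsymbol\epsilon,t}$ are guaranteed to lie in $\tilde{\mathcal{Y}}^{(n-1)}$ rather than merely in $\mathcal{Y}^{(n-1)}$.

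I would then induct on $n$. For the base case $n=1$ the noise term $\boldsymbol\epsilon^{(1)}$ cancels, leaving $|f^{(1)}_m(\boldsymbol 0,\boldsymbol x^{(1)})-\mathcal{P}_m(\mathcal{Y}^{(1)},\mu^{(1)}_t(\boldsymbol 0,\boldsymbol x^{(1)}))|$; since \cref{theorem:def_muhat} gives $f^{(1)}_m(\boldsymbol 0,\boldsymbol x^{(1)})\in\mathcal{Y}^{(1)}$, this value is its own projection, so non-expansiveness of $\mathcal{P}_m$ together with \cref{cor:ci2} yields the bound $\beta^{1/2}_t\sigma^{(1)}_{m,t}(\boldsymbol 0,\boldsymbol x^{(1)})=\beta^{1/2}_t\hat{\sigma}^{(1)}_{\boldsymbol\epsilon,m,t}$. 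For the inductive step the additive noise again cancels, and after rewriting $f^{(n)}_m(\boldsymbol z^{(n-1)}_{\boldsymbol\epsilon},\boldsymbol x^{(n)})$ as its own projection I would apply non-expansiveness to reduce to $|f^{(n)}_m(\boldsymbol z^{(n-1)}_{\boldsymbol\epsilon},\boldsymbol x^{(n)})-\mu^{(n)}_{m,t}(\hat{\boldsymbol z}^{(n-1)}_{\boldsymbol\epsilon,t},\boldsymbol x^{(n)})|$. Inserting the intermediate term $f^{(n)}_m(\hat{\boldsymbol z}^{(n-1)}_{\boldsymbol\epsilon,t},\boldsymbol x^{(n)})$ and using the triangle inequality splits this into a Lipschitz term, controlled by \cref{assumption:L1} as $L_f\|\boldsymbol z^{(n-1)}_{\boldsymbol\epsilon}-\hat{\boldsymbol z}^{(n-1)}_{\boldsymbol\epsilon,t}\|_1$, and a posterior-deviation term bounded by $\beta^{1/2}_t\sigma^{(n)}_{m,t}(\hat{\boldsymbol z}^{(n-1)}_{\boldsymbol\epsilon,t},\boldsymbol x^{(n)})$ via \cref{cor:ci2}. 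Feeding the induction hypothesis $|z^{(n-1)}_{\boldsymbol\epsilon,s}-\hat{z}^{(n-1)}_{\boldsymbol\epsilon,s,t}|\le\beta^{1/2}_t\hat{\sigma}^{(n-1)}_{\boldsymbol\epsilon,s,t}$ into the Lipschitz term and collecting the common factor $\beta^{1/2}_t$ reproduces exactly the recursive definition of $\hat{\sigma}^{(n)}_{\boldsymbol\epsilon,m,t}$, completing the step.

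The step I expect to require the most care is the justification that the projection can be inserted ``for free.'' This rests on two facts that must both be checked: that $\mathcal{P}_m$ is the coordinatewise Euclidean projection onto a closed interval and hence $1$-Lipschitz, and that the composed output satisfies $f^{(n)}_m(\cdot)\in\mathcal{Y}^{(n)}$ so that projecting it changes nothing. Both are supplied by \cref{theorem:def_muhat} (the reproducing-kernel bound $|f^{(n)}_m|\le B_n$ and the definition $\mathcal{Y}^{(n)}=[-B_n,B_n]^{M^{(n)}}$), so no genuinely new estimate is needed; the projection merely replaces the a-priori containment assumption $\tilde{\boldsymbol\mu}^{(n)}_t\in\mathcal{Y}^{(n)}$ that had to be imposed by hand in \cref{theorem:Nstage3}. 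Beyond this, the computation is a verbatim transcription of the chain \eqref{eq:2bound}--\eqref{eq:3bound1} from the proof of \cref{theorem:Nstage}, now carried through with the hatted quantities and the $\boldsymbol\epsilon$-dependent posterior.
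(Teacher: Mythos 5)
Your proposal is correct and follows essentially the same route as the paper's proof: the noise terms cancel, the triangle inequality through $f^{(n)}_m(\hat{\bm z}^{(n-1)}_{{\bm\epsilon},t},{\bm x}^{(n)})$ splits the error into a Lipschitz term (closed by the induction hypothesis) and a posterior-deviation term bounded via \cref{cor:ci2}, reproducing the recursion for $\hat{\sigma}^{(n)}_{{\bm\epsilon},m,t}$. The only cosmetic difference is how the projection is discharged: you invoke non-expansiveness of $\mathcal{P}_m$ together with $\mathcal{P}_m(f^{(n)}_m)=f^{(n)}_m$, whereas the paper uses the equivalent observation that, since $f^{(n)}_m(\cdot)\in\mathcal{Y}^{(n)}$, one has $|f^{(n)}_m-\mu^{(n)}_{m,t}|=|f^{(n)}_m-\mathcal{P}_m(\mathcal{Y}^{(n)},\mu^{(n)}_{m,t})|+|\mathcal{P}_m(\mathcal{Y}^{(n)},\mu^{(n)}_{m,t})-\mu^{(n)}_{m,t}|$, so the projected mean is at least as close to $f^{(n)}_m$ as the mean itself.
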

\begin{proof}
    For any $t \geq 1$, $n \in [N]$, $m \in [M^{(n)}]$, ${\bm x}^{(1)},\ldots,{\bm x}^{(n)} $ and realization of ${\bm \epsilon}$, it follows that
    \begin{align}
         & | f^{(n)}_m (\hat{\bm z}_{{\bm\epsilon},t}^{(n-1)} ({\bm x}^{(1)},\ldots,{\bm x}^{(n-1)}),{\bm x}^{(n)} )     -  { \mu}^{(n)}_{m,t} (\hat{\bm z}^{(n-1)} _{{\bm \epsilon},t} ({\bm x}^{(1)},\ldots,{\bm x}^{(n-1)}),{\bm x}^{(n)} )             |                                                                      \\
         & =| f^{(n)}_m (\hat{\bm z}_{{\bm\epsilon},t}^{(n-1)} ({\bm x}^{(1)},\ldots,{\bm x}^{(n-1)}),{\bm x}^{(n)} )     -\mathcal{P}_m (  \mathcal{Y}^{(n)},     {\bm \mu}^{(n)}_t (\hat{\bm z}^{(n-1)} _{{\bm \epsilon},t} ({\bm x}^{(1)},\ldots,{\bm x}^{(n-1)}),{\bm x}^{(n)} ) ) |                                          \\
         & \quad +|\mathcal{P} _m(  \mathcal{Y}^{(n)},     {\bm \mu}^{(n)}_t (\hat{\bm z}^{(n-1)} _{{\bm \epsilon},t} ({\bm x}^{(1)},\ldots,{\bm x}^{(n-1)}),{\bm x}^{(n)} ) )-                  { \mu}^{(n)}_{m,t} (\hat{\bm z}^{(n-1)} _{{\bm \epsilon},t} ({\bm x}^{(1)},\ldots,{\bm x}^{(n-1)}),{\bm x}^{(n)} )             | \\
         & \geq  | f^{(n)}_m (\hat{\bm z}_{{\bm\epsilon},t}^{(n-1)} ({\bm x}^{(1)},\ldots,{\bm x}^{(n-1)}),{\bm x}^{(n)} )   -\mathcal{P} _m (  \mathcal{Y}^{(n)},     {\bm \mu}^{(n)}_t (\hat{\bm z}^{(n-1)} _{{\bm \epsilon},t} ({\bm x}^{(1)},\ldots,{\bm x}^{(n-1)}),{\bm x}^{(n)} ) ) |,
    \end{align}
    where the first equality is derived by $  f^{(n)}_m (\hat{\bm z}_{{\bm\epsilon},t}^{(n-1)} ({\bm x}^{(1)},\ldots,{\bm x}^{(n-1)}),{\bm x}^{(n)} )   \in \mathcal{Y}^{(n)}$ and the definition of $\mathcal{P} ( \mathcal{Y}^{(n)},{\bm a} )$.
    Thus, for
    $i \in [M^{(2)}]$ and $ ({\bm x}^{(1)},{\bm x}^{(2)} )$, the following inequality holds with probability at least $1-\delta$:
    \begin{align}
         & | z^{(2)}_{{\bm\epsilon},i}  ({\bm x}^{(1)},{\bm x}^{(2)} ) -\hat{z}^{(2)}_{{\bm\epsilon},i,t}  ({\bm x}^{(1)},{\bm x}^{(2)} )|                                                                                                       \\
         & \leq  | f^{(2)}_i ({\bm z}^{(1)}_{\bm\epsilon} ({\bm x}^{(1)}) ,{\bm x}^{(2)})   - f^{(2)}_i (\hat{\bm z}_{{\bm\epsilon},t}^{(1)} ({\bm x}^{(1)}),{\bm x}^{(2)} ) |                                                                   \\
         & \qquad +| f^{(2)}_i (\hat{\bm z}_{{\bm\epsilon},t}^{(1)} ({\bm x}^{(1)} )   ,{\bm x}^{(2)} )   -\mathcal{P}_i (  \mathcal{Y}^{(2)},     {\bm \mu}^{(2)}_t (\hat{\bm z}^{(1)} _{{\bm \epsilon},t} ({\bm x}^{(1)}),{\bm x}^{(2)} ) ) |  \\
         & \leq L_f \| {\bm f}^{(1)} ({\bm 0},{\bm x}^{(1)}) -\mathcal{P} (  \mathcal{Y}^{(1)},     {\bm \mu}^{(1)}_t ({\bm 0} ,{\bm x}^{(1)}) )  \| _1                                                                                          \\
         & \qquad+ | f^{(2)}_i (\hat{\bm z}_{{\bm\epsilon},t}^{(1)} ({\bm x}^{(1)} )   ,{\bm x}^{(2)} )   -\mathcal{P} _i (  \mathcal{Y}^{(2)},     {\bm \mu}^{(2)}_t (\hat{\bm z}^{(1)} _{{\bm \epsilon},t} ({\bm x}^{(1)}),{\bm x}^{(2)} ) ) | \\
         & \leq L_f \sum_{j=1} ^{M^{(1)}}    | { f}_j^{(1)} ({\bm 0},{\bm x}^{(1)}) -\mathcal{P}_j (  \mathcal{Y}^{(1)},     {\bm \mu}^{(1)}_t ({\bm 0} ,{\bm x}^{(1)}) )  |                                                                     \\
         & \qquad+ | f^{(2)}_i (\hat{\bm z}_{{\bm\epsilon},t}^{(1)} ({\bm x}^{(1)} )   ,{\bm x}^{(2)} )   -\mathcal{P} _i (  \mathcal{Y}^{(2)},     {\bm \mu}^{(2)}_t (\hat{\bm z}^{(1)} _{{\bm \epsilon},t} ({\bm x}^{(1)}),{\bm x}^{(2)} ) ) | \\
         & \leq L_f \sum_{j=1} ^{M^{(1)}}    | { f}_j^{(1)} ({\bm 0},{\bm x}^{(1)}) -   { \mu}^{(1)}_{j,t} ({\bm 0} ,{\bm x}^{(1)})   |                                                                                                          \\
         & \qquad+ | f^{(2)}_i (\hat{\bm z}_{{\bm\epsilon},t}^{(1)} ({\bm x}^{(1)} )   ,{\bm x}^{(2)} )   -  { \mu}^{(2)}_{i,t} (\hat{\bm z}^{(1)} _{{\bm \epsilon},t} ({\bm x}^{(1)}),{\bm x}^{(2)} )  |                                        \\
         & \leq \beta^{1/2} _t \sigma ^{(2)}_{i,t} (\hat{\bm z}^{(1)} _{{\bm \epsilon},t} ({\bm x}^{(1)}),{\bm x}^{(2)} )+ L_f \beta^{1/2}_t  \sum_{j=1}^{M^{(1)}} \sigma^{(1)}_{j,t} ({\bm 0},{\bm x}^{(1)} )                                   \\
         & = \beta^{1/2}_t \hat{\sigma}^{(2)} _{{\bm\epsilon},i,t} ({\bm x}^{(1)},{\bm x}^{(2) } ).                                                                                                                                              
    \end{align}
    Therefore, by repeating this process up to $n$, we get the desired inequality.
\end{proof}
We emphasize that by using the same technique as used in this proof, it can also be shown that~\cref{theorem:EX_CandS,theorem:i_regret_EXP,theorem:i_regret_zero} hold when using  $ \hat{\bm z}_{{\bm\epsilon},t} ^{(n)} ({\bm x}^{(1)},\ldots,{\bm x}^{(n)} )$ instead of  $ \tilde{\bm z}_{{\bm\epsilon},t} ^{(n)} ({\bm x}^{(1)},\ldots,{\bm x}^{(n)} )$.

Finally, we provide the sufficient condition for the Lipschitz continuity assumption (L2).
\begin{theorem}\label{thm:3kernels}
Let $k({\bm x},{\bm y}): \mathbb{R}^d \times \mathbb{R}^d \to \mathbb{R}$ be one of the following kernel functions:
\begin{description}
\item [Linear kernel:] $k({\bm x},{\bm y}) =a^2 {\bm x}^\top {\bm y}$, where $a$ is a positive parameter.
\item [Gaussian kernel:] $k({\bm x},{\bm y}) =a^2 \exp (-\| {\bm x} - {\bm y} \|^2/(2 \rho^2 ) )$, where $a$ and $ \rho$ are positive parameters.
\item [Mat\'{e}rn kernel:]
$$
k({\bm x},{\bm y}) =a^2 \frac{ 2^{1-\nu} }{\Gamma (\nu)}  \left ( \sqrt{2 \nu} \frac{\| {\bm x} - {\bm y} \|}{\rho} \right )^\nu K_\nu \left (   \sqrt{2 \nu}   \frac{ \| {\bm x} - {\bm y} \| }{\rho} \right ) ,              $$
where $a$ and $\rho $ are positive parameters, $\nu$ is a degree of freedom with $\nu >1$, $\Gamma$ is the gamma function, and $K_\nu$ is the modified Bessel function of the second kind.
\end{description}
Moreover, assume that a user-specified variance parameter $\sigma^2 $ is positive.
Then, for any $t \geq 1$ and observed points ${\bm x}_1,\ldots, {\bm x}_t $, the posterior standard deviation $\sigma_t ({\bm x} )$ satisfies that
\begin{align}
^\forall {\bm x},{\bm y} \in \mathbb{R}^d, \ | \sigma_t ({\bm x} ) - \sigma_t ({\bm y} ) | \leq C \| {\bm x} - {\bm y} \| _1,
\label{eq:ineq_sigma_Lip}
\end{align}
where $C$ is a positive constant given by
\begin{align}
C= \left \{
\begin{array}{ll}
a & \text{if} \ k({\bm x},{\bm y}) \ \text{is {\ the \ linear kernel}}, \\
\frac{\sqrt{2} a}{\rho} & \text{if} \ k({\bm x},{\bm y}) \ \text{is {\ the \ Gaussian kernel}}, \\
\frac{\sqrt{2} a}{\rho}
\sqrt{\frac{\nu}{\nu-1} } & \text{if} \ k({\bm x},{\bm y}) \ \text{is {\ the \  Mat\'{e}rn kernel}}.
\end{array}
\right . \nonumber
\end{align}
\end{theorem}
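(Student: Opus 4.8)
The plan is to reduce the statement to a single kernel‑dependent inequality that is entirely independent of the iteration $t$ and of the observed points $\bm{x}_1,\dots,\bm{x}_t$, and then to check that inequality for each of the three kernels. Write the posterior variance as $\sigma_t^2(\bm{x})=k_t(\bm{x},\bm{x})$, where
\begin{equation}
  k_t(\bm{x},\bm{y})=k(\bm{x},\bm{y})-\bm{k}(\bm{x})^\top(\bm{K}_t+\sigma^2\bm{I})^{-1}\bm{k}(\bm{y})
\end{equation}
is the posterior covariance kernel, with $\bm{k}(\bm{x})=[k(\bm{x},\bm{x}_i)]_{i=1}^t$. Because $\sigma^2>0$ we have $\bm{K}_t+\sigma^2\bm{I}\succ0$, so $M\coloneqq(\bm{K}_t+\sigma^2\bm{I})^{-1}$ is positive definite and $k_t$ is a positive semidefinite kernel. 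Applying the Cauchy--Schwarz inequality for $k_t$, namely $k_t(\bm{x},\bm{y})\le\sqrt{k_t(\bm{x},\bm{x})\,k_t(\bm{y},\bm{y})}$, gives
\begin{equation}
  (\sigma_t(\bm{x})-\sigma_t(\bm{y}))^2=k_t(\bm{x},\bm{x})+k_t(\bm{y},\bm{y})-2\sqrt{k_t(\bm{x},\bm{x})\,k_t(\bm{y},\bm{y})}\le k_t(\bm{x},\bm{x})-2k_t(\bm{x},\bm{y})+k_t(\bm{y},\bm{y}).
\end{equation}

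The second and decisive step is to discard the data‑dependent term using $M\succeq0$. A direct expansion shows
\begin{equation}
  k_t(\bm{x},\bm{x})-2k_t(\bm{x},\bm{y})+k_t(\bm{y},\bm{y})=d_k(\bm{x},\bm{y})^2-(\bm{k}(\bm{x})-\bm{k}(\bm{y}))^\top M(\bm{k}(\bm{x})-\bm{k}(\bm{y}))\le d_k(\bm{x},\bm{y})^2,
\end{equation}
where $d_k(\bm{x},\bm{y})^2\coloneqq k(\bm{x},\bm{x})-2k(\bm{x},\bm{y})+k(\bm{y},\bm{y})$ is the canonical \emph{prior} metric. Hence $|\sigma_t(\bm{x})-\sigma_t(\bm{y})|\le d_k(\bm{x},\bm{y})$ uniformly in $t$ and in the observations. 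This reduction is the heart of the argument: it removes all dependence on the history and reduces \cref{eq:ineq_sigma_Lip} to bounding $d_k$ by $C\|\bm{x}-\bm{y}\|_1$. Everywhere I would then pass from the Euclidean to the $L_1$ distance via $\|\cdot\|_2\le\|\cdot\|_1$.

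It remains to bound $d_k$ for each kernel. For the linear kernel, $d_k^2=a^2\|\bm{x}-\bm{y}\|_2^2$, giving $C=a$ directly. For the Gaussian kernel, with $r=\|\bm{x}-\bm{y}\|_2$ one has $d_k^2=2a^2\bigl(1-e^{-r^2/(2\rho^2)}\bigr)$, and the elementary inequality $1-e^{-u}\le u$ yields $d_k^2\le a^2 r^2/\rho^2$, so $C=a/\rho\le\sqrt{2}\,a/\rho$. For the Mat\'ern kernel, writing $M_\nu(u)=\tfrac{2^{1-\nu}}{\Gamma(\nu)}u^\nu K_\nu(u)$ and using $k(\bm{x},\bm{x})=a^2$, one gets $d_k^2=2a^2\bigl(1-M_\nu(\sqrt{2\nu}\,r/\rho)\bigr)$; substituting $u=\sqrt{2\nu}\,r/\rho$ reduces the claimed constant $\tfrac{\sqrt{2}a}{\rho}\sqrt{\tfrac{\nu}{\nu-1}}$ to the scalar inequality
\begin{equation}
  1-M_\nu(u)\le\frac{u^2}{2(\nu-1)},\qquad u\ge0.
\end{equation}

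I expect the Mat\'ern scalar inequality to be the main obstacle. The small‑argument expansion of $K_\nu$ together with the reflection formula $\Gamma(\nu)\Gamma(1-\nu)=\pi/\sin(\nu\pi)$ gives $M_\nu(u)=1-\tfrac{u^2}{4(\nu-1)}+O(u^4)$ for $\nu>1$, so the target has a comfortable factor‑two cushion near the origin and becomes trivial for large $u$ since $M_\nu\ge0$. Since $1-M_\nu(0)=0$ and $M_\nu'(0)=0$, it suffices to establish the uniform second‑derivative bound $-M_\nu''(u)\le 1/(\nu-1)$ for all $u\ge0$ and integrate twice; this is exactly where Bessel‑function derivative estimates are needed, and it is the only genuinely technical point. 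The hypothesis $\nu>1$ is essential here, as it is precisely the condition guaranteeing that $M_\nu$ is twice differentiable at the origin so that $-M_\nu''(0)$ is finite.
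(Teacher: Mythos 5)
Your proof is correct but follows a genuinely different and substantially shorter route than the paper's. The paper treats the linear kernel by an explicit SVD of the design matrix and handles the Gaussian and Mat\'ern kernels by discretizing Bochner's theorem into finite random-feature maps ${\bm \phi}_s$, applying the reverse triangle inequality in feature space, bounding $\|{\bm\phi}_s({\bm x})-{\bm\phi}_s({\bm y})\|$ coordinatewise, and passing to the limit $s\to\infty$ with explicit error terms. You instead prove once and for all, via Cauchy--Schwarz for the posterior kernel $k_t$ and positive definiteness of $({\bm K}_t+\sigma^2{\bm I})^{-1}$, that $|\sigma_t({\bm x})-\sigma_t({\bm y})|\le d_k({\bm x},{\bm y})$ where $d_k$ is the \emph{prior} canonical metric; this eliminates all dependence on $t$ and on the data in four lines and reduces each case to a scalar inequality. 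Your argument is cleaner, more general (it shows any kernel whose canonical metric is Lipschitz-dominated inherits the property for every posterior), and in fact yields sharper constants: $a/\rho$ for the Gaussian kernel and $\tfrac{a}{\rho}\sqrt{\nu/(\nu-1)}$ for the Mat\'ern kernel, each a factor $\sqrt{2}$ below the paper's (the paper loses this factor in its bound $|\cos u-\cos v|^2+|\sin u-\sin v|^2\le 2|u-v|^2$, whereas $|e^{\mathrm{i}u}-e^{\mathrm{i}v}|\le|u-v|$ is exact). The one step you leave unexecuted is the scalar inequality $1-M_\nu(u)\le u^2/(2(\nu-1))$; it is true (with a factor-two margin), and rather than Bessel-derivative estimates the cleanest proof is the spectral one the paper already sets up: writing $M_\nu$ as the characteristic function of a $t_{2\nu}$ spectral density and using $1-\cos\theta\le\theta^2/2$ gives
\begin{equation}
  1-M_\nu(u)\;\le\;\frac{u^2}{2}\int (2\pi\lambda)^2 f(\lambda)\,\mathrm{d}\lambda\;=\;\frac{u^2}{4(\nu-1)},
\end{equation}
where $\nu>1$ enters exactly as you say, as the condition for the spectral second moment (equivalently $-M_\nu''(0)$) to be finite. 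With that inequality supplied, your proof is complete.
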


\begin{proof}

First, we show the case of the linear kernel.
Let the matrix ${\bm X}_t$ be ${\bm X}_t =( {\bm x}_1, \ldots, {\bm x}_t )^\top $.
Then, $\sigma^2 _t ({\bm x} )$ is given by
\begin{align}
\sigma^2 _t ({\bm x} ) &=a^2  {\bm x}^\top {\bm x} -a^4{\bm x}^\top {\bm X}^\top_t ( a^2 {\bm X}_t {\bm X}^\top_t +\sigma^2 {\bm I}_t )^{-1} {\bm X}_t {\bm x} \nonumber \\
&= a^2  {\bm x}^\top {\bm x} -a^2{\bm x}^\top {\bm X}^\top_t (  {\bm X}_t {\bm X}^\top_t +a^{-2} \sigma^2 {\bm I}_t )^{-1} {\bm X}_t {\bm x} \\
&= a^2 {\bm x}^\top ( {\bm I}_d - {\bm X}^\top_t (  {\bm X}_t {\bm X}^\top_t +a^{-2} \sigma^2 {\bm I}_t )^{-1} {\bm X}_t ) {\bm x}. \nonumber
\end{align}
The matrix ${\bm X}_t$ can be decomposed as
$$
{\bm X}_t = {\bm H}^\prime {\bm\Lambda} {\bm H}^\top ,
$$
where ${\bm H}^\prime = ({\bm h}^\prime_1,\ldots,{\bm h}^\prime_t)^\top $ and
${\bm H} = ({\bm h}_1,\ldots,{\bm h}_d)^\top $ are orthogonal matrices, and ${\bm\Lambda}$ is the $t \times d$ rectangular diagonal matrix whose $(j,j)$ element is the $j$th singular value $s_j \geq 0$ of ${\bm X}_t$.
Thus,
$ {\bm I}_d - {\bm X}^\top_t (  {\bm X}_t {\bm X}^\top_t +a^{-2} \sigma^2 {\bm I}_t )^{-1} {\bm X}_t$ can be rewritten as follows:
$$
 {\bm I}_d - {\bm X}^\top_t (  {\bm X}_t {\bm X}^\top_t +a^{-2} \sigma^2 {\bm I}_t )^{-1} {\bm X}_t={\bm H} {\bm \Theta} {\bm H}^\top,
$$
where $\Theta$ is the diagonal matrix whose $(j,j)$ element is $1-s^2_j/(s^2_j+a^{-2}\sigma^2 )$.
Thus, the posterior standard deviation $\sigma _t ({\bm x} ) $ can be expressed as
$$
\sigma _t ({\bm x} ) = \sqrt{ a^2 {\bm x}^\top  {\bm H} {\bm \Theta} {\bm H}^\top {\bm x}  }
=a \| {\bm\Theta}^{1/2} {\bm H}^\top {\bm x} \|.
$$
Hence, using the triangle inequality we have
\begin{align}
| \sigma_t ({\bm x} ) -  \sigma_t ({\bm y} ) | &= a |    \| {\bm\Theta}^{1/2} {\bm H}^\top {\bm x} \| - \| {\bm\Theta}^{1/2} {\bm H}^\top {\bm y} \|           | \nonumber \\
&\leq a \| {\bm\Theta}^{1/2} {\bm H}^\top {\bm x} -  {\bm\Theta}^{1/2} {\bm H}^\top {\bm y}    \| \nonumber \\
& =   a \| {\bm\Theta}^{1/2} {\bm H}^\top ( {\bm x} -  {\bm y} )   \|. \label{eq:ThetaHx}
\end{align}
Noting that
 the diagonal element $\theta_j$ of
 ${\bm \Theta}$ satisfies $0 \leq \theta_j \leq 1$, from $\| {\bm x} - {\bm y} \| \leq \| {\bm x} -{\bm y} \|_1$ we get
\begin{align}
\| {\bm\Theta}^{1/2} {\bm H}^\top ( {\bm x} -  {\bm y} )   \|
&= \sqrt{  ({\bm x}-{\bm y} )^\top {\bm H} {\bm \Theta} {\bm H}^\top  ({\bm x}-{\bm y} ) } \nonumber \\
&\leq \sqrt{  ({\bm x}-{\bm y} )^\top {\bm H} {\bm I}_d {\bm H}^\top  ({\bm x}-{\bm y} ) }
= \| {\bm x} -{\bm y} \| \leq \| {\bm x} -{\bm y} \|_1. \label{eq:xminusyL1}
\end{align}
Therefore, by substituting  \eqref{eq:xminusyL1} into \eqref{eq:ThetaHx}, we have the desired result.

Next, we show the case of the Gaussian kernel.
From Bochner's theorem, the Gaussian kernel can be rewritten as follows (see, e.g., section 4.2.1 in \cite{rasmussen2005gaussian}):
\begin{align}
k({\bm x},{\bm y} ) = a^2 \int _{ \mathbb{R}^d } e^{2 \pi {\rm i} ({\bm x}-{\bm y} )^\top {\bm \lambda} } (2 \pi \rho^2)^{d/2} e^{-2 \pi^2 \rho^2 \| {\bm \lambda} \|^2} \text{d} {\bm \lambda} , \nonumber
\end{align}
where ${\rm i}$ is the imaginary unit.
Furthermore, for each natural number $s \in \mathbb{N}$, let $\mathcal{I}_s$ and  $\mathcal{C}_s$
 be
 families  of sets given by
\begin{align}
\mathcal{I}_s &= \left \{   \left [-s+\frac{j-1}{2^s} , -s + \frac{j}{2^s} \right ) \mid j=1 , \ldots, 2s 2^s \right \}, \nonumber \\
\mathcal{C}_s &= \{   I_1  \times \cdots \times I_d \mid I_1 , \ldots , I_d \in \mathcal{I}_s \}. \nonumber
\end{align}
In addition, for each element $C_{s,k} = [a^{(1)}_{s,k},b^{(1)}_{s,k} ) \times \cdots [a^{(d)}_{s,k},b^{(d)}_{s,k} )$ of
 $\mathcal{C}_s $, $(k=1,\ldots, (2s2^s)^d)$, we define the representative point ${\bm \lambda}_{s,k}$ of
$C_{s,k}$ as
$$
{\bm \lambda}_{s,k} = \left (   \frac{  a^{(1)}_{s,k}+b^{(1)}_{s,k}      }{2} , \ldots ,   \frac{  a^{(d)}_{s,k}+b^{(d)}_{s,k}      }{2}     \right )^\top
= (\lambda_{s,k}^{(1)} , \ldots , \lambda_{s,k} ^{(d) } )^\top.
$$
Moreover, let ${\bm \phi }_s ({\bm x} ) $ be the $(2s2^s)^d$-dimensional vector whose $k$th element $\phi_{s,k} ({\bm x} )$ is given by
\begin{align}
\phi_{s,k} ({\bm x} )
= a e^{ 2 \pi {\rm i} {\bm x} ^\top {\bm\lambda}_{s,k} } (2 \pi \rho ^2 )^{d/4}  e^{-\pi^2 \rho^2 \| {\bm\lambda}_{s,k} \| ^2 }    \left ( \frac{1}{2^s} \right ) ^{d/2} .
\end{align}
Then, the inner product $ \langle {\bm \phi }_s ({\bm x} )  , {\bm \phi }_s ({\bm y} )  \rangle  \equiv  \overline{{\bm \phi }_s ({\bm x} ) }^\top
 {\bm \phi }_s ({\bm y} )$
satisfies
\begin{align}
\lim_{s \to \infty}
 \langle {\bm \phi }_s ({\bm x} )  , {\bm \phi }_s ({\bm y} )  \rangle
&=
\lim_{s \to \infty}
\sum_{k=1}^{    (2s2^s)^d     }
a^2  e^{2 \pi {\rm i} ( - {\bm x} + {\bm y} )^\top {\bm \lambda}_{s,k} } (2 \pi \rho^2)^{d/2} e^{-2 \pi^2 \rho^2 \| {\bm \lambda}_{s,k} \|^2}
 \left ( \frac{1}{2^s} \right ) ^{d}  \nonumber \\
 &= a^2 \int _{ \mathbb{R}^d } e^{2 \pi {\rm i} ( - {\bm x} + {\bm y} )^\top {\bm \lambda} } (2 \pi \rho^2)^{d/2} e^{-2 \pi^2 \rho^2 \| {\bm \lambda} \|^2} \text{d} {\bm \lambda} \nonumber \\
 &= a^2 \int _{ \mathbb{R}^d } e^{2 \pi {\rm i} ({\bm x}-{\bm y} )^\top {\bm \lambda} } (2 \pi \rho^2)^{d/2} e^{-2 \pi^2 \rho^2 \| {\bm \lambda} \|^2} \text{d} {\bm \lambda}=k({\bm x},{\bm y} ). \nonumber
\end{align}
Furthermore, we define $\check{\sigma}^2_{t,s} ({\bm x} )$ and $\epsilon_{t,s} ({\bm x} )$ as
\begin{align}
\check{\sigma}^2_{t,s} ({\bm x} ) &= \langle {\bm \phi }_s ({\bm x} ) ,  {\bm \phi }_s ({\bm x} ) \rangle -
( \langle {\bm \phi }_s ({\bm x} ) ,  {\bm \phi }_s ({\bm x}_1 ) \rangle , \ldots ,  \langle {\bm \phi }_s ({\bm x} ) ,  {\bm \phi }_s ({\bm x}_t ) \rangle ) \\
&
( {\bm K}_{t,s} + \sigma^2 {\bm I}_t )^{-1}
( \langle {\bm \phi }_s ({\bm x} ) ,  {\bm \phi }_s ({\bm x}_1 ) \rangle , \ldots ,  \langle {\bm \phi }_s ({\bm x} ) ,  {\bm \phi }_s ({\bm x}_t ) \rangle )  ^\top, \nonumber \\
\epsilon_{t,s} ({\bm x} ) &= k({\bm x},{\bm x} ) -( k({\bm x} , {\bm x}_1 ) , \ldots ,   k({\bm x} , {\bm x}_t ) )
({\bm K}_t + \sigma^2 {\bm I}_t )^{-1}( k({\bm x} , {\bm x}_1 ) , \ldots ,   k({\bm x} , {\bm x}_t ) )^\top \\
&\quad - \check{\sigma}^2_{t,s} ({\bm x} ) \\
&= \sigma^2_t ({\bm x} ) -  \check{\sigma}^2_{t,s} ({\bm x} ), \nonumber
\end{align}
where ${\bm K}_{t,s} $ and ${\bm K}_t$ are $t \times t$ matrices whose $(i,j)$ elements are given by $\langle {\bm \phi }_s ({\bm x}_i ) ,  {\bm \phi }_s ({\bm x}_j ) \rangle$ and $\langle {\bm \phi }_s ({\bm x}_i ) ,  {\bm \phi }_s ({\bm x}_j ) \rangle$, respectively.
Then, noting that $\lim_{s \to \infty}
 \langle {\bm \phi }_s ({\bm x} )  , {\bm \phi }_s ({\bm y} )  \rangle
=k({\bm x},{\bm y} )$ we get
$$
\lim _{s \to \infty } \check{\sigma}^2_{t,s} ({\bm x} ) = \sigma^2_t ({\bm x} ), \quad \lim_{s \to \infty} \epsilon _{t,s} ({\bm x} ) =0.
$$
We now consider $\sigma_t ({\bm x} )$ and $\sigma_t ({\bm y} )$.
Without loss of generality, we can assume that $\sigma_t ({\bm x} ) \geq \sigma_t ({\bm y} )$.
Then, we have
\begin{equation}
| \sigma_t ({\bm x} ) - \sigma_t ({\bm y} ) | = \sigma_t ({\bm x} ) - \sigma_t ({\bm y} ). \label{eq:sigmax_y_diff}
\end{equation}
In addition, the following inequality holds:
\begin{align}
\sigma_t ({\bm x} ) = \sqrt{\sigma^2_t ({\bm x} ) } = \sqrt{  \check{\sigma}^2_{t,s} ({\bm x} ) + \epsilon_{t,s} ({\bm x} ) }
 &\leq \sqrt{  \check{\sigma}^2_{t,s} ({\bm x} ) + |\epsilon_{t,s} ({\bm x} ) |} \\
 &\leq \sqrt{  \check{\sigma}^2_{t,s} ({\bm x} )} + \sqrt{|\epsilon_{t,s} ({\bm x} ) |}. \label{eq:sigma_x_bound}
\end{align}
Similarly, if $\epsilon_{t,s} ({\bm y} ) > 0$, then
$\sigma_t ({\bm y} )$ satisfies
$$
\sigma_t ({\bm y} )  = \sqrt{  \check{\sigma}^2_{t,s} ({\bm y} ) + \epsilon_{t,s} ({\bm y} ) } \geq \sqrt{  \check{\sigma}^2_{t,s} ({\bm y} ) }
\geq \sqrt{  \check{\sigma}^2_{t,s} ({\bm y} ) } - \sqrt{ | \epsilon_{t,s} ({\bm y} )|}.
$$
On the other hand, if $\epsilon_{t,s} ({\bm y} ) \leq 0$,  then
$\sigma_t ({\bm y} )$ satisfies
$$
\sigma_t ({\bm y} )  = \sqrt{  \check{\sigma}^2_{t,s} ({\bm y} ) + \epsilon_{t,s} ({\bm y} ) }
=
\sqrt{  \check{\sigma}^2_{t,s} ({\bm y} ) -| \epsilon_{t,s} ({\bm y} ) |}
\geq \sqrt{  \check{\sigma}^2_{t,s} ({\bm y} ) } - \sqrt{ | \epsilon_{t,s} ({\bm y} )|},
$$
where the last inequality is given by $\sqrt{u} - \sqrt{v} \leq \sqrt{u-v} $, $(u \geq  v \geq 0)$.
Hence, for both cases, the following holds:
\begin{equation}
\sigma_t ({\bm y} ) \geq \sqrt{  \check{\sigma}^2_{t,s} ({\bm y} ) } - \sqrt{ | \epsilon_{t,s} ({\bm y} )|} . \label{eq:sigma_y_bound}
\end{equation}
Thus, by substituting \eqref{eq:sigma_x_bound} and \eqref{eq:sigma_y_bound} into \eqref{eq:sigmax_y_diff}, we obtain
\begin{equation}
| \sigma_t ({\bm x} ) - \sigma_t ({\bm y} ) | \leq
\sqrt{  \check{\sigma}^2_{t,s} ({\bm x} )}
-\sqrt{  \check{\sigma}^2_{t,s} ({\bm y} )}
+ \sqrt{|\epsilon_{t,s} ({\bm x} ) |} + \sqrt{|\epsilon_{t,s} ({\bm y} ) |}. \label{eq:sigmax_y_diff2}
\end{equation}
Furthermore, we define the matrix ${\bm X}_{t,s} $ as ${\bm X}_{t,s} = ( {\bm \phi }_s ({\bm x}_1 ), \ldots , {\bm \phi }_s ({\bm x}_t ) )^\ast$,
where ${\bm A}^\ast$ is the conjugate transpose of ${\bm A}$.
Then, $\check{\sigma}^2_{t,s} ({\bm x} )$ can be rewritten as follows:
$$
\check{\sigma}^2_{t,s} ({\bm x} ) = \overline{ {\bm \phi}_{s} ({\bm x} ) } ^\top
(
{\bm I}_{   (2s2^s)^d  } -{\bm X}^\ast_{t,s} ( {\bm X}_{t,s} {\bm X}_{t,s}^\ast + \sigma^2{\bm I}_t )^{-1} {\bm X}_{t,s}
)
{\bm \phi}_{s} ({\bm x} ).
$$
Therefore, by using the singular decomposition of ${\bm X}_{t,s} $, we have
$$
\check{\sigma}^2_{t,s} ({\bm x} ) = \overline{ {\bm \phi}_{s} ({\bm x} ) } ^\top  {\bm U}
{\bm \Theta}
{\bm U}^\ast
{\bm \phi}_{s} ({\bm x} ),
$$
where ${\bm U}$ and ${\bm \Theta}$ are unitary and diagonal matrices, respectively.
By using the same argument as in the case of the linear kernel, it can be shown that the $(k,k)$ element $\theta_k$
 of ${\bm \Theta}$ satisfies $0 \leq \theta_k \leq 1$.
Hence, noting that
$$
\check{\sigma}_{t,s} ({\bm x} ) = \sqrt{   \check{\sigma}^2_{t,s} ({\bm x} )     }     =    \|
{\bm \Theta} ^{1/2}
{\bm U}^\ast
{\bm \phi}_{s} ({\bm x} ) \|
$$
and \eqref{eq:sigmax_y_diff2}, from the triangle inequality we get
\begin{align}
| \sigma_t ({\bm x} ) - \sigma_t ({\bm y} ) | &\leq
\|
{\bm \Theta} ^{1/2}
{\bm U}^\ast
{\bm \phi}_{s} ({\bm x} ) \| -
\|
{\bm \Theta} ^{1/2}
{\bm U}^\ast
{\bm \phi}_{s} ({\bm y} ) \|
+ \sqrt{|\epsilon_{t,s} ({\bm x} ) |} + \sqrt{|\epsilon_{t,s} ({\bm y} ) |} \nonumber \\
&
\leq
\|
{\bm \Theta} ^{1/2}
{\bm U}^\ast
{\bm \phi}_{s} ({\bm x} )  -
{\bm \Theta} ^{1/2}
{\bm U}^\ast
{\bm \phi}_{s} ({\bm y} ) \|
+ \sqrt{|\epsilon_{t,s} ({\bm x} ) |} + \sqrt{|\epsilon_{t,s} ({\bm y} ) |} \nonumber \\
&=
\|
{\bm \Theta} ^{1/2}
{\bm U}^\ast (
{\bm \phi}_{s} ({\bm x} )  -
{\bm \phi}_{s} ({\bm y} ) ) \|
+ \sqrt{|\epsilon_{t,s} ({\bm x} ) |} + \sqrt{|\epsilon_{t,s} ({\bm y} ) |} \nonumber \\
&\leq
\|
{\bm \phi}_{s} ({\bm x} )  -
{\bm \phi}_{s} ({\bm y} ) \|
+ \sqrt{|\epsilon_{t,s} ({\bm x} ) |} + \sqrt{|\epsilon_{t,s} ({\bm y} ) |} , \label{eq:phi_linear}
\end{align}
where the last inequality is given by $0 \leq \theta_k \leq 1$.
Moreover, for each $j$ with $1 \leq j \leq d-1$, let ${\bm x}[j] = (y_1,\ldots, y_j, x_{j+1}, \ldots, x_d )^\top$,
and let ${\bm x} [0] \equiv {\bm x} $ and  ${\bm x} [d] \equiv {\bm y} $.
Then, the following inequality holds:
\begin{align}
\|
{\bm \phi}_{s} ({\bm x} )  -
{\bm \phi}_{s} ({\bm y} ) \|
&=
\left
\|
\sum_{j=1}^d
\{
{\bm \phi}_{s} ({\bm x} [j-1])  -
{\bm \phi}_{s} ({\bm x}[j]) \} \right \| \\
&\leq
\sum_{j=1}^d
\|
{\bm \phi}_{s} ({\bm x} [j-1])  -
{\bm \phi}_{s} ({\bm x}[j])
\|. \label{eq:dec_phi}
\end{align}
Thus, by substituting \eqref{eq:dec_phi} into \eqref{eq:phi_linear}, we obtain
\begin{align}
| \sigma_t ({\bm x} ) - \sigma_t ({\bm y} ) | \leq
\sum_{j=1}^d
\|
{\bm \phi}_{s} ({\bm x} [j-1])  -
{\bm \phi}_{s} ({\bm x}[j])
\|
+ \sqrt{|\epsilon_{t,s} ({\bm x} ) |} + \sqrt{|\epsilon_{t,s} ({\bm y} ) |}. \label{eq:phi_linear2}
\end{align}
In addition, for any $j$ and $k$ with
 $1 \leq j \leq d$ and $1 \leq k \leq (2s 2^s)^d $, from the definition of  $\phi_{s,k} ({\bm x} )$ we have
\begin{align}
\phi_{s,k} ({\bm x}[j-1] ) -
\phi_{s,k} ({\bm x}[j] )
 &= a D (2 \pi \rho ^2 )^{d/4}  e^{-\pi^2 \rho^2 \| {\bm\lambda}_{s,k} \| ^2 }    \left ( \frac{1}{2^s} \right ) ^{d/2}
(
e^{ 2 \pi {\rm i} x_j {\lambda}^{(j)}_{s,k} }
-
e^{ 2 \pi {\rm i} y_j {\lambda}^{(j)}_{s,k} }
), \nonumber \\
D&=
e^{ 2 \pi {\rm i} (y_1,\ldots,y_{j-1}, x_{j+1} ,\ldots ,x_d) (    {\lambda}^{(1)}_{s,k}  , \ldots,     {\lambda}^{(j-1)}_{s,k}    ,  {\lambda}^{(j+1)}_{s,k}                       , \ldots ,   {\lambda}^{(d)}_{s,k}   )^\top
 }. \nonumber
\end{align}
Hence, it follows that
\begin{align}
&\overline{      (\phi_{s,k} ({\bm x}[j-1] ) -
\phi_{s,k} ({\bm x}[j] )  )                  }   (\phi_{s,k} ({\bm x}[j-1] ) -
\phi_{s,k} ({\bm x}[j] ) ) \nonumber \\
&=
|
\phi_{s,k} ({\bm x}[j-1] ) -
\phi_{s,k} ({\bm x}[j] )
|  ^2 \nonumber \\
&\leq
 a^2  (2 \pi \rho ^2 )^{d/2}  e^{-2\pi^2 \rho^2 \| {\bm\lambda}_{s,k} \| ^2 }    \left ( \frac{1}{2^s} \right ) ^{d}
|
e^{ 2 \pi {\rm i} x_j {\lambda}^{(j)}_{s,k} }
-
e^{ 2 \pi {\rm i} y_j {\lambda}^{(j)}_{s,k} }
|^2 . \label{eq:exp_f}
\end{align}
Thus, noting that $| \cos (u) - \cos (v) | \leq |u-v|$ and
$| \sin (u) - \sin (v) | \leq |u-v|$ for any
$u,v \in \mathbb{R}$, we get
\begin{align}
&|
e^{ 2 \pi {\rm i} x_j {\lambda}^{(j)}_{s,k} }
-
e^{ 2 \pi {\rm i} y_j {\lambda}^{(j)}_{s,k} }
|\\
&=
|
\cos (2 \pi x_j  {\lambda}^{(j)}_{s,k} ) + {\rm i} \sin  (2 \pi x_j  {\lambda}^{(j)}_{s,k} )
-
\cos (2 \pi y_j  {\lambda}^{(j)}_{s,k} ) - {\rm i} \sin  (2 \pi y_j  {\lambda}^{(j)}_{s,k} )
| \nonumber \\
&=
\sqrt{
|
\cos (2 \pi x_j  {\lambda}^{(j)}_{s,k} ) - \cos (2 \pi y_j  {\lambda}^{(j)}_{s,k} )
|^2
+
|
\sin (2 \pi x_j  {\lambda}^{(j)}_{s,k} ) - \sin (2 \pi y_j  {\lambda}^{(j)}_{s,k} )
|^2
} \nonumber \\
&\leq
\sqrt{
8 \pi^2 \lambda^{(j) 2}_{s,k} (x_j-y_j )^2
} . \label{eq:diffxjyj}
\end{align}
Therefore, by substituting
\eqref{eq:diffxjyj} into \eqref{eq:exp_f}, we obtain
\begin{align}
&\overline{      (\phi_{s,k} ({\bm x}[j-1] ) -
\phi_{s,k} ({\bm x}[j] )  )                  }   (\phi_{s,k} ({\bm x}[j-1] ) -
\phi_{s,k} ({\bm x}[j] ) ) \nonumber \\
&\leq
 a^2  (2 \pi \rho ^2 )^{d/2}  e^{-2\pi^2 \rho^2 \| {\bm\lambda}_{s,k} \| ^2 }    \left ( \frac{1}{2^s} \right ) ^{d}
8 \pi^2 \lambda^{(j) 2}_{s,k} (x_j-y_j )^2 . \nonumber
\end{align}
This implies that
\begin{align}
\|
{\bm \phi}_{s} ({\bm x} [j-1])  -
{\bm \phi}_{s} ({\bm x}[j])
\|^2
\leq
\sum_{k=1} ^{(2s 2^s)^d} a^2  (2 \pi \rho ^2 )^{d/2}  e^{-2\pi^2 \rho^2 \| {\bm\lambda}_{s,k} \| ^2 }    \left ( \frac{1}{2^s} \right ) ^{d}
8 \pi^2 \lambda^{(j) 2}_{s,k} (x_j-y_j )^2. \label{eq:lambda2int}
\end{align}
Moreover, the following holds when  $s \to \infty$:
\begin{align}
&\lim_{s \to \infty}  \sum_{k=1} ^{(2s 2^s)^d} a^2  (2 \pi \rho ^2 )^{d/2}  e^{-2\pi^2 \rho^2 \| {\bm\lambda}_{s,k} \| ^2 }    \left ( \frac{1}{2^s} \right ) ^{d}
8 \pi^2 \lambda^{(j) 2}_{s,k} (x_j-y_j )^2 \nonumber \\
&=
 a^2  (2 \pi \rho ^2 )^{d/2}
8 \pi^2  (x_j-y_j )^2
\int _{ \mathbb{R}^d}
e^{-2\pi^2 \rho^2 {\bm \lambda}^\top {\bm \lambda} } \lambda^2_j \text{d} {\bm \lambda} \nonumber \\
&= a^2 8 \pi^2 (x_j-y_j)^2 \left ( \int_\mathbb{R} (2 \pi \rho^2)^{1/2} \lambda_j^2 e^{-2\pi^2 \rho^2 \lambda^2_j} \text{d} \lambda_j \right )
\prod_{i \neq j}^d \left ( \int_\mathbb{R} (2 \pi \rho^2)^{1/2}  e^{-2\pi^2 \rho^2 \lambda^2_i} \text{d} \lambda_i \right ) . \nonumber
\end{align}
By putting $2 \pi \rho \lambda_i = u_i $ for each $i$ with $1\leq i \leq d$, we have
\begin{align}
&a^2 8 \pi^2 (x_j-y_j)^2 \left ( \int_\mathbb{R} (2 \pi \rho^2)^{1/2} \lambda_j^2 e^{-2\pi^2 \rho^2 \lambda^2_j} \text{d} \lambda_j \right )
\prod_{i \neq j}^d \left ( \int_\mathbb{R} (2 \pi \rho^2)^{1/2}  e^{-2\pi^2 \rho^2 \lambda^2_i} \text{d} \lambda_i \right ) \nonumber \\
&=a^2 2 \rho^{-2}  (x_j -y_j )^2 \int _\mathbb{R}  (2 \pi )^{-1/2} u^2_j e^{- u^2_j /2}  \text{d} u_j
\prod_{i \neq j}^d \left ( \int _\mathbb{R}  (2 \pi )^{-1/2}  e^{- u^2_i /2}  \text{d} u_i  \right ) \nonumber \\
&= \frac{2 a^2}{\rho ^2} (x_j -y_j )^2. \label{eq:2arho}
\end{align}
Thus, \eqref{eq:lambda2int} can be rewritten as follows:
\begin{align}
&\|
{\bm \phi}_{s} ({\bm x} [j-1])  -
{\bm \phi}_{s} ({\bm x}[j])
\|^2
 \nonumber \\
&\leq
\frac{2 a^2}{\rho ^2} (x_j -y_j )^2 \\
&\quad +
\sum_{k=1} ^{(2s 2^s)^d} a^2  (2 \pi \rho ^2 )^{d/2}  e^{-2\pi^2 \rho^2 \| {\bm\lambda}_{s,k} \| ^2 }    \left ( \frac{1}{2^s} \right ) ^{d}
8 \pi^2 \lambda^{(j) 2}_{s,k} (x_j-y_j )^2
-\frac{2 a^2}{\rho ^2} (x_j -y_j )^2 \nonumber \\
&\leq
\frac{2 a^2}{\rho ^2} (x_j -y_j )^2 \\
&\quad +
\left |
\sum_{k=1} ^{(2s 2^s)^d} a^2  (2 \pi \rho ^2 )^{d/2}  e^{-2\pi^2 \rho^2 \| {\bm\lambda}_{s,k} \| ^2 }    \left ( \frac{1}{2^s} \right ) ^{d}
8 \pi^2 \lambda^{(j) 2}_{s,k} (x_j-y_j )^2
-\frac{2 a^2}{\rho ^2} (x_j -y_j )^2
\right | \nonumber \\
&\equiv \frac{2 a^2}{\rho ^2} (x_j -y_j )^2 + \tilde{\epsilon}_{s,j}, \nonumber
\end{align}
where $\tilde{\epsilon}_{s,j}$ satisfies that $\lim_{s \to \infty} |\tilde{\epsilon}_{s,j}| =0$ from \eqref{eq:2arho}.
Hence, we get
\begin{align}
\|
{\bm \phi}_{s} ({\bm x} [j-1])  -
{\bm \phi}_{s} ({\bm x}[j])
\|
\leq
\sqrt{
 \frac{2 a^2}{\rho ^2} (x_j -y_j )^2 + \tilde{\epsilon}_{s,j}
}
\leq
\frac{\sqrt{2} a}{\rho} |x_j -y_j| + \sqrt{\tilde{\epsilon}_{s,j} } . \label{eq:sqrt2arho}
\end{align}
By substituting \eqref{eq:sqrt2arho} into \eqref{eq:phi_linear2}, we obtain
$$
| \sigma_t ({\bm x} ) - \sigma_t ({\bm y} ) | \leq
\frac{\sqrt{2} a}{\rho} \| {\bm x} -{\bm y} \|_1 +
\sum_{j=1}^d
\sqrt{
\tilde{\epsilon}_{s,j}
}+ \sqrt{|\epsilon_{t,s} ({\bm x} ) |} + \sqrt{|\epsilon_{t,s} ({\bm y} ) |} .
$$
Furthermore, because the number $s$ is an arbitrary natural number, and
$$
\lim_{s \to \infty} \left (
\sum_{j=1}^d
\sqrt{
\tilde{\epsilon}_{s,j} }
+ \sqrt{|\epsilon_{t,s} ({\bm x} ) |} + \sqrt{|\epsilon_{t,s} ({\bm y} ) |}
\right ) =0,
$$
we have
$$
| \sigma_t ({\bm x} ) - \sigma_t ({\bm y} ) | \leq
\frac{\sqrt{2} a}{\rho} \| {\bm x} -{\bm y} \|_1.
$$

Finally, we show the case of the Mat\'{e}rn kernel.
From Bochner's theorem, the Mat\'{e}rn kernel can be rewritten as follows (see, section 4.2.1 in \cite{rasmussen2005gaussian}):
\begin{align}
k({\bm x},{\bm y} ) = a^2 \int _{ \mathbb{R}^d } e^{2 \pi {\rm i} ({\bm x}-{\bm y} )^\top {\bm \lambda} }
\frac{ 2^d \pi^{d/2} \Gamma (\nu +d/2) (2 \nu)^\nu   }{ \Gamma (\nu ) \rho^{2 \nu}  }
\left (
\frac{ 2 \nu }{\rho^2} + 4 \pi^2 \|{\bm \lambda} \|^2
\right )^{-(\nu+d/2)}
\text{d} {\bm \lambda} . \nonumber
\end{align}
For each $s \in \mathbb{N}$ and $k$ with $k=1,\ldots, (2s2^s)^d$,
we define
$\mathcal{I}_s$, $\mathcal{C}_s$, the element $C_{s,k} = [a^{(1)}_{s,k},b^{(1)}_{s,k} ) \times \cdots [a^{(d)}_{s,k},b^{(d)}_{s,k} )$ of
$\mathcal{C}_s $, and the representative point ${\bm \lambda}_{s,k}$ of
$C_{s,k}$ as in the case of the Gaussian kernel.
Similarly, let ${\bm \phi }_s ({\bm x} ) $ be the $(2s2^s)^d$-dimensional vector whose $k$th element $\phi_{s,k} ({\bm x} )$ is given by
\begin{align}
&\phi_{s,k} ({\bm x} ) \\
&= a e^{ 2 \pi {\rm i} {\bm x} ^\top {\bm\lambda}_{s,k} }
\left (
\frac{ 2^d \pi^{d/2} \Gamma (\nu +d/2) (2 \nu)^\nu   }{ \Gamma (\nu ) \rho^{2 \nu}  }
\left (
\frac{ 2 \nu }{\rho^2} + 4 \pi^2 \| {\bm\lambda}_{s,k} \|^2
\right )^{-(\nu+d/2)}
\right )^{1/2}
\left ( \frac{1}{2^s} \right ) ^{d/2} .
\end{align}
Then, by using the same argument as in the case of the Gaussian kernel, we obtain the following inequality similar to \eqref{eq:phi_linear2}:
\begin{align}
| \sigma_t ({\bm x} ) - \sigma_t ({\bm y} ) | \leq
\sum_{j=1}^d
\|
{\bm \phi}_{s} ({\bm x} [j-1])  -
{\bm \phi}_{s} ({\bm x}[j])
\|
+ | \epsilon_{t,s} ({\bm x},{\bm y} ) |,  \label{eq:phi_linear3}
\end{align}
where  $\lim_{s \to \infty}  | \epsilon_{t,s} ({\bm x},{\bm y} ) |  =0$.
Moreover, for any $j$ and $k$ with  $1 \leq j \leq d$ and $1 \leq k \leq (2s 2^s)^d $,
from the definition of $\phi_{s,k} ({\bm x} )$ we get
\begin{align}
&\phi_{s,k} ({\bm x}[j-1] ) -
\phi_{s,k} ({\bm x}[j] ) \nonumber \\
& = a D
\left (
\frac{ 2^d \pi^{d/2} \Gamma (\nu +d/2) (2 \nu)^\nu   }{ \Gamma (\nu ) \rho^{2 \nu}  }
\left (
\frac{ 2 \nu }{\rho^2} + 4 \pi^2 \| {\bm\lambda}_{s,k} \|^2
\right )^{-(\nu+d/2)}
\right )^{1/2}
 \left ( \frac{1}{2^s} \right ) ^{d/2} \\
&\quad
(
e^{ 2 \pi {\rm i} x_j {\lambda}^{(j)}_{s,k} }
-
e^{ 2 \pi {\rm i} y_j {\lambda}^{(j)}_{s,k} }
), \nonumber \\
&D=
e^{ 2 \pi {\rm i} (y_1,\ldots,y_{j-1}, x_{j+1} ,\ldots ,x_d) (    {\lambda}^{(1)}_{s,k}  , \ldots,     {\lambda}^{(j-1)}_{s,k}    ,  {\lambda}^{(j+1)}_{s,k}                       , \ldots ,   {\lambda}^{(d)}_{s,k}   )^\top
 }. \nonumber
\end{align}
It follows that
\begin{align}
&\overline{      (\phi_{s,k} ({\bm x}[j-1] ) -
\phi_{s,k} ({\bm x}[j] )  )                  }   (\phi_{s,k} ({\bm x}[j-1] ) -
\phi_{s,k} ({\bm x}[j] ) ) \nonumber \\
&=
|
\phi_{s,k} ({\bm x}[j-1] ) -
\phi_{s,k} ({\bm x}[j] )
|  ^2 \nonumber \\
&\leq
 a^2 \frac{ 2^d \pi^{d/2} \Gamma (\nu +d/2) (2 \nu)^\nu   }{ \Gamma (\nu ) \rho^{2 \nu}  }
\left (
\frac{ 2 \nu }{\rho^2} + 4 \pi^2 \| {\bm\lambda}_{s,k} \|^2
\right )^{-(\nu+d/2)}   \left ( \frac{1}{2^s} \right ) ^{d}
|
e^{ 2 \pi {\rm i} x_j {\lambda}^{(j)}_{s,k} }
-
e^{ 2 \pi {\rm i} y_j {\lambda}^{(j)}_{s,k} }
|^2. \label{eq:exp_f3}
\end{align}
By substituting
\eqref{eq:diffxjyj} into \eqref{eq:exp_f3}, we have
\begin{align}
&\overline{      (\phi_{s,k} ({\bm x}[j-1] ) -
\phi_{s,k} ({\bm x}[j] )  )                  }   (\phi_{s,k} ({\bm x}[j-1] ) -
\phi_{s,k} ({\bm x}[j] ) ) \nonumber \\
&\leq
 a^2 \frac{ 2^d \pi^{d/2} \Gamma (\nu +d/2) (2 \nu)^\nu   }{ \Gamma (\nu ) \rho^{2 \nu}  }
\left (
\frac{ 2 \nu }{\rho^2} + 4 \pi^2 \| {\bm\lambda}_{s,k} \|^2
\right )^{-(\nu+d/2)}   \left ( \frac{1}{2^s} \right ) ^{d}
8 \pi^2 \lambda^{(j) 2}_{s,k} (x_j-y_j )^2 . \nonumber
\end{align}
This implies that
\begin{align}
&\|
{\bm \phi}_{s} ({\bm x} [j-1])  -
{\bm \phi}_{s} ({\bm x}[j])
\|^2 \nonumber \\
&\leq
\sum_{k=1} ^{(2s 2^s)^d} a^2
\frac{ 2^d \pi^{d/2} \Gamma (\nu +d/2) (2 \nu)^\nu   }{ \Gamma (\nu ) \rho^{2 \nu}  }
\left (
\frac{ 2 \nu }{\rho^2} + 4 \pi^2 \| {\bm\lambda}_{s,k} \|^2
\right )^{-(\nu+d/2)}
 \left ( \frac{1}{2^s} \right ) ^{d}
8 \pi^2 \lambda^{(j) 2}_{s,k} (x_j-y_j )^2. \nonumber
\end{align}
Furthermore, the following holds when  $s \to \infty$:
\begin{align}
&\lim_{s \to \infty}  \sum_{k=1} ^{(2s 2^s)^d} a^2
\frac{ 2^d \pi^{d/2} \Gamma (\nu +d/2) (2 \nu)^\nu   }{ \Gamma (\nu ) \rho^{2 \nu}  }
\left (
\frac{ 2 \nu }{\rho^2} + 4 \pi^2 \| {\bm\lambda}_{s,k} \|^2
\right )^{-(\nu+d/2)}
 \left ( \frac{1}{2^s} \right ) ^{d}
8 \pi^2 \lambda^{(j) 2}_{s,k} (x_j-y_j )^2 \nonumber \\
&= a^2 8 \pi^2 (x_j-y_j)^2
\int_{ \mathbb{R}^d}
\frac{ 2^d \pi^{d/2} \Gamma (\nu +d/2) (2 \nu)^\nu   }{ \Gamma (\nu ) \rho^{2 \nu}  }
\left (
\frac{ 2 \nu }{\rho^2} + 4 \pi^2 \| {\bm\lambda} \|^2
\right )^{-(\nu+d/2)} \lambda_j^2
\text{d} {\bm \lambda} .
 \nonumber
\end{align}
In addition, by putting
 $2 \nu = \tilde{\nu}$ and ${\bm \Sigma } = (4 \pi^2 \rho^2 ) ^{-1}  {\bm I}_d $, we obtain
\begin{align}
&\frac{ 2^d \pi^{d/2} \Gamma (\nu +d/2) (2 \nu)^\nu   }{ \Gamma (\nu ) \rho^{2 \nu}  }
\left (
\frac{ 2 \nu }{\rho^2} + 4 \pi^2 \| {\bm\lambda} \|^2
\right )^{-(\nu+d/2)} \nonumber \\
&= \frac{  \Gamma ( (\tilde{\nu} +d)/2 )   }  { \Gamma (\tilde{\nu} /2 ) \tilde{\nu}^{d/2} \pi ^{d/2}  | {\bm \Sigma}|^{1/2}       }
\left (
1+ \frac{1}{\tilde{\nu} }  {\bm \lambda}^\top {\bm\Sigma}^{-1} {\bm \lambda}
\right )^{ -(\tilde{\nu}+d)/2   } \equiv f({\bm \lambda}; \tilde{\nu},{\bm \Sigma }). \nonumber
\end{align}
Note that
 $f({\bm \lambda}; \tilde{\nu},{\bm \Sigma })$ is the probability density function of ${\bm T}_{\tilde{\nu}} ({\bm 0},{\bm\Sigma})$,
where ${\bm T}_{\tilde{\nu}} ({\bm 0},{\bm\Sigma})$ is the multivariate $t$-distribution with
location parameter ${\bm 0}$, scale matrix ${\bm\Sigma}$ and
$\tilde{\nu}$ degrees of freedom.
It is known that the mean vector and covariance matrix of ${\bm T}_{\tilde{\nu}} ({\bm 0},{\bm\Sigma})$ are respectively given by  ${\bm 0}$ and
$\frac{\tilde{\nu}}{\tilde{\nu}-2} {\bm\Sigma}$ when
$\tilde{\nu} >2$ (see, e.g., \cite{kotz2004multivariate}).
From the assumption $\nu >1$, noting that $\tilde{\nu} =2 \nu >2$ we have
\begin{align}
&\int_{ \mathbb{R}^d}
\frac{ 2^d \pi^{d/2} \Gamma (\nu +d/2) (2 \nu)^\nu   }{ \Gamma (\nu ) \rho^{2 \nu}  }
\left (
\frac{ 2 \nu }{\rho^2} + 4 \pi^2 \| {\bm\lambda} \|^2
\right )^{-(\nu+d/2)} \lambda_j^2
\text{d} {\bm \lambda} \\
&=
\int_{ \mathbb{R}^d}
f({\bm \lambda}; \tilde{\nu},{\bm \Sigma })
 \lambda_j^2
\text{d} {\bm \lambda} \nonumber \\
&= \frac{\tilde{\nu}}{\tilde{\nu}-2} \frac{1}{4 \pi^2 \rho^2}
=
\frac{{\nu}}{{\nu}-1} \frac{1}{4 \pi^2 \rho^2}. \nonumber
\end{align}
This implies that
\begin{align}
&\lim_{s \to \infty}  \sum_{k=1} ^{(2s 2^s)^d} a^2
\frac{ 2^d \pi^{d/2} \Gamma (\nu +d/2) (2 \nu)^\nu   }{ \Gamma (\nu ) \rho^{2 \nu}  }
\left (
\frac{ 2 \nu }{\rho^2} + 4 \pi^2 \| {\bm\lambda}_{s,k} \|^2
\right )^{-(\nu+d/2)}
 \left ( \frac{1}{2^s} \right ) ^{d}
2  \lambda^{(j) 2}_{s,k} (x_j-y_j )^2 \nonumber \\
&= \frac{2a^2} { \rho^2}  (x_j-y_j)^2
\frac{{\nu}}{{\nu}-1} .
 \nonumber
\end{align}
Therefore, by using the same argument as in the case of the Gaussian kernel, we obtain
\begin{align}
\|
{\bm \phi}_{s} ({\bm x} [j-1])  -
{\bm \phi}_{s} ({\bm x}[j])
\|
\leq
\frac{\sqrt{2} a}{\rho}
\sqrt{\frac{\nu}{\nu-1} }
|x_j -y_j| +|\hat{\epsilon}_{s,j}  |,
 \label{eq:matern_phi}
\end{align}
where $\lim_{s \to \infty} |\hat{\epsilon}_{s,j}  | =0$.
Hence, by substituting \eqref{eq:matern_phi} into \eqref{eq:phi_linear3}, and taking $s \to \infty$ we get
$$
| \sigma_t ({\bm x} ) - \sigma_t ({\bm y} ) | \leq
\frac{\sqrt{2} a}{\rho}
\sqrt{\frac{\nu}{\nu-1} } \|{\bm x} -{\bm y} \|_1.
$$
\end{proof}
The condition that $\sigma^2$ in Theorem \ref{thm:3kernels} is positive is necessary only for the inverse matrix calculation.
Note that $\sigma^2$ is a user-specified variance parameter of a formal GP model, and is different from the true noise variance.
That is, Theorem \ref{thm:3kernels}  holds even when the variance of the true noise is zero, i.e., in the noiseless setting.
Also note that $C$ in Theorem \ref{thm:3kernels}  is a constant independent of $\sigma^2$.
The result for the Mat\'{e}rn kernel is for the case of $\nu >1$ degrees of freedom, and it is a future work to clarify whether the same result holds for $\nu \leq 1$ as well.
On the other hand, unfortunately, it can be shown that \eqref{eq:ineq_sigma_Lip} does not hold for $\nu =1/2$, which is often used in practice for Mat\'{e}rn kernels.
\begin{theorem}
In the setting of Theorem \ref{thm:3kernels}, the Mat\'{e}rn kernel with $\nu =1/2$ does not satisfy  \eqref{eq:ineq_sigma_Lip}.
\end{theorem}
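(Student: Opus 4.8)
The plan is to exhibit an explicit counterexample. First I would record that the Mat\'{e}rn kernel with $\nu=1/2$ collapses to the exponential (Ornstein--Uhlenbeck) kernel $k(x,y)=a^2\exp(-|x-y|/\rho)$, using $\sqrt{2\nu}=1$ and $K_{1/2}(z)=\sqrt{\pi/(2z)}\,e^{-z}$. The essential feature is the kink of $k$ on the diagonal: my target is a configuration of observations for which $\sigma_t^2$ is \emph{Lipschitz with a nonzero one-sided slope} at some point $p$ while $\sigma_t^2(p)$ is itself driven arbitrarily close to $0$. Taking the square root then turns the linear cusp of $\sigma_t^2$ into a $\sqrt{|x-p|}$ singularity of $\sigma_t$, whose slope near $p$ diverges, and this is what will defeat any fixed Lipschitz constant $C$.

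Concretely I would work in $d=1$ with the positive formal variance $\sigma^2$ fixed, and take all $n$ observation inputs equal to the origin, $x_1=\dots=x_n=0$ (recall $\sigma_t$ depends only on the input locations, so repeated noisy observations of the same input are admissible, and $\sigma^2>0$ is exactly what keeps the Gram matrix invertible, as the remark after Theorem~\ref{thm:3kernels} notes). Then the kernel matrix is the rank-one matrix $a^2\mathbf{1}\mathbf{1}^\top$, so the Sherman--Morrison formula gives $\mathbf{1}^\top(a^2\mathbf{1}\mathbf{1}^\top+\sigma^2\mathbf{I}_n)^{-1}\mathbf{1}=n/(\sigma^2+a^2n)$, and since $\mathbf{k}(x)=a^2e^{-|x|/\rho}\mathbf{1}$ I obtain the closed form
\[
\sigma_n^2(x)=a^2\bigl(1-\gamma_n e^{-2|x|/\rho}\bigr),\qquad \gamma_n=\frac{a^2n}{\sigma^2+a^2n}.
\]
Here $\gamma_n\uparrow 1$ as $n\to\infty$, while $\sigma_n^2(0)=a^2(1-\gamma_n)=a^2\sigma^2/(\sigma^2+a^2n)\downarrow 0$; this realizes exactly the kink-with-vanishing-value scenario above.

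Finally I would estimate the difference quotient at $p=0$. Writing $\sqrt{A}-\sqrt{B}=(A-B)/(\sqrt{A}+\sqrt{B})$,
\[
\sigma_n(h)-\sigma_n(0)=a\,\frac{\gamma_n\bigl(1-e^{-2h/\rho}\bigr)}{\sqrt{1-\gamma_n e^{-2h/\rho}}+\sqrt{1-\gamma_n}},\qquad h>0.
\]
For fixed $h$, letting $n\to\infty$ sends $\gamma_n\to1$ and the term $\sqrt{1-\gamma_n}\to0$, so the right-hand side tends to $a\sqrt{1-e^{-2h/\rho}}$; dividing by $h$ and letting $h\to0^+$ yields $a\sqrt{1-e^{-2h/\rho}}/h\sim a\sqrt{2/(\rho h)}\to\infty$. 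Hence, given any candidate constant $C$, I first choose $h$ so small that $a\sqrt{1-e^{-2h/\rho}}/h>2C$ and then $n$ so large that $(\sigma_n(h)-\sigma_n(0))/h>C$; with $\mathbf{x}=h$, $\mathbf{y}=0$ this contradicts \eqref{eq:ineq_sigma_Lip}, so no uniform $C$ can exist.

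The main obstacle is keeping the blow-up honest: because $\sigma^2$ is held fixed and $C$ must be uniform over the number and placement of the observations, the divergence has to be produced by the observation configuration rather than by shrinking the noise. The two-stage limit (first $n\to\infty$ to force $\sigma_n^2(0)\to0$, then $h\to0$ to expose the square-root cusp) is precisely the device that drives the local Lipschitz constant of $\sigma_n$ to infinity at fixed $\sigma^2$. A minor point to address is admissibility of coincident inputs; if one prefers distinct observation points, clustering $n$ distinct inputs in a neighborhood of $0$ that shrinks with $n$ gives the same closed form in the limit by continuity of $\sigma_t$ in the input locations.
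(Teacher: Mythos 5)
Your proposal is correct and follows essentially the same route as the paper: both stack all observations at the origin, derive the same closed form $\sigma_t^2(x)=a^2\bigl(1-\tfrac{a^2t}{\sigma^2+a^2t}e^{-2|x|/\rho}\bigr)$, and exploit the kink of the exponential kernel together with $\sigma_t(0)\to 0$ to blow up the local Lipschitz constant. The only difference is cosmetic: you lower-bound the difference quotient via the rationalization identity and a two-stage limit, whereas the paper uses a Taylor bound $\sqrt{1+u}\geq 1+\tfrac{3}{8}u$ with $s$ chosen as a function of $t$.
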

\begin{proof}
Let $C$ be an arbitrary positive number.
The Mat\'{e}rn kernel with  $\nu =1/2$ is given by
$$
k({\bm x},{\bm y} ) = a^2 \exp ( - \| {\bm x}-{\bm y} \| /\rho ).
$$
In addition, suppose that  ${\bm x}_1 =\cdots ={\bm x}_t ={\bm 0}$.
Moreover, we define  ${\bm K}_t $ as
$$
{\bm K}_t = a^2 {\bm 1}_t {\bm 1}^\top_t + \sigma^2 {\bm I}_t.
$$
Then, the inverse matrix ${\bm K}^{-1}_t$ can be expressed as
$$
{\bm K}^{-1}_t = \sigma^{-2} {\bm I}_t - \frac{ \frac{a^2}{\sigma^4} {\bm 1}_t {\bm 1}^\top_t }{  1+ \frac{a^2}{\sigma^2} t  }.
$$
Therefore, the posterior variance at point ${\bm 0}$ is given by
$$
\sigma^2_t ({\bm 0} ) = a^2 - a^4 {\bm 1}^\top_t {\bm K}^{-1}_t  {\bm 1}_t
= a^2 -a^4  \frac{t}{\sigma^2 + a^2 t} = \frac{  a^2 \sigma^2 }{   \sigma^2 + a^2 t }.
$$
Next, let $s$ be a number with
 $0 < s < \rho/2$, and let   ${\bm x} = (s, 0, \ldots , 0 )^\top$.
Then, we have
\begin{align}
\sigma^2_t ({\bm x} ) &=
a^2 - a^4 \exp (-2s/\rho)  {\bm 1}^\top_t {\bm K}^{-1}_t  {\bm 1}_t \nonumber \\
&=
a^2 - a^4 \exp (-2s/\rho) \frac{t}{\sigma^2 + a^2 t} \nonumber \\
&=
a^2 - a^4 (1+ \exp (-2s/\rho) -1) \frac{t}{\sigma^2 + a^2 t} \nonumber \\
&= \sigma^2_t ({\bm 0} ) + \frac{a^4 t }{\sigma^2+a^2 t} (1- \exp (-2s/\rho ) ) \\
&= \sigma^2_t ({\bm 0} )   \left \{ 1 +  \frac{a^4 t }{\sigma^2_t ({\bm 0}) (\sigma^2+a^2 t)} (1- \exp (-2s/\rho ) ) \right \} \equiv \sigma^2 _t ({\bm 0} )   (  1 + u). \nonumber
\end{align}
Thus, from
 $u \geq 0$ we get
$$
| \sigma_t ({\bm x} ) -\sigma_t ({\bm 0} ) |
=
\sigma_t ({\bm x} ) -\sigma_t ({\bm 0} )
= \sigma_t ({\bm 0} ) \sqrt{1 +u } - \sigma_t ({\bm 0} ) .
$$
Furthermore, by using Taylor's expansion of
 $f(u) = \sqrt{1+u} $ at point $u=0$, we obtain
$$
\sqrt{1+u}  \geq  1 + \frac{1}{2} u - \frac{1}{8} u^2.
$$
Moreover, for each $t$, there exists a number $s $ such that  $0 < s < \rho/2$ and $u \leq 1 $.
Therefore, it follows that
$$
\sqrt{1+u}  \geq  1 + \frac{1}{2} u - \frac{1}{8} u^2 \geq 1 + \frac{1}{2} u - \frac{1}{8} u = 1+ \frac{3}{8} u.
$$
By using this, we have
$$
| \sigma_t ({\bm x} ) -\sigma_t ({\bm 0} ) | \geq \frac{3}{8} u \sigma_t ({\bm 0} )
= \frac{3}{8} \frac{a^4 t }{\sigma_t ({\bm 0}) (\sigma^2+a^2 t)} (1- \exp (-2s/\rho ) ).
$$
In addition, noting that
$
\exp ( -2s /\rho )  \leq  1 -2s /\rho + (2s /\rho)^2 /2
$ and $1- s/\rho \geq 1/2$, we get
$$
1- \exp ( -2s /\rho )  \geq 2s /\rho - (2s /\rho)^2 /2 = 2s/ \rho ( 1- s / \rho ) \geq s/ \rho .
$$
Therefore, the following inequality holds:
$$
| \sigma_t ({\bm x} ) -\sigma_t ({\bm 0} ) | \geq
\frac{3}{8} \frac{a^4 t / \rho }{\sigma_t ({\bm 0}) (\sigma^2+a^2 t)} s =
\frac{3}{8} \frac{a^4  / \rho }{\sigma_t ({\bm 0}) (\sigma^2/t+a^2 )}  \| {\bm x} - {\bm 0} \|_1.
$$
Hence, since $\lim_{t \to \infty} \sigma_t ({\bm 0} ) =0$, the following inequality holds for sufficiently large $t$:
$$
| \sigma_t ({\bm x} ) -\sigma_t ({\bm 0} ) | \geq
\frac{3}{8} \frac{a^4  / \rho }{\sigma_t ({\bm 0}) (\sigma^2/t+a^2 )}  \| {\bm x} - {\bm 0} \|_1
>
C
\| {\bm x} - {\bm 0} \|_1.
$$
\end{proof}

\section{Details of the Experimental Settings and Pseudo-codes}
\label{app:exp_setting}
In this section, we describe the experimental settings.

\subsection{Common Settings}
We used a multi-start L-BFGS-B method~\cite{byrd1995limited} (SciPy~\cite{2020SciPy-NMeth} implementation) to perform various optimization such as optimizing AFs, finding the optimal value of synthetic functions.
First, we sample $1000$ initial points using Latin hypercube sampling (LHS)~\cite{mckay2000comparison}.
Then, we run L-BFGS-B with parameter $\mathtt{ftol}=10^{-3}, \mathtt{gtol}=10^{-3}$ for each initial point and pick the top 5 results.
Finally, we run L-BFGS-B with default parameters for these five results and return the best result.
We implemented GP models and all the comparison methods mainly using PyTorch~\cite{paszke2019pytorch} and GPyTorch~\cite{gardner2018gpytorch}.
By utilizing the automatic differentiation of PyTorch, we can easily apply gradient methods to optimize AFs.

\subsection{Comparison Methods}
\paragraph{CBO}
In CBO, a scalar output is assumed for each stage.
For each iteration $t$, it first chooses the controllable parameter of the final stage $\bx\stg{N}_t$ and desired output of previous stage $y\stg{N-1}_{\mr{desire}}$ by maximizing EI:
\begin{equation}
    (y\stg{N-1}_{\mr{desire}}, \bx\stg{N}_t) =\argmax_{y\stg{N-1}, \bx\stg{N}}  \sigma\stg{N}_{t-1}(y\stg{N-1}, \bx\stg{N}) (Z\Phi(Z) + \phi(Z)),
\end{equation}
where $\Phi, \phi$ are the cumulative distribution function and probability density function of the standard normal distribution, respectively,
$Z=0$ if $\sigma\stg{N}_{t-1}(y\stg{N-1}, \bx\stg{N})=0$ and be  \sloppy$Z= (\mu_{t-1}\stg{N}(y\stg{N-1}, \bx\stg{N}) -F_{\mr{best}}  )/ \sigma\stg{N}_{t-1}(y\stg{N-1}, \bx\stg{N}) $ otherwise.
We could not find any description about the range of optimization parameters in~\cite{dai2016cascade}.
We used $\dX\stg{N}$ for the range of $\bx\stg{N}$, and we used the range twice as wide as the actual range for the range of $y\stg{N-1}$, which is supposed to be unknown.
Then, CBO chooses $(y\stg{N-2}_{\mr{desire}}, \bx_t\stg{N-1})$ of stage $N-1$ as follows:
\begin{equation}
    (y\stg{N-2}_{\mr{desire}}, \bx\stg{N-1}_t)
   =\argmin_{y\stg{N-2}, \bx\stg{N-1}}  \left( \kappa_1 v^{-1} + \kappa_2 v \right)\| m -y_{\mr{desire}}\stg{N-1}  \|_2^2  +\mr{cost}(y\stg{N-2}, \bx\stg{N-1}) ,
    \label{eq:cbo}
\end{equation}
where $m=\mu_{t-1}\stg{N-1}(y\stg{N-2}, \bx\stg{N-1}),\; v=\sigma^{(N-1)\, 2}_{t-1}(y\stg{N-2}, \bx\stg{N-1})$, $\mr{cost}(\cdot)$ is the cost function, and $\kappa_1,\kappa_2$ are hyperparameters.
By repeating this operation, a controllable parameter of stage 1 $\bx_t\stg{1}$ is determined finally.
We used $\mr{cost}(\cdot) =0$ for simplicity and set $\kappa_1=1, \kappa_2=1$.

In the solar cell simulator experiments, output of stage $1$ and stage $2$ are vectors.
To deal with vector output, we replace a predictive mean and variance in~\cref{eq:cbo} with a mean vector and covariance matrix.
Therefore, for the vector output setting, the following AF was used instead of~\cref{eq:cbo}:
\begin{equation}
    (\by\stg{n-1}_{\mr{desire}}, \bx\stg{n}_t) =\argmin_{\by\stg{n-1}, \bx\stg{x}} \Bigl[  \left(\bs{m} -\by_{\mr{desire}}\stg{n} \right)^\top  \left( \kappa_1{\bs{\Sigma}^{-1}} + \kappa_2\bs{\Sigma} \right)   \left(\bs{m} -\by_{\mr{desire}}\stg{n} \right) \Bigr],
\end{equation}
where $\bs{m}=\bs{\mu}_{t-1}\stg{n}(\by\stg{n-1}, \bx\stg{n})$ and $\bs{\Sigma}$ is a diagonal matrix whose $(i,i)$-th element is defined as $\sigma^{(n)\, 2}_{m,t-1}(\by\stg{n-1}, \bx\stg{n})$.

\paragraph{FB-EI, FB-UCB}
In FB-EI and FB-UCB, the next sampling point is determined by using a fully black-box GP model.
To construct this model, we employed an ARD Gaussian kernel and set the noise variance of GP to $10^{-4}$.
The kernel parameters were estimated by maximizing the marginal likelihood.
In particular, FB-UCB used GP-UCB method~\cite{srinivas2010gaussian}, and we set its exploration parameter $\beta_{\mr{GP-UCB}}^{1/2} =2$.

\paragraph{EI-based}
Since EI-based AF is computed through sampling, we have to use stochastic gradient methods to optimize it in a naive implementation.
However, L-BFGS-B can also be applied by utilizing reparameterization-trick~\cite{kingma2014auto}.
At the beginning of the optimization, we draw \textit{base-samples} $\bs{\omega}\stg{n} \in \R^{S}$ from standard multivariate Gaussian distribution for each middle stage.
Then, instead of sampling each $\{y_s\stg{n}\}_{s=1}^{S}$ directly from Gaussian distribution, we sample it as follows:
\begin{equation}
    y\stg{n}_s=\mu_{y\stg{n}_s}  +  \sigma_{y\stg{n}_s}   \omega\stg{n}_s.
\end{equation}
Here, $\mu_{y\stg{n}_s}, \sigma_{y\stg{n}_s}$ are the mean and standard deviation of the Gaussian distribution that follows $y\stg{n}_s$, respectively.
The EI-based AF becomes a deterministic and differentiable function with the above modifications, and the L-BFGS-B method can be applied.
Moreover, EI-based AF can also be applied to the vector output setting.
We only need to change it to sample $\by\stg{n}_s$ instead of $y\stg{n}_s$ in the middle stage.

In EI-SUS-R for the suspension setting experiments, we applied the stock reduction rule (13) except for the stock obtained in the last iteration.

\subsection{Synthetic functions and Solar Cell Simulator}
\paragraph{Sample Paths:}
In the sample path experiments, we used random Fourier features (RFFs) to draw continuous functions from GP priors.
We first sampled 1000 RFFs and built Bayesian linear regression (BLR) model.
From the BLR model, we sampled weight parameters and constructed functions.

\paragraph{Rosenbrock Function:}
For any $d \ge 2$,  $d$-dimensional Rosenbrock function is defined as follows:
\begin{equation}
    f(\bx) =\sum_{i=1}^{d} \left( 100(x_{i+1} -x_i^2)^2 +(x_i -1)^2 \right).
\end{equation}
In our experiments, we used negative Rosenbrock functions, which are multiplied by $-1$.

\paragraph{Sphere Function:}
For any $d \ge 2$,  $d$-dimensional Sphere function is defined as follows:
\begin{equation}
    f(\bx) =\sum_{i=1}^{d} x_i^2.
\end{equation}
In our experiments, we used negative Sphere functions, which are multiplied by $-1$.

\paragraph{Matyas Function:}
Matyas function ($d = 2$) is defined as follows:
\begin{equation}
    f(\bx) = 0.26 ( x_1^2 + x_2^2) - 0.48 x_1 x_2.
\end{equation}
In our experiments, we used negative Matyas functions, which are multiplied by $-1$.

\paragraph{Solar Cell Simulator:}
The simulators for stages one and two are Python implementations of the physical models described in Section~4 of~\cite{bentzen2006phosphorus}.
The simulator of stage three is based on PC1Dmod~6.2~\cite{haug2016pc1dmod}, which is the software for simulating solar cells.
We confirmed that PC1D sometimes caused errors due to convergence failure of the internal calculations.
However, standard BO frameworks cannot handle the situation where the observation fails.
Thus, we used a kernel ridge regression model constructed using the data collected from PC1D as the simulator of stage 3.
To create this simulator, we ran PC1D on each of the 2000 input points sampled using the LHS, and used the 1935 data points among them that could be run without error.

\paragraph{Hydrogen plasma treatment process:}
The real-world datasets for the first and second stages are from \citep{miyagawa2021application-a} and the simulator in \url{https://www.pvlighthouse.com.au/equivalent-circuit}, respectively.
For both stages, we fitted the GPs with a Gaussian kernel, in which hyperparameters are selected by the marginal likelihood maximization.
Then, as with sample paths, we sampled 1000 RFFs, built BLR models, and generate continuous sample paths once.
We used these sample paths as the surrogate objectives.

\subsection{Pseudo-codes of the proposed methods}
We describe the proposed method of~\cref{sec:method} in~\cref{alg:proposed_seq}.
Additionally, we also describe the proposed method of extension setting in~\cref{alg:proposed_sus}.

\begin{algorithm}[t]
    \caption{Cascade process optimization in sequential observation}
    \label{alg:proposed_seq}
    \KwIn{Initial data $\{\mcal{D}_0\stg{n}\}_{n=1}^N$, $\beta$, $\eta_t$}
    \For{$t=0,N,2N,\dots, T$}{
    Fit GP models using $\{\mcal{D}_{t-1}\stg{n}\}_{n=1}^N$\\
    \For{$n=1,\ldots,N$}{
        Select $ \bx\stg{n}_{t+n}$ by maximizing~\cref{eq:EI_lower_mc} or~\cref{eq:af_CI}\\
        Observe output $\by_{t+n}\stg{n}$ corresponding input $(\by\stg{n-1}_{t+n-1}, \bx\stg{n}_{t+n})$\\
        $\mcal{D}_{t+n}\stg{n}\leftarrow  \mcal{D}_{t+n-1}\stg{n}\cup \{((\by\stg{n-1}_{t+n-1}, \bx_{t+n}\stg{n}),\by_{t+n}\stg{n})\}$\\
    }
    $t \leftarrow t+N$\\
    }
    \KwOut{Estimated solution $\hat{\bx}_t\stg{1},\dots,\hat{\bx}_t\stg{N}$}
\end{algorithm}

\begin{algorithm}[t]
    \caption{Cascade process optimization in suspension setting}
    \label{alg:proposed_sus}
    \KwIn{$\{\mcal{D}_0\stg{n}\}_{n=1}^N$, $\beta$, $\eta_t$, stage cost $\{\lambda\stg{n}\}_{n=1}^{N}$, budget $\lambda_{\max}$}
    $t \leftarrow 1$, $\mcal{S}_t\stg{0}\leftarrow \{\bs{0}\}, \{ \mcal{S}_t\stg{n}\leftarrow \emptyset \}_{n=1}^{N-1}$, spend cost $\lambda \leftarrow 0$\\
    \While{$\lambda\le \lambda_{\max}$}{
    Fit GP models using $\{\mcal{D}_{t-1}\stg{n}\}_{n=1}^N$\\
    Select $n_t,\by_t\stg{n_t-1}, \bx_t\stg{n}$ by~\cref{eq:next_point}\\
    Observe $\by_t\stg{n_t}$\\
    $\mcal{D}_t\stg{n_t}\leftarrow  \mcal{D}_{t-1}\stg{n_t}\cup \{((\by_t\stg{n_t-1}, \bx_t\stg{n_t}),\by_t\stg{n_t})\}$\\
    Remove stock $\mcal{S}_t\stg{n_t-1}\leftarrow  \mcal{S}_t\stg{n_t-1}\setminus \by_t\stg{n_t-1}$\\
    \uIf{$n_t\neq N$}{
        Add observed stock $\mcal{S}_t\stg{n_t} \leftarrow  \mcal{S}_t\stg{n_t} \cup \{\by_t\stg{n_t}\}$\\
    }
    $\lambda \leftarrow \lambda+\lambda\stg{n_t},\; t \leftarrow t+1$\\
    }
    \KwOut{Estimated solution $\hat{\bx}_t\stg{1},\dots,\hat{\bx}_t\stg{N}$}
\end{algorithm}
\begin{figure}[t]
  \centering
  \includegraphics[width=0.49\linewidth]{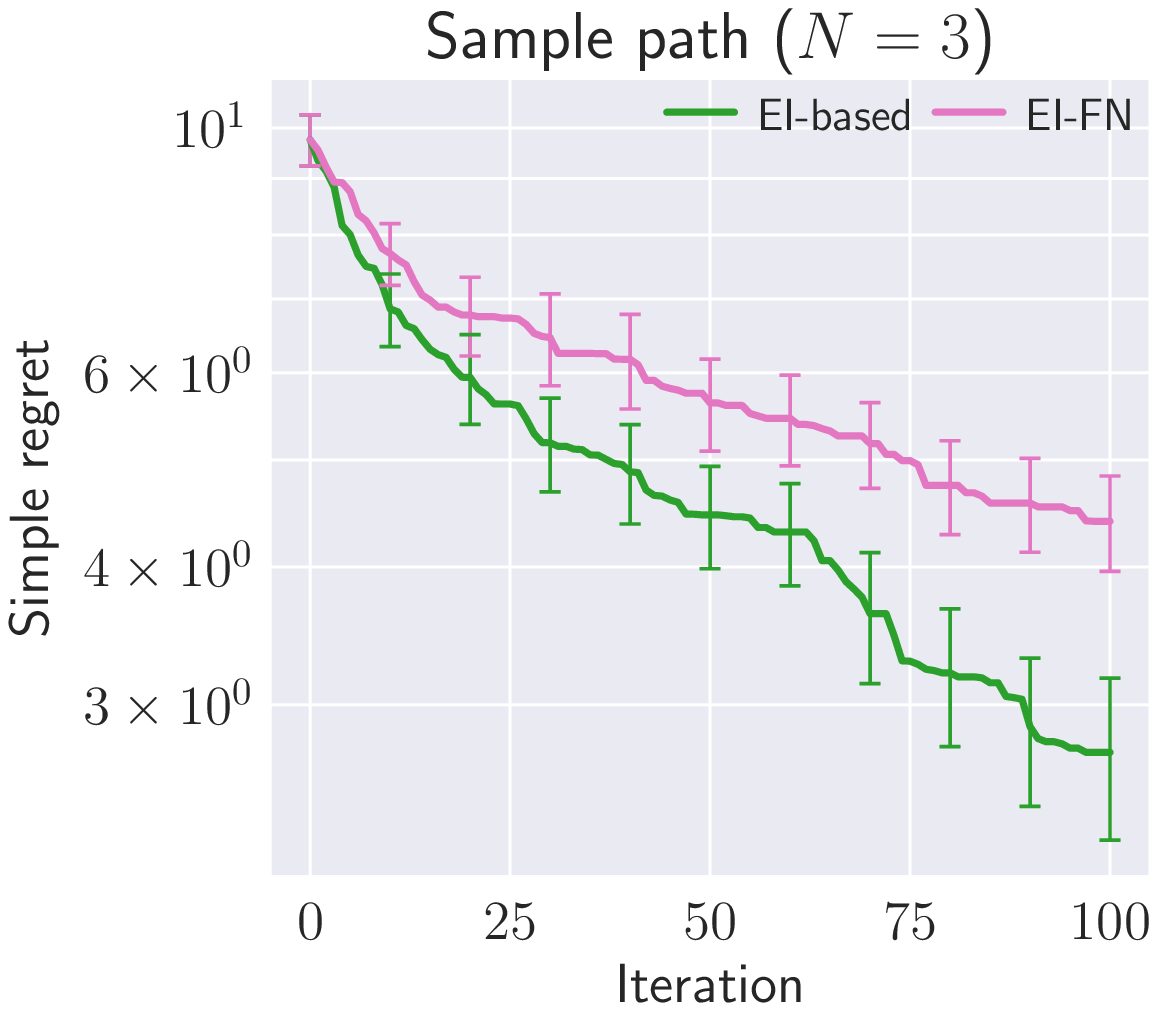}
  \includegraphics[width=0.49\linewidth]{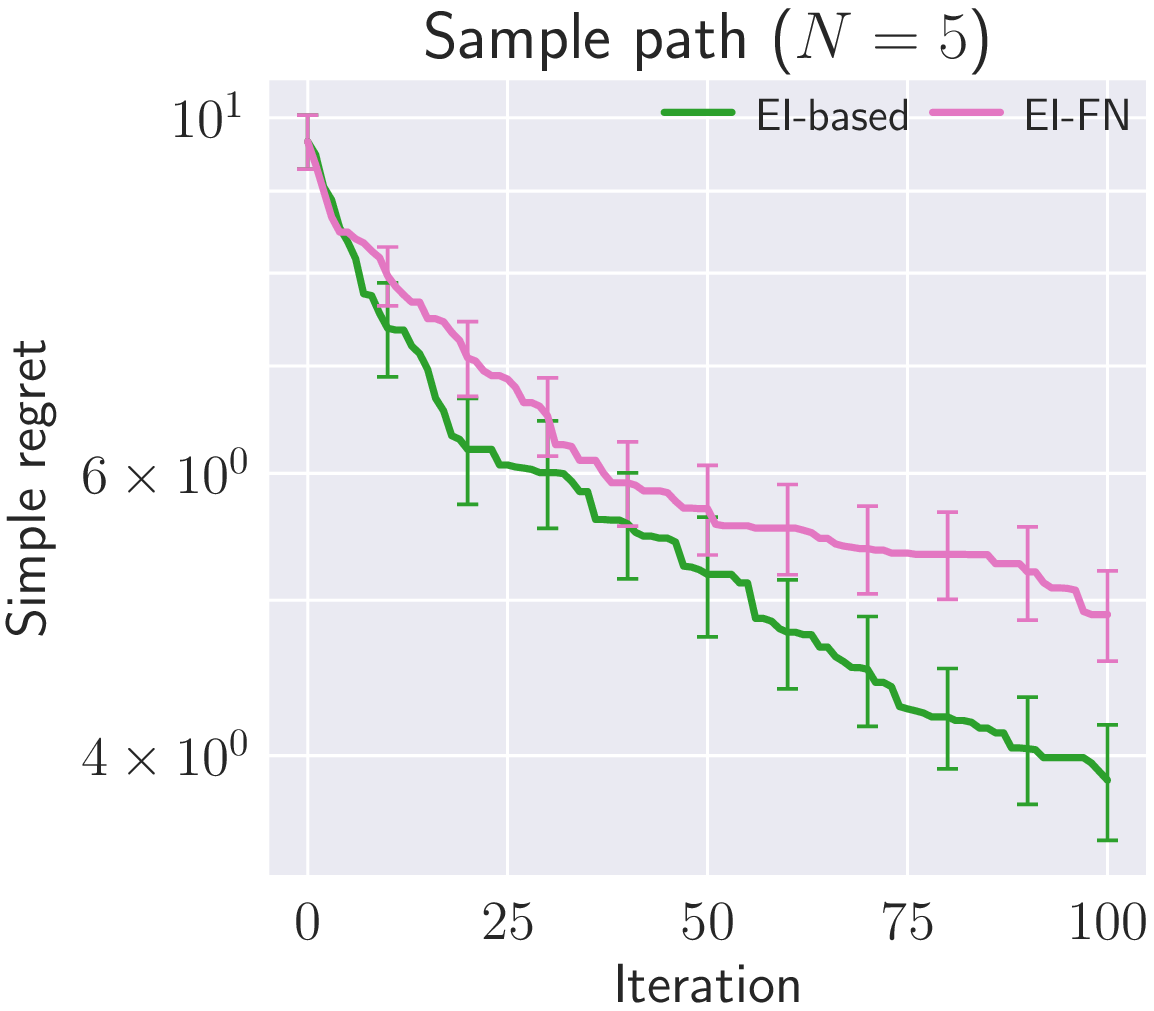}
  \caption{Results of comparison between \textsc{EI-based} and EI-FN}.
  \label{fig:compEIFN}
\end{figure}

\section{Additional Experimental Results}
\label{app:additional_exp}
%

We additionally show the comparison between \textsc{EI-based} AF and EI-FN \cite{astudillo2021bayesian}.
In this experiment, we used a three- and five-stage cascade consisting of GP pre-distributed sample paths.
We set $\ell_d^{(n)}=1, \ell_w^{(n)}=1$, and the other experimental settings are the same as those described in~\Cref{sec:experiments}.
\Cref{fig:compEIFN} shows the results of 20 runs with different random seeds.
Since the parameters $\ell_d^{(n)}$ and $\ell_w^{(n)}$ are relatively small, the sample paths can be sensitive to the input uncertainty.
Therefore, in this experiment, \textsc{EI-based} AF that performs adaptive decision-making using intermediate observations clearly outperforms EI-FN, which is nonadaptive.




\clearpage


\end{document}